%% file: main.tex
\documentclass{article}

\PassOptionsToPackage{sort, numbers, compress}{natbib}

\usepackage[preprint]{neurips_2026}

\usepackage[utf8]{inputenc} 
\usepackage[T1]{fontenc}    
\usepackage{hyperref}       
\usepackage{url}            
\usepackage{booktabs}       
\usepackage{amsfonts}       
\usepackage{nicefrac}       
\usepackage{microtype}      
\usepackage{xcolor}         

\usepackage{adjustbox}
\usepackage{amsmath}
\usepackage{amssymb}
\usepackage{amsthm}
\usepackage[capitalize,noabbrev]{cleveref}
\usepackage{enumitem}
\usepackage{float}
\usepackage{mathtools}
\usepackage{multicol}
\usepackage{nicematrix}
\usepackage{pgfplots}
\usepackage{subcaption}
\usepackage{thmtools}
\usepackage{tikz} 
\pgfplotsset{compat=1.18}

\usepackage[amsbb, bmbold, mlr]{my_macros}

\newcommand{\titleref}[2]{%
    \texorpdfstring{\Cref{#2}}{#1~\ref{#2}}%
}

\hypersetup{
colorlinks = true,
citecolor  = tabgreen,
linkcolor  = tabred,
filecolor  = tabblue,
urlcolor   = tabblue,
}

\newcommand{\sfgd}{\textsf{SF-GD}}
\newcommand{\sfsgd}{\textsf{SF-SGD}}
\newcommand{\sfggd}{\textsf{SF-(S)GD}}
\newcommand{\sfode}{\textsf{SF-ODE}}
\newcommand{\warmstep}{\ensuremath{T_\mathsf{w}}}
\newcommand{\warmratio}{\ensuremath{p_\mathsf{w}}}
\newcommand{\Ecs}{\ensuremath{E_{\text{cs}}}}
\newcommand{\Eu}{\ensuremath{E_{\text{u}}}}
\DeclareMathOperator{\PsHy}{PH} 

\theoremstyle{plain}
\newtheorem{thm}{Theorem}[section]
\newtheorem{prop}[thm]{Proposition}
\newtheorem{lem}[thm]{Lemma}
\newtheorem{cor}[thm]{Corollary}
\theoremstyle{definition}
\newtheorem{defn}[thm]{Definition}
\newtheorem{asmp}[thm]{Assumption}

\theoremstyle{remark}

\Crefname{asmp}{Assumption}{Assumptions} 
\Crefname{thm}{Theorem}{Theorems}
\Crefname{prop}{Proposition}{Propositions} 
\Crefname{lem}{Lemma}{Lemmata}
\Crefname{cor}{Corollary}{Corollaries}
\Crefname{fact}{Fact}{Facts}
\Crefname{defn}{Definition}{Definitions}
\Crefname{exmp}{Example}{Examples}
\Crefname{obsrv}{Observation}{Observations}
\Crefname{rmk}{Remark}{Remarks}

\title{Understanding Schedule-Free Methods in Nonconvex Optimization: Rate Guarantees and Escaping Saddles}

\author{%
  Jiseok Chae\\
  Natural Science Research Institute\\
  KAIST\\
  Daejeon, Republic of Korea\\
  \texttt{jsch@kaist.ac.kr}\\
  \And
  Donghwan Kim\\
  Department of Mathematical Sciences\\
  KAIST\\
  Daejeon, Republic of Korea\\
  \texttt{donghwankim@kaist.ac.kr}\\
}

\begin{document}

\maketitle

\begin{abstract}
  Schedule-Free methods have attracted growing interest for alleviating the burden of designing and tuning a learning rate scheduler, while matching and sometimes even outperforming optimizers with tuned schedulers. 
  Despite their strong empirical results, their convergence theory in nonconvex optimization, where modern machine learning objectives typically arise, has remained largely unexplored. 
  In this paper, we provide worst-case analyses of Schedule-Free gradient descent and Schedule-Free stochastic gradient descent, in their standard form and without auxiliary modifications or restrictive conditions, for smooth but possibly nonconvex objectives.
  Based on a Lyapunov analysis derived from the continuous-time limiting ordinary differential equation associated with these methods, we show that Schedule-Free gradient descent and Schedule-Free stochastic gradient descent achieve the optimal worst-case convergence rates attainable among first-order methods.
  We further formulate Schedule-Free gradient descent as a nonautonomous dynamical system and prove strict-saddle avoidance under an arbitrarily small one-time perturbation.
  These theoretical results provide a better understanding of the strong performance that Schedule-Free methods demonstrate.
\end{abstract}

\section{Introduction} \label{sec:intro}

There are many first-order methods that can be used to solve a minimization problem \begin{equation} \label{eqn:1}
    \min_{\vx \in \rr^d} \; f(\vx),
\end{equation} ranging from the established \emph{gradient descent}~(GD) and \emph{stochastic gradient descent}~(SGD) methods to those that are now standard in the machine learning community, such as Adam~\citep{King15} and AdamW~\citep{Losh17}.
In modern machine learning tasks, such first-order methods are typically used together with a learning rate \emph{scheduler}, which changes the learning rate during training according to a predefined rule.
However, as training dynamics are largely unpredictable \textit{a priori}, the choice and design of the learning rate scheduler have traditionally relied heavily on ad hoc approaches and heuristics.

Schedule-Free \citep{Defa24} is an optimization scheme that aims to eliminate the need for handcrafted learning rate schedules, which are often difficult to design and tune. 
The update rule of the Schedule-Free method, which combines the techniques of iterate averaging and a momentum-like interpolation with a given base optimizer, is defined as
\begin{subequations} \label{eqn:SF} \begin{align}
    \vy_k &= (1-\beta) \vz_k + \beta \vx_k \label{eqn:yk} \\
    \vz_{k+1} &= \vz_k - \gamma_k \vg_k \label{eqn:zk} \\
    \vx_{k+1} &= (1-c_{k+1}) \vx_k + c_{k+1} \vz_{k+1} \label{eqn:xk}
\end{align} \end{subequations} with initial points $\vx_0 = \vz_0$, where $\beta \in [0,1]$ is a fixed constant, $\{\gamma_k\}_{k \geq 0}$ is the sequence of learning rates which is often set to be a constant sequence, $\{c_{k+1}\}_{k \geq 0}$ is a predefined sequence of averaging rates such that $c_{k+1} \in [0, 1]$, and $\vg_k$ is the update direction produced by the base optimizer at $\vy_k$. 
For example, Schedule-Free stochastic gradient descent (\sfsgd) uses $\vg_k = \nabla f(\vy_k, \zeta_k)$ for some random variable $\zeta_k$ accounting for the stochasticity in the gradient evaluation, as in standard SGD.

A notable interpretation of the Schedule-Free scheme is to understand it as an interpolation between two well-known averaging schemes.
When $\beta = 0$, under the standard choice $c_{k+1} = \frac{1}{k+1}$, \mbox{Schedule-Free} reduces to \emph{Polyak--Ruppert averaging}~\citep{Poly90,Rupp88}, where $\vx_k$ becomes the uniform average over the computed iterates $\vy_0, \dots, \vy_k$.
On the other extreme, when $\beta = 1$, Schedule-Free reduces to a scheme known as \emph{primal averaging}~\citep{Nest15, Tao18}. 
Despite this connection, $\beta = 1$ is seldom used in Schedule-Free methods.
A few common choices for $\beta$ in practice are $0.9$ and $0.98$, following the original experiments conducted by \citet{Defa24}. 
In this paper, as the case $\beta = 1$ is well studied in the prior works on primal averaging, we focus on when $\beta < 1$. 

Schedule-Free methods have gained significant interest from the community, not only because they alleviate the complexities associated with learning rate schedules, but also because they demonstrate strong performance on practical deep learning workloads. 
This has been highlighted by the results from the recent AlgoPerf challenge~\citep{Dahl23}, a benchmark of a wide range of optimization algorithms on large-scale deep learning tasks.
 
Despite their empirical success in modern machine learning applications, where the underlying objectives are typically nonconvex, convergence guarantees for Schedule-Free methods were available only in the convex setting.
\citet{Defa24} introduced the method and provided an accompanying convergence-rate analysis in the convex setting.
There have been some attempts to extend convergence guarantees to the nonconvex regime, for instance by applying an online-to-nonconvex conversion framework \citep{Ahn25} or by directly constructing a tailored Lyapunov potential \citep{Brow25}. 
However, these analyses have notable limitations as we detail in \Cref{sec:related}, such as focusing on specific parameter choices or requiring a randomization of the parameters such as $\beta$. 
As a result, a comprehensive convergence analysis of Schedule-Free methods in the nonconvex setting remains largely unexplored.

\subsection{Our Contributions}
To address this gap, we analyze Schedule-Free methods under smooth and possibly nonconvex objectives through the worst-case convergence of the gradient location iterates, strict-saddle avoidance of the dynamics, and the worst-case behavior of the evaluation iterates commonly used in practice. 
Our contributions are as follows.

\begin{itemize}
    \item 
    As a starting point, we consider the continuous-time limiting ODE of \sfgd. 
    We construct its Lyapunov function, and establish that the solution of the \sfode{} attains an $O(1/T)$ decay rate in terms of the squared gradient norm $\|\nabla f\|^2$. 
    
    \item 
    Based on the analyses of the continuous-time limiting ODE, we establish convergence rate analyses for discrete-time Schedule-Free methods. 
    These results show that $\{\vy_k\}_{k \geq 0}$ computed by \sfgd{} and \sfsgd{} achieve the worst-case optimal rates known for first-order methods~\citep{Arje23, Carm21}. 
    In particular, we prove convergence rates for the methods \emph{as is}, unlike prior works that analyze specifically modified variants or require careful parameter tuning.  
    
    \item \looseness=-1
    We study \sfgd{} as a nonautonomous dynamical system and prove that, after an arbitrarily small one-time perturbation, it almost surely avoids strict saddle points.
    Thus, under the usual strict-saddle landscape assumptions, convergent Schedule-Free trajectories exhibit the same qualitative behavior as GD, converging only to local minimizers rather than to saddle points.

    \item
    We use the performance estimation problem (PEP) framework to study the averaged evaluation sequence $\{\vx_k\}_{k \geq 0}$ directly. 
    The PEP results show that the worst-case behavior of $\{\vx_k\}_{k \geq 0}$ can be worse than that of $\{\vy_k\}_{k \geq 0}$. 
    Nonetheless, we discuss how this gap can coexist with the strong empirical performance of $\{\vx_k\}_{k \geq 0}$.

\end{itemize}

\subsection{Problem Settings}
We consider the minimization problem \eqref{eqn:1} where the objective function $f :\rr^d \to \rr$ is differentiable and possibly nonconvex. 
For the analyses, we further impose a standard smoothness assumption. 
\begin{asmp}[{Smoothness}] \label{asmp:smooth}
   For some $L > 0$ given, $f$ is \emph{$L$-smooth}; that is, for any $\vx, \vy \in \rr^d$, \[
   \norm{\nabla f(\vx) - \nabla f(\vy)} \leq L\norm{\vx - \vy}.
   \]
\end{asmp}

To avoid pathological cases such as minimizing an affine function, we also impose the following.
\begin{asmp}[{Finite infimum}] \label{asmp:lower-bound}
    The objective function $f$ is bounded below by a constant $f^*$. 
\end{asmp}

We mainly focus on the case where the base optimizer in the Schedule-Free method is either GD, so that $\vg_k = \nabla f(\vy_k)$, or SGD, so that $\vg_k = \nabla f(\vy_k, \zeta_k)$, where $\nabla f(\,\cdot\,,\zeta)$ denotes a stochastic gradient oracle and $\{\zeta_k\}_{k \ge 1}$ is a sequence of random variables accounting for the stochasticity in the gradient evaluations.
For this stochastic gradient oracle we assume that it produces an unbiased estimator of $\nabla f$ with a bounded variance.
\begin{asmp}[{Stochastic gradients}] \label{asmp:stochastic}
   There exists some $\sigma^2 \geq 0$ such that, for any $\vx \in \rr^d$, the stochastic gradient oracle satisfies 
   \begin{multicols}{2} \begin{enumerate}[label=(\roman*)]
       \item $\expt[\nabla f(\vx, \zeta)] = \nabla f(\vx)$ ,
       \item $\expt\bigl[\norm{\nabla f(\vx, \zeta) - \nabla f(\vx)}^2\bigr] \leq \sigma^2$.
   \end{enumerate} \end{multicols}
\end{asmp}

\section{Related Work} \label{sec:related}

\paragraph{Convergence Analysis of Schedule-Free Methods}
\citet{Defa24} have provided convergence guarantees of \sfsgd{} when the objective function $f$ is convex, but the nonconvex convergence theory for Schedule-Free methods remains largely open.
\citet{Ahn25} were the first to attempt to attain convergence bounds for Schedule-Free methods in the nonconvex regime.
Based on an online-to-nonconvex conversion framework, their discussion also extends to the case where $f$ is \emph{nonsmooth}.
Concurrently, \citet{Brow25} have claimed a comparatively simpler and more direct proof of convergence via a construction of Lyapunov potentials tailored to nonconvex smooth objectives.

However, both works have limitations that prevent them from being considered to provide comprehensive convergence proofs for Schedule-Free methods in the nonconvex regime.
The analysis by \citet{Ahn25} establishes a worst-case optimal rate, but only for a specific choice of parameters and, moreover, requires
additional modifications such as making $\beta$ a random variable that is resampled every iteration.
Meanwhile, \citet{Brow25} proposed several convergence rates that depend on the choice of the averaging rates $\{c_{k+1}\}_{k \geq 0}$. 
However, the claimed rates are mostly suboptimal, especially being only $O(\frac{1}{\log T})$ at best where $T$ is the total number of iterations, under the standard choice of $c_{k+1} = \frac{1}{k+1}$. 
Moreover, their results unfortunately appear to hold only when $\beta = 1$, as the proof relies on a premise that, to our understanding, does not hold for $\beta <1$.  
See \Cref{appx:rltwrk} for further details on these limitations.
Our analyses of the convergence rates, on the other hand, apply to a wide range of $\beta$ and study Schedule-Free methods as is, without any modifications. 

\paragraph{Gradient Location Sequence vs. Evaluation Sequence}

The sequence $\{\vy_k\}_{k \geq 0}$ obtained from a Schedule-Free method is called the \emph{gradient location} sequence, whereas the sequence $\{\vx_k\}_{k \geq 0}$ is called the \emph{evaluation} sequence.  
It is clear that the former is named as such because of \eqref{eqn:zk}. 
The latter is named because the original work by \citet{Defa24} on Schedule-Free methods advocated using $\{\vx_k\}_{k \geq 0}$ as the ``main'' iterate to be used as the trained parameters in constructing a model.
Since then, treating $\{\vx_k\}_{k \geq 0}$ as the primary output sequence has largely become standard practice.

Nonetheless, \citet{Song25} observed that $\{\vy_k\}_{k \geq 0}$ is not merely an auxiliary sequence. 
In particular, they showed that $\{\vy_k\}_{k \geq 0}$ exhibits \emph{edge of stability}~\citep{Cohe21} behavior, a well-documented large learning rate phenomenon in neural network training, and that, on the so-called \emph{river valley} loss landscapes~\citep{Wen25}, it follows the low loss path, often referred to as the \emph{river}, more closely than $\{\vx_k\}_{k \geq 0}$.
They also reported a language model training example where evaluating at $\{\vy_k\}_{k \geq 0}$ yields better performance than evaluating at $\{\vx_k\}_{k \geq 0}$. 
These observations motivate studying $\{\vy_k\}_{k \geq 0}$ as an optimization sequence in its own right, along with the practically widely used $\{\vx_k\}_{k\geq 0}$.
 
\paragraph{Performance Estimation Problem} 
The performance estimation problem (PEP) framework \citep{Dror14} provides a systematic toolbox for numerically computing worst-case convergence guarantees for first-order methods over a prescribed function class.
In many cases, the resulting worst-case bound can be cast as a semidefinite programming problem, shifting much of the analysis from manual inequality chaining to the solution of a single convex optimization problem.
PEP has been used to obtain sharp guarantees for a wide range of methods and problem structures, from vanilla gradient descent to proximal methods and operator splitting schemes \citep{Ryu20, Abba22, Tayl18}.

\section{Continuous-Time Limiting ODE of Schedule-Free Methods} \label{sec:sfode}

Continuous-time systems, most representatively expressed in the form of ODEs, are often easier to analyze than their discrete-time counterparts. 
Moreover, the insights obtained by studying the continuous-time limiting ODE can often be transferred to the discrete-time system, sometimes even yielding a convergence proof. 

In this section, we show that this is also the case for Schedule-Free methods. 
As a first step in this direction, this section derives the continuous-time limiting ODE of \sfgd, where we take $\vg_k = \nabla f(\vy_k)$ in the update rule specified in \eqref{eqn:SF}, and analyzes its convergence properties. 
To this end, we focus on the standard choice of averaging rates $c_{k+1} = \frac{1}{k+1}$ and a constant learning rate $\gamma_k = \gamma$. 

In order to derive the continuous-time limiting ODE, we should take the limit as $\gamma \to 0$, so let us assume that $\gamma \ll 1$. 
Then \eqref{eqn:zk} yields \[
\dot{\vz} \approx \frac{\vz_{k+1} - \vz_{k}}{\gamma} = -\vg_k = -\nabla f(\vy_k),
\] leading to $\dot{\vz} = -\nabla f(\vy)$.

Next, let us consider the update rule \eqref{eqn:xk} for $\vx_k$. 
Because it involves an averaging rate $c_{k+1}$ that changes at every iteration, we introduce a time-dependent function $a(t)$ to represent this variation in continuous time. 
More precisely, we wish to view the update rule \eqref{eqn:xk} as a discretization of the ODE governing the evolution of $\vx(t)$, which involves an auxiliary function $a(t)$ accounting for the time-dependent averaging rate. 
To this end, let us rearrange the terms in \eqref{eqn:xk} and reformulate it into \begin{equation} \label{eqn:reformulated-xk}
    \vx_{k+1} = \vx_k + \gamma \cdot \frac{c_{k+1}}{\gamma(1-c_{k+1})} (\vz_{k+1} - \vx_{k+1}). 
\end{equation} One can observe that, if there were a function $a(t)$ such that $a((k+1)\gamma) = \frac{c_{k+1}}{\gamma(1-c_{k+1})}$, it would be possible to interpret the equation \eqref{eqn:reformulated-xk} as an implicit Euler step of the ODE \begin{equation} \label{eqn:x-ode}
\dot{\vx} (t) = a(t) (\vz (t) - \vx(t)).
\end{equation}
However, this would require expressing $\frac{c_{k+1}}{\gamma(1-c_{k+1})} = \frac{1}{k\gamma}$ as a function of $(k+1)\gamma$, which is not directly possible.  
So, instead, invoking that $1 - c_{k+1} \approx 1$ when $k \gg 1$, we seek a function $a(t)$ such that $a((k+1)\gamma) = \frac{c_{k+1}}{\gamma} = \frac{1}{(k+1)\gamma}$, and interpret \eqref{eqn:reformulated-xk} as an \emph{approximate} implicit Euler step of~\eqref{eqn:x-ode}. 
This, in fact, is easily achievable by simply setting $a(t) = \frac{1}{t}$. 

From these observations, we can construct the continuous-time limiting ODE of \sfgd{} as follows. 

\begin{defn} We define the \emph{Schedule-Free ODE} (\sfode) to be the continuous-time limiting ODE of \sfgd{} in the limit as $\gamma \to 0$, which can be formulated as a system of differential equations \begin{subequations} \label{eqn:SF-ODE}\begin{align}
    \vy &= (1-\beta) \vz + \beta \vx , \label{eqn:ode-y} \\
    \dot{\vz} &= - \nabla f(\vy) , \label{eqn:ode-z} \\
    \dot{\vx} &= \frac{1}{t} (\vz - \vx).  \label{eqn:ode-x}
\end{align}\end{subequations}  
\end{defn}
Despite the singularity of \eqref{eqn:ode-x} at $t = 0$, it turns out  for any given initial point $\vx_0 = \vz_0 \in \rr^d$, one can always find a function triple $(\vx, \vy, \vz)$ which becomes a solution of the \sfode{} on the interval $(0, T]$ for some $T > 0$ with $\lim_{t\to 0+} \bigl(\vx(t), \vy(t), \vz(t)\bigr) = (\vx_0, \vx_0, \vx_0)$. 
See \Cref{appx:ode-study} for further details. 

\sfode{} turns out to admit a Lyapunov function of a relatively simple form, namely, a weighted sum of $f(\vy)$ and $\norm{\vz - \vx}^2$, as shown in the following proposition.

\begin{prop} \label[prop]{prop:Lyapunov}
 Consider a function $\gL: (0, \infty) \to \rr$ defined as \begin{equation} \label{eqn:conti-lyap}
    \gL(t) = f(\vy(t)) - f^* + \frac{\beta}{2 t} \norm{\vz(t) - \vx(t)}^2.
    \end{equation} Then it holds that $\dot{\gL} \leq -(1-\beta) \norm{\nabla f (\vy)}^2 \leq 0$, and thus $\gL$ is a Lyapunov function of \sfode. 
\end{prop}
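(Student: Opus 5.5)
The plan is a direct computation of $\dot{\gL}$ along a solution of the \sfode{}, using the chain rule. Throughout we work on an interval $(0,T]$ where a solution $(\vx,\vy,\vz)$ exists, as discussed in \Cref{appx:ode-study}. On such an interval $\vx$ and $\vz$ are $C^1$, hence so is $\vy$ by \eqref{eqn:ode-y}. Since $f$ is $C^1$ by \Cref{asmp:smooth}, the map $t\mapsto f(\vy(t))$ is differentiable. The mild technical point is that these properties hold only away from $t=0$. This is harmless, because $\gL$ is itself only defined on $(0,\infty)$ and the claim is a pointwise inequality for $t>0$.

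It is convenient to set $\vw := \vz - \vx$. We first record the derivatives of the state variables:
\begin{align*}
\dot{\vy} &= (1-\beta)\dot{\vz} + \beta\dot{\vx} = -(1-\beta)\nabla f(\vy) + \frac{\beta}{t}\vw, \\
\dot{\vw} &= \dot{\vz} - \dot{\vx} = -\nabla f(\vy) - \frac{1}{t}\vw.
\end{align*}
The derivative of the first term of $\gL$ then follows from the chain rule:
\[
\frac{d}{dt}f(\vy) = \langle \nabla f(\vy), \dot{\vy}\rangle = -(1-\beta)\norm{\nabla f(\vy)}^2 + \frac{\beta}{t}\langle \nabla f(\vy), \vw\rangle .
\]
The product rule gives the derivative of the weighted distance term:
\[
\frac{d}{dt}\Bigl(\frac{\beta}{2t}\norm{\vw}^2\Bigr) = -\frac{\beta}{2t^2}\norm{\vw}^2 + \frac{\beta}{t}\langle \vw,\dot{\vw}\rangle = -\frac{3\beta}{2t^2}\norm{\vw}^2 - \frac{\beta}{t}\langle \vw,\nabla f(\vy)\rangle .
\]

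The key step is to add these two expressions. The indefinite cross terms $\pm\frac{\beta}{t}\langle\nabla f(\vy),\vw\rangle$ cancel exactly, which is precisely why the weight on $\norm{\vz-\vx}^2$ is chosen as $\frac{\beta}{2t}$. What remains is
\[
\dot{\gL} = -(1-\beta)\norm{\nabla f(\vy)}^2 - \frac{3\beta}{2t^2}\norm{\vz-\vx}^2 .
\]
Since $\beta\in[0,1]$, the second term is nonpositive, and we obtain $\dot{\gL}\le -(1-\beta)\norm{\nabla f(\vy)}^2\le 0$. Finally, \Cref{asmp:lower-bound} gives $\gL\ge 0$, so $\gL$ is a nonnegative, nonincreasing function along trajectories, i.e., a Lyapunov function of the \sfode{}.

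I do not expect a genuine obstacle here. The only points needing care are choosing the weight that makes the cross terms cancel and justifying differentiability on $(0,T]$ via the existence result of \Cref{appx:ode-study}.
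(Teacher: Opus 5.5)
Your proposal is correct and is essentially the paper's proof: a direct chain-rule computation in which the cross terms $\pm\frac{\beta}{t}\langle \nabla f(\vy), \vz-\vx\rangle$ cancel, leaving $\dot{\gL} = -(1-\beta)\norm{\nabla f(\vy)}^2 - \frac{3\beta}{2t^2}\norm{\vz-\vx}^2$. The only cosmetic difference is that the paper writes the computation in terms of $\dot{\vz}$ and $\dot{\vx}$, using $\nabla f(\vy) = -\dot{\vz}$ and $\vz-\vx = t\dot{\vx}$, whereas you write it in terms of $\nabla f(\vy)$ and $\vw = \vz-\vx$.
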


The bound on $\dot{\gL}$ derived in \Cref{prop:Lyapunov} can be further used to characterize the behavior of the solution trajectories as $t \to \infty$.  
In particular, we can show that $\norm{\nabla f(\vy(t))}^2 = O(1/T)$, as follows.

\begin{cor} \label[cor]{cor:asymps-of-sfode}
    For any $T > 0$, it holds that \[
    \min_{0 \leq t \leq T}\; \norm{\nabla f(\vy(t))}^2 \leq \frac{f(\vx_0) - f^*}{(1-\beta) T}.
    \]
\end{cor}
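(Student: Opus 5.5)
The plan is to integrate the differential inequality of \Cref{prop:Lyapunov} over $[\epsilon, T]$ and then let $\epsilon \to 0^+$. For any $0 < \epsilon < T$, integrating $\dot{\gL} \leq -(1-\beta)\norm{\nabla f(\vy)}^2$ gives
\[
(1-\beta)\int_\epsilon^T \norm{\nabla f(\vy(t))}^2 \, dt \leq \gL(\epsilon) - \gL(T) \leq \gL(\epsilon),
\]
where the last step uses $\gL(T) \geq 0$. This holds because $f(\vy(T)) \geq f^*$ by \Cref{asmp:lower-bound}, and the quadratic term is nonnegative since $\beta \geq 0$ and $T > 0$.

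The main obstacle is controlling $\gL(\epsilon)$ as $\epsilon \to 0^+$, because the weight $\frac{\beta}{2\epsilon}$ blows up. The term $f(\vy(\epsilon)) - f^*$ converges to $f(\vx_0) - f^*$ by continuity of $f$ and the fact (from \Cref{appx:ode-study}) that $\vy(\epsilon) \to \vx_0$. For the second term, I would show that $\norm{\vz(t) - \vx(t)} = O(t)$ near $0$. Since $\dot{\vz} = -\nabla f(\vy)$ and $\vy$ stays bounded near $0$, $\vz$ is Lipschitz near $0$, so $\vz(t) = \vx_0 + O(t)$. Next, \eqref{eqn:ode-x} is equivalent to $\frac{d}{dt}(t\vx) = \vz$. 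Together with $\vx(t) \to \vx_0$, which makes $t\vx(t) \to 0$, this gives $\vx(t) = \frac{1}{t}\int_0^t \vz(s)\,ds$, the running average of $\vz$. Hence $\vx(t) = \vx_0 + O(t)$ as well, so $\norm{\vz - \vx}^2 = O(t^2)$. Therefore $\frac{\beta}{2\epsilon}\norm{\vz(\epsilon) - \vx(\epsilon)}^2 = O(\epsilon) \to 0$, and $\lim_{\epsilon \to 0^+}\gL(\epsilon) = f(\vx_0) - f^*$. (If \Cref{appx:ode-study} already gives this limit, it can be cited directly.)

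Letting $\epsilon \to 0^+$ then yields $(1-\beta)\int_0^T \norm{\nabla f(\vy(t))}^2\,dt \leq f(\vx_0) - f^*$. The integrand is continuous on $(0,T]$ and extends continuously to $t = 0$, so it is bounded below on $[0,T]$ by $\min_{0\le t\le T}\norm{\nabla f(\vy(t))}^2$. The integral is therefore at least $T$ times this minimum. Dividing by $(1-\beta)T$, which is valid since $\beta < 1$, gives the claimed bound.
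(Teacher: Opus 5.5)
Your proposal is correct and follows essentially the same route as the paper: integrate $\dot{\gL} \leq -(1-\beta)\norm{\nabla f(\vy)}^2$, discard $\gL(T) \geq 0$, identify $\lim_{h \to 0+}\gL(h) = f(\vx_0) - f^*$, and bound the integral below by $T$ times the minimum. The only difference is in showing that $\frac{\beta}{2h}\norm{\vz(h)-\vx(h)}^2 \to 0$. The paper uses the exact limit $\frac{\vz(t)-\vx(t)}{t} \to -\frac{1}{2}\nabla f(\vx_0)$ from \Cref{prop:c1-extension}, whereas you use a running-average argument giving $\norm{\vz(t)-\vx(t)} = O(t)$, which is an equally valid and slightly more elementary justification.
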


\section{Convergence Guarantees for \sfggd} \label{sec:conv} 

Motivated by the analyses of the \sfode{} in \Cref{sec:sfode}, to study the convergence of \sfgd{} and \sfsgd, it is natural to seek a discrete-time Lyapunov potential for them that mirrors the Lyapunov function~\eqref{eqn:SF-ODE} of \sfode. 
In this section, we explicitly construct such a potential and use it to obtain the corresponding convergence rate guarantees. 

While the Lyapunov potentials we construct are motivated by the Lyapunov function of the \sfode, which is the continuous-time limiting ODE of \sfgd{} with the specific choice $c_{k+1} = \frac{1}{k+1}$ of the averaging rates and a constant learning rate $\gamma_k = \gamma$, it turns out that the Lyapunov potentials in the discrete setting are much more flexible, allowing a unified analysis for any sequence of learning rates and averaging rates that satisfy the following conditions. 

\begin{asmp}\label[asmp]{asmp:lrn-rate}
    For a fixed constant $\gamma$ and the number of \emph{warmup steps} $\warmstep$ satisfying $1\leq \warmstep \leq T$, the learning rates are set as \begin{equation} \label{eqn:lr-assumption}
    \gamma_k = \gamma \min\left\{1, \frac{k+1}{\warmstep}\right\}.
    \end{equation} 
\end{asmp}
 
\looseness=-1
We write
$\warmratio \coloneqq \frac{\warmstep-1}{T}$, so that $\warmratio$ is the fraction of the optimization steps allocated to the warmup phase. 
Since $\gamma_k=\gamma$ already at $k=\warmstep-1$, whether to count the $(\warmstep-1)$th step as part of warmup is a stylistic choice.
We exclude it when defining $\warmratio$, and thus, we have $0 \le \warmratio < 1$.
In particular, when $\warmstep=1$ so that $\gamma_k = \gamma$ for all $k \geq 0$, there is essentially no warmup phase and $\warmratio = 0$. 
In practice, warmup is often $5\%$ of the run~\citep{Defa24}, so we view $\warmratio$ as a fixed small constant independent of $T$.

\begin{asmp}\label[asmp]{asmp:avg-rate}
    For $\gamma_k$ satisfying \Cref{asmp:lrn-rate}, there exists a constant $C > 0$ such that \begin{equation} \label{eqn:main-song-avg}
        c_{k+1} = \min\left\{\frac{\gamma_k^2}{\sum_{i=0}^k \gamma_i^2} \cdot (1-\beta) C ,\, 1\right\}.
    \end{equation}
\end{asmp} 

When $\beta<1$ and $C=\frac{1}{1-\beta}$, the rule \eqref{eqn:main-song-avg} reduces to the averaging rates suggested by \citet{Defa24}. 
Notice that taking $\warmstep=1$ so that there is essentially no warmup phase is equivalent to choosing a constant learning rate $\gamma_k = \gamma$, and yields the standard averaging rates $c_{k+1} = \frac{1}{k+1}$. 
Introducing the additional hyperparameter $C$ and setting the averaging rates as in \eqref{eqn:main-song-avg} follows the strategy proposed by \citet{Song25}, which is presented as a technique that empirically improves the
robustness to the parameter $\beta$ and performs better under large batch sizes, compared to when $C$ is fixed as $\frac{1}{1-\beta}$.

One noteworthy consequence of \Cref{asmp:avg-rate} is that $\{c_{k+1}\}_{k \geq 0}$ is then monotonically decreasing.
Hence, we may define $k_0$ to be the largest integer such that $c_{k+1} = 1$ for all $k \leq k_0$. 
If such a nonnegative integer $k$ does not exist, we naturally set $k_0 = -1$. 
Further discussions and additional details on \Cref{asmp:avg-rate} can be found in \Cref{appx:decr-avg-rates}.

\subsection{Convergence Analysis via the Design of Lyapunov Potentials}
The Lyapunov function \eqref{eqn:conti-lyap} of the \sfode{} provides strong evidence for imposing the ansatz that \sfgd{} has a Lyapunov potential of the form \begin{equation} \label{eqn:discrete_lyap_potential}
    V_k = f(\vy_k) - f^* + \alpha_k \norm{\vz_k - \vx_k}^2,
\end{equation} which, for a carefully chosen sequence of real numbers $\{\alpha_k\}_{k \ge 0}$, decreases monotonically. 
For \sfsgd, the Lyapunov potential might not decrease monotonically in the strict sense because additional terms are introduced by the noise in the stochastic gradients, but we expect these additional terms to be small enough to be well controlled by the decrease of the other terms. 

In the proof of \Cref{prop:Lyapunov}, a clean derivation that \eqref{eqn:conti-lyap} is a Lyapunov function of the \sfode{} relies crucially on the inner product term $\inprod{\dot{\vz}}{\dot{\vx}}$ canceling out, which is achieved by deliberately weighing $\norm{\vz(t) - \vx(t)}^2$.
Rewriting the inner product term using \eqref{eqn:SF-ODE} as $\inprod{\dot\vz}{\dot\vx} = -\frac{1}{t} \inprod{\nabla f(\vy)}{\vz - \vx}$, one expects that the term $\inprod{\nabla f(\vy_k)}{\vz_k - \vx_k}$, possibly up to a shift of some indices, will appear in the Lyapunov analysis of the discrete-time systems involving $V_k$.
The coefficients $\alpha_k$ should thus be chosen so as to cancel these inner product terms, ideally with ensuring that the remaining terms are monotonically decreasing, up to a small additive noise term when using stochastic gradients.

\Cref{prop:lyapunov-descent} shows that such a choice of $\alpha_k$ indeed exists. 
To provide a unified view, we establish the result in the stochastic setting, showing that the Lyapunov potentials are decreasing in expectation, up to an additive term induced by the noise in the gradients.
In order to recover the results for the deterministic setting, one can simply set $\sigma^2 = 0$ and ignore taking the expectations. 
\begin{prop}\label[prop]{prop:lyapunov-descent}
Suppose that \Cref{asmp:lower-bound,asmp:smooth,asmp:stochastic} hold. 
For $\beta \in [0, 1)$,
let $\{\vx_k\}_{k\geq 0}$, $\{\vy_k\}_{k\geq 0}$, and $\{\vz_k\}_{k\geq 0}$ be the iterates computed by \sfsgd{} with the learning rates $\{\gamma_k\}_{k \geq 0}$ satisfying \eqref{eqn:lr-assumption}, and the averaging rates $\{c_{k+1}\}_{k \geq 0}$ satisfying \eqref{eqn:main-song-avg}, where $k_0$ denotes the largest integer such that $c_{k+1} = 1$ for all $k \leq k_0$. 
Define a sequence of nonnegative real numbers $\{\alpha_k\}_{k\geq 0}$ as \[ 
    \alpha_k = \frac{\beta c_k (1-\gamma_{k-1} L (1 - \beta + \beta c_k))}{2\gamma_{k-1}(1-c_{k})^2}
\] for all $k = k_0+2, k_0+3, \dots$ and \[
\alpha_0 = \alpha_1 = \dots = \alpha_{k_0+1} = \frac{L \beta^2 c_{k_0+2}^2}{2} + \alpha_{k_0+2} (1-c_{k_0+2})^2.
\] Then, the Lyapunov potentials defined as in \eqref{eqn:discrete_lyap_potential} satisfy the inequality \[
    \expt[V_{k+1}] - \expt[V_k] \leq - \frac{\gamma_k(1-\beta)}{2} \expt \bigl[\norm{\nabla f(\vy_k)}^2 \bigr] + \frac{\gamma_k \sigma^2}{2} \bigl(\beta (1 + \gamma_k L (1-\beta)) c_{k+1} + \gamma_k L (1-\beta)^2 \bigr)
\] for all nonnegative integers $k \geq k_0 + 1$, and \[ 
    \expt[V_{k+1}] - \expt[V_k] \leq - \frac{\gamma_k }{2} \expt \bigl[\norm{\nabla f(\vy_k)}^2 \bigr] + \frac{\gamma_k^2 L \sigma^2}{2} 
\] for all nonnegative integers $k \leq k_0$. 
\end{prop}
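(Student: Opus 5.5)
We will prove the one-step inequality by expanding $V_{k+1}-V_k$ into a function-value part and a distance part, and then choosing $\alpha_{k+1}$ so that the troublesome cross terms cancel.

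\textbf{Step 1: express the new iterates in terms of $\vd_k \coloneqq \vz_k-\vx_k$ and $\vg_k$.}
From \eqref{eqn:zk} and \eqref{eqn:xk} we have
\[
\vz_{k+1}-\vx_{k+1}=(1-c_{k+1})(\vz_{k+1}-\vx_k)=(1-c_{k+1})(\vd_k-\gamma_k\vg_k).
\]
For the gradient location, \eqref{eqn:yk} gives $\vy_k=\vx_k+(1-\beta)\vd_k$. Hence
\[
\vy_{k+1}-\vy_k=\vx_{k+1}-\vx_k+(1-\beta)(\vd_{k+1}-\vd_k).
\]
Substituting $\vx_{k+1}-\vx_k=c_{k+1}(\vd_k-\gamma_k\vg_k)$ yields
\[
\vy_{k+1}-\vy_k=-\gamma_k\bigl(1-\beta+\beta c_{k+1}\bigr)\vg_k-\beta c_{k+1}\vd_k.
\]

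\textbf{Step 2: bound the function-value part.}
Apply the $L$-smooth descent lemma to $f(\vy_{k+1})-f(\vy_k)$ using the displacement from Step 1. Take the conditional expectation given $\vy_k$, and use \Cref{asmp:stochastic}:
\[
\expt\bigl[\|\vg_k\|^2\bigr]\le \|\nabla f(\vy_k)\|^2+\sigma^2,\qquad \expt\bigl[\inprod{\vg_k}{\vd_k}\bigr]=\inprod{\nabla f(\vy_k)}{\vd_k},
\]
where $\vd_k$ is measurable with respect to the past. This produces the following terms: a term $-\gamma_k(1-\beta+\beta c_{k+1})\|\nabla f(\vy_k)\|^2$, a cross term $-\beta c_{k+1}\inprod{\nabla f(\vy_k)}{\vd_k}$, a quadratic term $\frac L2\|\cdot\|^2$ (expanded into $\|\vg_k\|^2$, cross, and $\|\vd_k\|^2$ pieces), and $\sigma^2$ terms.

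\textbf{Step 3: bound the distance part.}
We have
\[
\alpha_{k+1}\|\vd_{k+1}\|^2=\alpha_{k+1}(1-c_{k+1})^2\bigl(\|\vd_k\|^2-2\gamma_k\inprod{\vg_k}{\vd_k}+\gamma_k^2\|\vg_k\|^2\bigr).
\]

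\textbf{Step 4: collect coefficients.}
\emph{Cross terms.} The total coefficient of $\inprod{\nabla f(\vy_k)}{\vd_k}$ is
\[
-\beta c_{k+1}+L\beta c_{k+1}\gamma_k(1-\beta+\beta c_{k+1})-2\alpha_{k+1}\gamma_k(1-c_{k+1})^2 .
\]
This vanishes exactly when $\alpha_{k+1}$ takes the formula in the statement; that formula is precisely the cancellation condition, mirroring the $\inprod{\dot\vz}{\dot\vx}$ cancellation in \Cref{prop:Lyapunov}.

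\emph{The $\|\vd_k\|^2$ terms.} The coefficient is
\[
\frac{L\beta^2c_{k+1}^2}{2}+\alpha_{k+1}(1-c_{k+1})^2-\alpha_k .
\]
We must show this is $\le 0$.
\begin{itemize}
\item For $k=k_0+1$ it holds with equality, by the definition of $\alpha_{k_0+1}$.
\item For $k\le k_0$ we have $c_{k+1}=1$, so $\vd_{k+1}=0$ and $\vx_{k+1}=\vz_{k+1}$. The scheme then reduces to plain SGD, $\vy_{k+1}=\vz_{k+1}$. Moreover $\vd_k=0$ for $1\le k\le k_0$ and $\vd_0=0$ as well, so all $\alpha$-terms vanish. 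The standard SGD descent lemma with $\gamma_k\le 1/L$ then gives the second inequality.
\item For $k\ge k_0+2$ we need the monotonicity-type inequality
\[
\alpha_k\ge \alpha_{k+1}(1-c_{k+1})^2+\frac{L\beta^2c_{k+1}^2}{2}.
\]
\end{itemize}

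\emph{Gradient-norm terms.} The remaining coefficient of $\|\nabla f(\vy_k)\|^2$ is
\[
-\gamma_k(1-\beta+\beta c_{k+1})+\frac{L\gamma_k^2(1-\beta+\beta c_{k+1})^2}{2}+\alpha_{k+1}(1-c_{k+1})^2\gamma_k^2 .
\]
Substituting $\alpha_{k+1}$, this simplifies to roughly
\[
-\gamma_k(1-\beta)+\frac{\gamma_k(1-\beta)}{2}\bigl[\gamma_kL(1-\beta)\bigr]+\dots
\]
We must bound it by $-\frac{\gamma_k(1-\beta)}{2}$, which requires $\gamma_k L(1-\beta)\lesssim 1$. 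The $\sigma^2$ terms collect into exactly the stated expression $\frac{\gamma_k\sigma^2}{2}\bigl(\beta(1+\gamma_kL(1-\beta))c_{k+1}+\gamma_kL(1-\beta)^2\bigr)$, which I will verify by bookkeeping.

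\textbf{Main obstacle.} The hardest step is the inequality
\[
\alpha_k\ge \alpha_{k+1}(1-c_{k+1})^2+\frac{L\beta^2c_{k+1}^2}{2}\qquad (k\ge k_0+2),
\]
together with nonnegativity of $\alpha_k$. My plan is to use \Cref{asmp:lrn-rate,asmp:avg-rate} to derive the recursion
\[
\frac{c_{k+1}}{\gamma_k^2}=\frac{c_k(1-c_{k+1})}{\gamma_{k-1}^2},
\]
which follows because $\sum_{i\le k}\gamma_i^2=\sum_{i\le k-1}\gamma_i^2+\gamma_k^2$. Combined with $\gamma_{k-1}\le\gamma_k$, this gives
\[
\frac{c_{k+1}}{\gamma_k(1-c_{k+1})}\le\frac{c_k}{\gamma_{k-1}}\cdot\frac{\gamma_k}{\gamma_{k-1}}\cdots
\]
I would try to show that $\frac{c_k}{\gamma_{k-1}(1-c_k)^2}$ dominates $\frac{c_{k+1}}{\gamma_k}$, handling the warmup and constant-rate regimes separately. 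I would then absorb the $L$-dependent corrections using an implicit step-size condition such as $\gamma L\le 1$.
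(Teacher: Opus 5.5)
\textbf{Overall.} Your architecture is the paper's: the same potential and the same expansion via the descent lemma and $\vz_{k+1}-\vx_{k+1}=(1-c_{k+1})(\vz_k-\vx_k-\gamma_k\vg_k)$. You also choose $\alpha_{k+1}$ to kill the $\inprod{\nabla f(\vy_k)}{\vz_k-\vx_k}$ term, get equality at $k=k_0+1$, and reduce to plain SGD for $k\le k_0$. You are also right that $\gamma\le 1/L$ is needed; the statement omits it, and the paper's appendix version assumes $\gamma_k\le 1/L$. Two things go wrong, however.

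\textbf{Sign error in Step~1.} The coefficient of $\vz_k-\vx_k$ is $c_{k+1}-(1-\beta)c_{k+1}=+\beta c_{k+1}$, so $\vy_{k+1}-\vy_k=\beta c_{k+1}(\vz_k-\vx_k)-\gamma_k(1-\beta+\beta c_{k+1})\vg_k$. With your sign, the cross coefficient you display vanishes only for the \emph{negative} of the stated $\alpha_{k+1}$. With the correct sign it reads $\beta c_{k+1}-\gamma_kL\beta c_{k+1}(1-\beta+\beta c_{k+1})-2\gamma_k\alpha_{k+1}(1-c_{k+1})^2$, which the stated formula does annihilate.

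\textbf{The main gap.} The only nontrivial inequality, $D_k\le 0$ for $k\ge k_0+2$, is left open, and your proposed route does not work. The recursion $c_{k+1}/\gamma_k^2=c_k(1-c_{k+1})/\gamma_{k-1}^2$ holds only if $(1-\beta)C=1$; in general the factor is $1-c_{k+1}/((1-\beta)C)$. Even then it produces an extra factor $\gamma_k/\gamma_{k-1}\ge 1$ during warmup, so it bounds in the wrong direction.

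Nor can the $L$-terms be compared separately from the $1/\gamma$-terms. The negative part $-\frac{\beta c_k}{2(1-c_k)^2}L(1-\beta+\beta c_k)$ of $\alpha_k$ is larger in magnitude than the corresponding $-\frac{\beta c_{k+1}}{2}L(1-\beta)$ on the other side.

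The missing idea is that no schedule identity is needed, only $c_{k+1}\le c_k<1$ and $\gamma_{k-1}\le\gamma_k\le 1/L$. The argument goes in three moves:
\begin{itemize}
\item The term $\frac{L\beta^2c_{k+1}^2}{2}$ cancels exactly against part of $\alpha_{k+1}(1-c_{k+1})^2$.
\item Lower $1/\gamma_{k-1}$ to $1/\gamma_k$ in $\alpha_k$; the prefactor is nonnegative, so this decreases $\alpha_k$.
\item Split $\frac{1}{\gamma_k}-L(1-\beta+\beta c_k)=\bigl(\frac{1}{\gamma_k}-L\bigr)+L\beta(1-c_k)$.
\end{itemize}
This gives
\begin{align*}
\alpha_k &\ge \frac{\beta}{2}\Bigl(\frac{c_k}{(1-c_k)^2}\Bigl(\frac{1}{\gamma_k}-L\Bigr)+\frac{c_k}{1-c_k}\,L\beta\Bigr) \\
&\ge \frac{\beta}{2}\Bigl(c_{k+1}\Bigl(\frac{1}{\gamma_k}-L\Bigr)+c_{k+1}L\beta\Bigr) = \alpha_{k+1}(1-c_{k+1})^2+\frac{L\beta^2c_{k+1}^2}{2}.
\end{align*}
The second inequality is termwise, using $\frac{1}{\gamma_k}-L\ge 0$ and $c_{k+1}\le c_k\le\frac{c_k}{1-c_k}\le\frac{c_k}{(1-c_k)^2}$. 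This is exactly where $\gamma_kL\le 1$ enters, as you suspected.

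\textbf{What you still owe.} You must establish the monotonicity $c_{k+1}\le c_k$ from the schedule. The paper derives it from log-concavity of $\{\gamma_k\}$, which makes $\gamma_k^2/\sum_{i\le k}\gamma_i^2$ nonincreasing, and clipping at $1$ preserves this. Finally, the $\norm{\nabla f(\vy_k)}^2$ coefficient you only sketch equals exactly $\frac{\gamma_k}{2}\bigl((1-\beta)(\gamma_kL(1-\beta)-2)+\beta c_{k+1}(\gamma_kL(1-\beta)-1)\bigr)$, which is at most $-\frac{\gamma_k(1-\beta)}{2}$.
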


For the proof, see \Cref{appx:lyap-descent}.
It should be noted that \citet{Brow25} have also considered a sequence of Lyapunov potentials whose overall form is quite similar to our $V_k$ in \eqref{eqn:discrete_lyap_potential}, although their Lyapunov potentials involve $f(\vx_k)$ instead of $f(\vy_k)$, with different coefficients $\alpha_k$.

\Cref{prop:lyapunov-descent} can be converted into a convergence bound for \sfsgd, as follows.

\begin{thm}\label{thm:master-convergence-thm}
    Suppose that \Cref{asmp:lower-bound,asmp:smooth,asmp:stochastic} hold. 
    For $\gamma \in (0, \frac{1}{L}]$ and $\beta \in [0, 1)$, let $\{\vx_k\}_{k\geq 0}$, $\{\vy_k\}_{k\geq 0}$, and $\{\vz_k\}_{k\geq 0}$ be the iterates computed by \sfsgd{} with the warmup length $\warmstep$ and the learning rates $\{\gamma_k\}_{k \geq 0}$ satisfying \eqref{eqn:lr-assumption}, and the averaging rates $\{c_{k+1}\}_{k \geq 0}$ satisfying \eqref{eqn:main-song-avg}.
    Then for any $T \geq 1$, letting $\warmratio = \frac{\warmstep-1}{T}$, we have the bound \begin{equation}\label{eqn:master-bound}  
        \min_{k=0, \dots, T-1} \expt \bigl[\norm{\nabla f(\vy_k)}^2 \bigr] \leq \frac{2(f(\vx_0) - f^*)}{\gamma (1-\beta) (1-\warmratio)T} + \frac{\gamma L (1-\beta) \sigma^2}{1-\warmratio} + \frac{3\beta(2-\beta) C \sigma^2 (1+\log T)}{ (1-\warmratio) T}. 
    \end{equation}
\end{thm}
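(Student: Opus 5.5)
The plan is to telescope the one-step inequalities of \Cref{prop:lyapunov-descent} over $k=0,\dots,T-1$, and then turn the weighted sum of expected squared gradient norms into a bound on the minimum.

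\textbf{Telescoping.} Since $\gamma_k\le\gamma$, the factor $\tfrac{\gamma_k}{2}$ in the regime $k\le k_0$ is at least $\tfrac{\gamma_k(1-\beta)}{2}$, so both regimes can be written with the common descent term $\tfrac{\gamma_k(1-\beta)}{2}\expt\|\nabla f(\vy_k)\|^2$. Summing gives
\[
\frac{1-\beta}{2}\sum_{k=0}^{T-1}\gamma_k\,\expt\|\nabla f(\vy_k)\|^2 \le V_0-\expt[V_T] + \sum_{k=0}^{T-1}\mathrm{Noise}_k .
\]
Here $V_T\ge 0$ by \Cref{asmp:lower-bound} and $\alpha_T\ge0$. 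Moreover $V_0=f(\vx_0)-f^*$, because $\vz_0=\vx_0$ forces $\vy_0=\vx_0$. The nonnegativity $\alpha_k\ge0$ holds since $\gamma_{k-1}L(1-\beta+\beta c_k)\le \gamma L\le 1$.

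\textbf{Lower-bounding the weights.} We have $\gamma_k=\gamma$ for $k\ge \warmstep-1$, so $\sum_{k=0}^{T-1}\gamma_k\ge \gamma(T-\warmstep+1)=\gamma(1-\warmratio)T$. Hence the left side is at least $\tfrac{(1-\beta)\gamma(1-\warmratio)T}{2}\min_k\expt\|\nabla f(\vy_k)\|^2$. Dividing by this quantity, the $f(\vx_0)-f^*$ term becomes exactly the first term of \eqref{eqn:master-bound}.

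\textbf{Noise terms.} There are two parts.
\begin{itemize}
\item The $\gamma_k^2L\sigma^2$ parts, i.e.\ $\tfrac{\gamma_k^2L(1-\beta)^2\sigma^2}{2}$ for $k>k_0$ and $\tfrac{\gamma_k^2L\sigma^2}{2}$ for $k\le k_0$.
\item The averaging part $\tfrac{\gamma_k\sigma^2\beta(1+\gamma_kL(1-\beta))c_{k+1}}{2}\le \tfrac{\gamma\sigma^2\beta(2-\beta)c_{k+1}}{2}$, using $\gamma_kL\le1$.
\end{itemize}
For the first part, bounding $\gamma_k^2\le\gamma^2$ and summing gives at most $\tfrac{T\gamma^2L(1-\beta)^2\sigma^2}{2}$, which after division yields $\tfrac{\gamma L(1-\beta)\sigma^2}{1-\warmratio}$. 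The $k\le k_0$ steps are a nuisance because they carry the factor $1$ rather than $(1-\beta)^2$. To handle them, I would use that for $k\le k_0$ one has $c_{k+1}=1$ and $1\le\beta(2-\beta)c_{k+1}+(1-\beta)^2$, and absorb the excess $\beta(2-\beta)$ into the averaging part.

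For the averaging part, I need $\sum_{k<T}c_{k+1}\lesssim C(1+\log T)$. From \eqref{eqn:main-song-avg},
\[
c_{k+1}\le(1-\beta)C\frac{\gamma_k^2}{\sum_{i\le k}\gamma_i^2}.
\]
The sum of ratios $\tfrac{a_k}{\sum_{i\le k}a_i}$ is bounded by $1+\log\bigl(\sum_{i<T}a_i/a_0\bigr)$. I would prefer an explicit estimate that gives $\log T$ cleanly. During warmup, $\gamma_k^2\propto(k+1)^2$ and $\sum_{i\le k}(i+1)^2\ge (k+1)^3/3$, so the ratio is at most $3/(k+1)$. After warmup, the ratio is at most $1/(k+1)$, since $\gamma_i\le\gamma$ with equality for the largest terms. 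This gives $\sum_k c_{k+1}\le 3C(1+\log T)$ (dropping $1-\beta\le1$); the min with $1$ only helps. Multiplying by $\tfrac{\gamma\sigma^2\beta(2-\beta)}{2}$ and dividing by $\tfrac{(1-\beta)\gamma(1-\warmratio)T}{2}$ gives the third term, where the factor $(1-\beta)$ from $c_{k+1}$ cancels the $(1-\beta)$ in the denominator.

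\textbf{Main obstacle.} The main obstacle is this harmonic-type bound with the constant $3$, together with the bookkeeping needed for the $k\le k_0$ steps. For those steps, I would also bound $k_0$ via $(1-\beta)C\cdot 3/(k+1)\ge1$, which gives $k_0+1\le 3(1-\beta)C$. The total extra noise from these steps is then $O(\gamma^2L\sigma^2(1-\beta)C)$, which fits inside the third term's budget because $\gamma L\le1$.
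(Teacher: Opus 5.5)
Your plan follows the paper's proof essentially step for step. You telescope the descent lemma, use $V_0=f(\vx_0)-f^*$ and $V_T\ge 0$, and lower-bound the total weight by $\gamma(1-\warmratio)T$. On the steps $k\le k_0$ you use $1=(1-\beta)^2+\beta(2-\beta)$ to push their excess into the averaging term. You finish with $c_{k+1}\le(1-\beta)C\tilde c_{k+1}\le 3(1-\beta)C/(k+1)$, where $\tilde c_{k+1}=\gamma_k^2/\sum_{i\le k}\gamma_i^2$.

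One step, however, is false as written. After warmup you claim $\tilde c_{k+1}\le 1/(k+1)$ because $\gamma_i\le\gamma$. But $\gamma_i\le\gamma$ gives $\sum_{i\le k}\gamma_i^2\le(k+1)\gamma^2$, i.e.\ $\tilde c_{k+1}\ge 1/(k+1)$, the reverse direction. For $\warmstep\ge 2$ the inequality is strict: with $\warmstep=2$ and $k=2$ one gets $\tilde c_3=4/9>1/3$. The small warmup learning rates shrink the denominator rather than enlarge it.

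The repair is easy and keeps your constant. The warmup block alone satisfies $\sum_{i<\warmstep}\gamma_i^2=\gamma^2\warmstep^{-2}\sum_{j=1}^{\warmstep}j^2\ge\gamma^2\warmstep/3$. So for $k\ge\warmstep-1$ you get $\sum_{i\le k}\gamma_i^2\ge\gamma^2\bigl(k+1-\tfrac{2}{3}\warmstep\bigr)\ge\gamma^2(k+1)/3$, hence $\tilde c_{k+1}\le 3/(k+1)$ there as well. The paper proves the sharper bound $6(k+1)/((k+2)(2k+3))$ by an explicit quadratic inequality. Alternatively, the general estimate you mention already suffices: $\sum_{k<T}a_k/S_k\le 1+\log(S_{T-1}/a_0)$ with $S_{T-1}/a_0\le T\warmstep^2\le T^3$ gives $1+3\log T\le 3(1+\log T)$.

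There are also two smaller points.
\begin{itemize}
\item You must keep the factor $(1-\beta)$, i.e.\ $\sum_{k<T}c_{k+1}\le 3(1-\beta)C(1+\log T)$. ``Dropping $1-\beta\le 1$'' would leave a spurious $1/(1-\beta)$ in the third term, and it contradicts the cancellation you invoke in the next sentence.
\item Once the $k\le k_0$ excess is absorbed into $\sum_{k<T}c_{k+1}$, the separate bound $k_0+1\le 3(1-\beta)C$ is unnecessary. The absorption is valid because $c_{k+1}=1\le(1-\beta)C\tilde c_{k+1}$ there, which is exactly the paper's $\gamma_k^2L\le\gamma_k\tilde c_{k+1}(1-\beta)C$. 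Adding the $k_0$ bound on top would double-count those steps and spoil the constant $3$.
\end{itemize}
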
 

Notice that the convergence bound \eqref{eqn:master-bound} does not require any assumptions on $\beta$ other than $\beta < 1$, and also incorporates the warmup phase in the learning rates as used in practice.

\subsection{Rate-Optimal Convergence of \sfgd{} and \sfsgd}
From \Cref{thm:master-convergence-thm}, we can immediately deduce the convergence guarantee for \sfgd{} by setting $\sigma^2 = 0$.
As the method is deterministic, we can also drop the expectation from the left hand side of~\eqref{eqn:master-bound}, and obtain the following. 

\begin{cor}\label[cor]{cor:sfgd-rate}
    Suppose that \Cref{asmp:lower-bound,asmp:smooth} hold. 
    For any $\gamma \in (0, \frac{1}{L}]$ and $\beta \in [0, 1)$, let $\{\vx_k\}_{k \geq 0}$, $\{\vy_k\}_{k \geq 0}$, and $\{\vz_k\}_{k \geq 0}$ be the iterates computed by \sfgd{} with the warmup length $\warmstep$ and the learning rates $\{\gamma_k\}_{k \geq 0}$ satisfying \eqref{eqn:lr-assumption}, and the averaging rates $\{c_{k+1}\}_{k \geq 0}$ satisfying \eqref{eqn:main-song-avg}. 
    Then for any $T \geq 1$, letting $\warmratio = \frac{\warmstep-1}{T}$, it holds that \begin{equation} \label{eqn:sfgd-1t}
        \min_{k=0, \dots, T-1}\, \norm{\nabla f(\vy_k)}^2 \leq \frac{2(f(\vx_0) - f^*)}{\gamma (1-\beta) (1-\warmratio) T}.
    \end{equation}
\end{cor}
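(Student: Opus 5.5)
The corollary is the deterministic specialization of \Cref{thm:master-convergence-thm}, and the plan is to read it off from that theorem with $\sigma^2=0$.

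\textbf{Step 1: the deterministic oracle satisfies the stochastic assumptions.} \sfgd{} is exactly \sfsgd{} with the exact oracle $\nabla f(\vx,\zeta)=\nabla f(\vx)$. This oracle is unbiased and has zero variance, so \Cref{asmp:stochastic} holds with $\sigma^2=0$.

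\textbf{Step 2: expectations are trivial.} The iterates $\{\vx_k\}$, $\{\vy_k\}$, $\{\vz_k\}$ are then deterministic sequences. Hence every expectation in \Cref{thm:master-convergence-thm} equals the quantity itself, and in particular $\expt\bigl[\norm{\nabla f(\vy_k)}^2\bigr]=\norm{\nabla f(\vy_k)}^2$.

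\textbf{Step 3: substitute into the master bound.} Apply \eqref{eqn:master-bound} under the same hypotheses on $\gamma\in(0,\frac1L]$, $\beta\in[0,1)$, $\warmstep$, the learning rates \eqref{eqn:lr-assumption} and the averaging rates \eqref{eqn:main-song-avg}.
\begin{itemize}
\item The second term is proportional to $\sigma^2$, so it vanishes.
\item The third term is also proportional to $\sigma^2$. Its factor $\frac{3\beta(2-\beta)C(1+\log T)}{(1-\warmratio)T}$ is finite for every $T\ge1$, since $\warmratio<1$. So this term vanishes as well.
\end{itemize}
Only the first term remains, which is exactly \eqref{eqn:sfgd-1t}.

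\textbf{Main obstacle.} There is essentially no obstacle. The only point to check is that the theorem's proof never uses $\sigma^2>0$ or genuine randomness. This holds because \Cref{prop:lyapunov-descent} is stated for all $\sigma^2\ge0$.

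If one wants a self-contained argument instead, there is a direct route. Apply \Cref{prop:lyapunov-descent} with $\sigma^2=0$, which gives $V_{k+1}-V_k\le -\tfrac{\gamma_k(1-\beta)}{2}\norm{\nabla f(\vy_k)}^2$ for all $k$; for $k\le k_0$ this follows from the stronger coefficient $\gamma_k/2$. Then proceed as follows.
\begin{itemize}
\item Telescope from $0$ to $T-1$. Use $V_T\ge0$, which holds since $f\ge f^*$ and $\alpha_k\ge0$. Use $V_0=f(\vx_0)-f^*$, which holds since $\vx_0=\vy_0=\vz_0$.
\item Bound the minimum by the weighted average, with weights $\gamma_k$.
\item Lower bound the weights: $\sum_{k=0}^{T-1}\gamma_k\ge\gamma\,(T-\warmstep+1)=\gamma(1-\warmratio)T$, because $\gamma_k=\gamma$ for all $k\ge\warmstep-1$.
\end{itemize}
This yields the same bound \eqref{eqn:sfgd-1t}.
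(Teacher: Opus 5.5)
Your proposal is correct and matches the paper's argument: the paper obtains \Cref{cor:sfgd-rate} by setting $\sigma^2=0$ in \eqref{eqn:master-bound} and dropping the expectations because \sfgd{} is deterministic. Your optional self-contained route is also valid; it is the proof of \Cref{thm:master-convergence-thm} specialized to $\sigma^2=0$, with one harmless change: you bound the minimum via the full weighted sum $\sum_{k}\gamma_k \ge \gamma(1-\warmratio)T$, whereas the paper restricts the sum to $k\ge\warmstep-1$.
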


That is, \sfgd{} enjoys a convergence rate of $O(1/T)$ in the nonconvex smooth setting. 
Equivalently, for a given target accuracy $\epsilon > 0$, \sfgd{} can find a point $\vy$ such that $\norm{\nabla f(\vy)} \leq \epsilon$ within $O(\epsilon^{-2})$ gradient evaluations. 
\citet{Carm21} have shown that, when the objective function is nonconvex and only $L$-smooth, any first-order method must perform at least $\Omega(\epsilon^{-2})$ gradient evaluations to find a point $\vy$ such that $\norm{\nabla f(\vy)} \leq \epsilon$.
Hence, \eqref{eqn:sfgd-1t} implies that \sfgd{} attains the optimal convergence rate within the class of deterministic first-order methods.

For \sfsgd{} where $\sigma^2 > 0$, we should take into account the second and third terms on the right hand side of \eqref{eqn:master-bound}, as they are now nonzero.
Following the learning rate choice strategy of Theorem~1 in the original Schedule-Free paper~\citep{Defa24}, we take $\gamma \asymp {1}/{\sqrt{T}}$ to balance the $O(\frac{1}{\gamma T})$ and the $O(\gamma)$ terms in terms of their orders with respect to $T$, to obtain the following.  

\begin{cor}\label[cor]{cor:sfsgd-rate} 
    Suppose that \Cref{asmp:lower-bound,asmp:smooth,asmp:stochastic} hold. 
    Say we run \sfsgd{} for $T \geq 1$ iterations with $\beta \in [0, 1)$, the warmup length $\warmstep$ and the learning rates $\{\gamma_k\}_{k \geq 0}$ satisfying \eqref{eqn:lr-assumption}, and the averaging rates $\{c_{k+1}\}_{k \geq 0}$ satisfying \eqref{eqn:main-song-avg}, to obtain the iterates $\{\vx_k\}_{k\geq 0}$, $\{\vy_k\}_{k\geq 0}$, and $\{\vz_k\}_{k\geq 0}$.
    Then, choosing $\gamma = \min \{ \frac{1}{L}, \frac{1}{(1-\beta) \sqrt{T}}\}$, for $\warmratio = \frac{\warmstep-1}{T}$ and $A \coloneqq \max\{1, \frac{L}{1-\beta}\}$ it holds that \begin{equation} \label{eqn:sfsgd-rate}
        \min_{k=0, \dots, {T-1}}\, \expt\bigl[\norm{\nabla f(\vy_k)}^2 \bigr] \leq \frac{2 A(f(\vx_0) - f^*) +  L \sigma^2}{(1-\warmratio)\sqrt{T}} +  \frac{3\beta(2-\beta) C \sigma^2 (1+\log T)}{ (1-\warmratio) T}.
    \end{equation}
Moreover, if additionally $T \geq \frac{L^2}{(1-\beta)^2}$, then the convergence bound \eqref{eqn:sfsgd-rate} holds with $A = 1$.
\end{cor}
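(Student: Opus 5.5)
The plan is to substitute the prescribed $\gamma = \min\{\frac{1}{L}, \frac{1}{(1-\beta)\sqrt{T}}\}$ into \Cref{thm:master-convergence-thm}. This is allowed because $\gamma \in (0, \frac1L]$ by construction. The third term of \eqref{eqn:master-bound} does not involve $\gamma$, so it carries over unchanged into \eqref{eqn:sfsgd-rate}. It therefore suffices to show
\[
\frac{2(f(\vx_0)-f^*)}{\gamma(1-\beta)T} + \gamma L(1-\beta)\sigma^2 \le \frac{2A(f(\vx_0)-f^*) + L\sigma^2}{\sqrt{T}},
\]
and then divide both sides by $1-\warmratio$.

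For the noise term, I will use only $\gamma \le \frac{1}{(1-\beta)\sqrt T}$. This gives $\gamma L(1-\beta)\sigma^2 \le \frac{L\sigma^2}{\sqrt T}$ in every case.

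For the bias term, I need $\frac{1}{\gamma(1-\beta)T} \le \frac{A}{\sqrt T}$, that is, $\frac{1}{\gamma} \le A(1-\beta)\sqrt{T}$. I will observe that $\frac1\gamma = \max\{L, (1-\beta)\sqrt T\}$ and treat the two values of the maximum separately.
\begin{itemize}
\item If the maximum is $(1-\beta)\sqrt T$, the inequality holds because $A \ge 1$.
\item If the maximum is $L$, then $L \ge (1-\beta)\sqrt T$. Since $A \ge \frac{L}{1-\beta}$ and $\sqrt T \ge 1$, we get $A(1-\beta)\sqrt T \ge L\sqrt T \ge L$, as required.
\end{itemize}
The bound I obtain in the second case is actually slightly stronger than needed. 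The only point needing care is that the crude definition $A=\max\{1,\frac{L}{1-\beta}\}$ covers this case uniformly in $T \ge 1$.

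For the final claim, suppose $T \ge \frac{L^2}{(1-\beta)^2}$. Then $(1-\beta)\sqrt T \ge L$, so $\gamma = \frac{1}{(1-\beta)\sqrt T}$ and we are in the first case. That case used only $A \ge 1$, so the bound holds with $A=1$.

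I do not expect any real obstacle beyond this case split on which term attains the minimum in $\gamma$.
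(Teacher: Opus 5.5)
Your proposal is correct and follows essentially the same route as the paper: substitute $\gamma$ into \Cref{thm:master-convergence-thm} and split on which term attains the minimum defining $\gamma$. In one case you use $A \ge 1$, and in the other $A \ge \frac{L}{1-\beta}$ together with $\sqrt{T} \le T$. The only cosmetic difference is that you bound the noise term once for both cases via $\gamma \le \frac{1}{(1-\beta)\sqrt{T}}$, whereas the paper repeats this bound inside each case; the two are equivalent.
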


The proof of \Cref{cor:sfsgd-rate} is in \Cref{sec:sfsgd-rate}.
From \eqref{eqn:sfsgd-rate}, we deduce that \sfsgd{} achieves a convergence rate of $O(1/\sqrt{T})$. 
Equivalently, for any target accuracy $\epsilon>0$, \sfsgd{} finds a point $\vy$ satisfying $\expt{\norm{\nabla f(\vy)}} \leq \epsilon$ using $O(\epsilon^{-4})$ oracle queries.  
\citet{Arje23} have shown that, for nonconvex smooth objectives and stochastic gradient oracles returning unbiased estimators with bounded variance, $\Omega(\epsilon^{-4})$ queries are necessary in the worst case to find such a point.
Thus, \Cref{cor:sfsgd-rate} shows that \sfsgd{} is rate-optimal within this class of stochastic first-order methods.

\section{Schedule-Free Methods Avoid Strict Saddle Points} \label{sec:saddle}

It is now widely known that GD with a randomly chosen initial point avoids strict saddle points~\citep{Lee19}. 
A natural question is whether an analogous statement can be established for Schedule-Free methods. 
Indeed, under mild conditions where avoiding strict saddles implies convergence to minimizers, such a result would provide a principled reason to expect convergent Schedule-Free trajectories to approach minimizers rather than remain near saddle points.

To this end, we formulate Schedule-Free methods as a dynamical system. 
While the update rule for Schedule-Free methods is most commonly expressed in terms of three sequences $(\vx_k, \vy_k, \vz_k)$, one can use \eqref{eqn:yk} to eliminate one of them, yielding an equivalent \mbox{two-sequence} formulation.  
For instance, as we detail in \Cref{ssec:2seq}, as long as $\beta \in [0, 1)$, one can eliminate $\vz_k$ and obtain
\begin{subequations} \label{eqn:2SF}
\begin{align} 
    \vy_{k+1} &= \vy_{k} + \frac{\beta c_{k+1}}{1-\beta}(\vy_k - \vx_k) - \gamma_k (1-\beta+\beta c_{k+1})\vg_k, \label{eqn:2yk} \\
    \vx_{k+1} &= \vx_k + \frac{c_{k+1}}{1-\beta}(\vy_k - \vx_k) - \gamma_k c_{k+1} \vg_k. \label{eqn:2xk}
\end{align}
\end{subequations}
For \sfgd{}, where $\vg_k$ only depends on $\vy_k$, 
the right hand sides of both \eqref{eqn:2yk} and \eqref{eqn:2xk} are functions of $(\vy_k, \vx_k)$. 
So, letting $h_k$ denote the map from $(\vy_k,\vx_k)$ to the right hand side of \eqref{eqn:2SF}, \sfgd{} can be viewed as a \emph{nonautonomous} dynamical system $(\vy_{k+1},\vx_{k+1}) = h_k(\vy_k,\vx_k)$ in $\rr^d \times \rr^d$. 

A simple computation reveals that the fixed points of the dynamical system are precisely the points $(\vx^*,\vx^*)$ where $\vx^*$ is a stationary point of $f$. 

\begin{prop}\label[prop]{prop:fixed-points}
    For any integer $k \geq 0$, suppose that $\gamma_k \neq 0$ and $c_{k+1} \neq 0$. Then, a point $(\vy^*, \vx^*) \in \rr^d \times \rr^d$ is a fixed point of $h_k$ if and only if $\vy^* = \vx^*$ and $\nabla f(\vx^*) = \vzero$.
\end{prop}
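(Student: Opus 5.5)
The plan is to write out the fixed-point equations of $h_k$ from the two-sequence form \eqref{eqn:2SF} with $\vg_k = \nabla f(\vy_k)$, and solve them directly. A point $(\vy^*,\vx^*)$ is fixed exactly when
\begin{align*}
\vzero &= \frac{\beta c_{k+1}}{1-\beta}(\vy^* - \vx^*) - \gamma_k (1-\beta+\beta c_{k+1})\nabla f(\vy^*), \\
\vzero &= \frac{c_{k+1}}{1-\beta}(\vy^* - \vx^*) - \gamma_k c_{k+1} \nabla f(\vy^*).
\end{align*}
Since $\beta \in [0,1)$, the denominators are nonzero.

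For the ``if'' direction, substitute $\vy^* = \vx^*$ and $\nabla f(\vx^*) = \vzero$. Both right-hand sides vanish identically, so $(\vx^*,\vx^*)$ is fixed.

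For the ``only if'' direction, use $c_{k+1} \neq 0$ to divide the second equation by $c_{k+1}$. This gives
\[
\vy^* - \vx^* = (1-\beta)\gamma_k \nabla f(\vy^*).
\]
Substituting into the first equation yields
\[
\beta c_{k+1}\gamma_k \nabla f(\vy^*) - \gamma_k(1-\beta+\beta c_{k+1})\nabla f(\vy^*) = -\gamma_k(1-\beta)\nabla f(\vy^*) = \vzero.
\]
Because $\gamma_k \neq 0$ and $1-\beta > 0$, this forces $\nabla f(\vy^*) = \vzero$. Feeding this back into the relation above gives $\vy^* = \vx^*$, and hence $\nabla f(\vx^*) = \vzero$ as well.

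The only delicate point is making sure every division is justified. Dividing by $c_{k+1}$ is allowed by hypothesis, dividing by $\gamma_k$ likewise, and $1-\beta \neq 0$ is exactly the standing restriction $\beta<1$ under which \eqref{eqn:2SF} was derived. As an alternative sanity check, one could argue in the three-sequence form \eqref{eqn:SF}. There, a fixed point forces $\vz_{k+1}=\vz_k$, which gives $\nabla f(\vy)=\vzero$; then \eqref{eqn:xk} with $c_{k+1}\neq 0$ gives $\vz=\vx$, and \eqref{eqn:yk} gives $\vy=\vx$. I would mention this equivalence only briefly, since the equivalence of the two formulations is already assumed from \Cref{ssec:2seq}.
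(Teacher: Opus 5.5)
Your proposal is correct and matches the paper's proof. The paper writes the same two fixed-point equations as a $2\times 2$ block linear system in the unknowns $\frac{c_{k+1}}{1-\beta}(\vy^*-\vx^*)$ and $-\gamma_k\nabla f(\vy^*)$, and shows that the coefficient matrix is invertible via a triangular factorization (its scalar determinant is $-(1-\beta)\neq 0$). Your substitution step does the same thing by direct elimination.
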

In particular, notice that there are no fixed points at which only the gradient location coordinate is stationary while the evaluation coordinate remains nonstationary. 
Whenever convergence to an equilibrium occurs, the limiting values of $\vy_k$ and $\vx_k$ must coincide at a stationary point of $f$.

Recall that, for a twice continuously differentiable function $f$, a stationary point $\vx^*$ is called a strict saddle point if $\lambda_{\min}(\nabla^2 f(\vx^*))<0$.
For clarity of presentation, we temporarily assume for the remainder of this section that $\warmstep=1$, so that $\gamma_k=\gamma$ and $c_{k+1}=\frac{1}{k+1}$.
The general case is conceptually similar and is deferred to \Cref{sec:foo}.

\citet{Lee19} show that, for GD and related first-order methods run with suitably chosen constant learning rates, the set of initial points converging to strict saddles has Lebesgue measure zero.
An analogue of this result would yield a Lebesgue null exceptional set $\gW\subset\rr^d\times\rr^d$ that any trajectory $\{(\vy_k,\vx_k)\}_{k \geq 0}$ converging to a strict saddle point must hit. 
However, this is not enough for almost sure avoidance under the initialization of \sfgd. 
Since $c_1=1$, the update rules \eqref{eqn:xk} and \eqref{eqn:yk} force $\vy_1=\vx_1$. 
Thus, the distribution of $(\vy_1,\vx_1)$ is supported on $\Updelta=\{(\vx,\vx) : \vx\in\rr^d\}$, which is a Lebesgue measure zero subset of $\rr^d\times\rr^d$. 
Hence, while $\gW$ is a Lebesgue null set, it could still be hit with positive probability under this degenerate distribution, and a direct analogue of the almost sure avoidance conclusion of \citet{Lee19} does not follow.

To address this degeneracy, we introduce a one-time perturbation of the iterates, following the approach of \citet{Onei19} for proving that accelerated methods avoid strict saddle points.
After computing $\vz_1$ and $\vx_1$, we replace $\vz_1$ by $\vz_1 + \vxi$, where $\vxi$ is drawn from a distribution that is absolutely continuous with respect to Lebesgue measure.
This distribution may be chosen to have arbitrarily small variance.
Moreover, when the distribution has mean zero, the convergence rates obtained in \Cref{sec:conv} can be preserved, as we show in \Cref{thm:kick-rate-preserved}.
The absolute continuity of this perturbation yields the following avoidance result.

\begin{thm}\label{thm:one-time-perturb}
    \looseness=-1
    Consider \sfgd{} with a \emph{one-time perturbation} applied after $\vz_1$ and $\vx_1$ are computed, where $\vz_1$ is replaced by $\vz_1+\vxi$. 
    Suppose that the initial point $\vx_0=\vz_0$ and the perturbation $\vxi$ are drawn independently from distributions that are absolutely continuous with respect to Lebesgue measure on $\rr^d$. 
    Then, for $\beta \in [0, 1)$ and a twice continuously differentiable $L$-smooth $f$, as long as $\gamma < \frac{1}{L}$, the one-time perturbed \sfgd{} almost surely does not converge to any strict saddle point of $f$.
\end{thm}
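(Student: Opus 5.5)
We follow the strategy of \citet{Lee19} and \citet{Onei19}, adapted to the nonautonomous system $(\vy_{k+1},\vx_{k+1}) = h_k(\vy_k,\vx_k)$ from \eqref{eqn:2SF} with $\gamma_k=\gamma$ and $c_{k+1}=\frac{1}{k+1}$. The argument has three steps.

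\textbf{Step 1: the law of the state after the kick is absolutely continuous.} After the perturbation, the state at time $1$ is $(\vy_1,\vx_1)$ with $\vx_1 = \vz_1^{\mathrm{old}} = \vx_0 - \gamma\nabla f(\vx_0)$ and $\vy_1 = (1-\beta)(\vx_1+\vxi)+\beta\vx_1 = \vx_1 + (1-\beta)\vxi$. The map $\vx_0\mapsto \vx_0-\gamma\nabla f(\vx_0)$ is a $C^1$ diffeomorphism when $\gamma<\frac1L$: its Jacobian $I-\gamma\nabla^2 f$ is invertible, and it is injective because $\vx\mapsto\gamma\nabla f(\vx)$ is a contraction. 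Hence $\vx_1$ has an absolutely continuous law. Since $\beta<1$ and $\vxi$ is independent of $\vx_0$, the pair $(\vy_1,\vx_1)$ has an absolutely continuous law on $\rr^d\times\rr^d$. It therefore suffices to show that the set of states at time $1$ whose subsequent trajectory converges to a strict saddle has Lebesgue measure zero.

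\textbf{Step 2: each $h_k$ with $k\ge1$ is a local diffeomorphism.} Its Jacobian is
\[
Dh_k=\begin{pmatrix} (1+b)I-\gamma(1-\beta+\beta c)H & -bI\\ \tfrac{c}{1-\beta}I-\gamma cH & (1-\tfrac{c}{1-\beta})I\end{pmatrix},
\]
where $c=c_{k+1}$, $b=\frac{\beta c}{1-\beta}$, and $H=\nabla^2 f(\vy_k)$. All blocks are polynomials in $H$, so they commute, and the determinant equals $\det$ of the $d\times d$ matrix obtained from the $2\times2$ scalar determinant with $H$ replaced by its eigenvalues $\lambda\in[-L,L]$. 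The plan is to compute this scalar determinant. Undoing the change of variables, $h_k$ corresponds to the composition of $\vz\mapsto\vz-\gamma\nabla f(\vy)$ with the averaging step, which reduces the factor to something like $(1-c)(1-\gamma\lambda)$ up to positive factors. This is nonzero for $\gamma<\frac1L$ and $c<1$, which holds for $k\ge1$. Thus preimages of null sets under finite compositions $h_{k}\circ\cdots\circ h_1$ are null.

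\textbf{Step 3: a nonautonomous center-stable manifold at each strict saddle.} Let $(\vx^*,\vx^*)$ be a strict saddle, where $\nabla^2 f(\vx^*)$ has an eigenvalue $\lambda<0$. As $k\to\infty$, $c_{k+1}\to0$, so $h_k$ tends to a degenerate map and the usual autonomous theorem does not apply. The approach is to restrict to the eigendirection of $\lambda$ and use the $\vz$ coordinate. Near the saddle, $\vz_{k+1}-\vx^*\approx(\vz_k-\vx^*)-\gamma\lambda(\vy_k-\vx^*)$, and $\vx_k$ averages $\vz$ slowly. The linearized $2\times2$ system in the $(\vz,\vx)$ components along this eigenvector has an eigenvalue $\ge 1+\gamma|\lambda|(1-\beta)-O(c_k)>1$ uniformly for large $k$. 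We then invoke a center-stable manifold theorem for nonautonomous systems with uniformly hyperbolic unstable part: a Lipschitz graph of dimension less than $2d$, i.e.\ a null set $W_K$ of states at time $K$ that stay in a small neighborhood for all $k\ge K$. This follows \citet{Onei19}, or the Perron-type construction of a stable-manifold theorem for sequences of maps. A trajectory converging to the saddle eventually stays in the neighborhood, so it lies in $\bigcup_K (h_{K-1}\circ\cdots\circ h_1)^{-1}(W_K)$.

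Strict saddles need not be isolated. We use a countable cover by neighborhoods (Lindel\"of), as in \citet{Lee19}, to get a countable union of null sets. Step 2 then makes this union null, and Step 1 gives probability zero.

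The \textbf{main obstacle} is Step 3: $h_k$ converges to a non-hyperbolic limit, since with $c\to0$ the $\vx$-dynamics freezes. We must verify that the unstable rate stays bounded away from $1$ uniformly in $k$, and that the nonautonomous invariant-manifold theorem applies even though there are also neutral directions, with $\vx$ and $\vz-\vx$ directions having multipliers $1-O(1/k)$.
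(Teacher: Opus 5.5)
Your proposal follows essentially the same route as the paper. The shared steps are: absolute continuity of the post-kick law of $(\vy_1,\vx_1)$ via the gradient-descent map and the invertible linear map $(\vu,\vv)\mapsto(\vu+(1-\beta)\vv,\vu)$; invertibility of $Dh_k$ whenever $c_{k+1}<1$ (the exact determinant is $(1-c_{k+1})^d\det(\id-\gamma(1-\beta)\nabla^2 f(\vy))$ rather than the factor you guessed, but it is nonzero for the same reason); and a nonautonomous center-stable argument at each strict saddle, whose null exceptional set is pulled back through the $h_k$'s. The paper resolves your Step~3 obstacle by taking the $c\to0$ limit $T_k=\mathrm{diag}(\id-\gamma_k(1-\beta)\nabla^2 f(\vx^*),\,\id)$ as a fixed linear part, with the orthogonal splitting $\Eu=V\times\{\vzero\}$, $\Ecs=W\times\rr^d$ ($V$, $W$ the negative and nonnegative eigenspaces of $\nabla^2 f(\vx^*)$), expansion $1+\gamma(1-\beta)\theta$ on $\Eu$ versus rate $1$ on $\Ecs$ (so the neutral $\vx$-directions cause no trouble), absorbing the Hessian variation and all $c_{k+1}$-terms into a small-Lipschitz perturbation on a ball for $k\ge K_0$ (\Cref{lem:saddles-are-NPH}), and then invoking \Cref{musat-theorem}.
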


Thus, the obstruction above is an artifact of the degeneracy caused by $c_1 = 1$, rather than evidence that the Schedule-Free iterates are attracted to strict saddle points.
Once this degeneracy is removed, \sfgd{} recovers the same qualitative strict-saddle avoidance behavior as GD.
Together with the small-variance and mean-zero observations above, this perturbation should be viewed as a negligible initialization-level modification of the method, rather than a substantive change to the algorithm.
Under the standard additional assumptions that turn avoidance of strict saddle points into convergence to minimizers, strict saddle points that are not minimizers are therefore excluded from asymptotic limits of the perturbed Schedule-Free dynamics.

\section{Further Discussions on the Evaluation Iterates} \label{sec:why-xk}

As we have discussed in \Cref{sec:intro}, Schedule-Free can be understood as an interpolation between \mbox{Polyak--Ruppert} averaging and primal averaging. 
In fact, Schedule-Free itself is also an averaging scheme, where the evaluation iterates $\vx_k$ can be interpreted as a weighted average of the iterates $\vy_k$. 
Indeed, \citet{Morw24} have observed, for all $k \geq 0$, that \begin{equation} \label{eqn:morwani}
\vx_{k+1} = \frac{(1-\beta)(1-c_{k+1}) \vx_k + c_{k+1} \vy_{k+1}}{(1-\beta)(1-c_{k+1}) + c_{k+1}}.
\end{equation} 

It is by now folklore that averaged iterates are beneficial, offering stabilized optimization dynamics, reduced sensitivity to checkpoint choice, and often improved robustness and generalization \citep{Neu18,Izma18,Mora24}.
Thus, if the iterates $\{\vy_k\}_{k \geq 0}$ inherit the nice convergence properties of the base optimizer, \`{a} la the results in \Cref{sec:conv}, this folklore could serve as an intuitive explanation on the superior performance of Schedule-Free methods.

In contrast, convergence guarantees for averaged iterates in nonconvex optimization are generally weak.
For primal averaging, corresponding to $\beta = 1$ in \sfsgd, \citet{Defa20} showed that the \emph{joint} minimum of $\|\nabla f(\vx_k)\|^2$ and $\|\nabla f(\vz_k)\|^2$ over $k = 0, 1, \dots, T-1$ is $O(\frac{1}{\gamma T}) + O(\gamma)$, matching our bound in \eqref{eqn:master-bound}. 
However, if the minimum of only $\|\nabla f (\vx_k)\|^2$ is considered as the optimality measure, then according to \citet{Brow25}, the convergence rate appears to be no better than $O(\frac{1}{\log T})$.
To the best of our knowledge, no comparable result is known for $\beta < 1$, and nonconvex SGD theory often avoids using averaged iterates as the output \citep{Khal23, Ghad13}.
Motivated by this gap, we use PEP to numerically compute worst-case upper bounds on $\min_{0 \leq k < T}\|\nabla f(\vx_k)\|^2$ for the iterates generated by \sfgd.
The results are organized and plotted in \Cref{fig:x05}.

\begin{figure}
    \centering 
    \includegraphics[width=0.5\textwidth]{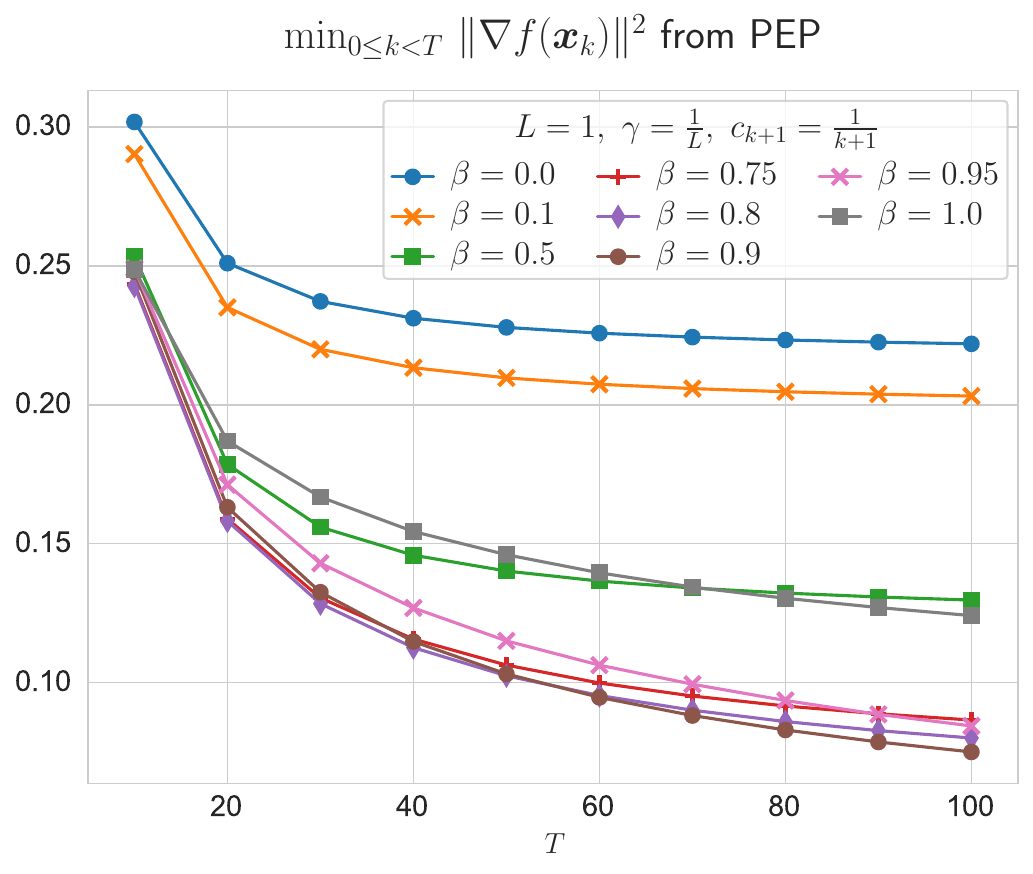}
    \caption{PEP results on the convergence of the evaluation iterates $\vx_k$ from \sfgd, with varying $\beta$.}
    \label{fig:x05}
\end{figure}

The PEP results first confirm that $\vx_k$ can be a pessimistic output sequence in the worst case.
When $\beta = 0$, the Polyak--Ruppert averages $\vx_k$ of the convergent iterates $\vy_k$ seem to fail to approach stationarity.
Also in the regime where $0 < \beta < 1$, the plotted results suggest that $\min_{0\leq k < T} \|\nabla f(\vx_k)\|^2$ does not achieve the ideal $O(1/T)$ worst-case rate that matches those we would have obtained if $\vy_k$ were considered instead.

Still, we can also observe an intriguing phenomenon that merits further discussion.
When $\beta$ is large enough and close to $1$, the certified worst-case bound from PEP appears to be \emph{strictly} improved compared to both endpoint cases $\beta = 0$ and $\beta = 1$.
At least up to $T = 100$, for values of $\beta$ from $0.75$ to $0.95$, the convergence plots lie consistently below the curve for $\beta = 1$. 
This suggests that Schedule-Free methods may improve over both \mbox{Polyak--Ruppert} averaging and primal averaging through an appropriate interpolation between them.
This observation is also consistent with the effectiveness of choosing $\beta \geq 0.9$ in practice.

\looseness=-1
Meanwhile, the pessimistic worst-case behavior of $\vx_k$ is not surprising in view of \eqref{eqn:morwani}, since $\{\vx_k\}_{k \geq 0}$ is an averaged sequence of $\{\vy_k\}_{k \geq 0}$. 
If $f$ were convex, then its desirable consequences such as Jensen's inequality would allow us to transfer the convergence of $\{\vy_k\}_{k \geq 0}$ into that of $\{\vx_k\}_{k \geq 0}$ more easily. 
But for general nonconvex objectives, we cannot expect such optimistic cases to be always true.

Nonetheless, empirical and theoretical evidence suggests that many modern learning problems exhibit \mbox{convex-like} regularity near or along optimization trajectories, including \mbox{PL\textsuperscript{\textasteriskcentered}-type} regularity~\citep{Liu22} and star convexity along optimization paths~\citep{Zhou19}.
Once a convergent Schedule-Free trajectory, as suggested by the discussion in \Cref{sec:saddle}, approaches a neighborhood of a local minimizer where such regularity holds, the averaging that defines $\vx_k$ is less likely to move the evaluation point outside the same well-behaved low-loss region traced by the gradient location iterates $\{\vy_k\}_{k \geq 0}$. 
In this sense, while strict-saddle avoidance does not by itself yield convergence guarantees for $\{\vx_k\}_{k \geq 0}$, it nonetheless offers a useful lens for understanding the empirical success of these averaged evaluation iterates despite the pessimistic worst-case PEP observations.

\section{Conclusion} \label{sec:conclusion}
In this paper, we studied Schedule-Free methods on smooth nonconvex objectives from the perspectives of worst-case convergence, strict saddle avoidance, and evaluation iterate behavior.
Guided by \sfode{}, the continuous-time limiting ODE of \sfgd{}, we developed Lyapunov analyses showing that the gradient location iterates $\vy_k$ of \sfgd{} and \sfsgd{} achieve the known worst-case optimal rates.
We also formulated \sfgd{} as a nonautonomous dynamical system and proved strict saddle avoidance under an arbitrarily small one-time perturbation.
Finally, we complemented these theoretical results with a PEP-based numerical study of the evaluation iterates $\vx_k$.

While these results clarify several theoretical aspects of Schedule-Free methods in the nonconvex setting, some natural questions remain open.
Although we establish worst-case rate-optimality, our constant factors may not yet be tight, and improving them would be an interesting direction for future work. 
Deriving analytical convergence guarantees for the evaluation iterates $\vx_k$ would also be a compelling direction for subsequent studies. 

\begin{ack}

This work was supported in part by the National Research Foundation of Korea (NRF) grants funded by the Korea government (MSIT) (Nos. RS-2019-NR040050 and RS-2025-00515159), and by the KAIST Grand Challenge 30 Project (KC30) funded by MSIT and KAIST (No. N11260046).
This research was also supported by the KAIST Jang Young Sil Fellow Program.
\end{ack}

\bibliographystyle{plainnat}
\bibliography{biblio.bib}

\newpage
\appendix
\onecolumn

\input{appx.tex}

\newpage
\input{checklist.tex}

\end{document}

%% file: appx.tex
\input{lemmata_bag}

\section{Further Related Work and Supplementary Remarks} \label{appx:rltwrk}

\subsection{On the Existing Nonconvex Smooth Convergence Analysis of Schedule-Free Methods} \label{appx:brown25}
Recently, \citet{Brow25} studied the convergence guarantees of \sfsgd{} when the objective function $f$ is smooth but possibly nonconvex. 
A notable aspect of their study is that they consider a wide range of averaging weights, including weights that decay to $0$ at various rates by taking $a > 0$ and setting $c_{k+1} = \frac{1}{(k+1)^a}$, as well as weights that do not even satisfy $c_{k+1} \to 0$, such as $c_{k+1} = \frac{k}{k+1}$. 

Yet their Theorem~1, which claims the convergence rates when $c_{k+1} = \frac{1}{(k+1)^a}$, relies on the existence of an integer $k^*$ such that the inequality \begin{equation}\label{eqn:faulty-ineq}
w_k \geq \frac{c_k}{2\eta}(1+2\eta L(1-\beta)) + \frac{\eta c_k L^2 (1-\beta)^2}{2} + \frac{L c_k^2}{2}(1+\eta)(1+2\eta L^2 (1-\beta)^2)
\end{equation} holds for all $k \geq k^*$, where $w_{k+1} = \frac{c_k}{2\eta(1-c_k)^2}$. 
However, it unfortunately turns out that such $k^*$ cannot exist if $\beta < 1$.  
Indeed, the left hand side of \eqref{eqn:faulty-ineq} is \begin{align*}
w_k = \frac{c_{k-1}}{2\eta (1-c_{k-1})^2} &= \frac{1}{2\eta}\cdot\frac{\frac{1}{(k-1)^a}}{(1-\frac{1}{(k-1)^a})^2} \\[1ex]
&= \frac{1}{2\eta}\cdot\frac{(k-1)^a}{((k-1)^a - 1)^2} = \frac{1}{2\eta k^a} + o\left( \frac{1}{k^a} \right),
\end{align*} while the right hand side is \[
\frac{1}{2\eta k^a}\left(1 + 2\eta L (1-\beta) + \eta^2 L^2 (1-\beta)^2\right) + O\left( \frac{1}{k^{2a}} \right),
\] where the big-$O$ and little-$o$ notations denote the asymptotic behavior when $k \to \infty$. Hence, it follows that \begin{align*}
    \MoveEqLeft w_k - \left(\frac{c_k}{2\eta}(1+2\eta L(1-\beta)) + \frac{\eta c_k L^2 (1-\beta)^2}{2} + \frac{L c_k^2}{2}(1+\eta)(1+2\eta L^2 (1-\beta)^2)\right) \\
    &= - \frac{ 2  L (1-\beta) + \eta  L^2 (1-\beta)^2}{2  k^a} + o\left( \frac{1}{k^a} \right).
\end{align*}
Therefore, for all sufficiently large $k$, the \emph{opposite} direction of the inequality \eqref{eqn:faulty-ineq} holds. 
In particular, there cannot exist an integer $k^*$ which makes \eqref{eqn:faulty-ineq} hold for all $k \geq k^*$.

\subsection{On the Existing Nonconvex Nonsmooth Convergence Analysis of Schedule-Free Methods}
Based on the framework which enables a conversion of the convergence guarantees for online optimization into those in (offline) nonconvex optimization, \citet{Ahn25} have shown that \mbox{\sfsgd} with a wisely chosen set of parameters achieves a worst-case optimal convergence rate in the nonconvex nonsmooth regime. 
Moreover, they show that under the additional assumption of smoothness of the objective function, the worst-case optimal convergence rate in that setting follows as a corollary of the nonsmooth worst-case optimal guarantee.

However, interpreting their convergence results requires care, due to some caveats. 
The exact method they analyze, which is Algorithm~5 (or equivalently Algorithm~7) in their paper, has the overall form that resembles the update rule of 
Schedule-Free methods, but there are some mismatches in the details. 
In particular, in the method they analyze, the parameter $\beta$ is not a fixed constant, but rather it is a random variable that is resampled at every iteration. 
In addition, while the original Schedule-Free scheme uses iteration-dependent averaging weights $c_{k+1}$, their convergence analysis replaces these with an iteration-independent constant. 
Finally, the convergence rates are with respect to the exponential moving average of $\vy_k$, instead of $\vy_k$ themselves or the already existing (but differently) averaged iterates $\vx_k$.  
Hence, while their results offer some insights regarding the performance of Schedule-Free methods, strictly speaking, their study is on a variant of a Schedule-Free method, not the Schedule-Free methods themselves.

\section{Dynamics of the \sfode}

In this section, we perform a Lyapunov analysis and study the asymptotic behavior of the solution trajectories of the \sfode.
For the clarity of the presentation, we temporarily defer to \Cref{appx:ode-study} the technical verifications showing that the singularity of the \sfode{} at $t = 0$ is \emph{removable}. 
Here, assuming that those details are already established, we proceed as if the \sfode{} has no singularities, and consequently, as if we can freely take the limit as $t \to 0+$ along the solution trajectories without worrying about whether they are well-defined or finite. 

\begin{prop} [\Cref{prop:Lyapunov}]
 Consider a function $\gL: (0, \infty) \to \rr$ defined as 
    \[
    \gL(t) = f(\vy(t)) - f^* + \frac{\beta }{2 t} \norm{\vz(t) - \vx(t)}^2.
    \] Then it holds that $\dot{\gL} \leq -(1-\beta) \norm{\nabla f (\vy)}^2 \leq 0$, and thus $\gL$ is a Lyapunov function of the \sfode. 
\end{prop}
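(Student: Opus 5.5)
Differentiate $\gL$ along a solution of \sfode{} on $(0,\infty)$, using \eqref{eqn:ode-y}--\eqref{eqn:ode-x}, and show that the cross terms cancel exactly.

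\textbf{Step 1: the function-value term.} By the chain rule, $\frac{d}{dt} f(\vy) = \inprod{\nabla f(\vy)}{\dot\vy}$. Differentiating \eqref{eqn:ode-y} gives $\dot\vy = (1-\beta)\dot\vz + \beta\dot\vx$. Substituting $\dot\vz = -\nabla f(\vy)$ and $\dot\vx = \frac1t(\vz-\vx)$ yields
\[
\frac{d}{dt} f(\vy) = -(1-\beta)\norm{\nabla f(\vy)}^2 + \frac{\beta}{t}\inprod{\nabla f(\vy)}{\vz-\vx}.
\]

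\textbf{Step 2: the weighted distance term.} Write $\vw = \vz - \vx$. Then
\[
\dot\vw = \dot\vz - \dot\vx = -\nabla f(\vy) - \tfrac1t \vw .
\]
Hence
\[
\frac{d}{dt}\Bigl(\frac{\beta}{2t}\norm{\vw}^2\Bigr) = -\frac{\beta}{2t^2}\norm{\vw}^2 + \frac{\beta}{t}\inprod{\vw}{\dot\vw}
= -\frac{\beta}{2t^2}\norm{\vw}^2 - \frac{\beta}{t}\inprod{\nabla f(\vy)}{\vw} - \frac{\beta}{t^2}\norm{\vw}^2 .
\]

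\textbf{Step 3: combine.} Adding the two expressions, the inner products $\pm\frac{\beta}{t}\inprod{\nabla f(\vy)}{\vz-\vx}$ cancel. This is the cancellation of $\inprod{\dot\vz}{\dot\vx}$ that motivated the choice of weight $\frac{\beta}{2t}$. What remains is
\[
\dot\gL = -(1-\beta)\norm{\nabla f(\vy)}^2 - \frac{3\beta}{2t^2}\norm{\vz-\vx}^2 \le -(1-\beta)\norm{\nabla f(\vy)}^2 \le 0,
\]
because $\beta\in[0,1]$ and $t>0$. Finally, $\gL \ge 0$ by \Cref{asmp:lower-bound}, and $\gL$ is nonincreasing, so it is a Lyapunov function.

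\textbf{Main obstacle.} The computation itself is routine. The only delicate point is regularity: $\gL$ must be differentiable on $(0,\infty)$. This holds because $f$ is $C^1$ and the solution $(\vx,\vz)$ is $C^1$ there, as guaranteed by the existence results deferred to \Cref{appx:ode-study}. The behaviour of the $\frac{1}{t}$ weight as $t\to0+$ matters only for the later integration in \Cref{cor:asymps-of-sfode}, not for the pointwise inequality proved here.
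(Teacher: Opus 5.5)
Your proof is correct and follows the paper's argument. Both differentiate $\gL$ directly along the flow, observe that the cross terms cancel, and obtain $\dot{\gL} = -(1-\beta)\norm{\nabla f(\vy)}^2 - \frac{3\beta}{2t^2}\norm{\vz-\vx}^2$; the only difference is cosmetic, since the paper writes the cancelling terms as $\mp\beta\inprod{\dot{\vz}}{\dot{\vx}}$ rather than $\pm\frac{\beta}{t}\inprod{\nabla f(\vy)}{\vz-\vx}$.
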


\begin{proof}
It is immediate from \eqref{eqn:ode-y} that $\dot{\vy} = (1-\beta) \dot{\vz} + \beta \dot{\vx}$. 
    Differentiating $\gL$ with respect to $t$, we get \begin{align*}
        \dot{\gL} &= \inprod{\nabla f(\vy)}{\dot{\vy}} - \frac{\beta}{2 t^2} \norm{\vz - \vx}^2 + \frac\beta{t} \inprod{\dot{\vz} - \dot{\vx}}{\vz - \vx} \\ 
        &= \inprod{-\dot{\vz}}{(1-\beta) \dot{\vz} + \beta \dot{\vx}} - \frac{\beta}{2 t^2} \norm{\vz - \vx}^2 + \beta \inprod{\dot{\vz} - \dot{\vx}}{\dot{\vx}} \\ 
        &= -(1-\beta) \norm{\dot{\vz}}^2 - \beta \inprod{\dot{\vz}}{\dot{\vx}} - \frac{\beta}{2 t^2} \norm{\vz - \vx}^2 + \beta \inprod{\dot{\vz}}{\dot{\vx}} - \beta \norm{\dot{\vx}}^2 \\ 
        &= -(1-\beta) \norm{\dot{\vz}}^2 - \frac{3\beta}{2t^2}  \norm{\vz - \vx}^2. 
    \end{align*} 
    As $\dot{\vz} = -\nabla f(\vy)$ and $\beta \geq 0$, we indeed have $\dot{\gL} \leq -(1-\beta) \norm{\nabla f (\vy)}^2 \leq 0$.
\end{proof}

\begin{cor}[\Cref{cor:asymps-of-sfode}]
    For any $T > 0$, it holds that \[
    \min_{0 \leq t \leq T}\; \norm{\nabla f(\vy(t))}^2 \leq \frac{f(\vx_0) - f^*}{(1-\beta) T}.
    \]
\end{cor}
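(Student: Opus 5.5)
The plan is to integrate the derivative bound of \Cref{prop:Lyapunov} over $(0,T]$ and compare the result with the value of $\gL$ near $t = 0$.

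Fix $T > 0$ and $0 < \epsilon < T$. By \Cref{prop:Lyapunov}, $\dot{\gL}(t) \leq -(1-\beta)\norm{\nabla f(\vy(t))}^2$ on $[\epsilon, T]$. Integrating gives
\[
(1-\beta)\int_\epsilon^T \norm{\nabla f(\vy(t))}^2\,dt \leq \gL(\epsilon) - \gL(T) \leq \gL(\epsilon).
\]
The second inequality uses $\gL(T) \geq 0$, which holds because $f(\vy) \geq f^*$ by \Cref{asmp:lower-bound} and the quadratic term carries the nonnegative weight $\frac{\beta}{2t}$.

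The main step is to show $\lim_{\epsilon \to 0+}\gL(\epsilon) = f(\vx_0) - f^*$. The term $f(\vy(\epsilon))$ tends to $f(\vx_0)$ by continuity, since $\vy(\epsilon) \to \vx_0$. The harder part is the term $\frac{\beta}{2\epsilon}\norm{\vz(\epsilon)-\vx(\epsilon)}^2$, and here I would use the removable-singularity facts deferred to \Cref{appx:ode-study}. Multiplying \eqref{eqn:ode-x} by $t$ gives $\frac{d}{dt}(t\vx) = \vz$, so $t\vx(t) = \int_0^t \vz(s)\,ds$. Hence
\[
\vz(t) - \vx(t) = \frac{1}{t}\int_0^t (\vz(t) - \vz(s))\,ds .
\]
Since $\dot{\vz} = -\nabla f(\vy)$ is bounded near $0$ (the trajectory stays in a bounded set and $f$ is $L$-smooth), $\norm{\vz(t)-\vz(s)} \leq M(t-s)$ for some constant $M$. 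This yields $\norm{\vz(t)-\vx(t)} \leq Mt/2$, so the quadratic term is $O(\epsilon)$ and vanishes as $\epsilon \to 0+$.

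Letting $\epsilon \to 0+$, the integral extends to $(0,T]$, and since the integrand is continuous on $[0,T]$,
\[
(1-\beta)\int_0^T \norm{\nabla f(\vy(t))}^2\,dt \leq f(\vx_0) - f^*.
\]
The minimum over $[0,T]$ of a continuous function is at most its average, so $\min_{0\leq t\leq T}\norm{\nabla f(\vy(t))}^2 \leq \frac{1}{T}\int_0^T \norm{\nabla f(\vy(t))}^2\,dt$, and dividing by $(1-\beta)T$ gives the claimed bound. The only step needing care is the limit at $t=0$, which the bounded-derivative argument above handles.
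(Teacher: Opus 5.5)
Your proposal is correct and follows essentially the same route as the paper: integrate the bound from \Cref{prop:Lyapunov}, use $\gL(T)\geq 0$, identify $\lim_{t\to 0+}\gL(t) = f(\vx_0)-f^*$, and bound the minimum by the average. The only minor difference is how the quadratic term is shown to vanish. You use the estimate $\norm{\vz(t)-\vx(t)}\leq Mt/2$ obtained from $t\vx(t)=\int_0^t \vz(s)\,ds$, whereas the paper invokes the exact limit $\lim_{t\to 0+}(\vz(t)-\vx(t))/t = -\nabla f(\vx_0)/2$ from \Cref{prop:c1-extension}; either suffices.
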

\begin{proof}
    By \Cref{prop:Lyapunov}, we have \[
    \gL(T) - \lim_{h \to 0+}\gL(h) = \int_0^T \dot{\gL}(t) \, \mathrm{d} t  \leq \int_0^T -(1-\beta) \norm{\dot{\vz}(t)}^2 \, \mathrm{d} t.
    \] By \Cref{asmp:lower-bound} we know that $\gL(T) \geq 0$, and by \Cref{prop:c1-extension} we have \begin{align*}
    \lim_{h \to 0+}\gL(h) &= f(\vx_0) - f^* + \frac{\beta}{2}  \lim_{h\to 0+} \inprod{\frac{\vz(h) - \vx(h)}{h}}{\vz(h) - \vx(h)} \\[1ex]
    &= f(\vx_0) - f^* + \frac{\beta}{2} \inprod{-\frac{\nabla f(\vx_0)}{2}}{\vzero} \\[1ex]
    &= f(\vx_0) - f^*.
    \end{align*} Therefore, it follows that \begin{align*}
        f(\vx_0) - f^* &= \lim_{h\to 0+} \gL (h) \\
        &\geq \gL(T) + (1-\beta) \int_0^T \norm{\dot{\vz}(t)}^2 \, \mathrm{d} t  \\
        &\geq (1-\beta) \int_0^T \norm{\nabla f(\vy(t))}^2 \, \mathrm{d} t \\
        &\geq (1-\beta) T \min_{0 \leq t \leq T}\; \norm{\nabla f(\vy(t))}^2.
    \end{align*} Dividing by $(1-\beta) T$ gives us the claimed bound. 
\end{proof}

\section{Further Discussions on the Parameter Choices} \label{appx:decr-avg-rates}

\citet{Defa24} proposed the choice of the averaging rates \begin{equation}\label{eqn:defazio-avg}
    c_{k+1} = \frac{\gamma_k^2}{\sum_{i=0}^k \gamma_i^2}
\end{equation} 
with $\gamma_k$ abiding by \Cref{asmp:lrn-rate}.
This is a generalization of the standard $c_{k+1} = \frac{1}{k+1}$, as when $\warmstep = 1$ we have $\gamma_k = \gamma$ for all $k \geq 0$, and indeed \eqref{eqn:defazio-avg} reduces back to  $c_{k+1} = \frac{1}{k+1}$.
Later, inspired by the observation that $\beta$ is overloaded in the sense that it determines the momentum applied to $\vy_k$ and at the same time the length of the implicit averaging window, introducing a \emph{decoupling parameter} $C > 0$ and setting \begin{equation}\label{eqn:song-avg}
    c_{k+1} = \min\left\{\frac{\gamma_k^2}{\sum_{i=0}^k \gamma_i^2} \cdot (1-\beta) C ,\, 1\right\}
\end{equation}
 has been proposed by \citet{Song25}. 
 This is exactly what \Cref{asmp:avg-rate} enforces. 
 Notice that \eqref{eqn:song-avg} reduces to \eqref{eqn:defazio-avg} when $C = \frac{1}{1-\beta}$.  
However, it is possible that $C > \frac{1}{1-\beta}$, so in \eqref{eqn:song-avg} an explicit clipping of the averaging rates is imposed, in order to maintain $0 \leq c_{k+1} \leq 1$.  

\subsection{Conditions for Monotonically Decreasing Averaging Rates}

For the sequence of learning rates $\{\gamma_k\}_{k \geq 0}$, let us define \begin{equation}\label{eqn:tilde-c}
    \tilde{c}_{k+1} = \frac{\gamma_k^2}{\sum_{i=0}^k \gamma_i^2}
    \end{equation} for each $k \geq 0$, so that the averaging rates, according to \Cref{asmp:avg-rate}, are set as \begin{equation}\label{eqn:tilde-c-to-c}
c_{k+1} = \min\{\tilde{c}_{k+1} (1-\beta) C ,\, 1\}.
\end{equation}  

The \emph{standard} choice of the averaging rates $c_{k+1} = \frac{1}{k+1}$ asserts $\{c_{k+1}\}_{k \geq 0}$ to be monotone decreasing. 
In fact, this monotone decreasing property of the averaging rates is guaranteed to hold as long as $c_{k+1}$ is set as \eqref{eqn:tilde-c-to-c} with a log-concave sequence of learning rates. 
Recall \Cref{defn:log-concave} for the definition of log-concavity.

\begin{prop} \label[prop]{prop:avg-rate-monotone}
    Suppose that the sequence of learning rates $\{\gamma_k\}_{k \geq 0}$ is log-concave. 
    Then the sequence of averaging rates, defined as \eqref{eqn:tilde-c-to-c}, is monotone decreasing. 
\end{prop}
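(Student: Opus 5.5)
Since $c_{k+1} = \min\{\tilde{c}_{k+1}(1-\beta)C,\,1\}$ is obtained from $\tilde{c}_{k+1}$ by scaling with the nonnegative constant $(1-\beta)C$ and then clipping at $1$, and both operations are monotone nondecreasing maps on $[0,\infty)$, it suffices to show that $\{\tilde{c}_{k+1}\}_{k\ge 0}$ defined in \eqref{eqn:tilde-c} is monotone decreasing. So the plan is to reduce everything to the unclipped ratios $\tilde c_{k+1} = \gamma_k^2/S_k$, where $S_k \coloneqq \sum_{i=0}^k \gamma_i^2$.

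To show $\tilde c_{k+2} \le \tilde c_{k+1}$, I will cross-multiply (all quantities positive, assuming $\gamma_k>0$ as in \Cref{asmp:lrn-rate}) and rewrite it as $\gamma_{k+1}^2 S_k \le \gamma_k^2 S_{k+1} = \gamma_k^2 (S_k + \gamma_{k+1}^2)$. A convenient equivalent form is in terms of reciprocals: $1/\tilde c_{k+1} = S_k/\gamma_k^2 = \sum_{i=0}^k (\gamma_i/\gamma_k)^2$, and I want this to be nondecreasing in $k$. Writing
\[
\frac{S_{k+1}}{\gamma_{k+1}^2} = 1 + \sum_{i=0}^{k} \Bigl(\frac{\gamma_i}{\gamma_{k+1}}\Bigr)^2,
\]
I will compare termwise: pair the term $i+1$ in the sum for $k+1$ with the term $i$ in the sum for $k$, i.e.\ show $(\gamma_{i+1}/\gamma_{k+1})^2 \ge (\gamma_i/\gamma_k)^2$ for $0\le i\le k$, and the extra term $(\gamma_0/\gamma_{k+1})^2\ge 0$ is simply discarded. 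Thus $S_{k+1}/\gamma_{k+1}^2 \ge \sum_{i=0}^k(\gamma_{i+1}/\gamma_{k+1})^2 \ge \sum_{i=0}^k (\gamma_i/\gamma_k)^2 = S_k/\gamma_k^2$.

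The key step, and the only place log-concavity enters, is the inequality $\gamma_{i+1}/\gamma_i \ge \gamma_{k+1}/\gamma_k$ for $i\le k$, which is exactly that the ratio $\gamma_{j+1}/\gamma_j$ is nonincreasing in $j$. This follows from log-concavity $\gamma_j^2 \ge \gamma_{j-1}\gamma_{j+1}$ (as in \Cref{defn:log-concave}) by chaining over $j = i+1,\dots,k$. I expect this to be the only slightly delicate point: one must make sure the definition in \Cref{defn:log-concave} includes positivity (or handle zero learning rates, which \Cref{asmp:lrn-rate} excludes), so that dividing is legitimate. I would also remark that the warmup schedule \eqref{eqn:lr-assumption} is log-concave, since $\min\{1,(k+1)/\warmstep\}$ is a minimum of log-concave sequences, though this is not needed for the proposition itself.
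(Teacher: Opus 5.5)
Your proof is correct. The outer reduction matches the paper's. The paper also first shows that $\tilde c_{k+1}$ is decreasing and then passes through the clipping via $c_{k+1} \le \min\{\tilde c_k(1-\beta)C, 1\} = c_k$, which is exactly your observation that $x \mapsto \min\{(1-\beta)Cx, 1\}$ is nondecreasing.

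Where you differ is the core step, the monotonicity of $\tilde c_{k+1} = \gamma_k^2/\sum_{i\le k}\gamma_i^2$. The paper first notes that $\{\gamma_k^2\}$ is log-concave, as a product of log-concave sequences. It then proves a general lemma that $w_k/\sum_{j\le k} w_j$ decreases for any log-concave $w$, using a case split:
\begin{itemize}
\item if $w_k \ge w_{k+1}$ the claim is immediate;
\item otherwise it bounds $\sum_{j \le k} w_j \le w_k\sum_{m\ge 0}(w_k/w_{k+1})^m = w_k w_{k+1}/(w_{k+1}-w_k)$, using the same ratio monotonicity you use.
\end{itemize}

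Your index-shift comparison of reciprocals, $\sum_{i=0}^{k+1}(\gamma_i/\gamma_{k+1})^2 \ge \sum_{i=0}^{k}(\gamma_{i+1}/\gamma_{k+1})^2 \ge \sum_{i=0}^{k}(\gamma_i/\gamma_k)^2$, avoids both the case split and the geometric series. It treats increasing and decreasing stretches of the sequence the same way, and it even gives strict decrease, since the discarded term $(\gamma_0/\gamma_{k+1})^2$ is positive. It also works verbatim for any positive log-concave $w$, so it could replace the paper's lemma outright. The paper's route is more modular (a standalone lemma on log-concave weights plus closure under products), but it is not more general.

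One minor presentational point: the split $1 + \sum_{i=0}^k(\gamma_i/\gamma_{k+1})^2$ that you display is not the one your comparison actually uses. The relevant split separates off $(\gamma_0/\gamma_{k+1})^2$ instead. Both identities hold, so nothing breaks.
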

\begin{proof}
    By \Cref{lem:log-concave-preserving}\ref{item:product}, $\{\gamma_k^2\}_{k \geq 0}$ is also log-concave,  so by \Cref{lem:log-concave-decr}, $\{\tilde{c}_{k+1}\}_{k \geq 0}$ is a monotone decreasing sequence. 
    As a consequence, for any $k \geq 1$, it holds that \[
        c_{k+1} = \min\{\tilde{c}_{k+1}(1-\beta) C, 1\} \leq \tilde{c}_{k+1}(1-\beta) C \leq \tilde{c}_k (1-\beta) C.
    \] In addition, we have $c_{k+1} \leq 1$, and thus \[
        c_{k+1} \leq \min\{\tilde{c}_k(1-\beta)C, 1\} = c_k.
    \] As this holds for all $k \geq 1$, it follows that $\{c_{k+1}\}_{k \geq 0}$ is monotone decreasing. 
\end{proof}

\subsection{Consequences of \titleref{Assumption}{asmp:lrn-rate}} 
Recall that \Cref{asmp:lrn-rate} specifies the choice of the learning rates to be \begin{equation} \label{eqn:asmp-4}
\gamma_k = \gamma \min\left\{1, \frac{k+1}{\warmstep}\right\}.
\end{equation}
An immediate observation one can make, yet useful in the forthcoming analyses, is that $\{\gamma_k\}_{k \geq 0}$ is a monotone increasing sequence. 
A slightly less obvious fact is that $\{\gamma_k\}_{k \geq 0}$ is log-concave. 
\begin{prop}\label[prop]{prop:asmp-4-is-logconcave}
    The sequence $\{\gamma_k\}_{k \geq 0}$ defined as in \eqref{eqn:asmp-4} is log-concave. 
\end{prop}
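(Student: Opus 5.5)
We need to check log-concavity, presumably in the sense $\gamma_k^2 \geq \gamma_{k-1}\gamma_{k+1}$ for all $k \geq 1$, as in \Cref{defn:log-concave} (which the paper recalls in the appendix lemmata; I assume it is this standard discrete definition for positive sequences). All $\gamma_k$ are positive, since $\gamma>0$ and $k+1\geq 1$. The plan is to verify the inequality directly by splitting according to where $k-1$, $k$, $k+1$ sit relative to the kink of $\min\{1,\frac{k+1}{\warmstep}\}$, which occurs at index $k=\warmstep-1$. Since the constant factor $\gamma$ cancels from both sides, it suffices to treat $s_k \coloneqq \min\{k+1,\warmstep\}$.

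\textbf{Case 1: $k+1 \leq \warmstep-1$.} All three indices are in the linear regime, with $s_j = j+1$. The inequality becomes $(k+1)^2 \geq k(k+2) = (k+1)^2-1$, which holds.

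\textbf{Case 2: $k-1 \geq \warmstep-1$.} All three terms equal $\warmstep$, so equality holds.

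\textbf{Case 3: the mixed cases.}
\begin{itemize}
\item If $k = \warmstep-1$, then $s_{k-1}=k=\warmstep-1$ and $s_k=s_{k+1}=\warmstep$. The inequality reads $\warmstep^2 \geq (\warmstep-1)\warmstep$, which is true.
\item If $k=\warmstep-2$, then $s_{k-1}=\warmstep-2$, $s_k=\warmstep-1$, and $s_{k+1}=\warmstep$. The inequality reads $(\warmstep-1)^2\geq (\warmstep-2)\warmstep$, which is true.
\end{itemize}
These cases are exhaustive, since every $k\geq1$ satisfies $k\leq\warmstep-2$, $k=\warmstep-1$, or $k\geq\warmstep$.

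A cleaner alternative, which I would mention, is that $s_k$ is the restriction to the integers of the concave function $t\mapsto\min\{t+1,\warmstep\}$. Positive concave sequences are log-concave, since $s_k \geq \frac{s_{k-1}+s_{k+1}}{2} \geq \sqrt{s_{k-1}s_{k+1}}$ by AM--GM. This avoids the case split entirely, and I would likely present it as the main proof.

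The only potential obstacle is matching the exact form of \Cref{defn:log-concave}, for instance whether it is stated over $k\geq1$ or requires strict positivity. Positivity holds here, so either form is covered.
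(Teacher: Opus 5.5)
Your proof is correct, but it takes a different route from the paper's.

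The paper argues structurally through \Cref{lem:log-concave-preserving}:
\begin{itemize}
\item the constant sequence $1$ and the linear sequence $\frac{k+1}{\warmstep}$ are log-concave by part (i);
\item their pointwise minimum is therefore log-concave by part (iii);
\item multiplying by the constant sequence $\{\gamma\}$ preserves log-concavity by part (ii).
\end{itemize}

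You instead verify $s_k^2 \ge s_{k-1}s_{k+1}$ directly, either by cases or, more cleanly, by noting that $s_k=\min\{k+1,\warmstep\}$ samples a concave function. Midpoint concavity plus AM--GM then gives the inequality.

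\textbf{What your route buys.} It is self-contained. It also sidesteps the degenerate $a=0$ instance of part (i), which the paper uses for the constant sequences. There the ratio formula $1+\frac{1}{k+b/a}$ in the paper's proof of that lemma is undefined, although the constant case is trivially log-concave.

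\textbf{What the paper's route buys.} It reuses closure properties it needs anyway: the product rule (ii) is invoked again to pass from $\gamma_k$ to $\gamma_k^2$ in \Cref{prop:avg-rate-monotone}. Its min-closure (iii) also applies to arbitrary log-concave pieces, not only concave ones.

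\textbf{Two cosmetic points.}
\begin{itemize}
\item The constant relating $\gamma_k$ to $s_k$ is $\gamma/\warmstep$, not $\gamma$. Any positive constant cancels, so nothing changes.
\item Your mixed case $k=\warmstep-2$ is already covered by Case 1.
\end{itemize}
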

\begin{proof}
    By \Cref{lem:log-concave-preserving}\ref{item:constant} and \Cref{lem:log-concave-preserving}\ref{item:minimum}, it is immediate that $\{\min\{1, \frac{k+1}{\warmstep}\}\}_{k\geq 0}$ is log-concave. 
    In addition, \Cref{lem:log-concave-preserving}\ref{item:constant} tells us that the constant sequence $\{\gamma\}_{k \geq 0}$ is also log-concave. 
    Therefore, the sequence $\{\gamma_k\}_{k \geq 0}$ defined as in \eqref{eqn:asmp-4} is a pointwise product of two log-concave sequences, which we know from \Cref{lem:log-concave-preserving}\ref{item:product} that is also log-concave.
\end{proof}
The following is then an immediate corollary of the preceding observations. 
\begin{cor}\label[cor]{cor:avg-rate-monotone}
    Suppose that \Cref{asmp:avg-rate,asmp:lrn-rate} hold. 
    Then the sequence of averaging rates $\{c_{k+1}\}_{k \geq 0}$ is monotone decreasing. 
\end{cor}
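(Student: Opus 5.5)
This follows by combining the two facts just established. First, \Cref{prop:asmp-4-is-logconcave} shows that under \Cref{asmp:lrn-rate} the learning-rate sequence $\{\gamma_k\}_{k \geq 0}$ is log-concave. Second, \Cref{asmp:avg-rate} prescribes $c_{k+1}$ exactly in the form \eqref{eqn:tilde-c-to-c}, i.e., $c_{k+1} = \min\{\tilde{c}_{k+1}(1-\beta)C, 1\}$ with $\tilde{c}_{k+1}$ as in \eqref{eqn:tilde-c}. The plan is therefore to check that the hypotheses of \Cref{prop:avg-rate-monotone} hold, and then conclude directly.

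Concretely, I would note that \eqref{eqn:main-song-avg} coincides term by term with \eqref{eqn:tilde-c-to-c}. This needs $\sum_{i=0}^k \gamma_i^2 > 0$, which holds because $\gamma_k \ge \gamma/\warmstep > 0$. Next, I would invoke \Cref{prop:asmp-4-is-logconcave} to get log-concavity of $\{\gamma_k\}$, and then apply \Cref{prop:avg-rate-monotone} to obtain that $\{c_{k+1}\}_{k\ge0}$ is monotone decreasing.

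There is no real obstacle here. The only point needing care is that positivity of the $\gamma_k$ is required for the log-concavity notion in \Cref{defn:log-concave}, and for the quotient $\tilde{c}_{k+1}$ to be well defined. Both follow from $\gamma > 0$ and $k+1 \ge 1$ in \eqref{eqn:lr-assumption}.
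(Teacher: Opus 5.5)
Your proposal is correct and matches the paper's proof, which likewise obtains the corollary immediately by combining \Cref{prop:asmp-4-is-logconcave} (log-concavity of $\{\gamma_k\}_{k \geq 0}$) with \Cref{prop:avg-rate-monotone}. Your added check that $\gamma_k \geq \gamma/\warmstep > 0$, so that $\tilde{c}_{k+1}$ is well defined and log-concavity applies, is a detail the paper leaves implicit.
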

\begin{proof}
    This is an immediate consequence of \Cref{prop:avg-rate-monotone,prop:asmp-4-is-logconcave}.
\end{proof}

Nonetheless, as \Cref{asmp:lrn-rate} fixes a specific explicit form of $\gamma_k$, under such an assumption it becomes possible to derive more quantitatively detailed statements. 
To begin with, as $\gamma_k = \frac{\gamma (k + 1)}{\warmstep}$ when $k \leq \warmstep - 1$, notice that \begin{align}
    \tilde{c}_{k+1} &= \frac{\gamma^2 (k+1)^2}{\warmstep^2} \cdot \frac{1}{\sum_{i=0}^k \frac{\gamma^2 (i+1)^2}{\warmstep^2}} \notag \\
    &= \frac{(k+1)^2}{\sum_{i=0}^k (i+1)^2} \notag \\
    &= \frac{6(k+1)}{(k+2)(2k+3)}.  \label{eqn:avg-rate-decay-rate-small}
\end{align} 
On the other hand, when $k \geq \warmstep - 1$, we have \begin{align} 
    \tilde{c}_{k+1} &= \gamma^2  \cdot \frac{1}{\sum_{i=0}^{\warmstep - 1} \frac{\gamma^2 (i+1)^2}{\warmstep^2} + \sum_{\warmstep}^k \gamma^2} \notag \\
    &= \frac{1}{\sum_{i=0}^{\warmstep - 1} \frac{(i+1)^2}{\warmstep^2} + k - \warmstep + 1} \notag \\
    &= \frac{1}{ \frac{(\warmstep+1)(2\warmstep+1)}{6\warmstep} + k - \warmstep + 1}. \label{eqn:avg-rate-decay-rate} 
\end{align}  
These formulae allow us to derive an upper bound of the averaging rates, as follows.

\begin{prop}
    Suppose that \Cref{asmp:avg-rate,asmp:lrn-rate} hold. 
    Then the averaging rates satisfy  
    \begin{equation} \label{eqn:avg-rate-upper-bound}
    c_{k+1} \leq 6(1-\beta) C \cdot \frac{(k+1)}{(k+2)(2k+3)}. 
    \end{equation}
\end{prop}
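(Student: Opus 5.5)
Since $c_{k+1} = \min\{\tilde{c}_{k+1}(1-\beta)C, 1\} \leq (1-\beta)C\,\tilde{c}_{k+1}$ by \eqref{eqn:tilde-c-to-c}, the claim \eqref{eqn:avg-rate-upper-bound} reduces to showing
\[
\tilde{c}_{k+1} \leq \frac{6(k+1)}{(k+2)(2k+3)} \quad \text{for all } k \geq 0.
\]
I will treat the two regimes of \Cref{asmp:lrn-rate} separately, using the explicit formulae \eqref{eqn:avg-rate-decay-rate-small} and \eqref{eqn:avg-rate-decay-rate}.

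For $k \leq \warmstep - 1$ there is nothing to prove: \eqref{eqn:avg-rate-decay-rate-small} gives the desired quantity with equality. For $k \geq \warmstep - 1$, I will work with the definition $\tilde{c}_{k+1} = \gamma_k^2 / \sum_{i=0}^k \gamma_i^2$ rather than the closed form. The key observation is that $\gamma_i \geq \gamma (i+1)/\warmstep \cdot (\warmstep/(k+1))$. To see this, write $\gamma_i = \gamma \min\{1, (i+1)/\warmstep\}$ and note $\gamma_k = \gamma$. Then
\[
\frac{\gamma_i}{\gamma_k} = \min\Bigl\{1, \frac{i+1}{\warmstep}\Bigr\} \geq \frac{i+1}{k+1},
\]
because $(i+1)/(k+1) \leq 1$ and $(i+1)/\warmstep \geq (i+1)/(k+1)$ when $\warmstep \leq k+1$. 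Hence
\[
\frac{1}{\tilde{c}_{k+1}} = \sum_{i=0}^k \Bigl(\frac{\gamma_i}{\gamma_k}\Bigr)^2 \geq \sum_{i=0}^k \frac{(i+1)^2}{(k+1)^2} = \frac{(k+2)(2k+3)}{6(k+1)},
\]
which is exactly the claimed bound.

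This argument is in fact uniform in $k$. In the warmup regime, $\gamma_i/\gamma_k = (i+1)/(k+1)$ holds exactly, so the same inequality covers both cases at once. The only step needing care is the monotonicity comparison $\gamma_i/\gamma_k \geq (i+1)/(k+1)$, which amounts to saying that $\gamma_k/(k+1)$ is nonincreasing. This follows directly from the piecewise form in \eqref{eqn:asmp-4}, so I expect no real obstacle. Multiplying through by $(1-\beta)C$ then completes the proof.
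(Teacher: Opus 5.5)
Your proof is correct, and for the post-warmup regime it takes a different route from the paper's. The paper makes the same reduction to $\tilde{c}_{k+1} \le \frac{6(k+1)}{(k+2)(2k+3)}$. Like you, it handles $k \le \warmstep - 1$ by equality in \eqref{eqn:avg-rate-decay-rate-small}. For $k > \warmstep - 1$, however, it inserts the closed form \eqref{eqn:avg-rate-decay-rate} and substitutes $j = k - \warmstep + 1$. It then checks a polynomial inequality by expanding both sides, which differ by $4j^2 + (4\warmstep + 1/\warmstep)j \ge 0$.

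You instead compare term by term. Because $\gamma_i/(i+1)$ is nonincreasing, $\gamma_i/\gamma_k \ge (i+1)/(k+1)$ for all $i \le k$. Hence $1/\tilde{c}_{k+1}$ dominates the same quantity for a pure linear ramp, which equals the target bound exactly.

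What your version buys:
\begin{itemize}
\item It is shorter and needs neither closed form.
\item It covers both regimes in one argument.
\item It shows the bound holds for any positive schedule with $\gamma_k/(k+1)$ nonincreasing, i.e.\ any schedule growing at most linearly, not only \eqref{eqn:lr-assumption}.
\item It makes clear why the bound is attained exactly during warmup: equality needs $\gamma_i/\gamma_k = (i+1)/(k+1)$ for every $i \le k$, and this fails once $k \ge \warmstep$.
\end{itemize}

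What the paper's computation buys is the explicit slack $4j^2 + (4\warmstep + 1/\warmstep)j$ after warmup. That margin is not used later, since the convergence proof only needs the bound itself and then $\frac{6(k+1)}{(k+2)(2k+3)} \le \frac{3}{k+1}$.
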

\begin{proof}
Consider the sequence $\{\tilde{c}_{k+1}\}_{k \geq 0}$ defined in \eqref{eqn:tilde-c}. 
We show a stronger inequality \begin{equation} \label{eqn:c-tilde-upper-bound}
\tilde{c}_{k+1} \leq \frac{6(k+1)}{(k+2)(2k+3)},
\end{equation} as then by \eqref{eqn:tilde-c-to-c} we would consequently have \eqref{eqn:avg-rate-upper-bound}.     
When $k \leq \warmstep - 1$, by \eqref{eqn:avg-rate-decay-rate-small} it is clear that \eqref{eqn:c-tilde-upper-bound} holds. 
Thus, it remains to show that \begin{equation} \label{eqn:avg-ub-tmp18}
    \frac{1}{ \frac{(\warmstep+1)(2\warmstep+1)}{6\warmstep} + k - \warmstep + 1} \leq \frac{6(k+1)}{(k+2)(2k+3)}
\end{equation} whenever $k > \warmstep - 1$, as the left hand side of \eqref{eqn:avg-ub-tmp18} is nothing but the explicit form of $\tilde{c}_{k+1}$ when $k \geq \warmstep - 1$, which we derived in \eqref{eqn:avg-rate-decay-rate}.

Let $j = k - \warmstep + 1$, so that \eqref{eqn:avg-ub-tmp18} is equivalent to \begin{equation} \label{eqn:avg-ub-tmp19}
    (j+\warmstep+1) (2j+2\warmstep+1) \leq 6(j+\warmstep)\left(j + \frac{(\warmstep+1)(2\warmstep+1)}{6\warmstep}\right).
\end{equation} Then, it suffices to show that this holds for all $j \geq 0$.
To this end, observe that \begin{align*}
     \MoveEqLeft 6(j+\warmstep)\left(j + \frac{(\warmstep+1)(2\warmstep+1)}{6\warmstep}\right) - (j+\warmstep+1) (2j+2\warmstep+1) \\
     &= (j+\warmstep)\left(6j + \frac{2\warmstep^2+3\warmstep+1}{\warmstep}\right) - (j+\warmstep) (2j+2\warmstep+1) - (2j+2\warmstep+1) \\
     &= (j+\warmstep)\left(4j + \frac{2\warmstep+1}{\warmstep}\right) - (2j+2\warmstep+1) \\
     &= 4j^2 + \left(4 \warmstep + 2 + \frac{1}{\warmstep}\right) j + 2\warmstep + 1 - (2j+2\warmstep+1) \\
     &= 4j^2 + \left(4 \warmstep + \frac{1}{\warmstep}\right) j .
\end{align*} From this it is immediate that \eqref{eqn:avg-ub-tmp19} holds for $j \geq 0$, and we are done.
\end{proof}

From \eqref{eqn:avg-rate-upper-bound} we get that $\lim_{k \to \infty} c_k = 0$. 
Hence, there exists some positive integer $k_0$ such that $k > k_0$ implies $c_{k+1} < 1$. 
Meanwhile, as $c_{1} = \min\{(1-\beta)C,\, 1\}$, if $C > \frac{1}{1-\beta}$ then $c_1 = 1$, and for the first few steps we will have $c_{k+1} = 1$. 
In that case, we may let $k_0$ be the largest integer such that $c_{k+1} = 1$ whenever $k \leq k_0$. 
But then, as we have shown in \Cref{lem:fallback}, we will have $\vx_k = \vz_k = \vy_k$ for all $k = 1, \dots, k_0+1$. 
In addition, from the update rule \eqref{eqn:SF} and the initialization $\vx_0 = \vz_0$, it follows that $\vy_0 = \vx_0$. 
Thus, using \eqref{eqn:zk}, we conclude that $\vy_{k+1} = \vy_k - \gamma_k \vg_k$ for all $k = 0, 1, \dots, k_0 + 1$. 
This means that the iterates that are generated for first $k_0 + 1$ steps are exactly the same as what we would get by running the base optimizer, and the Schedule-Free modification is essentially \emph{inactive} during that phase.  
Let us formally state this observation.  

\begin{prop} \label[prop]{prop:inactive-phase}
\renewcommand{\arraystretch}{1.2} %
\setlength{\arraycolsep}{0pt} %
    Suppose that \Cref{asmp:avg-rate,asmp:lrn-rate} hold. Then \begin{equation}\label{eqn:k-thres}  
    k_0 = \left\{
        \begin{array}{l@{\qquad}l}
        -1 & C < \dfrac{1}{1-\beta} \\[1em]
        \left\lfloor (1-\beta)C +\dfrac{4\warmstep^2-9\warmstep-1}{6\warmstep} \right\rfloor & C \geq \dfrac{(\warmstep + 1)(2\warmstep + 1)}{6(1-\beta)\warmstep} \\[1em]
        \left\lfloor \dfrac{-7+6(1-\beta)C +\sqrt{1-36(1-\beta)C + 36(1-\beta)^2 C^2}}{4} \right\rfloor & \text{otherwise}
        \end{array}
        \right. 
    \end{equation} is the integer such that $c_{k+1} = 1$ if and only if $k \leq k_0$. 
    Consequently, for all $k = 0, 1, \dots, k_0+1$ it holds that $\vx_k = \vz_k = \vy_k$ and $\vy_{k+1} = \vy_k - \gamma_k \vg_k$.  
\end{prop}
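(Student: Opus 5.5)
The plan is to reduce the condition $c_{k+1}=1$ to an explicit inequality in $k$, using the closed forms \eqref{eqn:avg-rate-decay-rate-small} and \eqref{eqn:avg-rate-decay-rate} for $\tilde c_{k+1}$. Write $s \coloneqq (1-\beta)C$. By \eqref{eqn:tilde-c-to-c}, $c_{k+1}=1$ if and only if $s\,\tilde c_{k+1}\ge 1$, that is, $1/\tilde c_{k+1}\le s$. \Cref{cor:avg-rate-monotone} (equivalently, the monotone decrease of $\tilde c_{k+1}$ from \Cref{lem:log-concave-decr}) shows that $1/\tilde c_{k+1}$ is nondecreasing in $k$. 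Hence the set of $k$ with $c_{k+1}=1$ is an initial segment $\{0,\dots,k_0\}$, and it remains to identify its right endpoint.

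\emph{Case $s<1$.} Since $\tilde c_1=1$, we get $c_1=s<1$, so the segment is empty and $k_0=-1$.

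\emph{Case $s\ge \frac{(\warmstep+1)(2\warmstep+1)}{6\warmstep}$.} The right-hand side is exactly $1/\tilde c_{\warmstep}$, the value at $k=\warmstep-1$. So $c_{k+1}=1$ for every $k\le \warmstep-1$, and the endpoint lies in the regime $k\ge\warmstep-1$. There \eqref{eqn:avg-rate-decay-rate} makes $1/\tilde c_{k+1}$ affine in $k$, and the condition becomes
\[
k\le s-\frac{(\warmstep+1)(2\warmstep+1)}{6\warmstep}+\warmstep-1 = s+\frac{4\warmstep^2-9\warmstep-1}{6\warmstep}.
\]
Taking the floor gives the second line of \eqref{eqn:k-thres}.

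\emph{Remaining case.} Here $1\le s<\frac{(\warmstep+1)(2\warmstep+1)}{6\warmstep}$, so the endpoint lies strictly inside the warmup regime $k<\warmstep-1$. Using \eqref{eqn:avg-rate-decay-rate-small}, the condition $(k+2)(2k+3)\le 6s(k+1)$ becomes the quadratic inequality
\[
2k^2+(7-6s)k+6-6s\le 0.
\]
Its discriminant is $1-36s+36s^2$, which is at least $1$ for $s\ge 1$. The solution set is the interval between the two roots. The smaller root is at most $0$ because the quadratic is nonpositive at $k=0$ when $s\ge1$. So the largest admissible integer is the floor of the larger root $\frac{-7+6s+\sqrt{1-36s+36s^2}}{4}$, which is the third line of \eqref{eqn:k-thres}.

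The main obstacle I anticipate is bookkeeping at the case boundaries, where the formulas must agree. First, the third formula must actually produce a value below $\warmstep-1$ whenever $s$ is below the threshold; this follows because the threshold equals the value of $1/\tilde c_{k+1}$ at $k=\warmstep-1$ and $1/\tilde c_{k+1}$ is monotone. Second, the borderline $C=\frac{1}{1-\beta}$, i.e.\ $s=1$, must be checked. There the third formula gives $\lfloor 0\rfloor=0$, consistent with $c_1=1>c_2$. The second formula only applies at $s=1$ when $\warmstep=1$, and then it also gives $0$.

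For the consequence, I would argue by induction on $k$, invoking \Cref{lem:fallback}. The initialization $\vx_0=\vz_0$ and \eqref{eqn:yk} give $\vy_0=\vx_0=\vz_0$. If $c_{k+1}=1$, then \eqref{eqn:xk} gives $\vx_{k+1}=\vz_{k+1}$, and \eqref{eqn:yk} then gives $\vy_{k+1}=\vz_{k+1}$. Hence $\vx_k=\vz_k=\vy_k$ for $k=0,\dots,k_0+1$. For these $k$, \eqref{eqn:zk} yields $\vy_{k+1}=\vz_{k+1}=\vz_k-\gamma_k\vg_k=\vy_k-\gamma_k\vg_k$. This uses only the identity $\vz_k=\vy_k$ and no constraint on $\vz_{k+1}$, so it holds up to $k=k_0+1$.
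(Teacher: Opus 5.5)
Your derivation of $k_0$ is correct and follows the paper's proof. You rewrite $c_{k+1}=1$ as $(1-\beta)C\,\tilde c_{k+1}\ge 1$ and use monotonicity of $\tilde c_{k+1}$ to get an initial segment. You then split on whether $c_{\warmstep}=1$, solving the linear inequality from \eqref{eqn:avg-rate-decay-rate} in one case and the quadratic from \eqref{eqn:avg-rate-decay-rate-small} in the other. Your boundary checks are slightly more careful than the paper's.

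The problem is the last paragraph. The chain $\vy_{k+1}=\vz_{k+1}=\vz_k-\gamma_k\vg_k=\vy_k-\gamma_k\vg_k$ uses $\vy_{k+1}=\vz_{k+1}$. By \Cref{lem:fallback}, that needs $c_{k+1}=1$, i.e.\ $k\le k_0$. So, contrary to your remark, the step does constrain iteration $k+1$, and it fails at $k=k_0+1$ because $c_{k_0+2}<1$.

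This gap cannot be closed, because the endpoint claim is false whenever $\beta>0$ and $\vg_{k_0+1}\neq\vzero$. Since $\vz_{k_0+1}=\vx_{k_0+1}$, \Cref{lem:y-diff} gives $\vy_{k_0+2}-\vy_{k_0+1}=-\gamma_{k_0+1}(1-\beta+\beta c_{k_0+2})\vg_{k_0+1}$, and here $1-\beta+\beta c_{k_0+2}<1$. Concretely, take $\beta=\tfrac12$ and $C=1$. Then $k_0=-1$ and $c_1=\tfrac12$, and a direct computation gives $\vy_1=\vy_0-\tfrac34\gamma_0\vg_0\neq\vy_0-\gamma_0\vg_0$.

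What you can prove is:
\begin{itemize}
\item $\vx_k=\vz_k=\vy_k$ for $k=0,\dots,k_0+1$;
\item $\vy_{k+1}=\vy_k-\gamma_k\vg_k$ for $k=0,\dots,k_0$, i.e.\ the first $k_0+1$ steps coincide with the base optimizer.
\end{itemize}
This is also all the paper uses later: the case $k\le k_0$ of \Cref{prop:appx-lyapunov-descent}, and the proof of \Cref{thm:one-time-perturb}. The paper's own proof dismisses the consequence by pointing to the preceding discussion, which contains the same off-by-one. So the defect lies in the statement itself. Your write-up should therefore restrict the range to $k\le k_0$ rather than assert the false endpoint with an invalid justification.
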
 
\begin{proof}
    As previously discussed, the second half of the statement would follow from the first half. 
    Hence, it suffices to show that $c_{k+1} = 1$ if and only if $k \leq k_0$ for $k_0$ defined in \eqref{eqn:k-thres}.  
    \Cref{cor:avg-rate-monotone} ensures that $\{c_{k+1}\}_{k \geq 0}$ is monotone decreasing, so this is equivalent to $k_0$ being the largest integer $k$ such that $c_{k+1} = 1$.
    Because $c_1 = \min\{(1-\beta)C,\, 1\} < 1$ if and only if $C < \frac{1}{1-\beta}$, we immediately get that $k_0 = -1$ if $C < \frac{1}{1-\beta}$. 
    Now, assume that $C \geq \frac{1}{1-\beta}$, so that $c_1 = 1$, and in particular, there exists at least one nonnegative integer $k$ such that $c_{k+1} = 1$.
    Observe that \begin{equation} \label{eqn:appx-15}
        \tilde{c}_{\warmstep}  (1-\beta) C = \frac{6(1-\beta)C \warmstep}{(\warmstep+1)(2\warmstep+1)}.
    \end{equation} If the right hand side of \eqref{eqn:appx-15} is less than $1$, then $c_{\warmstep} < 1$ so $k_0 \leq \warmstep - 1$. 
    In particular, by \eqref{eqn:avg-rate-decay-rate-small}, $k_0$ is the largest integer such that \[
        \frac{6(1-\beta)C (k_0+1)}{(k_0+2)(2k_0+3)} \geq 1.
    \] 
    By simple algebra involving solving a quadratic inequality, we obtain that in this case \[
        k_0 = \left\lfloor \frac{-7+6(1-\beta)C +\sqrt{1-36(1-\beta)C + 36(1-\beta)^2 C^2}}{4} \right\rfloor.
    \] 
    On the other hand, if the right hand side of \eqref{eqn:appx-15} is greater than or equal to $1$, then $c_{\warmstep} = 1$ so $k_0 > \warmstep - 1$. 
    By \eqref{eqn:avg-rate-decay-rate}, $k_0$ is the largest integer such that \[
        \frac{(1-\beta) C}{ \frac{(\warmstep+1)(2\warmstep+1)}{6\warmstep} + k_0 - \warmstep + 1} \geq 1.
    \] 
    Again by simple algebra, with noting that the denominator of the left hand side is positive, we obtain \[
        k_0 = \left\lfloor (1-\beta)C +\frac{4\warmstep^2-9\warmstep-1}{6\warmstep} \right\rfloor 
    \] for this case.
\end{proof}

\Cref{prop:inactive-phase} tells us that, if $(1-\beta)C \geq 1$, then for roughly a constant multiple of $(1-\beta) C$ steps the Schedule-Free method will behave exactly the same as the base optimizer. 
Based on this observation, one can make a conjecture on why introducing this decoupling parameter $C$ should improve the empirical performance.  
In general, the base sequence $\{\vz_k\}_{k \geq 0}$ is a fast-moving sequence that directly receives the gradient updates as in \eqref{eqn:zk}, whereas the evaluation sequence $\{\vx_k\}_{k \geq 0}$ evolves more conservatively, being averaged iterates of $\{\vz_k\}_{k \geq 0}$.
But if $(1-\beta) C > 1$, then the ``fast'' variable~$\vz_k$ and the ``slow'' variable~$\vx_k$ can start to differ only after $k_0+1$ steps.  
Moreover, as the points $\vz_0, \dots, \vz_{k_0+1}$ are exactly those that would be computed by the base optimizer initialized at $\vz_0$, one can expect that the base optimizer would have driven both $\vz_{k_0+1}$ and $\vx_{k_0+1}$ down to a point with a lower loss that is in a more locally well-behaved region of the loss landscape than the randomly chosen initial point.
After that, it becomes less likely for $\vz_k$ to explore exotic regions with high loss or irregular loss landscapes, and its average $\vx_k$ would inherit those nice properties. 
We leave the verifications of this heuristic explanation as future directions of work.

\section{Lyapunov Descent Lemma for Schedule-Free Methods} \label{appx:lyap-descent}

The main result of this section is establishing the proof of \Cref{prop:appx-lyapunov-descent}, which we call the \emph{Lyapunov descent lemma}. 
\Cref{prop:lyapunov-descent} then follows from \Cref{prop:appx-lyapunov-descent}, as both share the same statements but the latter has a weaker set of assumptions which implies that of the former. 

In \Cref{prop:appx-lyapunov-descent}, we construct Lyapunov potentials for \sfsgd{} and show that they form a monotone decreasing sequence, much like in the celebrated \emph{descent lemma}, up to an additive noise term arising from the use of stochastic gradients.
The proof of it, though, involves verifying several technical inequalities.
So, to keep the main argument clear, we first establish those ancillary results before turning to the main proof itself.
To this end, we define some auxiliary sequences, whose motivations behind the definitions will soon be clear.  

In this section, we work with the learning rates and the averaging rates satisfying conditions that are substantially weaker than \Cref{asmp:avg-rate,asmp:lrn-rate}. 
We assume that the learning rates $\{\gamma_k\}_{k \geq 0}$ form a monotone increasing sequence of positive real numbers with $\gamma_k \leq \frac{1}{L}$ for all $k \geq 0$, and the averaging rates $\{c_{k+1}\}_{k \geq 0}$ are monotonically decreasing with $0 \leq c_{k+1} \leq 1$ for all $k \geq 0$ but not constantly $1$. 
Under these assumptions, we set $k_0$ to be the largest nonnegative integer such that $c_{k+1} = 1$ whenever $k \leq k_0$ if such $k$ exists, and otherwise $k_0 = -1$. 
In other words, for the learning rates we assume that \begin{equation} \label{eqn:incr-lr}
 0 < \gamma_0 \leq \gamma_1 \leq \gamma_2 \leq \dots \leq \frac{1}{L}
\end{equation} and for the averaging rates we assume that 
\begin{equation} \label{eqn:decr-ck}
    1 = c_1 = c_2 = \dots = c_{k_0+1} > c_{k_0+2} \geq c_{k_0+3} \geq \dots \geq 0.
\end{equation} Notice that \eqref{eqn:incr-lr} is consistent with \Cref{asmp:lrn-rate}, and \eqref{eqn:decr-ck} holds as a consequence of \Cref{asmp:lrn-rate,asmp:avg-rate} by the observations we made in \Cref{appx:decr-avg-rates}.

Now, let us define a sequence $\{\alpha_k\}_{k > k_0}$ as \begin{equation} \label{eqn:lyap-coeffs}
    \alpha_k = \frac{\beta c_k (1-\gamma_{k-1} L (1 - \beta + \beta c_k))}{2\gamma_{k-1}(1-c_{k})^2}
\end{equation} for all $k = k_0+2, k_0+3, \dots$, and \begin{equation} \label{eqn:35}
    \alpha_{k_0+1} = \frac{L \beta^2 c_{k_0+2}^2}{2} + \alpha_{k_0+2} (1-c_{k_0+2})^2.
\end{equation}
For $k > k_0$, we then define \begin{align}
    D_k &\coloneqq     \frac{L\beta^2 c_{k+1}^2}{2} + \alpha_{k+1} (1 - c_{k+1})^2 - \alpha_k, \label{eqn:Dk}\\
        G_k &\coloneqq     \gamma_k^2 \alpha_{k+1} (1 - c_{k+1})^2 + \frac{ \gamma_k^2  L (1 - \beta + \beta c_{k+1})^2}{2} - \gamma_k (1 - \beta + \beta c_{k+1}), \label{eqn:Gk} \\
        S_k &\coloneqq     \gamma_k^2 \left(\alpha_{k+1} (1 - c_{k+1})^2 + \frac{ L (1 - \beta + \beta c_{k+1})^2}{2}\right). \label{eqn:Sk}
\end{align}

\begin{lem}\label[lem]{lem:nonnegative-lyap-coeff}
    Suppose that $\beta \in [0, 1)$. Then $\alpha_k \geq 0$ for all $k > k_0$.
\end{lem}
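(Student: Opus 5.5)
We split the claim according to the two cases in the definition of $\{\alpha_k\}_{k>k_0}$: first the indices $k \geq k_0+2$, given by \eqref{eqn:lyap-coeffs}, and then the single index $k_0+1$, given by \eqref{eqn:35}.

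For $k \geq k_0+2$, we check the sign of each factor in \eqref{eqn:lyap-coeffs}. By \eqref{eqn:decr-ck}, $c_k \leq c_{k_0+2} < 1$, so $(1-c_k)^2 > 0$ and the expression is well defined. By \eqref{eqn:incr-lr}, $\gamma_{k-1} > 0$. Together with $\beta \geq 0$ and $c_k \geq 0$, the prefactor $\frac{\beta c_k}{2\gamma_{k-1}(1-c_k)^2}$ is nonnegative. It then suffices to show that
\[
1-\gamma_{k-1}L(1-\beta+\beta c_k) \geq 0 .
\]
Since $1-\beta+\beta c_k$ is a convex combination of $1$ and $c_k\in[0,1]$, it lies in $[0,1]$. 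Combined with $\gamma_{k-1}L \leq 1$ from \eqref{eqn:incr-lr}, this gives $\gamma_{k-1}L(1-\beta+\beta c_k)\leq 1$. Hence $\alpha_k \geq 0$.

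For $k = k_0+1$, the right-hand side of \eqref{eqn:35} is $\frac{L\beta^2 c_{k_0+2}^2}{2} + \alpha_{k_0+2}(1-c_{k_0+2})^2$. The first term is nonnegative because $L>0$. The second is nonnegative by the previous case applied at $k_0+2$. So $\alpha_{k_0+1}\geq 0$.

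No step is a real obstacle. The only points needing care are that $c_k<1$ strictly for $k\geq k_0+2$, which rules out a zero denominator and is exactly where the definition of $k_0$ is used, and that the bound $\gamma_k \le 1/L$ from \eqref{eqn:incr-lr} makes the numerator factor nonnegative. The hypothesis $\beta<1$ is not needed for nonnegativity itself; it only places us in the standing setting.
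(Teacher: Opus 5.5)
Your proof is correct and is essentially the paper's argument. For $k \ge k_0+2$ both use that the prefactor is nonnegative and that $1-\beta+\beta c_k \le 1 \le 1/(\gamma_{k-1}L)$, and the case $k_0+1$ then follows from its defining formula; the paper simply calls that last case clear without noting, as you do, that it relies on $\alpha_{k_0+2}\ge 0$.
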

\begin{proof}
    It is clear that $\alpha_{k_0+1} \geq 0$. For $k \geq k_0+2$, as $\frac{\beta c_k}{2\gamma_{k-1} (1-c_k)^2} \geq 0$, it suffices to show that $1-\gamma_{k-1} L (1 - \beta + \beta c_k) \geq 0$. 
    Indeed, as $0 < 1 - c_k \leq 1$, it holds that \begin{align*}
         1 - \beta (1 - c_k) \leq 1 \leq \frac{1}{\gamma_{k-1} L},
    \end{align*} from which it follows that $1-\gamma_{k-1} L (1 - \beta + \beta c_k)$ is nonnegative.
\end{proof}

\begin{lem} \label[lem]{lem:Gk-bound}
    Suppose that $\beta \in [0, 1)$. Then $G_k \leq -\frac{\gamma_k (1-\beta)}{2}$ for all $k > k_0$. 
\end{lem}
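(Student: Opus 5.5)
The plan is to substitute the explicit formula \eqref{eqn:lyap-coeffs} for $\alpha_{k+1}$ into the definition \eqref{eqn:Gk} of $G_k$. After that substitution the claim reduces to a one-line sign check.

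\textbf{Step 1: which formula for $\alpha_{k+1}$ applies.} Since $k > k_0$, we have $k+1 \geq k_0+2$. Hence $\alpha_{k+1}$ is given by \eqref{eqn:lyap-coeffs}, not by the special definition \eqref{eqn:35}. Moreover, \eqref{eqn:decr-ck} gives $c_{k+1} < 1$, so the denominator $(1-c_{k+1})^2$ is nonzero.

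\textbf{Step 2: simplify the first term of $G_k$.} Abbreviate $s \coloneqq 1-\beta+\beta c_{k+1}$ and $c \coloneqq c_{k+1}$. The factor $(1-c)^2$ in $G_k$ cancels the one in the denominator of $\alpha_{k+1}$, which gives
\[
\gamma_k^2 \alpha_{k+1}(1-c)^2 = \frac{\gamma_k \beta c\,(1-\gamma_k L s)}{2}.
\]
Therefore
\[
G_k = \frac{\gamma_k}{2}\Bigl(\beta c(1-\gamma_k L s) + \gamma_k L s^2 - 2s\Bigr).
\]

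\textbf{Step 3: rewrite in terms of $s$ and collect terms.} Use the identity $\beta c = s - (1-\beta)$. Expanding, the $\pm\gamma_k L s^2$ terms cancel, and the bracket becomes
\[
s - \gamma_k L s^2 - (1-\beta) + (1-\beta)\gamma_k L s + \gamma_k L s^2 - 2s = -(1-\beta) - s\bigl(1 - (1-\beta)\gamma_k L\bigr).
\]
Hence
\[
G_k = -\frac{\gamma_k(1-\beta)}{2} - \frac{\gamma_k s}{2}\bigl(1-(1-\beta)\gamma_k L\bigr).
\]

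\textbf{Step 4: sign check.} It remains to show that the last term is nonpositive.
\begin{itemize}
\item $\gamma_k > 0$ by \eqref{eqn:incr-lr}.
\item $s \geq 1-\beta > 0$, since $\beta \in [0,1)$ and $c \geq 0$.
\item $1-(1-\beta)\gamma_k L \geq 1 - \gamma_k L \geq 0$, since $\gamma_k \leq \frac1L$ by \eqref{eqn:incr-lr}.
\end{itemize}
So the product is nonnegative, and $G_k \leq -\frac{\gamma_k(1-\beta)}{2}$ follows.

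There is no real obstacle here. The only points needing care are confirming that the index shift makes \eqref{eqn:lyap-coeffs}, rather than \eqref{eqn:35}, the relevant definition, and that $c_{k+1} \neq 1$. Both follow directly from $k > k_0$ and \eqref{eqn:decr-ck}.
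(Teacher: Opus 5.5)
Your proof is correct and follows essentially the same route as the paper: substitute the formula for $\alpha_{k+1}$ into $G_k$ so that the $(1-c_{k+1})^2$ factors cancel, simplify, then conclude using $\gamma_k L \le 1$, $\beta \in [0,1)$ and $c_{k+1} \ge 0$. The only difference is bookkeeping. Your substitution $\beta c_{k+1} = s-(1-\beta)$ yields an exact identity with an explicitly nonpositive remainder, whereas the paper expands in $(1-\beta)$ and $\beta c_{k+1}$ and then drops the term $\beta c_{k+1}\bigl(\gamma_k L(1-\beta)-1\bigr) \le 0$.
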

\begin{proof}
    Noting that $G_k = S_k- \gamma_k(1-\beta + \beta c_{k+1})$, let us first simplify the expression of $S_k$. 
    From the definition of $\alpha_{k+1}$, we get \begin{align*}
         S_k &= \gamma_k^2 \left(\alpha_{k+1} (1 - c_{k+1})^2 + \frac{ L (1 - \beta + \beta c_{k+1})^2}{2}\right) \\
         &= \gamma_k^2 \left(  \frac{\beta c_{k+1} (1-\gamma_k L (1 - \beta + \beta c_{k+1}))}{2\gamma_k} + \frac{ L (1 - \beta + \beta c_{k+1})^2}{2}\right) \\
         &= \frac{\gamma_k}{2}  \left(  \beta c_{k+1} (1-\gamma_k L (1 - \beta + \beta c_{k+1})) + \gamma_k L (1 - \beta + \beta c_{k+1})^2 \right) \\
         &= \frac{\gamma_k}{2}  \left(  \beta c_{k+1} (1 - \gamma_k L + \gamma_k L \beta - \gamma_k L \beta c_{k+1}) + \gamma_k L \left((1-\beta)^2 + 2  \beta (1-\beta) c_{k+1} +  \beta^2 c_{k+1}^2 \right)\right) \\
         &= \frac{\gamma_k}{2}  \left(  \beta c_{k+1} + \gamma_k L (1-\beta)^2 + \gamma_k L \beta (1-\beta) c_{k+1} \right).
    \end{align*} Hence, with the inequalities $\gamma_k L \leq 1$ and $0 \leq 1 - \beta \leq 1$, it follows that \begin{align*}
        G_k &= S_k- \gamma_k(1-\beta + \beta c_{k+1}) \\
        &= \frac{\gamma_k}{2}  \left(  \beta c_{k+1} + \gamma_k L (1-\beta)^2 + \gamma_k L \beta (1-\beta) c_{k+1} -2 (1-\beta + \beta c_{k+1}) \right) \\
        &= \frac{\gamma_k}{2}  \left( (1-\beta) (\gamma_k L (1-\beta) - 2) +  \beta c_{k+1} (\gamma_k L (1-\beta)  - 1) \right) \\
        &\leq \frac{\gamma_k (1-\beta) }{2}  (\gamma_k L (1-\beta) - 2) \\
        &\leq  -\frac{\gamma_k (1-\beta) }{2} 
    \end{align*}
    as claimed.
\end{proof}

\begin{lem} \label[lem]{lem:Dk-negative}
    Suppose that $\beta \in [0, 1)$. Then for all $k > k_0$, as long as $0 \leq c_{k+1} \leq c_k < 1$ and $0 < \gamma_{k-1} \leq \gamma_k \leq \frac{1}{L}$, it holds that $D_k \leq 0$.
\end{lem}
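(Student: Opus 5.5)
The plan is to reduce $D_k\le 0$ to an elementary inequality in the scalars $c_k$, $c_{k+1}$, $\gamma_{k-1}$, $\gamma_k$. I will substitute the closed forms of $\alpha_{k+1}$ and $\alpha_k$ from \eqref{eqn:lyap-coeffs} into \eqref{eqn:Dk}, use monotonicity to remove the dependence on $c_{k+1}$ and $\gamma_k$, and then check one inequality in $c_k$ and $u\coloneqq \gamma_{k-1}L\in(0,1]$. Write $s_j\coloneqq 1-\beta+\beta c_j$ throughout.

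\emph{Boundary index.} For $k=k_0+1$ we have $D_{k_0+1}=0$ directly from the definition \eqref{eqn:35} of $\alpha_{k_0+1}$. The hypothesis $c_k<1$ forces $k\ge k_0+2$ in every other case. There \eqref{eqn:lyap-coeffs} applies to both $\alpha_k$ and $\alpha_{k+1}$.

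\emph{Simplifying $D_k$.} As in the proof of \Cref{lem:Gk-bound},
\[
\alpha_{k+1}(1-c_{k+1})^2=\frac{\beta c_{k+1}(1-\gamma_k L s_{k+1})}{2\gamma_k}.
\]
Adding $\frac{L\beta^2c_{k+1}^2}{2}$ and using $\beta^2 c_{k+1}^2-\beta c_{k+1}s_{k+1}=-\beta(1-\beta)c_{k+1}$ gives
\[
D_k=\frac{\beta c_{k+1}}{2\gamma_k}-\frac{L\beta(1-\beta)c_{k+1}}{2}-\alpha_k .
\]
Next I regroup the two terms involving $c_{k+1}$. Since $c_{k+1}\le c_k$ and $\gamma_{k-1}\le\gamma_k\le 1/L$,
\[
\frac{c_k}{\gamma_{k-1}}-\frac{c_{k+1}}{\gamma_k}\;\ge\;\frac{c_k-c_{k+1}}{\gamma_k}\;\ge\; L(c_k-c_{k+1})\;\ge\; L(1-\beta)(c_k-c_{k+1}).
\]
This says exactly that $\frac{\beta c_{k+1}}{2\gamma_k}-\frac{L\beta(1-\beta)c_{k+1}}{2}\le \frac{\beta c_k}{2\gamma_{k-1}}-\frac{L\beta(1-\beta)c_k}{2}$. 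Hence
\[
D_k\le \frac{\beta c_k}{2\gamma_{k-1}}-\frac{L\beta(1-\beta)c_k}{2}-\alpha_k ,
\]
and $c_{k+1}$ and $\gamma_k$ no longer appear.

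\emph{Key scalar inequality.} Multiply the last bound by $2\gamma_{k-1}/(\beta c_k)$; the case $\beta c_k=0$ is trivial since then $D_k\le 0$ at once. Using $u(1-\beta)=us_k-u\beta c_k$, it suffices to show
\[
\frac{1-us_k}{(1-c_k)^2}-1+us_k\;\ge\; u\beta c_k,
\qquad\text{i.e.}\qquad
(1-us_k)\Bigl(\frac{1}{(1-c_k)^2}-1\Bigr)\ge u\beta c_k .
\]
I will prove this from three facts. First, $u\le 1$ gives $1-us_k\ge 1-s_k=\beta(1-c_k)$. Second, $(1-c_k)^{-2}-1=\frac{c_k(2-c_k)}{(1-c_k)^2}$. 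Multiplying these, the left side is at least $\frac{\beta c_k(2-c_k)}{1-c_k}$. Third, $\frac{2-c_k}{1-c_k}\ge 1$, so the left side is at least $\beta c_k\ge u\beta c_k$.

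\emph{Main obstacle.} The delicate step is using the full factor $(1-c_k)^{-2}$ in $\alpha_k$. Bounding it crudely by $1$, or even by $1+2c_k$, leaves an uncontrolled $L\beta^2c_k^2$ term. That term is absorbed only through the identity $1-s_k=\beta(1-c_k)$, which cancels one power of $1-c_k$. Everything else is bookkeeping with monotonicity and $\gamma L\le 1$.
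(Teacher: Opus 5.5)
Your proof is correct, and it uses the same ingredients as the paper: monotonicity of $\gamma_k$ and $c_k$, the bound $\gamma_k L\le 1$, and the identity $1-\gamma L(1-\beta+\beta c_k)=(1-\gamma L)+\gamma L\beta(1-c_k)$, which cancels one power of $1-c_k$. What differs is the bookkeeping.

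The paper never simplifies the $(k+1)$-terms. It lower-bounds $\alpha_k$ by replacing $\gamma_{k-1}$ with $\gamma_k$ and writes everything at step size $\gamma_k$. It then regroups $D_k$, up to the factor $\frac{\beta}{2\gamma_k}$, as $(1-\gamma_k L)\bigl(c_{k+1}-\frac{c_k}{(1-c_k)^2}\bigr)+\gamma_k L\beta\bigl(c_{k+1}-\frac{c_k}{1-c_k}\bigr)$. Both summands are nonpositive termwise.

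You go the other way. You first collapse $\frac{L\beta^2c_{k+1}^2}{2}+\alpha_{k+1}(1-c_{k+1})^2$ to $\frac{\beta c_{k+1}}{2\gamma_k}-\frac{L\beta(1-\beta)c_{k+1}}{2}$. You note this expression is monotone in $(c_{k+1},\gamma_k)$ and push it back to $(c_k,\gamma_{k-1})$. That leaves a single-index scalar inequality in $c_k$ and $u=\gamma_{k-1}L$.

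Your route reduces everything to the ``frozen-parameter'' case $c_{k+1}=c_k$, $\gamma_k=\gamma_{k-1}$, where you even have slack. The paper's route needs no separate scalar inequality. It also makes transparent the remark after the lemma that $c_{k+1}\le \frac{c_k}{1-c_k}$ already suffices. Your monotonicity step, by contrast, relies on $c_{k+1}\le c_k$ directly.
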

\begin{proof}
    By the definition of $\alpha_{k_0+1}$, it is clear that $D_{k_0+1} = 0$. Meanwhile, for $k \geq k_0+2$, observe that \begin{align*}
        \alpha_k &= \frac{\beta c_k (1-\gamma_{k-1} L (1 - \beta + \beta c_k))}{2\gamma_{k-1} (1-c_k)^2} \\
        &= \frac{\beta c_k}{2(1-c_k)^2} \left(\frac{1}{\gamma_{k-1}} - L (1 - \beta + \beta c_k)\right) \\
        &\geq \frac{\beta c_k}{2(1-c_k)^2} \left(\frac{1}{\gamma_{k}} - L (1 - \beta + \beta c_k)\right) \\
        &= \frac{\beta c_k (1-\gamma_k L (1 - \beta + \beta c_k))}{2\gamma_k (1-c_k)^2}.
    \end{align*} Hence, it follows that \begin{align*}
        D_k &= \frac{L\beta^2 c_{k+1}^2}{2} + \alpha_{k+1} (1 - c_{k+1})^2 - \alpha_k \\
        &\leq \frac{L\beta^2 c_{k+1}^2}{2} + \frac{\beta c_{k+1} (1-\gamma_k L (1 - \beta + \beta c_{k+1}))}{2\gamma_k} - \frac{\beta c_k (1-\gamma_k L (1 - \beta + \beta c_k))}{2\gamma_k(1-c_k)^2} \\ 
        &= \frac{\beta}{2\gamma_k}\left( \gamma_k L\beta  c_{k+1}^2 + c_{k+1} (1-\gamma_k L +\gamma_k L \beta -\gamma_k L \beta c_{k+1}) - \frac{c_k (1-\gamma_k L +\gamma_k L \beta -\gamma_k L \beta c_k)}{(1-c_k)^2} \right) \\
        &= \frac{\beta}{2\gamma_k}\left(  c_{k+1} (1-\gamma_k L +\gamma_k L \beta) - \frac{c_k (1-\gamma_k L) + \gamma_k L \beta c_k (1 - c_k)}{(1-c_k)^2} \right) \\
        &= \frac{\beta}{2\gamma_k}\left(  (1-\gamma_k L) \left(c_{k+1} - \frac{c_k}{(1-c_k)^2}\right) +  \gamma_k L \beta \left( c_{k+1} - \frac{c_k}{(1 - c_k)}\right) \right).
    \end{align*} As $\frac{1}{1-c_k} \geq 1$, we have \[
    c_{k+1} \leq c_k \leq \frac{c_k}{1-c_k} \leq \frac{c_k}{(1-c_k)^2},
    \] from which it is immediate that $D_k \leq 0$.
\end{proof}

Notice that \Cref{lem:nonnegative-lyap-coeff,lem:Gk-bound} only relies on that $0 \leq c_{k+1} < 1$.
On the other hand, \Cref{lem:Dk-negative} does require the monotone decreasing property of $\{c_k\}$, although it would suffice to have a slightly weaker condition \[
c_{k+1} \leq \frac{c_k}{1 - c_k}.
\]

Nonetheless, with these observations, we are now ready to prove the following \emph{Lyapunov descent lemma}.

\begin{lem} \label[lem]{prop:appx-lyapunov-descent}
Suppose that \Cref{asmp:lower-bound,asmp:smooth,asmp:stochastic} hold. 
For $\beta \in [0, 1)$,
let $\{\vx_k\}_{k\geq 0}$, $\{\vy_k\}_{k\geq 0}$, and $\{\vz_k\}_{k\geq 0}$ be the iterates computed by \sfsgd{} with
the learning rates $\{\gamma_k\}_{k \geq 0}$ satisfying \eqref{eqn:incr-lr}, and the averaging rates $\{c_{k+1}\}_{k \geq 0}$ satisfying \eqref{eqn:decr-ck} for some $k_0 \geq -1$.  
Define a sequence of nonnegative real numbers $\{\alpha_k\}_{k\geq 0}$ as \[ 
    \alpha_k = \frac{\beta c_k (1-\gamma_{k-1} L (1 - \beta + \beta c_k))}{2\gamma_{k-1}(1-c_{k})^2}
\] for all $k = k_0+2, k_0+3, \dots$ and \[
    \alpha_0 = \alpha_1 = \dots = \alpha_{k_0+1} = \frac{L \beta^2 c_{k_0+2}^2}{2} + \alpha_{k_0+2} (1-c_{k_0+2})^2.
\] Then, the Lyapunov potentials defined as \begin{equation}\label{eqn:lyapunov-potentials}
        V_k = f(\vy_k) - f^* + \alpha_k  \norm{\vz_k - \vx_k}^2
\end{equation} satisfy the inequality \begin{equation} \label{eqn:lyapunov-descent} 
    \expt[V_{k+1}] - \expt[V_k] \leq  
    - \frac{\gamma_k(1-\beta)}{2} \expt \bigl[\norm{\nabla f(\vy_k)}^2 \bigr] 
    + \frac{\gamma_k \sigma^2}{2} \bigl(\beta (1 + \gamma_k L (1-\beta)) c_{k+1} + \gamma_k L (1-\beta)^2 \bigr)
\end{equation}
for all nonnegative integers $k \geq k_0 + 1$, and \begin{equation} \label{eqn:simple-descent}
    \expt[V_{k+1}] - \expt[V_k] \leq 
    - \frac{\gamma_k }{2} \expt \bigl[\norm{\nabla f(\vy_k)}^2 \bigr]  
    + \frac{\gamma_k^2 L \sigma^2}{2} 
\end{equation}
for all nonnegative integers $k \leq k_0$.
\end{lem}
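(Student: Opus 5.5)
The plan is a one-step expansion of $V_{k+1}$ in terms of quantities at step $k$: use $L$-smoothness, take conditional expectation to remove the noise, and then read off the coefficients of $\norm{\vz_k-\vx_k}^2$, of the cross term $\inprod{\nabla f(\vy_k)}{\vz_k-\vx_k}$, and of $\norm{\vg_k}^2$. The three auxiliary quantities $D_k$, $G_k$, $S_k$ and \Cref{lem:nonnegative-lyap-coeff,lem:Gk-bound,lem:Dk-negative} are set up precisely so that these coefficients can be controlled.

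\textbf{Step 1 (one-step identities).} From \eqref{eqn:SF}, $\vz_{k+1}-\vx_k = (\vz_k-\vx_k) - \gamma_k\vg_k$. Together with $\vz_{k+1}-\vx_{k+1} = (1-c_{k+1})(\vz_{k+1}-\vx_k)$ this gives
\[
\vz_{k+1}-\vx_{k+1}=(1-c_{k+1})\bigl((\vz_k-\vx_k)-\gamma_k\vg_k\bigr).
\]
Similarly, $\vx_{k+1}-\vx_k = c_{k+1}(\vz_{k+1}-\vx_k)$ gives
\[
\vy_{k+1}-\vy_k=\beta c_{k+1}(\vz_k-\vx_k)-\gamma_k(1-\beta+\beta c_{k+1})\vg_k.
\]

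\textbf{Step 2 (expand $V_{k+1}$).} Apply the descent inequality $f(\vy_{k+1})\le f(\vy_k)+\inprod{\nabla f(\vy_k)}{\vy_{k+1}-\vy_k}+\frac L2\norm{\vy_{k+1}-\vy_k}^2$. Expand both $\frac L2\norm{\vy_{k+1}-\vy_k}^2$ and $\alpha_{k+1}\norm{\vz_{k+1}-\vx_{k+1}}^2$ as squares of the two-term combinations from Step 1.

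Then take expectation conditional on the history up to $\vy_k$. The iterates $\vx_k,\vy_k,\vz_k$ are measurable with respect to this history, so $\expt[\vg_k\mid\cdot]=\nabla f(\vy_k)$ and $\expt[\norm{\vg_k}^2\mid\cdot]\le\norm{\nabla f(\vy_k)}^2+\sigma^2$ by \Cref{asmp:stochastic}. The expansion produces three kinds of terms.
\begin{itemize}
\item The coefficient of $\norm{\vz_k-\vx_k}^2$, after subtracting $\alpha_k\norm{\vz_k-\vx_k}^2$, is exactly $D_k$.
\item The coefficient of $\norm{\vg_k}^2$ is $S_k$, and the linear term is $-\gamma_k(1-\beta+\beta c_{k+1})\norm{\nabla f(\vy_k)}^2$. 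Together these give $G_k\norm{\nabla f(\vy_k)}^2+S_k\sigma^2$.
\item The cross term $\inprod{\nabla f(\vy_k)}{\vz_k-\vx_k}$ has coefficient
\[
\beta c_{k+1}-2\gamma_k\alpha_{k+1}(1-c_{k+1})^2-L\beta c_{k+1}\gamma_k(1-\beta+\beta c_{k+1}).
\]
\end{itemize}

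\textbf{Step 3 (the cancellation).} The crux, and the only delicate point, is that the definition \eqref{eqn:lyap-coeffs} makes
\[
2\gamma_k\alpha_{k+1}(1-c_{k+1})^2=\beta c_{k+1}\bigl(1-\gamma_kL(1-\beta+\beta c_{k+1})\bigr),
\]
so the cross-term coefficient vanishes identically. This is the discrete counterpart of the cancellation of $\inprod{\dot\vz}{\dot\vx}$ in the proof of \Cref{prop:Lyapunov}.

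The cancellation requires $k+1\ge k_0+2$, i.e.\ $k\ge k_0+1$. For $k=k_0+1$ it uses the general formula for $\alpha_{k_0+2}$, while $D_{k_0+1}=0$ by the special choice \eqref{eqn:35}. In addition, $\vz_{k_0+1}=\vx_{k_0+1}$ by \Cref{prop:inactive-phase} (or directly from $c_{k_0+1}=1$), so the $\norm{\vz_k-\vx_k}^2$ term is harmless at that index anyway.

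\textbf{Step 4 (conclude for $k\ge k_0+1$).} Use $D_k\le0$ (\Cref{lem:Dk-negative}), $G_k\le-\frac{\gamma_k(1-\beta)}2$ (\Cref{lem:Gk-bound}), and the simplified form of $S_k$ obtained in the proof of \Cref{lem:Gk-bound}:
\[
S_k=\tfrac{\gamma_k}{2}\bigl(\beta c_{k+1}(1+\gamma_kL(1-\beta))+\gamma_kL(1-\beta)^2\bigr).
\]
Taking total expectation gives \eqref{eqn:lyapunov-descent}.

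\textbf{Step 5 (the inactive phase $k\le k_0$).} Here $c_{k+1}=1$ and $\vx_k=\vz_k=\vy_k$, so both $V_k$ and $V_{k+1}$ reduce to $f(\cdot)-f^*$. The step is plain SGD, $\vy_{k+1}=\vy_k-\gamma_k\vg_k$. The standard descent lemma with $\gamma_kL\le1$ gives
\[
-\gamma_k\norm{\nabla f}^2+\tfrac{L\gamma_k^2}{2}(\norm{\nabla f}^2+\sigma^2)\le-\tfrac{\gamma_k}{2}\norm{\nabla f}^2+\tfrac{L\gamma_k^2\sigma^2}{2},
\]
which is \eqref{eqn:simple-descent}.

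I expect no real obstacle beyond bookkeeping the expansion carefully and checking the boundary index $k=k_0+1$.
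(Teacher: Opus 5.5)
Your proposal is correct and follows the paper's proof essentially step for step. It uses the same expansion via \Cref{lem:smooth,lem:y-diff,lem:zx-diff}, conditioning to remove the noise, the cross-term cancellation built into the choice of $\alpha_{k+1}$, the bounds $D_k\le 0$ and $G_k\le -\frac{\gamma_k(1-\beta)}{2}$ together with the closed form of $S_k$, and the plain SGD descent lemma (via \Cref{lem:fallback}) for $k\le k_0$. The aside at $k=k_0+1$ is unnecessary and slightly mis-justified, since \Cref{prop:inactive-phase} assumes \Cref{asmp:lrn-rate,asmp:avg-rate}, which this lemma does not, and for $k_0=-1$ the identity $\vz_0=\vx_0$ comes from the initialization rather than from $c_{k_0+1}=1$; but because $D_{k_0+1}=0$ already, nothing in your argument depends on it.
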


\begin{proof}
Let us first consider when $k \geq k_0 + 1$.
    By \Cref{lem:smooth,lem:y-diff,lem:zx-diff}, we have \begin{align*}
        \MoveEqLeft V_{k+1} - V_k \\
        &= f(\vy_{k+1}) - f(\vy_k) + \alpha_{k+1}\norm{\vz_{k+1} - \vx_{k+1}}^2 - \alpha_k \norm{\vz_k - \vx_k}^2 \\[5pt] 
        &\leq \inprod{\vy_{k+1} - \vy_k}{\nabla f(\vy_k)} + \frac{L}{2} \norm{\vy_{k+1}- \vy_k}^2 + \alpha_{k+1}\norm{\vz_{k+1} - \vx_{k+1}}^2 - \alpha_k \norm{\vz_k - \vx_k}^2 \\[5pt]
        &= \inprod{ \beta c_{k+1} (\vz_k - \vx_k) - \gamma_k (1 - \beta + \beta c_{k+1}) \vg_k}{\nabla f(\vy_k)} \\*
            &\phantom{=} \qquad + \frac{L}{2} \norm{\beta c_{k+1} (\vz_k - \vx_k) - \gamma_k (1 - \beta + \beta c_{k+1}) \vg_k}^2 \\*
            &\phantom{=} \qquad + \alpha_{k+1} (1 - c_{k+1})^2 \norm{\vz_k - \vx_k - \gamma_k \vg_k}^2 - \alpha_k \norm{\vz_k - \vx_k}^2  \\[5pt]
        &=  \beta c_{k+1} \inprod{\vz_k - \vx_k}{\nabla f(\vy_k)} - \gamma_k (1 - \beta + \beta c_{k+1})  \inprod{ \vg_k}{\nabla f(\vy_k)} \\* 
            &\phantom{=} \qquad + \frac{L\beta^2 c_{k+1}^2}{2} \norm{ \vz_k - \vx_k}^2  + \frac{\gamma_k^2 L (1 - \beta + \beta c_{k+1})^2}{2} \norm{ \vg_k}^2 \\*
            &\phantom{=} \qquad -  \gamma_k L \beta  c_{k+1} (1 - \beta + \beta c_{k+1}) \inprod{ \vz_k - \vx_k} {\vg_k} - 2\gamma_k  \alpha_{k+1} (1 - c_{k+1})^2 \inprod{\vz_k - \vx_k}{\vg_k} \\*
            &\phantom{=} \qquad + \alpha_{k+1} (1 - c_{k+1})^2 \norm{\vz_k - \vx_k }^2  + \gamma_k^2 \alpha_{k+1} (1 - c_{k+1})^2 \norm{ \vg_k}^2 - \alpha_k \norm{\vz_k - \vx_k}^2 \\[5pt] 
        &= \left(\frac{L\beta^2 c_{k+1}^2}{2} + \alpha_{k+1} (1 - c_{k+1})^2 - \alpha_k \right) \norm{\vz_k - \vx_k}^2 \\*
            &\phantom{=} \qquad + \beta c_{k+1} \inprod{\vz_k - \vx_k}{\nabla f(\vy_k)} \\*
            &\phantom{=} \qquad - \bigl( \gamma_k L \beta  c_{k+1} (1 - \beta + \beta c_{k+1}) + 2\gamma_k  \alpha_{k+1} (1 - c_{k+1})^2 \bigr) \inprod{\vz_k - \vx_k}{\vg_k} \\*
            &\phantom{=} \qquad + \gamma_k^2 \left( \alpha_{k+1} (1 - c_{k+1})^2 + \frac{L (1 - \beta + \beta c_{k+1})^2}{2} \right) \norm{\vg_k}^2 \\*
            &\phantom{=} \qquad - \gamma_k (1 - \beta + \beta c_{k+1})  \inprod{ \vg_k}{\nabla f(\vy_k)}.
    \end{align*}
    Let us take the conditional expectation conditioned on $\vz_k$ and $\vx_k$. 
    Then, by \Cref{asmp:stochastic} and its consequence \Cref{lem:noise-bound}, we obtain
    \begin{align*}
        \MoveEqLeft[1.5] \expt[V_{k+1} | \vz_k, \vx_k] - V_k \\
        &\leq \left(\frac{L\beta^2 c_{k+1}^2}{2} + \alpha_{k+1} (1 - c_{k+1})^2 - \alpha_k \right) \norm{\vz_k - \vx_k}^2 \\*
            &\phantom{=} \qquad + \bigl( \beta c_{k+1}  -  \gamma_k L \beta  c_{k+1} (1 - \beta + \beta c_{k+1}) - 2\gamma_k  \alpha_{k+1} (1 - c_{k+1})^2 \bigr)\inprod{\vz_k - \vx_k}{\nabla f(\vy_k)} \\*
            &\phantom{=} \qquad +\left(  \gamma_k^2 \alpha_{k+1} (1 - c_{k+1})^2 + \frac{ \gamma_k^2  L (1 - \beta + \beta c_{k+1})^2}{2} - \gamma_k (1 - \beta + \beta c_{k+1}) \right) \norm{\nabla f(\vy_k)}^2 \\* 
            &\phantom{=} \qquad + \gamma_k^2 \left( \alpha_{k+1} (1 - c_{k+1})^2 + \frac{L (1 - \beta + \beta c_{k+1})^2}{2} \right) \sigma^2.
    \end{align*}
    By the definition \eqref{eqn:lyap-coeffs} of $\alpha_{k+1}$, the coefficient of the inner product term $\inprod{\vz_k - \vx_k}{\nabla f(\vy_k)}$ is \[
        \beta c_{k+1}  -  \gamma_k L \beta  c_{k+1} (1 - \beta + \beta c_{k+1}) - 2\gamma_k  \alpha_{k+1} (1 - c_{k+1})^2 = 0.
    \] Hence, such a choice of $\alpha_k$ gives us the inequality \begin{equation} \label{eqn:pre-bound-lyapunov-descent}
        \expt[V_{k+1} | \vz_k, \vx_k] - V_k \leq D_k \norm{\vz_k - \vx_k}^2  + G_k \norm{\nabla f(\vy_k)}^2 + S_k \sigma^2
    \end{equation} for $D_k$, $G_k$, and $S_k$ defined exactly in \eqref{eqn:Dk}, \eqref{eqn:Gk}, and \eqref{eqn:Sk}, respectively. 
    In \Cref{lem:Dk-negative} we have shown that $D_k \leq 0$, and in \Cref{lem:Gk-bound} we have shown that $G_k \leq -\frac{\gamma_k (1-\beta)}2$.
    Moreover, continuing from the identity regarding $S_k$ established in the proof of \Cref{lem:Gk-bound}, we can observe that \begin{align*}
    S_k &= \frac{\gamma_k}{2}  \left(  \beta c_{k+1} + \gamma_k L (1-\beta)^2 + \gamma_k L \beta (1-\beta) c_{k+1} \right) \\ 
     &= \frac{\gamma_k}{2}  \left(  \beta (1 + \gamma_k L (1-\beta)) c_{k+1} + \gamma_k L (1-\beta)^2 \right).
    \end{align*}
    Applying these results to \eqref{eqn:pre-bound-lyapunov-descent}, we obtain \[
    \expt[V_{k+1} | \vz_k, \vx_k] - V_k \leq -\frac{\gamma_k(1-\beta)}{2} \norm{\nabla f(\vy_k)}^2 + \frac{\gamma_k \sigma^2}{2} \bigl(\beta (1 + \gamma_k L (1-\beta)) c_{k+1} + \gamma_k L (1-\beta)^2 \bigr).
    \] Taking the expectation with respect to $\vz_k$ and $\vx_k$, by the law of total expectation, we conclude that the claimed inequality \eqref{eqn:lyapunov-descent} holds.

    It remains to consider when $k \leq k_0$. 
    By \Cref{lem:fallback}, we have $\vx_k = \vz_k$ and $\vx_{k+1} = \vz_{k+1}$, and thus \[
    V_{k+1} - V_k = f(\vy_{k+1}) - f(\vy_k).
    \] Then, as \Cref{lem:y-diff} asserts that $\vy_{k+1} - \vy_k = -\gamma_k \vg_k$, the standard arguments for the classical descent lemma apply in obtaining \eqref{eqn:simple-descent}.
    That is, by \Cref{lem:smooth}, we have \begin{align*}
        V_{k+1} - V_k &\leq \inprod{\vy_{k+1} - \vy_k}{\nabla f(\vy_k)} + \frac{L}{2}\norm{\vy_{k+1} - \vy_k}^2 \\
                  &= -\gamma_k \inprod{\vg_k}{\nabla f(\vy_k)} + \frac{\gamma_k^2 L}{2} \norm{\vg_k}^2.
    \end{align*} Taking the conditional expectation with respect to $\vy_k$ and applying \Cref{lem:noise-bound}, we obtain \begin{equation} \label{eqn:initial-descent}
        \expt[V_{k+1} | \vy_k ] - \expt[V_k | \vy_k ] \leq \left(-\gamma_k + \frac{\gamma_k^2 L}{2}\right)\norm{\nabla f(\vy_k)}^2 + \frac{\gamma_k^2 L}{2} \sigma^2.
    \end{equation}
    Here, as $\gamma_k \leq \frac{1}{L}$, it holds that $-\gamma_k + \frac{\gamma_k^2 L}2 \leq -\frac{\gamma_k}2$, so we conclude that \eqref{eqn:initial-descent} implies \eqref{eqn:simple-descent}.  
\end{proof}

\section{Convergence Rate of \sfsgd{}} \label{sec:sfsgd-rate}

\begin{thm}[\Cref{thm:master-convergence-thm}]
    Suppose that \Cref{asmp:lower-bound,asmp:smooth,asmp:stochastic} hold. 
    For $\gamma \in (0, \frac{1}{L}]$ and $\beta \in [0, 1)$, let $\{\vx_k\}_{k\geq 0}$, $\{\vy_k\}_{k\geq 0}$, and $\{\vz_k\}_{k\geq 0}$ be the iterates computed by \sfsgd{} with
    the warmup length $\warmstep$ and the learning rates $\{\gamma_k\}_{k \geq 0}$ satisfying~\eqref{eqn:lr-assumption}, and the averaging rates $\{c_{k+1}\}_{k \geq 0}$ satisfying~\eqref{eqn:main-song-avg}.
    Then for any $T \geq 1$, letting $\warmratio = \frac{\warmstep-1}{T}$, we have the bound  \[
        \min_{k=0, \dots, T-1} \expt \bigl[\norm{\nabla f(\vy_k)}^2 \bigr] \leq \frac{2(f(\vx_0) - f^*)}{\gamma (1-\beta) (1-\warmratio)T} + \frac{\gamma L (1-\beta) \sigma^2}{1-\warmratio} + \frac{3\beta(2-\beta) C \sigma^2 (1+\log T)}{ (1-\warmratio) T}. 
    \]
\end{thm}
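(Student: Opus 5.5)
The plan is to telescope the Lyapunov descent inequality of \Cref{prop:lyapunov-descent} (equivalently \Cref{prop:appx-lyapunov-descent}) over $k=0,\dots,T-1$. This needs its hypotheses: $\gamma_k$ is increasing and bounded by $\gamma\le \frac1L$ by \Cref{asmp:lrn-rate}, and $c_{k+1}$ is monotone decreasing by \Cref{cor:avg-rate-monotone}. Since $c_{k+1}\to 0$, the rates are not constantly $1$, so $k_0$ is well defined.

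For $k\le k_0$ the coefficient $\frac{\gamma_k}{2}$ dominates $\frac{\gamma_k(1-\beta)}{2}$, and the noise term $\frac{\gamma_k^2L\sigma^2}{2}$ is bounded by the $k\ge k_0+1$ noise expression. Hence for every $k$ we have
\[
\expt[V_{k+1}]-\expt[V_k]\le -\tfrac{\gamma_k(1-\beta)}{2}\expt\norm{\nabla f(\vy_k)}^2+\tfrac{\gamma_k\sigma^2}{2}\bigl(\beta(1+\gamma_kL(1-\beta))c_{k+1}+\gamma_kL(1-\beta)^2\bigr).
\]
Summing, and using $V_T\ge 0$ (from \Cref{asmp:lower-bound} and \Cref{lem:nonnegative-lyap-coeff}) together with $V_0=f(\vx_0)-f^*$ (as $\vz_0=\vx_0$ and $\vy_0=\vx_0$), gives
\[
\tfrac{1-\beta}{2}\sum_{k<T}\gamma_k\expt\norm{\nabla f(\vy_k)}^2\le f(\vx_0)-f^*+\tfrac{\sigma^2}{2}\sum_{k<T}\bigl(\gamma_k^2L(1-\beta)^2+\beta\gamma_k(1+\gamma_kL(1-\beta))c_{k+1}\bigr).
\]

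Next, I lower bound the left side by $\min_k \expt\norm{\nabla f(\vy_k)}^2\cdot\sum_{k<T}\gamma_k$. Since $\gamma_k=\gamma$ for $k\ge \warmstep-1$, we get $\sum_{k<T}\gamma_k\ge \gamma(T-\warmstep+1)=\gamma(1-\warmratio)T$. For the noise terms I use $\gamma_k\le\gamma$, $\gamma L\le 1$, and $1+\gamma_kL(1-\beta)\le 2-\beta$. The first sum is then at most $\gamma^2L(1-\beta)^2T$. After dividing by $\frac{(1-\beta)}{2}\gamma(1-\warmratio)T$, this gives exactly the first two terms of \eqref{eqn:master-bound}.

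The main obstacle is the third term: showing $\sum_{k<T}c_{k+1}\le 3(1-\beta)C(1+\log T)$. I would use \eqref{eqn:avg-rate-upper-bound}, namely $c_{k+1}\le 6(1-\beta)C\frac{k+1}{(k+2)(2k+3)}\le 3(1-\beta)C\frac{1}{k+1}$, since $2(k+1)^2\le (k+2)(2k+3)$. Summing and using $\sum_{k<T}\frac{1}{k+1}\le 1+\log T$ bounds the term by $\frac{\sigma^2}{2}\beta\gamma(2-\beta)\cdot 3(1-\beta)C(1+\log T)$. Dividing by $\frac{1-\beta}{2}\gamma(1-\warmratio)T$ then yields $\frac{3\beta(2-\beta)C\sigma^2(1+\log T)}{(1-\warmratio)T}$ as claimed. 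The only remaining care needed is that the bound $c_{k+1}\le 1$ is not needed here, and that the $k\le k_0$ terms are also covered by the same upper bound, which they are since the bound is only enlarged.
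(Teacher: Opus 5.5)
Your proposal is correct and follows essentially the same route as the paper. Both telescope the Lyapunov descent lemma with $V_0=f(\vx_0)-f^*$ and $V_T\ge 0$, bound $c_{k+1}$ via \eqref{eqn:avg-rate-upper-bound} together with $\frac{6(k+1)}{(k+2)(2k+3)}\le\frac{3}{k+1}$ and the harmonic sum, and lower-bound $\sum_k\gamma_k$ by the $(1-\warmratio)T$ post-warmup steps. The only difference is organizational: you fold the $k\le k_0$ steps into the general per-step inequality using $c_{k+1}=1$ and $\gamma_k L\le 1$, whereas the paper treats them in the summed noise terms via $(1-\beta)^2+\beta(2-\beta)=1$ and $\gamma_k L\le 1\le \tilde{c}_{k+1}(1-\beta)C$; both rest on the same facts.
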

\begin{proof}
For notational convenience, for each $k \geq 0$, let us define \[
    (a_k, b_k) \coloneqq \begin{cases}
        \left(\dfrac{\gamma_k}2, \dfrac{\gamma_k^2L \sigma^2}{2}\right) & \quad k \leq k_0 \\[1em]
        \left(\dfrac{\gamma_k (1-\beta)}{2} , \dfrac{\gamma_k \sigma^2}{2} \bigl(\beta (1 + \gamma_k L (1-\beta)) c_{k+1} + \gamma_k L (1-\beta)^2 \bigr) \right)& \quad k \geq k_0 + 1
    \end{cases}
\] so that, by the Lyapunov descent lemma (\Cref{prop:appx-lyapunov-descent}), the Lyapunov potentials defined in \eqref{eqn:lyapunov-potentials} satisfy \[
    a_k \expt \bigl[\norm{\nabla f(\vy_k)}^2 \bigr] \leq \expt[V_k] - \expt[V_{k+1}] + b_k.
\]
    By telescopic summation, we obtain \begin{align*}
 \MoveEqLeft \sum_{k=0}^{T-1} a_k \expt \bigl[\norm{\nabla f(\vy_k)}^2 \bigr] \\ 
 &\leq V_0 - \expt[V_T] + \sum_{k=0}^{T-1} b_k \\ 
 &= V_0 - \expt[V_T] + \sum_{k=0}^{k_0} \frac{\gamma_k^2L \sigma^2}{2} + \sum_{k=k_0+1}^{T-1} \frac{\gamma_k \sigma^2}{2} \bigl(\beta (1 + \gamma_k L (1-\beta)) c_{k+1} + \gamma_k L (1-\beta)^2 \bigr)  \\ 
&\leq V_0 - \expt[V_T] + \frac{L \sigma^2}{2} \sum_{k=0}^{k_0} \gamma_k^2  + \sum_{k=k_0+1}^{T-1} \frac{\gamma_k \sigma^2}{2} \bigl(\beta (2-\beta) c_{k+1} + \gamma_k L (1-\beta)^2 \bigr)  \\ 
&= V_0 - \expt[V_T] + \frac{L \sigma^2}{2} \sum_{k=0}^{k_0} \gamma_k^2  + \frac{\beta (2-\beta) \sigma^2}{2} \sum_{k=k_0+1}^{T-1} \gamma_k c_{k+1} + \frac{L (1-\beta)^2\sigma^2}{2} \sum_{k=k_0+1}^{T-1}  \gamma_k^2  \\ 
&= V_0 - \expt[V_T] + \frac{L ((1-\beta)^2 + \beta(2-\beta))\sigma^2}{2} \sum_{k=0}^{k_0} \gamma_k^2  \\ 
&\phantom{= V_0 - \expt[V_T]} \qquad  + \frac{\beta (2-\beta) \sigma^2}{2} \sum_{k=k_0+1}^{T-1} \gamma_k c_{k+1} + \frac{L (1-\beta)^2\sigma^2}{2} \sum_{k=k_0+1}^{T-1}  \gamma_k^2  \\ 
&= V_0 - \expt[V_T] + \frac{\beta (2-\beta) \sigma^2}{2} \left(\sum_{k=0}^{k_0} \gamma_k^2 L +  \sum_{k=k_0+1}^{T-1} \gamma_k c_{k+1} \right) + \frac{L (1-\beta)^2\sigma^2}{2} \sum_{k=0}^{T-1}  \gamma_k^2  .
    \end{align*} 
Up to this point, we only used that the learning rates $\{\gamma_k\}_{k \geq 0}$ satisfy \eqref{eqn:incr-lr} and the averaging rates $\{c_{k+1}\}_{k \geq 0}$ satisfy \eqref{eqn:decr-ck}. 
But from here, in order to derive explicit convergence bounds, we invoke \Cref{asmp:lrn-rate,,asmp:avg-rate}.

Consider the sequence $\{\tilde{c}_{k+1}\}_{k \geq 0}$ as in \eqref{eqn:tilde-c}, and recall that $k_0$ is the largest integer such that $c_{k+1} = 1$ for all $k \leq k_0$.
Hence, from \eqref{eqn:tilde-c-to-c} which tells us that $c_{k+1} = \min\{\tilde{c}_{k+1} (1-\beta) C ,\, 1\}$, we get that if $k > k_0$ then $c_{k+1} = \tilde{c}_{k+1}(1-\beta) C$, and on the other hand, if $k \leq k_0$ then $\tilde{c}_{k+1}(1-\beta) C \geq 1$ as otherwise we cannot have $c_{k+1} = 1$. 
This implies that $\tilde{c}_{k+1} (1-\beta) C \geq \gamma_k L$ whenever $k \leq k_0$, so we obtain that \begin{align*}
\sum_{k=0}^{k_0} \gamma_k^2 L +  \sum_{k=k_0+1}^{T-1} \gamma_k c_{k+1} &\leq \sum_{k=0}^{T-1} \gamma_k \tilde{c}_{k+1}(1-\beta) C \\
&\leq \sum_{k=0}^{T-1} \frac{6\gamma_k (1-\beta) C (k+1)}{(k+2)(2k+3)} \\
&\leq \gamma (1-\beta) C \sum_{k=0}^{T-1} \frac{6(k+1)}{(k+2)(2k+3)} \\
&\leq \gamma (1-\beta) C \sum_{k=0}^{T-1} \frac{3}{k+1} \\
&\leq 3\gamma (1-\beta) C (1+\log T)
\end{align*}
     where the second line is by the upper bound of $\tilde{c}_{k+1}$ we obtained in \eqref{eqn:c-tilde-upper-bound}, the third line is by \Cref{asmp:lrn-rate}, the fourth line is by a simple inequality $(k+1)(2k+2) \leq (k+2)(2k+3)$ which holds for all nonnegative integers $k \geq 0$, and the fifth line is by \Cref{lem:harmonic-bound}.
    
    Meanwhile, as the initial points are chosen so that $\vz_0 = \vx_0$, we have $\vy_0 = (1-\beta) \vz_0 + \beta \vx_0 = \vx_0$. 
    This implies that \[
    V_0 = f(\vy_0)- f^* + \alpha_0\norm{\vz_0 - \vx_0}^2  = f(\vx_0) - f^*.
    \] As \Cref{asmp:lower-bound} and \Cref{lem:nonnegative-lyap-coeff} together assert that $V_T \geq 0$, we furthermore have \[
    V_0 - \expt[V_T] \leq f(\vx_0) - f^*.
    \] 
    Hence, using these bounds and once again applying \Cref{asmp:lrn-rate} of asserting $\gamma_k \leq \gamma$, from the telescopic sum we get \begin{equation} \label{eqn:master-bound-prototype}
     \sum_{k=0}^{T-1} a_k \expt \bigl[\norm{\nabla f(\vy_k)}^2 \bigr]  \leq f(\vx_0) - f^* + \frac{3\gamma \beta(1-\beta)(2-\beta) C \sigma^2 (1+\log T)}{2} +\frac{\gamma^2 L(1-\beta)^2\sigma^2 T}{2}. 
    \end{equation}

    We now seek a lower bound of the left hand side of \eqref{eqn:master-bound-prototype}. 
    By the definition of $a_k$, for all $k \geq 0$ we have $a_k \geq \frac{\gamma_k(1-\beta)}2 \geq 0$. 
    Also, by \Cref{asmp:lrn-rate} we have $\gamma_k = \gamma$ for all $k \geq \warmstep - 1 = \warmratio T$.
    It thus holds that \begin{align*}
        \sum_{k=0}^{T-1} a_k \expt \bigl[\norm{\nabla f(\vy_k)}^2 \bigr] &\geq \sum_{k=0}^{T-1} \frac{\gamma_k(1-\beta)}2\expt \bigl[\norm{\nabla f(\vy_k)}^2 \bigr]  \\
        &\geq \sum_{k=\warmstep-1}^{T-1} \frac{\gamma (1-\beta)}2\expt \bigl[\norm{\nabla f(\vy_k)}^2 \bigr] \\
        &\geq \frac{\gamma (1-\beta) (T - \warmstep + 1)}2 \min_{k = \warmstep-1, \warmstep, \dots, T-1}\expt \bigl[\norm{\nabla f(\vy_k)}^2 \bigr] \\
        &\geq \frac{\gamma (1-\beta) (1-\warmratio) T}2 \min_{k = \warmstep-1, \warmstep, \dots, T-1}\expt \bigl[\norm{\nabla f(\vy_k)}^2 \bigr] .
    \end{align*}  
    Plugging this lower bound back into \eqref{eqn:master-bound-prototype} and dividing both sides by $\frac{\gamma (1-\beta) (1-\warmratio) T}2$, we obtain  \[
    \min_{k = \warmstep-1, \dots, T-1} \expt \bigl[\norm{\nabla f(\vy_k)}^2 \bigr] \leq \frac{2(f(\vx_0) - f^*)}{\gamma (1-\beta) (1-\warmratio)T} + \frac{\gamma L (1-\beta) \sigma^2}{1-\warmratio} + \frac{3\beta(2-\beta) C \sigma^2 (1+\log T)}{ (1-\warmratio) T}.
    \] The claimed bound now follows from the simple fact that \[
        \min_{k = \warmstep-1, \warmstep, \dots, T-1}\expt \bigl[\norm{\nabla f(\vy_k)}^2 \bigr] \geq  \min_{k = 0, \dots, T-1}\expt \bigl[\norm{\nabla f(\vy_k)}^2 \bigr],
    \] as the minimum decreases when it is taken over a larger set. 
\end{proof}

\begin{cor}[\Cref{cor:sfsgd-rate}]
    Suppose that \Cref{asmp:lower-bound,asmp:smooth,asmp:stochastic} hold. 
    Say we run \sfsgd{} for $T \geq 1$ iterations with $\beta \in [0, 1)$, the warmup length $\warmstep$ and the learning rates $\{\gamma_k\}_{k \geq 0}$ satisfying \eqref{eqn:lr-assumption}, and the averaging rates $\{c_{k+1}\}_{k \geq 0}$ satisfying \eqref{eqn:main-song-avg}, to obtain the iterates $\{\vx_k\}_{k\geq 0}$, $\{\vy_k\}_{k\geq 0}$, and $\{\vz_k\}_{k\geq 0}$.
    Then, choosing $\gamma = \min \{ \frac{1}{L}, \frac{1}{(1-\beta) \sqrt{T}}\}$, for $\warmratio = \frac{\warmstep-1}{T}$ and $A \coloneqq \max\{1, \frac{L}{1-\beta}\}$ it holds that \begin{equation} \label{eqn:appx-sfsgd-rate}
    \min_{k=0, \dots, {T-1}}\, \expt\bigl[\norm{\nabla f(\vy_k)}^2 \bigr] \leq \frac{2 A(f(\vx_0) - f^*) +  L \sigma^2}{(1-\warmratio)\sqrt{T}} +  \frac{3\beta(2-\beta) C \sigma^2 (1+\log T)}{ (1-\warmratio) T} .
\end{equation}
Moreover, if additionally $T \geq \frac{L^2}{(1-\beta)^2}$, then the convergence bound \eqref{eqn:appx-sfsgd-rate} holds with $A = 1$.
\end{cor}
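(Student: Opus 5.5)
The plan is to plug the prescribed $\gamma = \min\{\frac{1}{L}, \frac{1}{(1-\beta)\sqrt{T}}\}$ into the master bound \eqref{eqn:master-bound} of \Cref{thm:master-convergence-thm}, which applies because $\gamma \in (0, \frac1L]$. The third term of \eqref{eqn:master-bound} does not involve $\gamma$, so it carries over verbatim to \eqref{eqn:appx-sfsgd-rate}. Only the first two terms need attention, and I would bound each separately by a multiple of $\frac{1}{(1-\warmratio)\sqrt{T}}$.

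For the noise term, I would use $\gamma \le \frac{1}{(1-\beta)\sqrt{T}}$, which gives
\[
\frac{\gamma L(1-\beta)\sigma^2}{1-\warmratio} \le \frac{L\sigma^2}{(1-\warmratio)\sqrt{T}} .
\]
For the initialization term, I need an upper bound on $\frac{1}{\gamma}$, and I would split into two cases.
\begin{itemize}
\item If $\gamma = \frac{1}{(1-\beta)\sqrt{T}}$, then $\frac{1}{\gamma(1-\beta)T} = \frac{1}{\sqrt{T}}$. The first term is then $\frac{2(f(\vx_0)-f^*)}{(1-\warmratio)\sqrt{T}}$, which is at most the claimed bound since $A \ge 1$.
\item If $\gamma = \frac1L$, then $\frac{1}{\gamma(1-\beta)T} = \frac{L}{(1-\beta)T} \le \frac{L}{(1-\beta)\sqrt{T}}$, using $T \ge 1$. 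This is at most $\frac{A}{\sqrt{T}}$.
\end{itemize}
In both cases $\frac{1}{\gamma(1-\beta)T} \le \frac{A}{\sqrt{T}}$. Adding the three pieces yields \eqref{eqn:appx-sfsgd-rate}.

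For the refinement, I would observe that $T \ge \frac{L^2}{(1-\beta)^2}$ is equivalent to $\frac{1}{(1-\beta)\sqrt{T}} \le \frac1L$. So the minimum defining $\gamma$ is attained at $\frac{1}{(1-\beta)\sqrt{T}}$, and only the first case above occurs. That case gave the factor $1$ in place of $A$, so the bound holds with $A = 1$.

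There is no real obstacle here. The only points needing care are the case split on which branch of the minimum is active, and the use of $T \ge 1$ to replace $\frac1T$ by $\frac{1}{\sqrt T}$ in the $\gamma = \frac1L$ branch.
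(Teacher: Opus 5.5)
Your proposal is correct and follows the paper's proof essentially verbatim. Both plug the chosen $\gamma$ into the master bound, split into cases according to which branch of the minimum is active, use $T \ge \sqrt{T}$ in the $\gamma = \frac{1}{L}$ branch, and note that $T \ge \frac{L^2}{(1-\beta)^2}$ forces the first branch, which yields $A = 1$.
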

\begin{proof}
    The value of $\gamma$ depends on the choice of $T$. 
    Accordingly, we divide into two cases. 
    \begin{enumerate} [label=(\roman*)]
        \item $\frac{1}{(1-\beta)\sqrt{T}} \leq \frac{1}{L}$ 
        
        In this case we would have $\gamma = \frac{1}{(1-\beta) \sqrt{T}}$.
        Substituting this value of $\gamma$ into \eqref{eqn:master-bound} we get \begin{equation} \label{eqn:large-T-bound}
            \min_{k=0, \dots, T-1}\, \expt\bigl[\norm{\nabla f(\vy_k)}^2 \bigr] \leq \frac{2(f(\vx_0) - f^*) +  L \sigma^2}{(1-\warmratio)\sqrt{T}} +\frac{3\beta(2-\beta) C \sigma^2 (1+\log T)}{ (1-\warmratio) T}.
        \end{equation}
        \item $\frac{1}{(1-\beta)\sqrt{T}} > \frac{1}{L}$ 

        In this case we have $\gamma = \frac{1}{L}$. 
        Consider the three terms in the right hand side of \eqref{eqn:master-bound}.
        The third term of the right hand side of \eqref{eqn:master-bound} does not depend on $\gamma$, hence remains as is. 
        Meanwhile, as $\gamma = \frac{1}L$, the first term can be bounded as \[ 
            \frac{2(f(\vx_0) - f^*)}{\gamma (1-\beta) (1-\warmratio)T} = \frac{2L(f(\vx_0) - f^*)}{(1-\beta) (1-\warmratio)T} \leq \frac{2L(f(\vx_0) - f^*)}{(1-\beta) (1-\warmratio)\sqrt{T}} 
            .\]
        Moreover, for the second term, as $\gamma = \frac{1}{L} <\frac{1}{(1-\beta)\sqrt{T}}$ we get that \[
            \frac{\gamma L (1-\beta) \sigma^2}{1-\warmratio} \leq \frac{L \sigma^2 }{(1-\warmratio)\sqrt{T}}.
        \] Thus, in total, we have the bound \[
            \min_{k=0, \dots, T-1}\, \expt\bigl[\norm{\nabla f(\vy_k)}^2 \bigr] \leq \frac{\frac{2L}{1-\beta}(f(\vx_0) - f^*) +  L \sigma^2}{(1-\warmratio)\sqrt{T}} +  \frac{3\beta(2-\beta) C \sigma^2 (1+\log T)}{ (1-\warmratio) T}.
        \]
    \end{enumerate} 
    Hence, for $A \coloneqq \max\{1, \frac{L}{1-\beta}\}$, we see that in both cases the bound \[
        \min_{k=0, \dots, T-1}\, \expt\bigl[\norm{\nabla f(\vy_k)}^2 \bigr] \leq \frac{2 A(f(\vx_0) - f^*) +  L \sigma^2}{(1-\warmratio)\sqrt{T}} +  \frac{3\beta(2-\beta) C \sigma^2 (1+\log T)}{ (1-\warmratio) T}.
    \] holds. Moreover, if $T \geq \frac{L^2}{(1-\beta)^2}$ then only case (i) is valid between the two cases above. 
    Hence in that case, we conclude that the bound \eqref{eqn:large-T-bound} holds, which is exactly \eqref{eqn:appx-sfsgd-rate} with $A = 1$.
\end{proof}

\section{\sfgd{} as a Nonautonomous Dynamical System} \label{sec:foo}

\subsection{Two-sequence reformulation of the Schedule-Free Update Rule} \label{ssec:2seq}

Assuming $\beta \in [0, 1)$, here we show how the three-sequence update rule \eqref{eqn:SF} can be reformulated into a two-sequence form \eqref{eqn:2SF}. 
First, we use \eqref{eqn:zk} to eliminate $\vz_{k+1}$ from \eqref{eqn:xk} which leads to \[
\vx_{k+1} = (1-c_{k+1}) \vx_k + c_{k+1}(\vz_k - \gamma_k \vg_k).
\]
As \eqref{eqn:yk} implies that $\vz_k = \frac{1}{1-\beta}(\vy_k - \beta \vx_k)$, we substitute this into the above, and get  \begin{align*}
    \vx_{k+1} &= (1-c_{k+1}) \vx_k + c_{k+1}\left(\frac{\vy_k - \beta \vx_k}{1-\beta} - \gamma_k \vg_k\right) \\
              &= \vx_k + c_{k+1}\left(\frac{\vy_k - \beta \vx_k}{1-\beta} - \vx_k -  \gamma_k \vg_k\right) \\
              &= \vx_k + \frac{c_{k+1}}{1-\beta} (\vy_k - \vx_k) -   \gamma_k c_{k+1} \vg_k 
\end{align*} which is exactly \eqref{eqn:2xk}. Meanwhile, similarly eliminating $\vz_{k+1}$ and $\vz_k$ from \eqref{eqn:zk}, we obtain \[
\frac{\vy_{k+1} - \beta \vx_{k+1}}{1-\beta} = \frac{\vy_{k} - \beta \vx_{k}}{1-\beta} - \gamma_k \vg_k . 
\] Multiplying by $1-\beta$ and substituting \eqref{eqn:2xk} gives us \begin{align*}
    \vy_{k+1} &= \vy_{k} + \beta (\vx_{k+1} - \vx_{k}) - \gamma_k (1-\beta)\vg_k \\
              &= \vy_{k} + \frac{\beta c_{k+1}}{1-\beta} (\vy_k - \vx_k) - \gamma_k (1-\beta+\beta c_{k+1})\vg_k
\end{align*} which is exactly \eqref{eqn:2yk}.  

\subsection{\sfgd{} as a Nonautonomous Dynamical System}

As explained in \Cref{sec:saddle} and derived in \Cref{ssec:2seq}, given that $\beta \in [0,1)$, \sfgd{} can be viewed as a nonautonomous dynamical system with maps $h_k:\rr^d \times \rr^d \to \rr^d \times \rr^d$ defined as \begin{equation} \label{eqn:appx-nonauto-system}
h_k(\vy, \vx) = \begin{bmatrix} \vy + \frac{\beta c_{k+1}}{1-\beta}(\vy - \vx) - \gamma_k (1-\beta+\beta c_{k+1})\nabla f(\vy) \\[3pt] 
                                \vx + \frac{c_{k+1}}{1-\beta}(\vy - \vx) - \gamma_k c_{k+1} \nabla f(\vy) \end{bmatrix}, \quad k = 0, 1, \dots.
\end{equation}
For convenience, we will refer to the sequence of maps \eqref{eqn:appx-nonauto-system} simply as the nonautonomous dynamical system corresponding to \sfgd.

\begin{prop}[\Cref{prop:fixed-points}]
    For any integer $k \geq 0$, suppose that $\gamma_k \neq 0$ and $c_{k+1} \neq 0$. Then, a point $(\vy^*, \vx^*) \in \rr^d \times \rr^d$ is a fixed point of $h_k$ if and only if $\vy^* = \vx^*$ and $\nabla f(\vx^*) = \vzero$.
\end{prop}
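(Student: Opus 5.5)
The plan is to write out the fixed-point equation $h_k(\vy^*,\vx^*) = (\vy^*,\vx^*)$ componentwise using \eqref{eqn:appx-nonauto-system}. Then I will show that the two resulting linear relations in the unknowns $\vy^*-\vx^*$ and $\nabla f(\vy^*)$ force both to vanish. The converse direction is immediate: if $\vy^*=\vx^*$ and $\nabla f(\vx^*)=\vzero$, then every correction term in both components of $h_k$ is zero, so $h_k(\vx^*,\vx^*)=(\vx^*,\vx^*)$.

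For the forward direction, set $\vd = \vy^*-\vx^*$ and $\vg = \nabla f(\vy^*)$. The fixed-point conditions read
\begin{align*}
\frac{\beta c_{k+1}}{1-\beta}\,\vd - \gamma_k(1-\beta+\beta c_{k+1})\,\vg &= \vzero, \\
\frac{c_{k+1}}{1-\beta}\,\vd - \gamma_k c_{k+1}\,\vg &= \vzero.
\end{align*}
I first divide the second equation by $c_{k+1}\neq 0$, which gives $\vd = (1-\beta)\gamma_k \vg$. Substituting this into the first equation yields
\[
\beta c_{k+1}\gamma_k\vg - \gamma_k(1-\beta+\beta c_{k+1})\vg = -\gamma_k(1-\beta)\vg = \vzero .
\]
Since $\gamma_k\neq 0$ and $\beta<1$, this forces $\vg=\vzero$, and then $\vd = (1-\beta)\gamma_k\vg = \vzero$. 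Hence $\vy^*=\vx^*$, and $\nabla f(\vx^*)=\nabla f(\vy^*)=\vzero$.

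There is no real obstacle here. The only points needing care are to record precisely where each hypothesis enters: $c_{k+1}\neq0$ is used to divide the second equation, $\gamma_k\neq0$ and $\beta<1$ are used to cancel the factor $\gamma_k(1-\beta)$, and $\beta<1$ is also what makes the two-sequence map \eqref{eqn:appx-nonauto-system} well defined in the first place.
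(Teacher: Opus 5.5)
Your proof is correct and is essentially the paper's argument: both reduce the fixed-point condition to a $2\times 2$ block linear system in $\vy^*-\vx^*$ and $\nabla f(\vy^*)$ and show it has only the trivial solution. The paper certifies this by factoring the coefficient matrix into invertible triangular blocks, while you eliminate by hand; this is the same computation, since the scalar coefficient matrix has determinant $-(1-\beta)\neq 0$.
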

\begin{proof}
    The ``if'' direction is straightforward from the definition of $h_k$.
    To show the reverse direction, suppose that $(\vy^*, \vx^*) \in \rr^d \times \rr^d$ is a fixed point of $h_k$.
    Then, from \eqref{eqn:appx-nonauto-system}, it is immediate that $(\vy^*, \vx^*)$ should satisfy \begin{equation} \label{eqn:mv-system}
        \begin{bmatrix}
            \beta \id & (1-\beta + \beta c_{k+1}) \id \\[5pt]
            \id       & c_{k+1} \id
        \end{bmatrix} \begin{bmatrix}
            \frac{c_{k+1}}{1-\beta} (\vy^* - \vx^*) \\[5pt]
            -\gamma_k \nabla f(\vy^*)
        \end{bmatrix} = \begin{bmatrix}
            \vzero \\[5pt] \vzero
        \end{bmatrix}  
    \end{equation} where $\id$ denotes the $d\times d$ identity matrix. 
    As it holds that \begin{equation} \label{eqn:ul-decomposition}
        \begin{bmatrix}
            \beta \id & (1-\beta + \beta c_{k+1}) \id \\[5pt]
            \id       & c_{k+1} \id
        \end{bmatrix} = \begin{bmatrix}
            \id    & \frac{1-\beta +\beta c_{k+1}}{c_{k+1}} \id \\[5pt]
            \vzero & \id
        \end{bmatrix} \begin{bmatrix}
            -\frac{1-\beta}{c_{k+1}} \id & \vzero \\[5pt] 
            \id & {c_{k+1}} \id
        \end{bmatrix}
    \end{equation} where the two factors on the right hand side of \eqref{eqn:ul-decomposition} are invertible triangular matrices because all of their diagonal entries are nonzero, \eqref{eqn:mv-system} holds if and only if $\frac{c_{k+1}}{1-\beta} (\vy^* - \vx^*) =  -\gamma_k \nabla f(\vy^*) = \vzero$.
    But $c_{k+1} \neq 0$ and $\gamma_k \neq 0$, so we conclude that $\vy^* = \vx^*$ and $\nabla f(\vx^*) = \vzero$.
\end{proof}

In particular, because all $h_k$ share the same set of fixed points, it is clear that \[
\{(\vx^*, \vx^*) : \nabla f(\vx^*) = \vzero\}
\] is the set of fixed points of the nonautonomous dynamical system.

Let us now state some general notions regarding a nonautonomous dynamical system, introduced by \citet{Musa26}.
Following the standard notation, for $r \geq 0$, we denote by $B_r(\vzero)$ the open ball of radius $r$ centered at $\vzero$.

\begin{defn}[\citep{Musa26}, Definition~2.4] \label[defn]{defn:PH}
    Let $h: \rr^m \to \rr^m$ and $T : \rr^m \to \rr^m$ be maps on $\rr^m$, with $T$ linear. 
    Let $\mu$, $\lambda$, and $\epsilon$ be scalars such that $1 \leq \lambda < \mu$ and $\epsilon > 0$.
    Consider a direct sum decomposition $\rr^m = \Ecs \oplus \Eu$ with the norms $\norm{\cdot}_\text{cs}$ and $\norm{\cdot}_\text{u}$ given on the subspaces $\Ecs$ and $\Eu$, respectively. 
    We say that the pair $(h, T)$ is \emph{$(\mu, \lambda, \epsilon)$-pseudo-hyperbolic} on an open neighborhood $U$ of $\vzero$ with respect to the splitting $\rr^m = \Ecs \oplus \Eu$, and denote by \[
        (h, T) \in \PsHy(\mu, \lambda, \epsilon; U, \Ecs \oplus \Eu)
    \] if the following conditions hold: \begin{enumerate}[label=(\roman*)]
        \item $h(\vzero) = \vzero$. 
        \item If $\vu \in \Ecs$ then $T(\vu) \in \Ecs$ with $\norm{T(\vu)}_{\text{cs}} \leq \lambda \norm{\vu}_{\text{cs}}$, and if $\vv \in \Eu$ then $T(\vv) \in \Eu$ with $\norm{T(\vv)}_{\text{u}} \geq \mu \norm{\vv}_{\text{u}}$. 
        \item \label{item:almost-linear} $\Lip((h-T) \vert_U) \leq \epsilon < \frac{\mu -\lambda}4$, where $\Lip(\,\cdot\,\vert_U)$ denotes the Lipschitz constant when restricted to $U$, with respect to the max-norm on $\rr^m$ defined as \begin{equation} \label{eqn:max-norm} 
            \|\vw\|_{\text{max}} \coloneqq \max\{\|\vu\|_{\text{cs}}, \|\vv\|_{\text{u}}\} 
        \end{equation} for $\vw = \vu + \vv$ with $\vu \in \Ecs$ and $\vv \in \Eu$. 
    \end{enumerate}
\end{defn}

\begin{defn}[\citep{Musa26}, Definition~2.8] \label[defn]{defn:NPH}
    Suppose that the origin $\vzero$ is a fixed point of a nonautonomous dynamical system given by the maps $h_k :\rr^m \to \rr^m$, $k = 0, 1, \dots$. 
    Then, $\vzero$ is said to be a \emph{non-uniformly pseudo-hyperbolic (NPH) unstable fixed point} of the system if there exists a splitting $\rr^m = \Ecs \oplus \Eu$ with $\dim(\Eu) \geq 1$, a sequence of linear maps $T_k : \rr^m \to \rr^m$, $k=0,1, \dots$, a sequence of positive scalars $\{(\mu_k, \lambda_k, \epsilon_k)\}_{k \geq 0}$, a positive scalar $r > 0$, and a nonnegative integer $K_0 \geq 0$ such that \begin{enumerate}[label=(\roman*)]
        \item \label{item:NPH-i} $(h_k, T_k) \in \PsHy(\mu_k, \lambda_k, \frac{\epsilon_k}{4} ; B_r(\vzero), \Ecs \oplus \Eu)$ for all $k \geq K_0$,
        \item $\epsilon_k < \frac{\mu_k - \lambda_k}{4}$ for all $k \geq K_0$,
        \item \label{item:NPH-iii} $\sum_{k=K_0}^\infty \frac{\epsilon_k}{\mu_k - 2\epsilon_k} = \infty$.
    \end{enumerate}
\end{defn}

As is standard in dynamical systems, the definitions are stated for the case in which the origin is the fixed point, although they apply more generally. 
More precisely, if $\vw^*$ is a common fixed point of $h_0, h_1, \dots$, then the translated maps $\tilde{h}_k = h_k(\,\cdot\, + \vw^*) - \vw^*$ have $\vzero$ as a common fixed point.

Intuitively, the NPH unstable fixed points are a nonautonomous analogue of the ordinary unstable fixed points for autonomous dynamical systems. 
Indeed, as we discuss shortly, for the iterates of a nonautonomous dynamical system to remain within a neighborhood of an NPH unstable fixed point, they must all lie on a certain measure-zero set.
From this, one can show that NPH unstable fixed points are almost surely avoided by the dynamical system.

\citet{Lee19} showed that GD and several of its variants, which can be represented as autonomous dynamical systems, avoid strict saddle points of the objective function, in the sense that the set of initial points converging to any such point has Lebesgue measure zero.
In light of this, it is natural to ask whether a similar conclusion can be drawn for strict saddle points of $f$ when considering the nonautonomous dynamical system corresponding to \sfgd{}. 
In order to answer this question, let us consider \begin{equation} \label{eqn:fixed-pts}
\gQ^* \coloneqq \{(\vx^*, \vx^*) : \nabla f(\vx^*) = \vzero \text{ and } \lambda_{\min}(\nabla^2 f(\vx^*)) < 0\}
\end{equation} which is the set of fixed points of the system \eqref{eqn:appx-nonauto-system} that are associated with the strict saddle points of $f$.

One can show that the points in $\gQ^*$ are NPH unstable, under mild conditions on the learning rates and the averaging rates, as follows.
As the proof is quite long and consists mainly of technical computations, we defer it to \Cref{ssec:proving-saddles-are-NPH}.

\begin{lem}\label[lem]{lem:saddles-are-NPH}
    Let $f$ be an $L$-smooth and twice continuously differentiable function, and let $\vx^*$ be a strict saddle point of $f$, so that $(\vx^*,\vx^*) \in \gQ^*$ where $\gQ^*$ is defined as in \eqref{eqn:fixed-pts}.
    Suppose that there exist $\gamma_{\min}$ and $\gamma_{\max}$ such that $0 < \gamma_{\min} \leq \gamma_k \leq \gamma_{\max} < \frac{1}{L}$ for all $k$, and the averaging rates $\{c_{k+1}\}_{k \geq 0}$ satisfy $\lim_{k \to \infty} c_{k+1} = 0$.  
    Then, $(\vx^*, \vx^*)$ is an NPH unstable fixed point of the nonautonomous system \eqref{eqn:appx-nonauto-system}.
\end{lem}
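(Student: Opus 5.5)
We plan to verify \Cref{defn:NPH} directly, using a \emph{single} limiting linearization for every $T_k$ and absorbing the $O(c_{k+1})$ coupling terms into the Lipschitz error by taking $K_0$ large. First, we translate to the origin. We set $\tilde h_k(\vw) = h_k(\vw + (\vx^*,\vx^*)) - (\vx^*,\vx^*)$, so that $\tilde h_k(\vzero)=\vzero$ for all $k$ by \Cref{prop:fixed-points}. Writing $\mH = \nabla^2 f(\vx^*)$, the Jacobian of $h_k$ at a point $(\vy,\vx)$ is
\[
J_k(\vy,\vx) = \begin{bmatrix} \bigl(1+\tfrac{\beta c_{k+1}}{1-\beta}\bigr)\id - \gamma_k(1-\beta+\beta c_{k+1})\nabla^2 f(\vy) & -\tfrac{\beta c_{k+1}}{1-\beta}\id \\[3pt] \tfrac{c_{k+1}}{1-\beta}\id - \gamma_k c_{k+1}\nabla^2 f(\vy) & \bigl(1-\tfrac{c_{k+1}}{1-\beta}\bigr)\id \end{bmatrix}.
\]
We take the linear maps to be the ``$c=0$'' part at the saddle,
\[
T_k = \begin{bmatrix} \id - \gamma_k(1-\beta)\mH & \vzero \\ \vzero & \id\end{bmatrix}.
\]

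Next, we build the splitting from the spectrum of $\mH$. Let $\mH = \sum_i \lambda_i \vv_i\vv_i^\top$ be an orthonormal eigendecomposition, and let $\nu = -\lambda_{\min}(\mH) > 0$. We set $\Eu = \mathrm{span}\{(\vv_i,\vzero) : \lambda_i < 0\}$, which has dimension at least one, and let $\Ecs$ be its orthogonal complement. On both subspaces we use the Euclidean norm. Both spaces are $T_k$-invariant. On $\Eu$, $T_k$ expands by at least $\mu \coloneqq 1+\gamma_{\min}(1-\beta)\nu > 1$, and we put $\mu_k = \mu$. On $\Ecs$, the eigenvalues of $T_k$ are $1$ (from the $\vx$-block) and $1-\gamma_k(1-\beta)\lambda_i$ for $\lambda_i \ge 0$. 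Since $|\lambda_i|\le L$ and $\gamma_k(1-\beta)L<1$, the latter lie in $(0,1]$, so $\norm{T_k \vu} \le \norm{\vu}$ and we may take $\lambda_k = 1$. We then fix $\epsilon_k \equiv \epsilon$ with $0<\epsilon<\frac{\mu-1}{4}$. With this choice, condition (ii) of \Cref{defn:NPH} holds. Condition \ref{item:NPH-iii} also holds, because each term $\frac{\epsilon}{\mu-2\epsilon}$ is a fixed positive constant, so the series diverges.

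The main step is \ref{item:NPH-i}, specifically the almost-linear condition \ref{item:almost-linear} of \Cref{defn:PH} with constant $\epsilon/4$. Because the splitting is orthogonal, the max-norm \eqref{eqn:max-norm} and the Euclidean norm on $\rr^{2d}$ differ by at most a factor $\sqrt2$. It therefore suffices to bound $\sup_{\vw\in B_r}\norm{J_k(\vw+(\vx^*,\vx^*)) - T_k}_2$ by $\epsilon/(4\sqrt2)$, possibly with an additional factor $\sqrt2$ from the norm comparison. We split $J_k - T_k$ into two parts:
\begin{enumerate}[label=(\alph*)]
\item the terms proportional to $c_{k+1}$, which have norm at most $C_0 c_{k+1}$ for a constant $C_0$ depending only on $\beta$, $L$, and $\gamma_{\max}$, because $\norm{\nabla^2 f}\le L$;
\item the term $\gamma_k(1-\beta)\bigl(\nabla^2 f(\vy)-\mH\bigr)$, which has norm at most $\gamma_{\max}\sup_{\norm{\vy-\vx^*}<r}\norm{\nabla^2 f(\vy) - \mH}$.
\end{enumerate}
We first choose $r$ small enough, using continuity of $\nabla^2 f$, that part (b) is below half the target. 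We then choose $K_0$ large enough, using $c_{k+1}\to 0$, that part (a) is below the other half for all $k\ge K_0$. By the mean value inequality on the convex ball, this controls the Lipschitz constant of $\tilde h_k - T_k$ on $B_r(\vzero)$.

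We expect this last step to be the main obstacle. The difficulty is not conceptual but lies in the constant bookkeeping: the norm-equivalence factors must be tracked, and $\mu$ must be chosen uniformly in $k$ via $\gamma_{\min}$. Uniformity is what makes a constant $\epsilon$ admissible, and the hypothesis $\gamma_{\max}<\frac1L$ is exactly what ensures $\lambda_k=1$ on $\Ecs$.
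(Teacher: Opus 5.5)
Structurally, your plan is the paper's proof. You use the same splitting $\Eu = V\times\{\vzero\}$, $\Ecs = W\times\rr^d$ with Euclidean norms, and the same $T_k$. You take constant $(\mu_k,\lambda_k,\epsilon_k)=(\mu,1,\epsilon)$. You split $h_k - T_k$ the same way: the Hessian-deviation term is controlled by shrinking $r$ via continuity of $\nabla^2 f$, and the $O(c_{k+1})$ terms are controlled by taking $K_0$ large. Your Jacobian bound plus the mean value inequality on the convex ball is equivalent to the paper's fundamental-theorem-of-calculus estimate \eqref{eqn:grad-FTC}.

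One step fails as written: the expansion constant on $\Eu$. You set $\nu = -\lambda_{\min}(\mH)$, which is the \emph{largest} magnitude among the negative eigenvalues. But $\Eu$ contains the eigenvectors of \emph{all} negative eigenvalues. Take an eigenvector $\vv$ of a negative eigenvalue $\lambda_j$ with $|\lambda_j|<\nu$. Then $\norm{T_k(\vv,\vzero)} = (1+\gamma_k(1-\beta)|\lambda_j|)\norm{\vv}$, which is strictly smaller than $\mu\norm{\vv}$ with your $\mu = 1+\gamma_{\min}(1-\beta)\nu$, for instance whenever $\gamma_k=\gamma_{\min}$ (e.g., constant step sizes). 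So condition (ii) of \Cref{defn:PH} is violated as soon as $\mH$ has two distinct negative eigenvalues.

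The fix is to use the \emph{smallest} magnitude, $\theta = \min\{-\lambda_i : \lambda_i<0\}$, exactly as in \eqref{eqn:tmp-theta}, and set $\mu = 1+\gamma_{\min}(1-\beta)\theta$. Shrinking $\Eu$ to the $\lambda_{\min}$-eigenspace instead would not rescue your choice: the other negative directions would then lie in $\Ecs$ and expand there, so $\lambda_k=1$ would no longer hold. With this correction, the rest of your argument goes through unchanged.
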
 

As we briefly noted above, NPH unstable points are akin to ordinary unstable points of autonomous dynamical systems, in that the iterates can stay close to such a point only if they all lie in a certain measure zero set.
A formal statement of this fact is as follows.

\begin{thm}[\citep{Musa26}, Theorem~2.14]\label{musat-theorem}
    Suppose that each map $h_k : \rr^m \to \rr^m$ in the nonautonomous dynamical system is continuously differentiable, and that its Jacobian $Dh_k$ has full rank except on a set of measure zero. 
    Then, the set of initial points whose trajectories converge to any NPH unstable fixed point has measure zero.
\end{thm}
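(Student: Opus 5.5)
The plan follows the classical stable-manifold route of \citet{Lee19}. It has two halves: a \emph{local} statement, that the points whose orbit stays near the NPH unstable fixed point forever form a Lebesgue-null set, and a \emph{global} transfer of this null set back to initial conditions by pulling back through the maps $h_k$. Throughout, we translate so that the fixed point is $\vzero$. We fix the data $\Ecs\oplus\Eu$, $T_k$, $(\mu_k,\lambda_k,\epsilon_k)$, $r$ and $K_0$ from \Cref{defn:NPH}, and we use the max-norm \eqref{eqn:max-norm}.

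\textbf{Local step.} For $N\ge K_0$, let $W_N$ be the set of $\vw\in B_r(\vzero)$ such that $h_{k-1}\circ\dots\circ h_N(\vw)\in B_r(\vzero)$ for all $k\ge N$. I would show that $W_N$ contains no two distinct points whose difference lies in the unstable cone
\[
\mathcal C_{\mathrm u}=\{\vu+\vv:\ \vu\in\Ecs,\ \vv\in\Eu,\ \|\vv\|_{\mathrm u}\ge\|\vu\|_{\mathrm{cs}}\}.
\]
This forces $W_N$ to lie in the graph of a $1$-Lipschitz map $\Ecs\to\Eu$, which is Lebesgue-null because $\dim\Eu\ge1$. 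The key estimate concerns $\vw,\vw'\in B_r(\vzero)$ with $\vd=\vw-\vw'\in\mathcal C_{\mathrm u}$. Write $h_k=T_k+(h_k-T_k)$ and use \Cref{defn:PH}, in particular $\Lip((h_k-T_k)\vert_{B_r})\le\epsilon_k/4$. This gives that the new difference $\vd'=h_k(\vw)-h_k(\vw')$ has unstable part at least $(\mu_k-\epsilon_k)\|\vd\|_{\max}$ and center-stable part at most $(\lambda_k+\epsilon_k)\|\vd\|_{\max}$. Since $\epsilon_k<(\mu_k-\lambda_k)/4$, $\vd'$ stays in $\mathcal C_{\mathrm u}$, and $\|\vd'\|_{\max}\ge(\mu_k-\epsilon_k)\|\vd\|_{\max}$. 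I would then compare this with the worst-case growth of $\lambda_k$-type quantities. The natural per-step gain factor is
\[
\frac{\mu_k-\epsilon_k}{\mu_k-2\epsilon_k}\ \ge\ 1+\frac{\epsilon_k}{\mu_k-2\epsilon_k},
\]
and condition \ref{item:NPH-iii} of \Cref{defn:NPH} makes the product of these gains diverge. Hence the separation of the two orbits (or, in a more careful version, the ratio of the unstable part to the separation tolerated inside $B_r$) blows up. This contradicts both orbits remaining in $B_r(\vzero)$, which has max-norm diameter at most a constant times $r$.

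\textbf{Global step.} If a trajectory $(\vw_k)$ converges to $\vzero$, then there is some $N\ge K_0$ with $\vw_k\in B_r(\vzero)$ for all $k\ge N$, so $\vw_N\in W_N$. The set of initial points converging to $\vzero$ is therefore contained in
\[
\bigcup_{N\ge K_0}\bigl(h_{N-1}\circ\dots\circ h_0\bigr)^{-1}(W_N).
\]
Each $h_k$ is $C^1$ with $Dh_k$ of full rank off a null set $Z_k$. On the open set where $Dh_k$ is invertible, the inverse function theorem gives countably many charts on which $h_k$ is a $C^1$ diffeomorphism. Preimages of null sets under such diffeomorphisms are null, and adding $Z_k$ keeps the set null. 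Iterating over the finitely many maps in each composition and taking the countable union over $N$ shows that the whole set is null.

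\textbf{Main obstacle.} I expect the hard part to be the local cone/growth estimate. Because the rates are non-uniform, we may have $\mu_k\to\lambda_k$ and $\epsilon_k\to0$, so no fixed contraction ratio is available. One must extract divergence from exactly the summability condition \ref{item:NPH-iii}, and do so while tracking the quantity $\mu_k-2\epsilon_k$ rather than $\mu_k-\epsilon_k$. My first attempt would be to work with the normalized quantity $\|\vv\|_{\mathrm u}-\|\vu\|_{\mathrm{cs}}$ of the difference. I would show that it is multiplied by at least $1+\epsilon_k/(\mu_k-2\epsilon_k)$ at each step, using $\lambda_k\ge1$, and then pass from $\sum\epsilon_k/(\mu_k-2\epsilon_k)=\infty$ to $\prod(1+\epsilon_k/(\mu_k-2\epsilon_k))=\infty$.
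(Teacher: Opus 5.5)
The paper does not prove this statement; it imports it from \citet{Musa26}. Your argument therefore has to stand on its own, and its core does.

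\textbf{What works.} The cone-invariance estimate is correct, and the ``main obstacle'' is milder than you expect. Condition (ii) of \Cref{defn:NPH} gives $\mu_k-2\epsilon_k>\frac{\mu_k+\lambda_k}{2}>\lambda_k\ge 1$. Hence
$\mu_k-\epsilon_k\ge\frac{\mu_k-\epsilon_k}{\mu_k-2\epsilon_k}=1+\frac{\epsilon_k}{\mu_k-2\epsilon_k}$,
and so
$\prod_{k=N}^{n}(\mu_k-\epsilon_k)\ge 1+\sum_{k=N}^{n}\frac{\epsilon_k}{\mu_k-2\epsilon_k}\to\infty$
by condition (iii). The plain max-norm separation of the two orbits therefore blows up, which already contradicts both orbits staying in the bounded set $B_r(\vzero)$.

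You need neither a comparison with ``$\lambda_k$-type'' growth nor the normalized quantity $\|\vv\|_{\mathrm u}-\|\vu\|_{\mathrm{cs}}$. The latter vanishes on the boundary of the cone, so its multiplicative growth alone would not exclude such pairs.

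For nullity of $W_N$ you also do not need a globally defined Lipschitz graph. $W_N$ is Borel, being a countable intersection of open sets. It meets every translate of $\Eu$ in at most one point, since nonzero vectors of $\Eu$ lie in the cone. Tonelli's theorem in coordinates adapted to $\Ecs\oplus\Eu$ then shows $W_N$ is Lebesgue-null, because $\dim\Eu\ge 1$. Your pull-back step is also fine.

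\textbf{The gap.} You treat a single fixed point, translated to $\vzero$. The statement concerns trajectories converging to \emph{some} NPH unstable fixed point, and the paper uses it that way in the proof of \Cref{thm:one-time-perturb}, where one null set $\gW$ must serve all of $\gQ^*$. That set can be uncountable; for example, an $L$-smooth $f$ can have a whole line of strict saddles. An uncountable union of your null sets need not be null.

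The missing step is the Lindel\"of reduction used by \citet{Lee19}:
\begin{itemize}
\item Each NPH unstable fixed point $p$ carries its own radius $r_p$ and index $K_0(p)$. The open balls $B_{r_p}(p)$ cover the set of such points, so by second countability countably many of them, $B_{r_{p_i}}(p_i)$, already cover it.
\item If $\vw_k\to p$, then $p\in B_{r_{p_i}}(p_i)$ for some $i$. The orbit then remains in that ball from some time $N\ge K_0(p_i)$ on, so $\vw_N$ lies in the set $W_N^{(i)}$ built around $p_i$.
\item This works precisely because your $W_N$ is defined by \emph{remaining in the ball}, not by converging to its center. So the local argument at $p_i$ applies even though the limit is some $p\neq p_i$.
\end{itemize}
The bad set of initial points is then contained in the countable union, over $i$ and $N\ge K_0(p_i)$, of $(h_{N-1}\circ\dots\circ h_0)^{-1}(W_N^{(i)})$. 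Your global step shows this union is null.
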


To apply \Cref{musat-theorem}, one has to verify that $D h_k$ is of full rank almost surely for $h_k$ in \eqref{eqn:appx-nonauto-system}. 
As we show below, it in fact turns out that $D h_k$ is invertible everywhere, given that $c_{k+1}<1$. 

\begin{prop} \label[prop]{prop:hk-diff}
    For any $k \geq 0$, consider the map $h_k : \rr^d \times \rr^d \to \rr^d \times \rr^d$ defined in \eqref{eqn:appx-nonauto-system}.
    Suppose that $f$ is $L$-smooth.
    If $c_{k+1} < 1$ and $\gamma_k < \frac{1}{(1-\beta)L}$, then $D h_k$ is invertible. 
\end{prop}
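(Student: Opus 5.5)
The plan is to avoid computing the $2d\times 2d$ Jacobian of $h_k$ in the $(\vy,\vx)$ coordinates directly. Instead, I would conjugate $h_k$ back to the original $(\vz,\vx)$ coordinates, where \sfgd{} factors into simple pieces. Throughout I take $f$ to be twice continuously differentiable, as in \Cref{lem:saddles-are-NPH}, so that $h_k$ is $C^1$ and $D h_k$ makes sense. Write $\mH(\vy)\coloneqq\nabla^2 f(\vy)$. $L$-smoothness gives $\norm{\mH(\vy)}\le L$ in operator norm for every $\vy$.

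First, let $\Phi(\vz,\vx) = ((1-\beta)\vz+\beta\vx,\ \vx)$. This is a linear map with inverse $(\vy,\vx)\mapsto(\frac{1}{1-\beta}(\vy-\beta\vx),\ \vx)$, which exists since $\beta<1$. By the derivation in \Cref{ssec:2seq}, $h_k = \Phi\circ \psi_k\circ\Phi^{-1}$, where $\psi_k$ is the \sfgd{} step written in $(\vz,\vx)$. I would then split $\psi_k = B_k\circ A_k$ into two maps:
\[
A_k(\vz,\vx) = \bigl(\vz-\gamma_k\nabla f((1-\beta)\vz+\beta\vx),\ \vx\bigr), \qquad B_k(\vz',\vx) = \bigl(\vz',\ (1-c_{k+1})\vx + c_{k+1}\vz'\bigr).
\]
By the chain rule, $D h_k$ is invertible iff both $DA_k$ and $DB_k$ are invertible.

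For $B_k$, which is linear, the matrix is block lower triangular with diagonal blocks $\id$ and $(1-c_{k+1})\id$. It is therefore invertible precisely because $c_{k+1}<1$.

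For $A_k$, the Jacobian is block upper triangular:
\[
DA_k(\vz,\vx)=\begin{bmatrix}\id-\gamma_k(1-\beta)\mH(\vy) & -\gamma_k\beta\mH(\vy)\\ \vzero & \id\end{bmatrix},
\]
with $\vy = (1-\beta)\vz + \beta\vx$. Its determinant equals $\det(\id-\gamma_k(1-\beta)\mH(\vy))$. We have $\norm{\gamma_k(1-\beta)\mH(\vy)}\le\gamma_k(1-\beta)L<1$ by the hypothesis $\gamma_k<\frac{1}{(1-\beta)L}$. Hence every eigenvalue of the symmetric matrix $\id-\gamma_k(1-\beta)\mH(\vy)$ is at least $1-\gamma_k(1-\beta)L>0$, so this block is invertible (the Neumann series argument works equally well).

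Combining the two factors with the invertibility of $\Phi$ gives invertibility of $D h_k$ everywhere. The only real obstacle is bookkeeping: checking that the conjugation $h_k=\Phi\circ B_k\circ A_k\circ\Phi^{-1}$ reproduces exactly \eqref{eqn:appx-nonauto-system}. That identity is precisely what \Cref{ssec:2seq} established, so I would simply cite it.
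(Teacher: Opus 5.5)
Your argument is correct, but it takes a different route from the paper.

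\textbf{The paper's route.} The paper differentiates $h_k$ directly in the $(\vy,\vx)$ coordinates. It then takes the Schur complement with respect to the lower-right block $\frac{1-\beta-c_{k+1}}{1-\beta}\id$, which requires temporarily assuming $1-\beta-c_{k+1}\neq 0$. That assumption is removed afterwards by a continuity argument in $\beta$. The result is $\det Dh_k(\vy,\vx)=(1-c_{k+1})^d\det\bigl(\id-\gamma_k(1-\beta)\nabla^2 f(\vy)\bigr)$.

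\textbf{What your route buys.} Your conjugation $h_k=\Phi\circ B_k\circ A_k\circ\Phi^{-1}$ gives exactly the same formula, since $\det D\Phi\cdot\det D\Phi^{-1}=1$. Your two factors are block triangular, with determinants $\det\bigl(\id-\gamma_k(1-\beta)\nabla^2 f(\vy)\bigr)$ and $(1-c_{k+1})^d$. This needs no Schur complement, no case split, and no continuity step.

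It also makes the structure transparent. The Hessian factor comes from the gradient step on $\vz$, and the $(1-c_{k+1})^d$ factor comes from the averaging step. So the rank deficiency at $c_{k+1}=1$, discussed right after the proposition, is simply $B_k$ collapsing onto the diagonal.

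Your factorization even gives more than is asked. For fixed $\vx$, the map $\vz\mapsto\vz-\gamma_k\nabla f((1-\beta)\vz+\beta\vx)$ is the identity minus a map with Lipschitz constant $\gamma_k(1-\beta)L<1$, hence a bijection. So $A_k$, and therefore $h_k$, is a global diffeomorphism, not merely a map with everywhere-invertible Jacobian.

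\textbf{What the paper's route buys.} The direct computation gives an explicit formula for $Dh_k$ in the same $(\vy,\vx)$ coordinates used in the NPH analysis.

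Your remark that $f$ must be twice continuously differentiable for $Dh_k$ to exist everywhere is also right. It makes explicit an assumption the paper leaves implicit in this proposition.
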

\begin{proof}
    Fix any $k \geq 0$. 
    By a direct computation, we get \begin{align*}  
        D h_k(\vy, \vx) &= \begin{bmatrix} (1+\frac{\beta c_{k+1}}{1-\beta})\id - \gamma_k (1-\beta + \beta c_{k+1})\nabla^2 f(\vy)  & -\frac{\beta c_{k+1}}{1-\beta} \id \\[1ex] 
            \frac{c_{k+1}}{1-\beta} \id - \gamma_k c_{k+1} \nabla^2 f(\vy) & (1-\frac{c_{k+1}}{1-\beta}) \id  
        \end{bmatrix} \\[1ex] 
        &= \begin{bmatrix} (1-\beta + \beta c_{k+1})(\frac{1}{1-\beta}\id - \gamma_k \nabla^2 f(\vy))  & -\frac{\beta c_{k+1}}{1-\beta} \id \\[1ex] 
            c_{k+1} (\frac{1}{1-\beta} \id - \gamma_k \nabla^2 f(\vy)) & \frac{1-\beta - c_{k+1}}{1-\beta} \id  
        \end{bmatrix} . 
    \end{align*} For the moment, let us assume that $\beta$ is chosen so that $1-\beta - c_{k+1} \neq 0$, and for convenience, denote $\mM \coloneqq \frac{1}{1-\beta}\id - \gamma_k \nabla^2 f(\vy)$. 
    Then the lower right block $\frac{1-\beta - c_{k+1}}{1-\beta} \id$ of $D h_k$ is invertible, and the Schur complement $\mS$ with respect to that block is \begin{align*}
    \mS &= (1 - \beta + \beta c_{k+1})\mM + \frac{\beta c_{k+1}}{1-\beta} \left(\frac{1-\beta - c_{k+1}}{1-\beta}\right)^{-1} \left( c_{k+1}\mM \right) \\ 
    &= \left( 1 - \beta + \beta c_{k+1} + \frac{\beta c_{k+1}^2}{1-\beta - c_{k+1}} \right)\mM \\ 
    &= \frac{(1-\beta)^2 (1-c_{k+1})}{1-\beta - c_{k+1}} \mM. 
    \end{align*} 
    Using the formula for the determinant of a block matrix in terms of an invertible principal submatrix and the Schur complement with respect to that submatrix, we obtain \begin{align*}
    \det D h_k (\vy, \vx) &= \det \left( \left( \frac{1-\beta - c_{k+1}}{1-\beta}  \right) \id \right) \det \mS \\ 
    &= \left( \frac{1-\beta - c_{k+1}}{1-\beta} \right)^d \frac{(1-\beta)^{2d} (1-c_{k+1})^d}{(1-\beta - c_{k+1})^d} \det \mM \\ 
    &= (1-\beta)^d (1-c_{k+1})^d \det \mM \\ 
    &= (1-\beta)^d (1-c_{k+1})^d \det \left( \frac{1}{1-\beta} \id - \gamma_k \nabla^2 f(\vy) \right) \\ 
    &= (1-c_{k+1})^d \det \left( \id - \gamma_k (1-\beta) \nabla^2 f(\vy) \right).
    \end{align*} 
    
    While we have shown that this holds for $\beta$ satisfying $1-\beta -c_{k+1} \neq 0$, the determinant of a matrix is continuous with respect to its entries. Hence, by continuity, the identity holds for all $\beta$; that is, \begin{equation} \label{eqn:det-dhk}
        \det D h_k (\vy, \vx) = (1-c_{k+1})^d \det \left( \id - \gamma_k (1-\beta) \nabla^2 f(\vy) \right).
    \end{equation} As the $L$-smoothness of $f$ implies that $\norm{\gamma_k (1-\beta)\nabla^2 f(\vy)} \leq \gamma_k (1-\beta) L< 1$, no eigenvalue of $\id - \gamma_k (1-\beta) \nabla^2 f(\vy)$ can be zero. 
    Therefore, $\det D h_k \neq 0$, and thus $D h_k$ is invertible. 
\end{proof}

From \eqref{eqn:det-dhk}, we also see that if $c_{k+1}=1$, then $Dh_k$ becomes rank deficient.
This is exactly as expected, because for any $(\vy_k, \vx_k)$, if $c_{k+1} = 1$ then by \Cref{lem:fallback} we know that $(\vy_{k+1}, \vx_{k+1}) = h_k(\vy_k, \vx_k)$ lies on the \emph{diagonal} $\Updelta \coloneqq \{(\vx, \vx) : \vx \in \rr^d\}$, which is a $d$-dimensional linear subspace of $\rr^d \times \rr^d$. 
This implies that $\rank (Dh_k ) \leq d$.

At the same time, this is also the source of additional difficulties. 
As before, let $k_0$ be the largest nonnegative integer such that $c_{k_0+1}=1$.
A typical statement about the avoidance of saddle points is that the set of initial points that converge to such points has measure zero, \`a la \Cref{musat-theorem}.
However, regardless of how the initial points are chosen, whether following the standard practice of setting $\vx_0=\vz_0$ or allowing possibly $\vx_0\neq\vz_0$, we eventually obtain $ \vx_{k_0+1} = \vy_{k_0+1} = \vz_{k_0+1}$, and in particular, $(\vy_{k_0+1}, \vx_{k_0+1}) \in \Updelta$.
Thus, although $\Updelta$ is Lebesgue null as a subset of $\rr^d\times\rr^d$, it may contain subsets that are themselves Lebesgue null by virtue of being subsets of a Lebesgue measure zero set, yet nevertheless have positive probability under the distribution of $(\vy_{k_0+1},\vx_{k_0+1})$. 
Because of the existence of such sets, even if we obtain a Lebesgue measure zero set $\gW$ of initial points whose iterates converge to saddle points, we cannot directly conclude that the iterates almost surely avoid $\gW$.
Notice that the same degeneracy is still present when $k_0 = -1$, under the standard initialization choice $\vx_0 = \vz_0$, since this initialization alone implies $\vx_0 = \vy_0$.

Similar measure-theoretic issues arise in the analysis of accelerated methods \citep{Onei19, Dixi26}.
To circumvent this complication, we adopt the technique of \citet{Onei19}, which consists of applying a small perturbation at a single iteration.
More precisely, we propose the following. 
We still use the update rule \eqref{eqn:SF}, up to obtaining $\vx_{k_0+1} = \vz_{k_0+1} \eqqcolon \vw_{k_0+1}$, but once we reach that point and only at that point, we slightly perturb the iterate $\vz_{k_0+1}$ and set \begin{equation} \label{eqn:z-perturbation}
\vz_{k_0+1}^+ = \vz_{k_0+1} + \vxi
\end{equation} where $\vxi \in \rr^d$ is any mean zero random variable whose distribution is absolutely continuous with respect to the Lebesgue measure. 
For example, one may set $\vxi$ to follow a Gaussian distribution $\gN(\vzero, \varepsilon^2 \id)$ for arbitrarily small $\varepsilon > 0$.
Then, at the $(k_0 + 1)$th iteration, we use $\vz_{k_0+1}^+$ in place of $\vz_{k_0+1}$ and compute \begin{subequations} \label{eqn:post-kick} \begin{align}
    \vy_{k_0+1} &= (1-\beta) \vz_{k_0+1}^+ + \beta \vx_{k_0+1}, \label{eqn:kicked-yk} \\
    \vz_{k_0+2} &= \vz_{k_0+1}^+ - \gamma_{k_0+1} \vg_{k_0+1}, \label{eqn:kicked-zk} \\
    \vx_{k_0+2} &= (1-c_{k_0+2}) \vx_{k_0+1} + c_{k_0+2} \vz_{k_0+2}.  \label{eqn:kicked-xk} 
\end{align} \end{subequations}
After that, we again resume updating the iterates following the update rule \eqref{eqn:SF}.

As mentioned therein, \Cref{sec:saddle} and, in particular, \Cref{thm:one-time-perturb} discuss the special case $k_0 = 0$ for clarity of the presentation. 
The general statement which reflects \eqref{eqn:z-perturbation} and \eqref{eqn:post-kick} is as follows.

\begin{thm}
    Consider \sfgd{} with a \emph{one-time perturbation} applied after $\vz_{k_0+1}$ and $\vx_{k_0+1}$ are computed, where $\vz_{k_0+1}$ is replaced by $\vz_{k_0+1}+\vxi$. 
    Suppose that the initial point $\vx_0=\vz_0$ and the perturbation $\vxi$ are drawn independently from distributions that are absolutely continuous with respect to Lebesgue measure on $\rr^d$. 
    Then, for $\beta \in [0, 1)$ and a twice continuously differentiable $L$-smooth $f$, as long as $\gamma < \frac{1}{L}$, the one-time perturbed \sfgd{} almost surely does not converge to any strict saddle point of $f$.
\end{thm}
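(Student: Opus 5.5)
The plan is to combine \Cref{lem:saddles-are-NPH}, \Cref{musat-theorem} and \Cref{prop:hk-diff}, applied to the nonautonomous system \eqref{eqn:appx-nonauto-system} started at the time index $k_0+1$. By construction, $(\vy_{k_0+1},\vx_{k_0+1}) = ((1-\beta)(\vw_{k_0+1}+\vxi)+\beta\vw_{k_0+1},\, \vw_{k_0+1}) = (\vw_{k_0+1}+(1-\beta)\vxi,\, \vw_{k_0+1})$. From there on the iterates evolve by $(\vy_{k+1},\vx_{k+1}) = h_k(\vy_k,\vx_k)$ for $k\ge k_0+1$. The first step is to check that this pair has a distribution absolutely continuous on $\rr^d\times\rr^d$. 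The map $(\vw,\vxi)\mapsto(\vw+(1-\beta)\vxi,\vw)$ is a linear bijection, since $\beta<1$. So it suffices that $(\vw_{k_0+1},\vxi)$ is absolutely continuous on $\rr^{2d}$. Because $\vxi$ is independent of $\vx_0$ and has a density, the joint law of $(\vw_{k_0+1},\vxi)$ is (law of $\vw_{k_0+1}$) $\otimes$ (law of $\vxi$). We therefore need $\vw_{k_0+1}$ itself to be absolutely continuous in $\rr^d$.

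By \Cref{prop:inactive-phase}, $\vw_{k_0+1}$ is obtained from $\vx_0$ by $k_0+1$ plain GD steps $\vy\mapsto\vy-\gamma_k\nabla f(\vy)$. Each such map has Jacobian $\id-\gamma_k\nabla^2 f$, which is invertible since $\gamma_k\le\gamma<1/L$. Hence each step is a $C^1$ local diffeomorphism, and preimages of null sets are null (by a countable cover with local inverses, as in \citet{Lee19}). Thus $\vw_{k_0+1}$ is absolutely continuous. The same remark covers the case $k_0=-1$: there, $\vw_0=\vx_0$, and the perturbation applies to $\vz_0$ directly.

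Next, let $\gW$ be the set of points $\vp\in\rr^{2d}$ such that the trajectory of the system $\{h_k\}_{k\ge k_0+1}$ started at $\vp$ converges to some point of $\gQ^*$. We need $\gW$ to be Lebesgue null. For $k\ge k_0+1$ we have $c_{k+1}<1$ and $\gamma_k(1-\beta)<1/L$, so \Cref{prop:hk-diff} gives that $Dh_k$ is invertible everywhere; the maps are $C^1$ since $f\in C^2$. For \Cref{lem:saddles-are-NPH}, \Cref{asmp:lrn-rate} gives $\gamma/\warmstep\le\gamma_k\le\gamma<1/L$, and \eqref{eqn:avg-rate-upper-bound} gives $c_{k+1}\to0$. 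So every point of $\gQ^*$ is NPH unstable, and \Cref{musat-theorem} (with time shifted by $k_0+1$) shows that, for each fixed saddle, the set of initial points converging to it is null.

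The main obstacle is that $\gQ^*$ may be uncountable, while \Cref{musat-theorem} handles one fixed point at a time. I would follow the approach of \citet{Lee19}. Around each $\vq\in\gQ^*$, take the ball $B_{r_\vq}(\vq)$ furnished by the NPH property; by Lindel\"of, countably many of these balls cover $\gQ^*$. A trajectory converging to a point of $\gQ^*$ eventually stays in one of these balls from some time $K$ onward. The proof of \Cref{musat-theorem} shows that the set of states at time $K$ whose forward orbit stays in the ball is contained in a null set (a center-stable-type set). Pulling this set back by $h_{K-1}\circ\dots\circ h_{k_0+1}$ keeps it null, because these maps are $C^1$ with everywhere invertible Jacobian. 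Taking the countable union over balls and over $K$ shows that $\gW$ is null. I expect to need to verify that the relevant statement of \citet{Musa26} is local in this sense, uniformly over a neighborhood, rather than only about convergence to the fixed point itself.

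Finally, $\mathbb P\bigl((\vy_{k_0+1},\vx_{k_0+1})\in\gW\bigr)=0$ by absolute continuity. Moreover, convergence of SF-GD to a strict saddle $\vx^*$ (in $\vy_k$ or $\vx_k$) means, by \Cref{prop:fixed-points} and the form of the iteration, that $(\vy_k,\vx_k)\to(\vx^*,\vx^*)\in\gQ^*$. Together these give almost sure avoidance.
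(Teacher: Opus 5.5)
Your proposal is correct and is essentially the paper's proof. You restart the nonautonomous system at time $k_0+1$ and obtain a null set $\gW$ from \Cref{lem:saddles-are-NPH}, \Cref{prop:hk-diff} and \Cref{musat-theorem}. You then show that $(\vy_{k_0+1},\vx_{k_0+1})$ is absolutely continuous via the gradient-descent local diffeomorphisms (\Cref{lem:diffeo}), the independence of $\vxi$, and the invertible linear map $(\vw,\vxi)\mapsto(\vw+(1-\beta)\vxi,\vw)$. These are the same steps in the same order.

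Two remarks on the additions. Your Lindel\"of covering step for a possibly uncountable $\gQ^*$ is extra care that the paper skips, since it reads \Cref{musat-theorem} as directly giving one null set for all of $\gQ^*$. Your closing claim that convergence of $\vx_k$ alone forces joint convergence is not justified. It is also not needed, because the paper interprets convergence as convergence of the pair $(\vy_k,\vx_k)$.
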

\begin{proof}
    As shown in \Cref{lem:fallback}, for $k = 0, 1, \dots, k_0$ we have $\vx_k = \vy_k = \vz_k$. 
    Hence, by the update rule \eqref{eqn:zk} it holds that $\vz_{k+1} = \vz_k - \gamma_k \nabla f(\vz_k)$ for all $k = 0, 1, \dots, k_0$. 
    That is, $\vz_0, \vz_1, \dots, \vz_{k_0+1}$ is exactly the sequence of iterates obtained by running gradient descent from the initial point $\vz_0$ with learning rates $\gamma_k$.  
    For each $k = 0, 1, \dots, k_0$, define $\varphi_k : \rr^d \to \rr^d$ by \[
        \varphi_k (\vz) \coloneqq \vz - \gamma_k \nabla f(\vz)
    \] so that $\vz_{k+1} = \varphi_k(\vz_k)$.
    
    Now, consider the nonautonomous dynamical system $(\vy_{k+1}, \vx_{k+1}) = h_k (\vy_k, \vx_k)$, but temporarily as if it is initialized at time $k = k_0+1$. 
    By \Cref{lem:saddles-are-NPH}, \Cref{musat-theorem}, and \Cref{prop:hk-diff}, we can then conclude that there exists a Lebesgue measure zero set $\gW$ of points such that, for the trajectory of the system to converge to any of the points in $\gQ^*$, it is necessary to have $(\vy_{k_0+1}, \vx_{k_0+1}) \in \gW$. 

    Thus, if we can show that the joint distribution of $(\vy_{k_0+1}, \vx_{k_0+1})$ is absolutely continuous with respect to the Lebesgue measure on $\rr^d \times \rr^d$, we will be able to conclude that \sfgd{} almost surely does not converge to any strict saddle point of $f$. 
    To this end, let $\lambda$ denote the Lebesgue measure on $\rr^d$, and $\mu_0$ denote the distribution which $\vz_0$ is drawn from.
    For each $k = 0, 1, \dots, k_0$, let \[ 
    \mu_{k+1} = (\varphi_k \circ \dots \circ \varphi_1 \circ \varphi_0)_\sharp \mu_0 
    \] where $(\cdot)_\sharp$ denotes the pushforward, so that $\vz_{k+1} \sim \mu_{k+1}$.
    Notice that \[
    D\varphi_k (\vz) = \id - \gamma_k \nabla^2 f(\vz),
    \] and because the $L$-smoothness of $f$ implies $\norm{\gamma_k \nabla^2 f(\vz)} \leq \gamma_k L < 1$, it follows that $\det D\varphi_k \neq 0$.
    In particular, by the chain rule it holds that \[ 
        \det D (\varphi_k \circ \dots \circ \varphi_1 \circ \varphi_0) = \prod_{j=0}^k \det D \varphi_j \neq 0,
    \] so by \Cref{lem:diffeo}, $\mu_1, \dots, \mu_{k_0+1}$ are all absolutely continuous with respect to $\lambda$.

    Let $\nu$ denote the distribution which $\vxi$ is drawn from, so that $(\vz_{k_0+1}, \vxi) \sim \mu_{k_0+1} \otimes \nu$.
    Notice that \eqref{eqn:kicked-yk} is equivalent to \[
    \vy_{k_0+1} = \vz_{k_0+1} + (1-\beta) \vxi.
    \] Hence, for a linear transformation $T$ defined as \[
    T(\vu, \vv) = (\vu + (1-\beta)\vv, \vu)
    \] it holds that $(\vy_{k_0 + 1}, \vx_{k_0 + 1}) = T(\vz_{k_0+1}, \vxi)$.
    That is, $(\vy_{k_0 + 1}, \vx_{k_0 + 1}) \sim T_\sharp (\mu_{k_0+1} \otimes \nu)$. 
    Here, as the inverse of $T$ exists and is \[
    T^{-1}(\vp, \vq) = \left(\vq, \frac{1}{1-\beta}(\vp - \vq)\right) 
    \] we know that $\det DT \neq 0$. 
    Applying \Cref{lem:diffeo} once again, it follows that the joint distribution of $(\vy_{k_0+1}, \vx_{k_0+1})$ is indeed absolutely continuous with respect to the Lebesgue measure on $\rr^d \times \rr^d$. 
    This concludes the proof.
\end{proof}

\subsection{Convergence of the One-Time Perturbed \sfsgd}

As mentioned in \Cref{sec:saddle}, if we additionally assume that $\vxi$ is drawn from a mean zero distribution, then we can show that the one-time perturbation does not alter the overall convergence rate, provided that the second moment of $\vxi$ is chosen small enough to satisfy $\expt[\|\vxi\|^2] \leq \gamma_{k_0+1}$.
More precisely, we have the following. 

\begin{thm}\label{thm:kick-rate-preserved} 
    Consider \sfsgd{} with a \emph{one-time perturbation} applied after $\vz_{k_0+1}$ and $\vx_{k_0+1}$ are computed, where $\vz_{k_0+1}$ is replaced by $\vz_{k_0+1}+\vxi$ for $\vxi$ independently drawn from some mean zero distribution $\nu$. 
    Suppose that $\nu$ has a finite second moment, so that there exists some $\rho^2 \geq 0$ such that $\expt[\|\vxi\|^2] \leq \rho^2$.
    Then, under the setting identical to that of \Cref{thm:master-convergence-thm}, it holds that \begin{equation} \label{eqn:perturbed-rate}\begin{aligned}
    \min_{k = 0, \dots, T-1} \expt \bigl[\norm{\nabla f(\vy_k)}^2 \bigr] &\leq \frac{2(f(\vx_0) - f^*) + (1-\beta)^2 L \rho^2  + \beta \rho^2 / \gamma_{k_0+1} }{\gamma (1-\beta) (1-\warmratio)T} \\
        &\phantom{\leq} \qquad  + \frac{\gamma L (1-\beta) \sigma^2}{1-\warmratio} + \frac{3\beta(2-\beta) C \sigma^2 (1+\log T)}{ (1-\warmratio) T} . 
    \end{aligned} \end{equation}
\end{thm}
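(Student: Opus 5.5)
We reuse the telescoping argument behind \Cref{thm:master-convergence-thm} and isolate the one step where the perturbation enters: the transition from $V_{k_0+1}$ to $V_{k_0+2}$. For $k\le k_0$ the iterates are untouched, so \eqref{eqn:simple-descent} holds verbatim. For $k\ge k_0+2$ the perturbed method is again plain \sfsgd{} started from $(\vx_{k_0+2},\vz_{k_0+2})$. The per-step inequality \eqref{eqn:lyapunov-descent} only used the update rule and conditional expectations given $(\vz_k,\vx_k)$, and $\vxi$ is independent of all later stochastic gradients. Hence \eqref{eqn:lyapunov-descent} still holds for these $k$, by the law of total expectation.

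At $k=k_0+1$ we introduce the pre-perturbation potential $V_{k_0+1}=f(\vy^{\mathrm{old}}_{k_0+1})-f^*$, where $\vy^{\mathrm{old}}_{k_0+1}=\vx_{k_0+1}=\vz_{k_0+1}=\vw_{k_0+1}$ and the $\alpha$-term vanishes. We also introduce the post-perturbation potential
\[
V^+_{k_0+1}=f(\vy_{k_0+1})-f^*+\alpha_{k_0+1}\norm{\vz^+_{k_0+1}-\vx_{k_0+1}}^2 ,
\]
with $\vy_{k_0+1}=\vw_{k_0+1}+(1-\beta)\vxi$ and $\vz^+_{k_0+1}-\vx_{k_0+1}=\vxi$. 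The descent computation in \Cref{prop:appx-lyapunov-descent} applied to the triple $(\vx_{k_0+1},\vy_{k_0+1},\vz^+_{k_0+1})$ gives $\expt[V_{k_0+2}]-\expt[V^+_{k_0+1}]\le$ the usual right-hand side. It then remains to bound the jump $\expt[V^+_{k_0+1}]-\expt[V_{k_0+1}]$.

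For the $f$-part we use $L$-smoothness (\Cref{lem:smooth}):
\[
f(\vw+(1-\beta)\vxi)\le f(\vw)+(1-\beta)\inprod{\nabla f(\vw)}{\vxi}+\tfrac{L(1-\beta)^2}{2}\norm{\vxi}^2 .
\]
Conditioning on $\vw_{k_0+1}$, which is independent of the mean-zero $\vxi$, kills the linear term, so this part contributes at most $\tfrac{(1-\beta)^2L\rho^2}{2}$. The quadratic part contributes $\alpha_{k_0+1}\rho^2$, and we need $\alpha_{k_0+1}\le \frac{\beta}{2\gamma_{k_0+1}}$. If $k_0\ge0$, $\alpha_{k_0+1}$ is given by \eqref{eqn:35}. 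Using the formula for $\alpha_{k_0+2}$, it equals $\frac{L\beta^2c^2}{2}+\frac{\beta c(1-\gamma_{k_0+1}L(1-\beta+\beta c))}{2\gamma_{k_0+1}}$ with $c=c_{k_0+2}$. This simplifies to $\frac{\beta}{2\gamma_{k_0+1}}\bigl(c-\gamma_{k_0+1}Lc(1-\beta)(1-\beta c)\bigr)$ up to sign-checking, and is at most $\frac{\beta}{2\gamma_{k_0+1}}$ since $c\le1$. If $k_0=-1$, then $\alpha_0$ is defined by \eqref{eqn:lyap-coeffs} with index $0$. Here I would instead treat the perturbation as applied to $\vz_0$, or directly bound $\alpha$ by $\frac{\beta c}{2\gamma(1-c)^2}$; this edge case needs care and is the main obstacle I anticipate, since the bound $\beta\rho^2/\gamma_{k_0+1}$ must be matched with the right index convention. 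The total jump is then at most $\tfrac12\bigl((1-\beta)^2L\rho^2+\beta\rho^2/\gamma_{k_0+1}\bigr)$.

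Finally, we telescope as in the proof of \Cref{thm:master-convergence-thm}, with $V_0=f(\vx_0)-f^*$ augmented by this jump. The noise sums are unchanged, and the lower bound on $\sum a_k\expt\norm{\nabla f(\vy_k)}^2$ is identical. Dividing by $\tfrac{\gamma(1-\beta)(1-\warmratio)T}{2}$ doubles the jump, which yields exactly the extra numerator $(1-\beta)^2L\rho^2+\beta\rho^2/\gamma_{k_0+1}$ in \eqref{eqn:perturbed-rate}.
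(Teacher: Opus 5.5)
Your proposal follows the paper's proof essentially step for step. Both split $\expt[V_{k_0+2}]-\expt[V_{k_0}]$ into three pieces: a pre-perturbation descent step, a jump from the unperturbed to the perturbed potential at index $k_0+1$, and a post-perturbation step where \eqref{eqn:lyapunov-descent} applies to the perturbed triple. The jump is controlled by $L$-smoothness together with the mean-zero, independent $\vxi$. The same bound $\alpha_{k_0+1}\le\frac{\beta}{2\gamma_{k_0+1}}$ is used, and the telescoping is unchanged.

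Two corrections, neither of which changes the structure:

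\begin{enumerate}[label=(\roman*)]
    \item \emph{Algebra.} The $\frac{L\beta^2c^2}{2}$ terms cancel exactly. With $c=c_{k_0+2}$ one gets $\alpha_{k_0+1}=\frac{\beta c\,(1-\gamma_{k_0+1}L(1-\beta))}{2\gamma_{k_0+1}}$, with no extra factor $(1-\beta c)$. This gives $\alpha_{k_0+1}\le\frac{\beta}{2\gamma_{k_0+1}}$ immediately.
    \item \emph{The case $k_0=-1$ is not an obstacle; it comes from a misreading.}
    \begin{itemize}
        \item The formula \eqref{eqn:lyap-coeffs} is used only for indices $k\ge k_0+2$. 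At index $0$ it would involve the undefined $\gamma_{-1}$ and $c_0$.
        \item The coefficient $\alpha_{k_0+1}$ is defined by \eqref{eqn:35} for every $k_0\ge-1$. When $k_0=-1$ this reads $\alpha_0=\frac{L\beta^2c_1^2}{2}+\alpha_1(1-c_1)^2$, and the same computation gives $\alpha_0\le\frac{\beta}{2\gamma_0}$.
        \item The perturbation then simply acts on $\vz_0$. The pre-perturbation potential is $f(\vx_0)-f^*$, there is no preceding descent step, and the rest is unchanged.
    \end{itemize}
\end{enumerate}

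Your suggested fallback bound $\alpha\le\frac{\beta c}{2\gamma(1-c)^2}$ should be dropped. It can be far larger than $\frac{\beta}{2\gamma}$ when $c$ is close to $1$, so it would not reproduce the constant $\beta\rho^2/\gamma_{k_0+1}$ in the statement.
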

\begin{proof}
   To avoid confusion, let us reuse the notation $\vz_{k_0+1}^+ = \vz_{k_0+1} + \vxi$ introduced in \eqref{eqn:z-perturbation}. 
   That is, we replace $\vz_{k_0+1}$ by $\vz_{k_0+1}^+$.
    Most of the results established in \Cref{prop:appx-lyapunov-descent} continue to hold verbatim.
    Indeed, define the Lyapunov potentials exactly as in \eqref{eqn:lyapunov-potentials}, namely \[
        V_k = f(\vy_k) - f^* + \alpha_k  \norm{\vz_k - \vx_k}^2.
    \]
    Then for $k < k_0$, nothing is changed by the perturbation, so \eqref{eqn:simple-descent} still holds as is. 
    Also, after computing $\vz_{k_0+2}$ and $\vx_{k_0+2}$ using \eqref{eqn:kicked-zk} and \eqref{eqn:kicked-xk}, we return to the ordinary Schedule-Free update rules, so for $k \geq k_0+2$ the inequality \eqref{eqn:lyapunov-descent} still holds. 

    To study the consequences of the one-time perturbation, let us define \[
        V_{k_0+1}^+ = f(\vy_{k_0+1}) - f^* + \alpha_{k_0+1} \norm{\vz_{k_0+1}^+ - \vx_{k_0+1}}^2.
    \] If there had been no perturbation, \Cref{lem:fallback} would give $\vz_{k_0+1} = \vx_{k_0+1}$, and $\vy_{k_0+1}$ would coincide with this common value as well.
    To reflect this, let us define \[
        V_{k_0+1}^- = f(\vz_{k_0+ 1} ) - f^*.
    \] Then \eqref{eqn:simple-descent} applies to $V_{k_0+1}^- - V_{k_0}$, yielding \begin{equation}\label{eqn:pre-kick-descent}
    \expt[V_{k_0+1}^-] - \expt[V_{k_0}] \leq  - \frac{\gamma_{k_0} }{2} \expt \bigl[\norm{\nabla f(\vy_{k_0})}^2 \bigr]   + \frac{\gamma_{k_0}^2 L \sigma^2}{2} .
    \end{equation} In addition, observe that the validity of \eqref{eqn:lyapunov-descent} depends only on the identity $\vy_k = (1-\beta)\vz_k + \beta \vx_k$ and on the sequences of learning rates $\{\gamma_k\}_{k \geq 0}$ and averaging rates $\{c_{k+1}\}_{k \geq 0}$ satisfying the assumptions of \Cref{lem:Dk-negative,lem:Gk-bound}.
    Hence, \eqref{eqn:lyapunov-descent} applies to $V_{k_0+2} - V_{k_0+1}^+$, yielding \begin{equation} \label{eqn:post-kick-descent} \begin{aligned} 
    \expt[V_{k_0+2}] - \expt[V_{k_0+1}^+] &\leq  - \frac{\gamma_{k_0+1} (1-\beta)}{2} \expt \bigl[\norm{\nabla f(\vy_{k_0+1})}^2 \bigr] \\ 
    &\phantom{\leq} \qquad + \frac{\gamma_{k_0+1} \sigma^2}{2} \bigl(\beta (1 + \gamma_{k_0+1} L (1-\beta)) c_{k_0+2} + \gamma_{k_0+1} L (1-\beta)^2 \bigr).
    \end{aligned} \end{equation}
    Now let us examine what happens between $V_{k_0+1}^+$ and $V_{k_0+1}^-$.
    Using \eqref{eqn:z-perturbation} and \eqref{eqn:kicked-yk}, we obtain \begin{align*}
    V_{k_0+1}^+ - V_{k_0+1}^- &= f\bigl((1-\beta) \vz_{k_0+1}^+ + \beta \vx_{k_0+1}\bigr) - f(\vz_{k_0+1}) + \alpha_{k_0+1} \norm{\vz_{k_0+1}^+ - \vx_{k_0+1}}^2 \\ 
    &= f\bigl(\vz_{k_0+1} + (1-\beta) \vxi\bigr) - f(\vz_{k_0+1}) + \alpha_{k_0+1} \norm{\vxi}^2 \\
    &\leq \inprod{(1-\beta) \vxi}{\nabla f(\vz_{k_0+1})} + \frac{L}{2}\norm{(1-\beta) \vxi}^2 + \alpha_{k_0+1} \norm{\vxi}^2
    \end{align*} where the inequality follows from \Cref{lem:smooth}. 
    Taking conditional expectation given $\vz_{k_0+1}$, by the assumption that $\vxi$ is drawn from a mean zero distribution $\nu$ and is independent of $\vz_{k_0+1}$, we get \begin{equation} \label{eqn:55}
    \expt\left[V_{k_0+1}^+ \middle| \vz_{k_0+1}\right] - \expt\left[V_{k_0+1}^- \middle| \vz_{k_0+1}\right] \leq \left(\frac{(1-\beta)^2 L}{2} + \alpha_{k_0+1}\right) \expt\bigl[\norm{\vxi}^2\bigr].  
\end{equation}
    By \eqref{eqn:lyap-coeffs} and \eqref{eqn:35}, it holds that \begin{align*} 
        \alpha_{k_0+1} &= \frac{L \beta^2 c_{k_0+2}^2}{2} + \alpha_{k_0+2} (1-c_{k_0+2})^2 \\
        &= \frac{L \beta^2 c_{k_0+2}^2}{2} + \frac{\beta c_{k_0+2} (1-\gamma_{k_0+1} L (1 - \beta + \beta c_{k_0+2}))}{2\gamma_{k_0+1}} \\
        &= \frac{\beta c_{k_0+2} (1-\gamma_{k_0+1} L (1 - \beta) ) }{2\gamma_{k_0+1}} . 
    \end{align*} As $c_{k_0+2} \leq 1$ by \eqref{eqn:main-song-avg}, from the above we further get that \[
    \alpha_{k_0+1} \leq \frac{\beta}{2\gamma_{k_0+1}}.
    \] Hence, applying the law of total expectation on \eqref{eqn:55}, we obtain \begin{equation} \label{eqn:56}
    \expt\left[V_{k_0+1}^+\right] - \expt\left[V_{k_0+1}^-\right] \leq \left(\frac{(1-\beta)^2 L}{2} + \frac{\beta}{2\gamma_{k_0+1}}\right) \rho^2.  
    \end{equation}

    Therefore, by writing \[
    \expt[V_{k_0+2}] - \expt[V_{k_0}] = \left(\expt[V_{k_0+2}] - \expt\left[V_{k_0+1}^+\right]\right) + \left(\expt\left[V_{k_0+1}^+\right] - \expt\left[V_{k_0+1}^-\right]\right) + \left(\expt\left[V_{k_0+1}^-\right] - \expt[V_{k_0}]\right)
    \] and then observing the inequalities \eqref{eqn:pre-kick-descent}, \eqref{eqn:post-kick-descent}, and \eqref{eqn:56}, we conclude that the bound on $\expt[V_{k_0+2}] - \expt[V_{k_0}]$ for the one-time perturbed \sfsgd{} differs from the corresponding bound for the original \sfsgd{} only by the additional contribution from the right-hand side of \eqref{eqn:56}.
    We can then repeat the proof of \Cref{thm:master-convergence-thm} to obtain a counterpart of \eqref{eqn:master-bound-prototype}, which now reads \begin{align*} 
     \sum_{k=0}^{T-1} a_k \expt \bigl[\norm{\nabla f(\vy_k)}^2 \bigr]  &\leq f(\vx_0) - f^* + \frac{3\gamma \beta(1-\beta)(2-\beta) C \sigma^2 (1+\log T)}{2} \\ 
     &\phantom{\leq}\qquad +\frac{\gamma^2 L(1-\beta)^2\sigma^2 T}{2} + \left(\frac{(1-\beta)^2 L}{2} + \frac{\beta}{2\gamma_{k_0+1}}\right) \rho^2
    \end{align*} for $a_k$ defined in the beginning of the proof of \Cref{thm:master-convergence-thm}.
    Proceeding further exactly as in \Cref{thm:master-convergence-thm}, we obtain the claimed inequality. 
\end{proof}

Notice that we had the freedom to choose $\vxi$ to have an arbitrarily small second moment. 
In case we set $\expt[\|\vxi\|^2] \leq \gamma_{k_0+1}$, we may take $\rho^2 = \gamma_{k_0+1}$ in \eqref{eqn:perturbed-rate}, which makes the numerator of the first term in the right hand side be bounded as\begin{align*}
2(f(\vx_0) - f^*) + (1-\beta)^2 L \rho^2  + \frac{\beta \rho^2}{\gamma_{k_0+1}} &= 2(f(\vx_0) - f^*) + (1-\beta)^2 \gamma_{k_0+1} L  + \beta \\
&\leq 2(f(\vx_0) - f^*) + (1-\beta)^2 + \beta.
\end{align*}
Then one gets that the convergence rates are unaffected by the one-time perturbation.

\subsection{Proof of \titleref{Lemma}{lem:saddles-are-NPH}} \label{ssec:proving-saddles-are-NPH}

It is well known that, in a finite dimensional space, all norms are equivalent. 
Let us derive the explicit factors that arise when bounding the max-norm defined in \eqref{eqn:max-norm} by the $\ell_2$-norm and vice versa, given that the splitting $\rr^m = \Ecs \oplus \Eu$ is an orthogonal sum. 

\begin{prop}\label[prop]{prop:norm-equivalent}
    Under the notations used in \Cref{defn:PH}, let both $\Ecs$ and $\Eu$ be equipped with the $\ell_2$-norm $\norm{\cdot}$ inherited from $\rr^m$. 
    Suppose that $\Eu = \Ecs^\bot$. 
    Then for any $\vw \in \rr^m$, the max-norm defined as in \eqref{eqn:max-norm} satisfies \[
    \norm{\vw}_{\max} \leq \norm{\vw} \leq \sqrt{2} \norm{\vw}_{\max} .
    \]
\end{prop}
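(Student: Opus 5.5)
Decompose any $\vw \in \rr^m$ uniquely as $\vw = \vu + \vv$ with $\vu \in \Ecs$ and $\vv \in \Eu$. By definition, $\norm{\vw}_{\max} = \max\{\norm{\vu}, \norm{\vv}\}$, since both subspaces carry the inherited $\ell_2$-norm. The key observation is that, because $\Eu = \Ecs^\bot$, we have $\inprod{\vu}{\vv} = 0$. The Pythagorean identity therefore gives
\[
\norm{\vw}^2 = \norm{\vu}^2 + \norm{\vv}^2 .
\]
Both inequalities will follow from this identity by elementary bounds on a sum of two nonnegative numbers.

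For the lower bound, we drop one nonnegative term: $\norm{\vw}^2 \geq \norm{\vu}^2$ and $\norm{\vw}^2 \geq \norm{\vv}^2$. Hence $\norm{\vw}^2 \geq \max\{\norm{\vu}^2, \norm{\vv}^2\} = \norm{\vw}_{\max}^2$. Taking square roots gives $\norm{\vw}_{\max} \leq \norm{\vw}$.

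For the upper bound, we bound each term by the maximum: $\norm{\vu}^2 + \norm{\vv}^2 \leq 2\max\{\norm{\vu}^2,\norm{\vv}^2\} = 2\norm{\vw}_{\max}^2$. Taking square roots gives $\norm{\vw} \leq \sqrt{2}\norm{\vw}_{\max}$.

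There is no real obstacle here. The only point needing care is the orthogonality assumption: it is exactly what makes the cross term $2\inprod{\vu}{\vv}$ vanish. Without it, the constants would depend on the angle between $\Ecs$ and $\Eu$.
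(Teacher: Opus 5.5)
Your proof is correct and follows the paper's argument essentially verbatim. Both decompose $\vw = \vu + \vv$ and use $\Eu = \Ecs^\bot$ to get $\norm{\vw}^2 = \norm{\vu}^2 + \norm{\vv}^2$, then bound this sum below by the larger square and above by twice the larger square before taking square roots.
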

\begin{proof}
    Let $\vw = \vu + \vv$ with $\vu \in \Ecs$ and $\vv \in \Eu$. 
    As $\Eu = \Ecs^\bot$, we have $\inprod{\vu}{\vv} = 0$, hence \begin{align*}
        \norm{\vw}^2 = \norm{\vu + \vv}^2 &= \norm{\vu}^2 + \norm{\vv}^2 + 2\inprod{\vu}{\vv} \\ 
        &= \norm{\vu}^2 + \norm{\vv}^2.
    \end{align*} 
    On one hand, it immediately follows that \[
       \norm{\vw}^2 = \norm{\vu}^2 + \norm{\vv}^2 \leq 2 \left(\max\{\norm{\vu}, \norm{\vv}\}\right)^2 = 2 \norm{\vw}_{\max}^2.
    \] On the other hand, it is also straightforward that \[
       \norm{\vw}^2 = \norm{\vu}^2 + \norm{\vv}^2 \geq \max\bigl\{\|\vu\|^2, \|\vv\|^2\bigr\} = \left(\max\{\norm{\vu}, \norm{\vv}\}\right)^2 =  \norm{\vw}_{\max}^2.
    \] Taking the square roots leads to the desired inequality.
\end{proof}

\begin{lem}[\Cref{lem:saddles-are-NPH}]
    Let $f$ be an $L$-smooth and twice continuously differentiable function, and let $\vx^*$ be a strict saddle point of $f$, so that $(\vx^*,\vx^*) \in \gQ^*$ where $\gQ^*$ is defined as in \eqref{eqn:fixed-pts}.
    Suppose that there exist $\gamma_{\min}$ and $\gamma_{\max}$ such that $0 < \gamma_{\min} \leq \gamma_k \leq \gamma_{\max} < \frac{1}{L}$ for all $k$, and the averaging rates $\{c_{k+1}\}_{k \geq 0}$ satisfy $\lim_{k \to \infty} c_{k+1} = 0$.  
    Then, $(\vx^*, \vx^*)$ is an NPH unstable fixed point of the nonautonomous system \eqref{eqn:appx-nonauto-system}.
\end{lem}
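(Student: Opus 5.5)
The plan is to translate the fixed point to the origin and compare each $h_k$ with a simple block-diagonal linear map, which becomes an increasingly accurate model of $h_k$ near the saddle as $c_{k+1}\to 0$. Write $\vw^*=(\vx^*,\vx^*)$, $\tilde h_k(\vw)=h_k(\vw+\vw^*)-\vw^*$ and $\mH=\nabla^2 f(\vx^*)$. By \Cref{prop:fixed-points}, $\tilde h_k(\vzero)=\vzero$ for all $k$. The Jacobian computed in the proof of \Cref{prop:hk-diff}, evaluated at $\vw^*$, splits as
\[
Dh_k(\vw^*)=\begin{bmatrix}\id-\gamma_k(1-\beta)\mH & \vzero\\ \vzero & \id\end{bmatrix}+c_{k+1}\mR_k ,
\]
where $\mR_k$ collects the terms multiplied by $c_{k+1}$. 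Each entry of $\mR_k$ is a fixed multiple of $\id$ or $\mH$ with coefficients bounded by constants depending only on $\beta,\gamma_{\max},L$, so $\norm{\mR_k}\le R$ uniformly in $k$. I take the linear maps in \Cref{defn:NPH} to be
\[
T_k=\mathrm{diag}\bigl(\id-\gamma_k(1-\beta)\mH,\ \id\bigr).
\]

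For the splitting, diagonalize $\mH$ orthonormally and let $P_-$ be the span of the eigenvectors with negative eigenvalues; $P_-\neq\{\vzero\}$ because $\vx^*$ is a strict saddle. Set $\Eu=P_-\times\{\vzero\}$ and $\Ecs=\Eu^\bot=(P_-^\bot\times\{\vzero\})\oplus(\{\vzero\}\times\rr^d)$, both with the $\ell_2$ norm, so $\dim\Eu\ge1$.
\begin{itemize}
\item Both subspaces are invariant under $T_k$.
\item On $\Eu$, $T_k$ expands by at least $\mu_k\coloneqq 1+\gamma_k(1-\beta)|\lambda_{\min}(\mH)|\ge 1+\gamma_{\min}(1-\beta)|\lambda_{\min}(\mH)|\eqqcolon\mu_{\min}>1$.
\item On $\Ecs$, the eigenvalues of $\mH$ lie in $[0,L]$. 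Since $\gamma_k(1-\beta)L<1$, the factor $1-\gamma_k(1-\beta)\lambda$ lies in $(0,1]$, and $\id$ acts on the second block. Hence $\lambda_k\coloneqq 1$ works.
\item This gives a uniform gap $\mu_k-\lambda_k\ge\delta\coloneqq\gamma_{\min}(1-\beta)|\lambda_{\min}(\mH)|>0$.
\end{itemize}
I then fix a constant $\epsilon_k\equiv\epsilon<\delta/4$. Condition (iii) of \Cref{defn:NPH} is immediate: $\mu_k\le 1+\gamma_{\max}L$, so the summand $\epsilon/(\mu_k-2\epsilon)$ is bounded below by a positive constant and the series diverges.

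The remaining, and main, step is the Lipschitz condition $\Lip((\tilde h_k-T_k)\vert_{B_r(\vzero)})\le\epsilon/4$ in the max-norm for all $k\ge K_0$. By the mean value inequality, it suffices to bound $\sup_{\vw\in B_r}\norm{D\tilde h_k(\vw)-T_k}$ in operator $\ell_2$ norm. Since the splitting is orthogonal, \Cref{prop:norm-equivalent} converts this to the max-norm at the cost of a factor $\sqrt2$. I decompose
\[
D\tilde h_k(\vw)-T_k=\bigl(Dh_k(\vw+\vw^*)-Dh_k(\vw^*)\bigr)+c_{k+1}\mR_k .
\]
The first term involves only $\nabla^2 f(\vy)-\mH$ with coefficients bounded by $\gamma_{\max}$ times constants. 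It is therefore at most a constant times $\sup_{\norm{\vy-\vx^*}<r}\norm{\nabla^2 f(\vy)-\mH}$, which I make smaller than $\epsilon/(8\sqrt2)$ by choosing $r$ small, using continuity of $\nabla^2 f$. The second term is at most $Rc_{k+1}$, which is below $\epsilon/(8\sqrt2)$ for $k\ge K_0$ since $c_{k+1}\to0$.

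The bookkeeping I expect to need care with is the following. The norm-equivalence constants must be applied consistently when passing from $\ell_2$ operator norms to max-norm Lipschitz constants. The strict inequality $\gamma_{\max}<1/L$ is what keeps the contraction bound on $\Ecs$ nonnegative and uniform. Finally, the ball $B_r(\vzero)$ in \Cref{defn:NPH} is in whichever norm that definition uses; because of the $\sqrt2$ equivalence, shrinking $r$ once more covers either choice.
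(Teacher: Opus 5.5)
There is one incorrect step, in the expansion constant on $\Eu$. There, $T_k$ acts as $\id-\gamma_k(1-\beta)\mH$ restricted to $P_-$. Its eigenvalues are $1+\gamma_k(1-\beta)|\nu|$, where $\nu$ ranges over the negative eigenvalues of $\mH$. The guaranteed stretch is therefore set by the negative eigenvalue of \emph{smallest} magnitude. By contrast, $|\lambda_{\min}(\mH)|$ is the \emph{largest} such magnitude. For example, if $\mH$ has eigenvalues $-2,-1,1$, an eigenvector for $-1$ is stretched only by $1+\gamma_k(1-\beta)$. Then your claimed inequality $\norm{T_k\vv}\ge\bigl(1+\gamma_k(1-\beta)|\lambda_{\min}(\mH)|\bigr)\norm{\vv}$ fails on $\Eu$, and your gap $\delta$ is overstated. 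As written, condition (ii) of the pseudo-hyperbolicity definition is not verified whenever $\mH$ has two distinct negative eigenvalues.

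The fix is the one the paper uses. Set $\theta\coloneqq\min\{-\nu:\nu<0 \text{ an eigenvalue of } \mH\}$ and take $\mu_k=1+\gamma_k(1-\beta)\theta$, or the constant $1+\gamma_{\min}(1-\beta)\theta$. The gap becomes $\delta=\gamma_{\min}(1-\beta)\theta>0$. Since $\theta\le L$, your check of condition (iii) survives unchanged. With this correction your argument is essentially the paper's proof: the same $T_k=\mathrm{diag}(\id-\gamma_k(1-\beta)\mH,\id)$, the same orthogonal splitting $\Eu=P_-\times\{\vzero\}$, $\Ecs=\Eu^\bot$, $\lambda_k=1$, and constant $\epsilon_k$. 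Your mean-value bound on $\sup_{B_r}\norm{D\tilde h_k-T_k}$, split into a Hessian-continuity part and $c_{k+1}\mR_k$, is just a Jacobian-level repackaging of the paper's three-term Lipschitz estimate. The paper instead uses the fundamental theorem of calculus for $\mH\vy-\nabla f(\vy)$, plus $L$-smoothness and the $\vy-\vx$ term for the $O(c_{k+1})$ pieces.
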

\begin{proof}
    Upon a suitable translation, we may assume without loss of generality that $\vx^*=\vzero$.
    Let $\mH \coloneqq \nabla^2 f(\vzero) \in \rr^{d \times d}$. 
    By the strict saddle property, there exists an eigenvector of $\mH$ associated with a negative eigenvalue.
    Let $V$ be the span of all the eigenvectors of $\mH$ that are associated with a negative eigenvalue, and $W$ the span of all eigenvectors of $\mH$ associated with a nonnegative eigenvalue.
    As $\mH$ is symmetric hence orthogonally diagonalizable, it holds that $W = V^\bot$. 
    Thus, by setting $\Eu = V \times \{\vzero\}$ and $\Ecs = W \times \rr^d$, we have $\rr^{d} \times \rr^d = \Ecs \oplus \Eu$, with $\dim(\Eu) \geq 1$. 

    For each $k \geq 0$, define a linear map $T_k : \rr^{d} \times \rr^d \to \rr^{d} \times \rr^d$ by \[
        T_k(\vy, \vx) = \begin{bmatrix}
            \id - \gamma_k (1-\beta) \mH & \vzero \\ 
            \vzero                       & \id 
        \end{bmatrix} \begin{bmatrix}
            \vy \\ \vx
        \end{bmatrix} .
    \] We wish to show that, for each $k \geq 0$, there exists some $\mu_k$, $\lambda_k$, and $\epsilon_k$ with  $1\leq \lambda_k <\mu_k$ and $\epsilon_k > 0$ such that $(h_k, T_k)$ is $(\mu_k, \lambda_k, \frac{\epsilon_k}4)$-pseudo-hyperbolic on some open ball $B_r(\vzero)$ with respect to the splitting $\rr^d \times \rr^d = \Ecs \oplus \Eu$.

    Let us equip both $\Ecs$ and $\Eu$ with the standard $\ell_2$ norm inherited from $\mathbb{R}^d \times \mathbb{R}^d$. 
    Any vector in $\Eu$ is of the form $(\vv, \vzero)$ for some $\vv \in V$. 
    As $V$ is $\mH$-invariant, for any $k \geq 0$ we have \[
    T_k(\vv, \vzero) = \begin{bmatrix}
        \vv - \gamma_k (1-\beta) \mH \vv \\ \vzero
    \end{bmatrix} \in \Eu.
    \] Moreover, let $\mathrm{id}_V : V \to V$ denote the identity operator on $V$, and $\mathsf{H}_V : V \to V$ denote the linear operator corresponding to the restriction of $\mH$ to $V$. 
    Then $-\mathsf{H}_V$ is positive definite, by the definition of $V$. 
    In particular, if we set \begin{equation} \label{eqn:tmp-theta}
    \theta \coloneqq \min\{-\nu : \nu < 0 \text{ and $\nu$ is an eigenvalue of $\mH$}\}
    \end{equation} then the smallest eigenvalue of $-\mathsf{H}_V$ is also $\theta$. 
    It follows that the smallest eigenvalue of $\mathrm{id}_V + \gamma_k (1-\beta) (-\mathsf{H}_V)$ is $1+\gamma_k (1-\beta) \theta$, and hence \begin{equation} \label{eqn:unstable-expansion} \begin{aligned}
        \norm{T_k(\vv, \vzero)} = \norm{\vv - \gamma_k (1-\beta) \mH \vv} &= \norm{(\mathrm{id}_V + \gamma_k (1-\beta) (-\mathsf{H}_V)) \vv}  \\ 
        &\geq (1+\gamma_k (1-\beta) \theta)  \norm{\vv} \\ 
        &\geq (1+\gamma_{\min} (1-\beta) \theta)  \norm{\vv}. 
    \end{aligned} \end{equation}
    Meanwhile, any vector in $\Ecs$ is of the form $(\vw, \vu)$ for some $\vw \in W$.
    As $W$ is $\mH$-invariant, for any $k \geq 0$ we have \[
    T_k(\vw, \vu) = \begin{bmatrix}
        \vw - \gamma_k (1-\beta) \mH \vw \\ \vu
    \end{bmatrix} \in \Ecs.
    \] This time, let $\mathrm{id}_W : W \to W$ denote the identity operator on $W$, and $\mathsf{H}_W : W \to W$ denote the linear operator corresponding to the restriction of $\mH$ to $W$. 
    Then, by the definition of $W$, we know that $-\mathsf{H}_W$ is negative semidefinite, in particular having only nonpositive eigenvalues. 
    In addition, because the $L$-smoothness of $f$ implies that $\norm{\mH} \leq L$, we also have $\norm{\mathsf{H}_W} \leq L$, and thus the eigenvalues of $-\mathsf{H}_W$ are bounded below by $-L$.
    Hence, the eigenvalues of $\mathrm{id}_W + \gamma_k (1-\beta) (-\mathsf{H}_W)$ are bounded above by $1$, and bounded below by $1-\gamma_k (1-\beta)L \geq 1-(1-\beta) = \beta$. 
    It follows that \begin{equation} \label{eqn:center-stable-nonexpansion} \begin{aligned}
        \norm{T_k(\vw, \vu)}^2 &= \norm{\vw - \gamma_k (1-\beta) \mH \vw}^2 + \norm{\vu}^2  \\ 
        &= \norm{(\mathrm{id}_W + \gamma_k (1-\beta) (-\mathsf{H}_W)) \vw}^2 + \norm{\vu}^2 \\ 
        &\leq \norm{\vw}^2 + \norm{\vu}^2 \\
        &= \norm{(\vw, \vu)}^2. 
    \end{aligned} \end{equation}
    
    For $\theta$ defined as in \eqref{eqn:tmp-theta}, let us set $\mu = 1+\gamma_{\min}(1-\beta) \theta$. 
    We claim that there exists some $r > 0$ and a nonnegative integer $K_0$ which satisfy $\Lip((h_k - T_k)\vert_{B_{r}(\vzero)}) \leq \frac{\mu - 1}{20}$ in the sense of \mbox{\Cref{defn:PH}\ref{item:almost-linear}} whenever $k \geq K_0$.  
    If this is the case, then together with \eqref{eqn:unstable-expansion} and \eqref{eqn:center-stable-nonexpansion}, we will be able to conclude that $(h_k, T_k) \in \PsHy(\mu, 1, \frac{\mu - 1}{20}; B_r(\vzero), \Ecs\oplus\Eu)$ for all $k \geq K_0$.
    To this end, let us first observe that \begin{align*}
     h_k(\vy, \vx) - T_k(\vy, \vx) &= \begin{bmatrix} 
    \frac{\beta c_{k+1}}{1-\beta}(\vy - \vx) + \gamma_k(1-\beta)\bigl(\mH \vy - \nabla f(\vy) \bigr) - \gamma_k \beta c_{k+1} \nabla f(\vy) \\
    \frac{c_{k+1}}{1-\beta}(\vy - \vx) - \gamma_k c_{k+1} \nabla f(\vy) {}
    \end{bmatrix} \\[1ex]
    &= \gamma_k (1-\beta) \begin{bmatrix}
        \mH \vy - \nabla f(\vy) \\ \vzero
    \end{bmatrix} + \frac{c_{k+1}}{1-\beta} \begin{bmatrix}
        \beta (\vy - \vx) \\ \vy - \vx
    \end{bmatrix}  - \gamma_k  c_{k+1} \begin{bmatrix}
        \beta \nabla f(\vy) \\ \nabla f(\vy)
    \end{bmatrix} . 
    \end{align*} We now study the Lipschitz constant of each term in the last line above separately. 
    Notice that $\Eu = \Ecs^\bot$ by construction, so \Cref{prop:norm-equivalent} applies. 
    
    For any $\vy_1, \vy_2 \in \rr^d$, by the fundamental theorem of calculus, we have \begin{equation}\label{eqn:grad-FTC}\begin{aligned}
        \mH \vy_1 - \nabla f(\vy_1) - \bigl(\mH \vy_2 - \nabla f(\vy_2)\bigr) &= \mH(\vy_1 - \vy_2) + \nabla f(\vy_2) - \nabla f(\vy_1) \\ 
        &= \int_0^1 \Bigl( \nabla^2 f\bigl(\vy_1 + t(\vy_2-\vy_1)\bigr) - \mH \Bigr) (\vy_2-\vy_1) \, \mathrm{d} t   \\ 
        &= \int_0^1 \Bigl( \nabla^2 f\bigl(\vy_1 + t(\vy_2-\vy_1)\bigr) - \nabla^2 f(\vzero) \Bigr) \mathrm{d} t \cdot (\vy_2-\vy_1).
    \end{aligned}\end{equation} 
    As $f$ is twice continuously differentiable, $\nabla^2 f$ is continuous, so there exists some $\delta > 0$ such that $\norm{\vy} < \delta$ implies $\norm{\nabla^2 f(\vy) - \nabla^2 f(\vzero)} < \frac{\mu - 1}{140 \gamma_{\max} (1-\beta)}$. For such $\delta$, as having both $\norm{\vy_1} < \delta $ and $\norm{\vy_2} < \delta$ implies that \[ 
    \norm{\vy_1 + t(\vy_2 - \vy_1)} = \norm{(1-t)\vy_1 + t \vy_2} < \delta
    \] whenever $0\leq t\leq 1$, by using \eqref{eqn:grad-FTC} and \Cref{prop:norm-equivalent} we obtain \begin{equation} \label{eqn:PH-part1} \begin{aligned}
       \MoveEqLeft \norm{\gamma_k (1-\beta) \begin{bmatrix}
        \mH \vy_1 - \nabla f(\vy_1) \\ \vzero
    \end{bmatrix} - \gamma_k (1-\beta)  \begin{bmatrix}
        \mH \vy_2 - \nabla f(\vy_2) \\ \vzero
    \end{bmatrix}}_{\max} \\
    &\leq \gamma_k (1-\beta)  \norm{\begin{bmatrix}
        \mH \vy_1 - \nabla f(\vy_1) \\ \vzero
    \end{bmatrix} - \begin{bmatrix}
        \mH \vy_2 - \nabla f(\vy_2) \\ \vzero
    \end{bmatrix}} \\
    &= \gamma_k (1-\beta)  \norm{\mH \vy_1 - \nabla f(\vy_1) - \bigl(\mH \vy_2 - \nabla f(\vy_2)\bigr)} \\
    &\leq \gamma_k (1-\beta) \norm{ \int_0^1 \Bigl( \nabla^2 f\bigl(\vy_1 + t(\vy_2-\vy_1)\bigr) - \nabla^2 f(\vzero) \Bigr) \mathrm{d} t} \; \norm{\vy_2 - \vy_1} \\
    &< \frac{\gamma_k(\mu - 1)}{140 \gamma_{\max}} \norm{\vy_2 - \vy_1} \\
    &\leq \frac{\mu - 1}{140} \norm{\vy_2 - \vy_1}. 
    \end{aligned} \end{equation}

    Meanwhile, because we assume $\lim_{k \to \infty} c_{k+1} = 0$, there exists some nonnegative integer $K_0$ such that $0 \leq c_{k+1} \leq  \frac{(\mu-1)(1-\beta)}{140}$ whenever $k \geq K_0$.
    Then for any $\vy_1, \vy_2 \in \rr^d$, as long as $k \geq K_0$, it follows that \begin{align*}
     \norm{\gamma_k c_{k+1} \begin{bmatrix}
        \beta \nabla f(\vy_1) \\ \nabla f(\vy_1)
    \end{bmatrix} - \gamma_k c_{k+1} \begin{bmatrix}
        \beta \nabla f(\vy_2) \\ \nabla f(\vy_2)
    \end{bmatrix}}_{\max} &\leq \gamma_k c_{k+1} \norm{\begin{bmatrix}
        \beta (\nabla f(\vy_1) - \nabla f(\vy_2)) \\ \nabla f(\vy_1) - \nabla f(\vy_2)
    \end{bmatrix}}  \\
    &= \gamma_k c_{k+1} \sqrt{1+\beta^2} \ \bigl\| \nabla f(\vy_1) - \nabla f(\vy_2) \bigr\| \\
    &\leq \gamma_k L c_{k+1} \sqrt{1+\beta^2} \ \norm{ \vy_1  - \vy_2  } \\
    &\leq \frac{\gamma_k L (\mu-1) (1-\beta) \sqrt{1+\beta^2}}{140} \ \norm{ \vy_1  - \vy_2 } \\
    &\leq \frac{(\mu-1) (1-\beta) \sqrt{1+\beta^2}}{140} \ \norm{ \vy_1  - \vy_2 }.   
    \end{align*}
    Here, as $0\leq \beta < 1$, we have a simple inequality \[ 
        (1-\beta)\sqrt{1+\beta^2} \leq (1-\beta) \sqrt{1+2\beta+\beta^2} = (1-\beta)(1+\beta) = 1-\beta^2 \leq 1. 
    \] Thus, we eventually have
    \begin{equation} \label{eqn:PH-part2}  
     \norm{\gamma_k c_{k+1} \begin{bmatrix}
        \beta \nabla f(\vy_1) \\ \nabla f(\vy_1)
    \end{bmatrix} - \gamma_k c_{k+1} \begin{bmatrix}
        \beta \nabla f(\vy_2) \\ \nabla f(\vy_2)
    \end{bmatrix}}_{\max} \leq \frac{\mu-1}{140} \ \norm{ \vy_1  - \vy_2 }.   
    \end{equation}
    For the same $K_0$ as above, for any $\vy_1, \vy_2, \vx_1, \vx_2 \in \rr^d$, one can show, whenever $k \geq K_0$, that 
    \begin{equation} \label{eqn:PH-part3} \begin{aligned}
        \norm{\frac{c_{k+1}}{1-\beta} \begin{bmatrix}
        \beta (\vy_1 - \vx_1) \\ \vy_1 - \vx_1
    \end{bmatrix} - \frac{c_{k+1}}{1-\beta} \begin{bmatrix}
        \beta (\vy_2 - \vx_2) \\ \vy_2 - \vx_2
    \end{bmatrix}}_{\max} &\leq \frac{c_{k+1}}{1-\beta} \norm{\begin{bmatrix}
        \beta (\vy_1 - \vy_2 + \vx_2 - \vx_1) \\ \vy_1 - \vy_2 + \vx_2 - \vx_1
    \end{bmatrix}  }  \\
    &= \frac{c_{k+1} \sqrt{1+\beta^2}}{1-\beta} \norm{ \vy_1 - \vy_2 + \vx_2 - \vx_1 } \\
    &\leq \frac{c_{k+1} \sqrt{1+\beta^2}}{1-\beta} \bigl( \norm{ \vy_1 - \vy_2} +\norm{\vx_2 - \vx_1}\bigr) \\
    &\leq \frac{(\mu - 1) \sqrt{2}}{140} \bigl( \norm{ \vy_1 - \vy_2} +\norm{\vx_1 - \vx_2 }\bigr). 
    \end{aligned} \end{equation}
    
    Also, for $\delta$ and $K_0$ defined as above, let $r = \frac{\delta}{\sqrt{2}}$. 
    Then for any $(\vy, \vx) \in B_r(\vzero)$, by \Cref{prop:norm-equivalent}, we have \[
        \norm{\vy} = \left\lVert \begin{bmatrix}
            \vy \\[1ex] \vzero
        \end{bmatrix} \right\rVert \leq \sqrt{2} \left\lVert \begin{bmatrix}
            \vy \\[1ex] \vzero
        \end{bmatrix} \right\rVert_{\max} < \sqrt{2} r = \delta .
    \]  
    Therefore, for any $(\vy_1, \vx_1)$ and $(\vy_2, \vx_2) $ in $B_r(\vzero)$, whenever $k \geq K_0$, from the three inequalities \eqref{eqn:PH-part1}, \eqref{eqn:PH-part2}, and \eqref{eqn:PH-part3}, we obtain that \begin{align*}
      \MoveEqLeft \norm{ h_k(\vy_1, \vx_1) - T_k(\vy_1, \vx_1) - \bigl(h_k(\vy_2, \vx_2) - T_k(\vy_2, \vx_2)\bigr)}_{\max} \\ 
     &\leq  \norm{\gamma_k (1-\beta) \begin{bmatrix}
        \mH \vy_1 - \nabla f(\vy_1) \\ \vzero
    \end{bmatrix} - \gamma_k (1-\beta)  \begin{bmatrix}
        \mH \vy_2 - \nabla f(\vy_2) \\ \vzero
    \end{bmatrix}}_{\max}  \\ 
    &\phantom{\leq} \qquad + \norm{\gamma_k c_{k+1} \begin{bmatrix}
        \beta \nabla f(\vy_1) \\ \nabla f(\vy_1)
    \end{bmatrix} - \gamma_k c_{k+1} \begin{bmatrix}
        \beta \nabla f(\vy_2) \\ \nabla f(\vy_2)
    \end{bmatrix}}_{\max} \\ 
    &\phantom{\leq} \qquad + \norm{\frac{c_{k+1}}{1-\beta} \begin{bmatrix}
        \beta (\vy_1 - \vx_1) \\ \vy_1 - \vx_1
    \end{bmatrix} - \frac{c_{k+1}}{1-\beta} \begin{bmatrix}
        \beta (\vy_2 - \vx_2) \\ \vy_2 - \vx_2
    \end{bmatrix}}_{\max} \\
    &\leq  \frac{ 2 (\mu-1)}{140} \norm{ \vy_1  - \vy_2  } + \frac{(\mu - 1) \sqrt{2}}{140} \bigl( \norm{ \vy_1 - \vy_2} +\norm{\vx_1 - \vx_2 }\bigr)  \\
    &\leq  \frac{ (\mu-1) (2+\sqrt{2})}{140} \bigl( \norm{ \vy_1 - \vy_2} +\norm{\vx_1 - \vx_2 }\bigr) \\
    &\leq  \frac{ (\mu-1) (2\sqrt{2}+2)}{140} \sqrt{\norm{ \vy_1 - \vy_2}^2 +\norm{\vx_1 - \vx_2 }^2} \\
    &=  \frac{ (\mu-1) (2\sqrt{2}+2)}{140} \norm{ (\vy_1, \vx_1) - (\vy_2, \vx_2)} \\
    &\leq  \frac{ (\mu-1) (4+2\sqrt{2})}{140} \norm{ (\vy_1, \vx_1) - (\vy_2, \vx_2)}_{\max} ,
    \end{align*} where the fourth inequality follows from the Cauchy--Schwarz inequality, and the last inequality follows from \Cref{prop:norm-equivalent}.
    As $2\sqrt{2} = \sqrt{8} < \sqrt{9} = 3$, we conclude that \[
    \norm{ h_k(\vy_1, \vx_1) - T_k(\vy_1, \vx_1) - \bigl(h_k(\vy_2, \vx_2) - T_k(\vy_2, \vx_2)\bigr)}_{\max} \leq \frac{\mu - 1}{20} \norm{ (\vy_1, \vx_1) - (\vy_2, \vx_2)}_{\max}.
    \] That is, whenever $k \geq K_0$, the map $h_k - T_k$ is $\frac{\mu - 1}{20}$-Lipschitz on $B_r(\vzero)$ with respect to the \mbox{max-norm}. 
    It follows indeed that $(h_k, T_k) \in \PsHy(\mu, 1, \frac{\mu-1}{20}; B_r(\vzero), \Ecs\oplus\Eu)$ for all $k \geq K_0$.

    Finally, for each $k \geq 0$, let $\mu_k = \mu$, $\lambda_k = 1$, and $\epsilon_k = \frac{\mu-1}{5}$.
    It is clear that the conditions \ref{item:NPH-i}--\ref{item:NPH-iii} in \Cref{defn:NPH} are all satisfied. 
    Therefore, $\vzero$ is an NPH unstable fixed point of the system \eqref{eqn:appx-nonauto-system}.
\end{proof}

\input{pep.tex}

\input{ode-study.tex}

%% file: lemmata_bag.tex
\tableofcontents

\section{Useful Lemmata} \label{appx:useful-lemmata}

 \subsection{Basic Inequalities}
Let us begin by recalling some standard facts that are consequences of the assumptions we impose. 
\begin{lem}\label[lem]{lem:smooth}
Suppose that $f: \rr^d \to \rr$ is $L$-smooth. 
Then, for any $\vx, \vy \in \rr^d$, it holds that \[
    f(\vx) \leq f(\vy) + \inprod{\vx - \vy}{\nabla f(\vy)} + \frac{L}2 \norm{\vx- \vy}^2. 
\] 
\end{lem}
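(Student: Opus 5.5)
The plan is to prove the descent inequality by comparing $f$ with its quadratic upper model along the segment from $\vy$ to $\vx$. Fix $\vx,\vy\in\rr^d$ and define $\phi(t) = f(\vy + t(\vx-\vy))$ for $t\in[0,1]$. Since $f$ is differentiable with a Lipschitz (hence continuous) gradient, $\phi$ is continuously differentiable with $\phi'(t) = \inprod{\vx-\vy}{\nabla f(\vy + t(\vx-\vy))}$. By the fundamental theorem of calculus,
\[
f(\vx) - f(\vy) = \int_0^1 \inprod{\vx-\vy}{\nabla f(\vy + t(\vx-\vy))}\,\mathrm{d}t .
\]

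Next, I subtract $\inprod{\vx-\vy}{\nabla f(\vy)}$ from both sides. This gives
\[
f(\vx) - f(\vy) - \inprod{\vx-\vy}{\nabla f(\vy)} = \int_0^1 \inprod{\vx-\vy}{\nabla f(\vy + t(\vx-\vy)) - \nabla f(\vy)}\,\mathrm{d}t .
\]
I bound the integrand using the Cauchy--Schwarz inequality and \Cref{asmp:smooth}, which yield
\[
\norm{\nabla f(\vy + t(\vx-\vy)) - \nabla f(\vy)} \leq L t\norm{\vx-\vy}.
\]
So the integrand is at most $Lt\norm{\vx-\vy}^2$. Integrating over $[0,1]$ produces $\frac{L}{2}\norm{\vx-\vy}^2$, and rearranging gives the claim.

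There is no genuine obstacle in this argument. The only point needing care is justifying the fundamental theorem of calculus for $\phi$, that is, the continuity of $\phi'$. This follows directly from the Lipschitz continuity of $\nabla f$, so no second-order differentiability is required.
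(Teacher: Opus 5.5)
Your proof is correct and follows essentially the same route as the paper, which differentiates $h(t) = f((1-t)\vy + t\vx) + (1-t)\inprod{\vx-\vy}{\nabla f(\vy)}$, whose derivative is exactly your integrand, then applies Cauchy--Schwarz, $L$-smoothness, and the fundamental theorem of calculus as you do. Your remark that $\phi'$ is continuous because $\nabla f$ is Lipschitz makes explicit a justification the paper leaves implicit.
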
 
\begin{proof}
Consider a function $h : \rr \to \rr$ defined as \[  
h(t) = f((1-t)\vy + t \vx) + (1-t) \inprod{\vx - \vy}{\nabla f(\vy)}.
\] 
Differentiating and then applying the Cauchy--Schwarz inequality, we obtain that \begin{align*}
h'(t) &= \inprod{\nabla f((1-t)\vy + t\vx)}{\vx - \vy} - \inprod{\vx - \vy}{\nabla f(\vy)} \\
&= \inprod{\nabla f((1-t)\vy + t\vx) - \nabla f(\vy)}{\vx - \vy} \\
&\leq \norm{\nabla f((1-t)\vy + t\vx) - \nabla f(\vy)} \norm{\vx - \vy}.
\end{align*} By the $L$-smoothness of $f$, it further follows from the last line above that \begin{align*}
h'(t) &\leq L \norm{(1-t)\vy + t\vx - \vy} \norm{\vx - \vy} \\
&= L t \norm{\vx - \vy}^2.
\end{align*} Therefore, by the fundamental theorem of calculus, we have \begin{align*}
    f(\vx) - f(\vy) - \inprod{\vx - \vy}{\nabla f (\vy)} &= h(1) - h(0) \\
    &= \int_0^1 h'(t) \, \mathrm{d}t \\
    &\leq L \norm{\vx - \vy}^2 \int_0^1 t \, \mathrm{d}t \\
    &= \frac{L}{2} \norm{\vx - \vy}^2.
\end{align*} Rearranging the terms gives us the claimed inequality.
\end{proof}

\begin{lem}\label[lem]{lem:noise-bound}
Suppose that \Cref{asmp:stochastic} holds. 
Then, for any $\vx \in \rr^d$, we have \[
    \expt\bigl[\norm{\nabla f(\vx, \zeta)}^2 \, \big| \, \vx \bigr] \leq \norm{\nabla f(\vx)}^2 + \sigma^2 . 
\] 
\end{lem}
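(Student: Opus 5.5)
The statement to prove is \Cref{lem:noise-bound}: a conditional second-moment bound that follows directly from the bias--variance decomposition under \Cref{asmp:stochastic}. The plan is to write the stochastic gradient at a fixed point $\vx$ as the true gradient plus a noise term,
\[
\nabla f(\vx, \zeta) = \nabla f(\vx) + \bigl(\nabla f(\vx,\zeta) - \nabla f(\vx)\bigr).
\]
I then expand the squared norm of this sum into three pieces: $\norm{\nabla f(\vx)}^2$, the cross term $2\inprod{\nabla f(\vx)}{\nabla f(\vx,\zeta) - \nabla f(\vx)}$, and $\norm{\nabla f(\vx,\zeta)-\nabla f(\vx)}^2$.

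Next I take the conditional expectation given $\vx$. Since $\nabla f(\vx)$ is $\vx$-measurable, it can be pulled out of the cross term. What remains is the conditional mean of the noise, which vanishes by \Cref{asmp:stochastic}(i). The last term has conditional expectation at most $\sigma^2$ by \Cref{asmp:stochastic}(ii). Adding the three contributions gives the claimed bound.

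The only point needing care is interpreting the assumption when $\vx$ is itself random, as in the iterates $\vy_k$. Here I would read \Cref{asmp:stochastic} as holding conditionally: $\zeta$ is drawn independently of the past, so conditioning on $\vx$ amounts to evaluating the unconditional expectations at the frozen value of $\vx$. With that convention there is no real obstacle. I would state the independence convention explicitly and then carry out the two-line computation above.
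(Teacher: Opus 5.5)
Your proposal is correct and follows essentially the same route as the paper: decompose $\nabla f(\vx,\zeta)$ into the true gradient plus noise, expand the square, kill the cross term by unbiasedness, and bound the last term by $\sigma^2$. Your explicit remark that \Cref{asmp:stochastic} must be read conditionally, with $\zeta$ independent of the past, is a point the paper leaves implicit. Your version of the cross term is also correctly signed (the paper writes it with a harmless minus sign).
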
 
\begin{proof}
    Observe that \begin{align*}
        \norm{\nabla f(\vx, \zeta)}^2 &= \norm{\nabla f(\vx) + (\nabla f(\vx, \zeta) - \nabla f(\vx))}^2 \\
        &= \norm{\nabla f(\vx)}^2 - 2\inprod{\nabla f(\vx)}{\nabla f(\vx, \zeta) - \nabla f (\vx)} + \norm{\nabla f(\vx, \zeta) - \nabla f(\vx)}^2.
    \end{align*} Taking the conditional expectation with respect to $\vx$, the conclusion is immediate from the equations in \Cref{asmp:stochastic}.
\end{proof}

The following is a simple result from elementary calculus, which we state here for completeness. 
\begin{lem} \label[lem]{lem:harmonic-bound}
    For any positive integer $K \geq 1$, it holds that \[
        \sum_{k=1}^K \frac{1}{k+1} \leq \log(K+1).
    \]
\end{lem}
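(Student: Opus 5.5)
We compare the sum with an integral. The map $t \mapsto \frac{1}{t}$ is positive and decreasing on $(0,\infty)$. So for each integer $k \geq 1$ and every $t \in [k, k+1]$ we have $\frac{1}{k+1} \leq \frac{1}{t}$. Integrating over $[k,k+1]$ gives
\[
\frac{1}{k+1} \leq \int_k^{k+1} \frac{\mathrm{d}t}{t}.
\]

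Next we sum this bound over $k = 1, \dots, K$. Since the intervals $[k,k+1]$ tile $[1, K+1]$, the right hand sides add up to
\[
\int_1^{K+1} \frac{\mathrm{d}t}{t} = \log(K+1) - \log 1 = \log(K+1),
\]
which is the claimed inequality.

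Every step here is elementary. The only point needing care is the index alignment: the term $\frac{1}{k+1}$ has to be matched with the interval $[k,k+1]$, where the integrand dominates it, and not with $[k+1,k+2]$. With that choice the telescoped integral starts at $t=1$, so no boundary term is left over. This is the form used in the proof of \Cref{thm:master-convergence-thm}, where $\sum_{k=0}^{T-1}\frac{1}{k+1} = 1 + \sum_{k=1}^{T-1}\frac{1}{k+1} \leq 1 + \log T$.
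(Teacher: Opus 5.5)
Your proof is correct and matches the paper's: both bound $\frac{1}{k+1}$ by $\int_k^{k+1}\frac{\mathrm{d}t}{t}$ using that $1/t$ is decreasing, then sum over $k=1,\dots,K$ to get $\int_1^{K+1}\frac{\mathrm{d}t}{t}=\log(K+1)$. In your closing remark, $K=T-1$ equals $0$ when $T=1$, which is outside the lemma's stated range, but then the sum is empty and the bound holds trivially.
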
 
\begin{proof}
    For any positive integer $k \geq 1$, the inequality $\frac{1}{k+1} \leq \frac{1}{t}$ holds whenever $k \leq t \leq k+1$. It thus follows that \[
        \sum_{k=1}^K \frac{1}{k+1} = \sum_{k=1}^K \int_k^{k+1} \frac{1}{k+1} \, \mathrm{d} t \leq \sum_{k=1}^K \int_k^{k+1} \frac{1}{t} \, \mathrm{d} t = \int_1^{K+1} \frac{1}{t} \, \mathrm{d} t = \log(K+1). \qedhere
    \]
\end{proof}

\subsection{Local Diffeomorphisms and Absolute Continuity of Measures}

Recall that a map $\varphi : \rr^d \to \rr^d$ is a local diffeomorphism if its Jacobian $D\varphi$ is invertible everywhere. 

\begin{lem} \label[lem]{lem:diffeo}
    Let $\mu$ be a measure on $\rr^d$ which is absolutely continuous with respect to Lebesgue measure. 
    Suppose that $\varphi : \rr^d \to \rr^d$ is a continuously differentiable map such that $\det D \varphi \neq 0$. 
    Then the pushforward $\varphi_\sharp \mu$ is also absolutely continuous with respect to Lebesgue measure. 
\end{lem}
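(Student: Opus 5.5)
We must show that if $\mu \ll \lambda$ (Lebesgue measure) and $\varphi$ is $C^1$ with $\det D\varphi \neq 0$ everywhere, then $\varphi_\sharp\mu \ll \lambda$. Equivalently, for every Borel set $N$ with $\lambda(N)=0$ we need $\mu(\varphi^{-1}(N))=0$. Since $\mu \ll \lambda$, it suffices to prove $\lambda(\varphi^{-1}(N)) = 0$. So the whole task reduces to showing that preimages of null sets under $\varphi$ are null.

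To prove this I would localize using the inverse function theorem. Because $D\varphi(\vx)$ is invertible at every $\vx$, each point has an open neighborhood $U_{\vx}$ on which $\varphi$ restricts to a $C^1$ diffeomorphism onto the open set $\varphi(U_{\vx})$, with $C^1$ inverse $\psi_{\vx}$. The space $\rr^d$ is second countable, so countably many such neighborhoods $U_1, U_2, \dots$ cover it. This gives
\[
\varphi^{-1}(N) = \bigcup_i \bigl(U_i \cap \varphi^{-1}(N)\bigr) = \bigcup_i \psi_i\bigl(N \cap \varphi(U_i)\bigr).
\]
It is therefore enough to know that a $C^1$ map sends Lebesgue null sets to Lebesgue null sets.

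That last fact is the main technical step. I would either cite the change-of-variables formula,
\[
\lambda\bigl(\psi_i(A)\bigr) = \int_A \lvert\det D\psi_i\rvert \,\mathrm{d}\lambda \quad\text{for Borel } A \subseteq \varphi(U_i),
\]
which vanishes when $\lambda(A)=0$, or prove it directly. For the direct proof, exhaust $\varphi(U_i)$ by countably many compact sets on which $\psi_i$ is Lipschitz with some constant $M$. Cover $A$ by cubes of total volume below $\varepsilon$; each cube's image lies in a set of volume at most $(\sqrt d\,M)^d$ times the cube's volume, so $\lambda(\psi_i(A)) \le (\sqrt d M)^d\varepsilon$, and letting $\varepsilon\to0$ gives zero. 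Measurability of $\varphi^{-1}(N)$ is automatic because $\varphi$ is continuous and $N$ is Borel; if one works with Lebesgue measurable $N$, enlarge it to a Borel null set containing it.

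Combining the pieces, $\lambda(\varphi^{-1}(N)) \le \sum_i \lambda\bigl(\psi_i(N\cap\varphi(U_i))\bigr) = 0$. Hence $\varphi_\sharp\mu(N) = \mu(\varphi^{-1}(N)) = 0$, which completes the argument. The only place I expect any care is the countable cover combined with local Lipschitz control; everything else is routine.
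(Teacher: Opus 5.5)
Your proposal is correct and follows essentially the same route as the paper's proof. Both arguments use the inverse function theorem for local $C^1$ inverses and a countable subcover from second countability (Lindelöf). Both then apply the change-of-variables formula to show $U_i \cap \varphi^{-1}(N)$ is Lebesgue-null, and conclude with $\mu \ll \lambda$. The only differences are cosmetic: you invoke absolute continuity once after showing $\lambda(\varphi^{-1}(N)) = 0$ rather than piece by piece, and you add an optional Lipschitz-covering proof in place of citing change of variables.
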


\begin{proof}
    Let $\lambda$ denote the Lebesgue measure on $\rr^d$. 
    By the inverse function theorem, for any $\vx \in \rr^d$, there exists an open neighborhood $U_\vx$ of $\vx$ such that $\varphi$, when restricted to $U_\vx$, is a diffeomorphism between $U_\vx$ and $\varphi(U_\vx)$.
    As $\{U_\vx : \vx \in \rr^d\}$ is an open cover of a second countable space $\rr^d$, by Lindel\"{o}f's lemma, it has a countable subcover. 
    Let $\vx_1, \vx_2, \dots$ be the sequence in $\rr^d$ such that, by denoting $U_i = U_{\vx_i}$, the collection $\{U_i : i = 1, 2, \dots\}$ becomes an open cover of $\rr^d$.

    Fix any $E \subset \rr^d$ such that $\lambda (E) = 0$. 
    For each $i = 1, 2, \dots$, let $V_i = \varphi(U_i)$ and $A_i = \varphi^{-1}(E) \cap U_i$. 
    Because $\varphi$ is a continuously differentiable diffeomorphism between $U_i$ and $V_i$, the change of variables formula yields \begin{equation} \label{eqn:lambda-null}
        \lambda(A_i) = \lambda\bigl((\varphi|_{U_i})^{-1}(E\cap V_i)\bigr) = \int_{E\cap V_i} \abs{\det D(\varphi|_{U_i})^{-1}} \mathrm{d}\lambda = 0.
    \end{equation}
    Meanwhile, as $\{U_i: i = 1, 2, \dots\}$ is an open cover of $\rr^d$, by the definition of pushforward measures we have \[
        \varphi_\sharp \mu(E) = \mu(\varphi^{-1}(E)) = \mu\left( \bigcup_{i}A_i \right) \leq \sum_{i} \mu ( A_i ).
    \]
    But here, as $\mu$ is absolutely continuous with respect to $\lambda$, \eqref{eqn:lambda-null} implies that $\mu(A_i) = 0$ for each $i = 1, 2, \dots$. 
    Hence, it follows that $\varphi_\sharp \mu(E) = 0$, and therefore, $\varphi_\sharp \mu$ is also absolutely continuous with respect to $\lambda$. 
\end{proof}

\subsection{Identities Between Schedule-Free Iterates}
The intertwined update rules of $\{\vx_k\}_{k \geq 0}$,  $\{\vy_k\}_{k \geq 0}$, and  $\{\vz_k\}_{k \geq 0
}$ in the Schedule-Free method give rise to several useful identities among these sequences.
We present two such identities here, which will be used in the convergence analyses.

\begin{lem}\label[lem]{lem:y-diff}
Let $\{\vx_k\}_{k \geq 0}$,  $\{\vy_k\}_{k \geq 0}$, and  $\{\vz_k\}_{k \geq 0}$ be the sequences computed by the update rule~\eqref{eqn:SF}. 
Then for any $k \geq 0$, it holds that \[
\vy_{k+1} - \vy_k = \beta c_{k+1}(\vz_k - \vx_k) - \gamma_k (1 - \beta + \beta c_{k+1}) \vg_k.
\] 
\end{lem}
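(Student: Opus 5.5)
This identity follows by direct algebra from the three update rules in \eqref{eqn:SF}; no assumptions on $f$ or on the step sizes are needed. The plan is to express $\vy_{k+1}$ through \eqref{eqn:yk} at index $k+1$ as
\[
\vy_{k+1} = (1-\beta)\vz_{k+1} + \beta\vx_{k+1}.
\]
We then substitute $\vx_{k+1}$ from \eqref{eqn:xk} and $\vz_{k+1}$ from \eqref{eqn:zk}, so that everything is written in terms of $\vz_k$, $\vx_k$ and $\vg_k$.

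First, from \eqref{eqn:xk},
\[
\vy_{k+1} = \bigl(1-\beta+\beta c_{k+1}\bigr)\vz_{k+1} + \beta(1-c_{k+1})\vx_k.
\]
Inserting $\vz_{k+1} = \vz_k - \gamma_k\vg_k$ gives
\[
\vy_{k+1} = \bigl(1-\beta+\beta c_{k+1}\bigr)\vz_k + \beta(1-c_{k+1})\vx_k - \gamma_k\bigl(1-\beta+\beta c_{k+1}\bigr)\vg_k.
\]
Second, subtract $\vy_k = (1-\beta)\vz_k + \beta\vx_k$. In the difference, the coefficient of $\vz_k$ is $\beta c_{k+1}$ and the coefficient of $\vx_k$ is $-\beta c_{k+1}$. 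This yields
\[
\vy_{k+1} - \vy_k = \beta c_{k+1}(\vz_k - \vx_k) - \gamma_k\bigl(1-\beta+\beta c_{k+1}\bigr)\vg_k,
\]
which is the claim.

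There is no real obstacle here; the only point needing care is the bookkeeping of coefficients when collecting the $\vz_k$ and $\vx_k$ terms. The identity uses only \eqref{eqn:SF}, with $\vg_k$ arbitrary, so it holds for \sfgd{}, \sfsgd{}, and any other base direction.
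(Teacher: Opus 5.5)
Your proposal is correct and matches the paper's proof: both substitute \eqref{eqn:xk} and then \eqref{eqn:zk} into $\vy_{k+1} = (1-\beta)\vz_{k+1} + \beta\vx_{k+1}$ to obtain $(1-\beta+\beta c_{k+1})(\vz_k - \gamma_k\vg_k) + \beta(1-c_{k+1})\vx_k$, and then compare with $\vy_k$. Your coefficient bookkeeping is right, and so is your remark that the identity holds for arbitrary $\vg_k$.
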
 
\begin{proof}
From the equations in \eqref{eqn:SF}, one can observe that \begin{align*}
        \vy_{k+1} &= (1-\beta) \vz_{k+1} + \beta \vx_{k+1} \\
                  &= (1-\beta) \vz_{k+1} + \beta ((1- c_{k+1})\vx_k + c_{k+1} \vz_{k+1}) \\
                  &= (1-\beta + \beta c_{k+1})\vz_{k+1} + \beta(1-c_{k+1})\vx_k \\
                  &= (1-\beta + \beta c_{k+1})(\vz_k - \gamma_k \vg_k) + \beta(1-c_{k+1})\vx_k \\
                  &= \vy_k - \gamma_k (1-\beta)\vg_k + \beta c_{k+1}(\vz_k - \gamma_k \vg_k)  - \beta c_{k+1} \vx_k.
    \end{align*} By rearranging the terms, the conclusion is immediate.
\end{proof}

\begin{lem}\label[lem]{lem:zx-diff}
Let $\{\vx_k\}_{k \geq 0}$ and  $\{\vz_k\}_{k \geq 0}$ be the sequences computed by the update rule \eqref{eqn:SF}. 
Then for any $k \geq 0$, it holds that \[
\vz_{k+1} - \vx_{k+1} = (1 - c_{k+1})(\vz_k - \vx_k - \gamma_k \vg_k) .
\] 
\end{lem}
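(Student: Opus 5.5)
The identity follows by direct substitution of the update rules \eqref{eqn:zk} and \eqref{eqn:xk}, with no auxiliary results needed. The plan is to express $\vz_{k+1} - \vx_{k+1}$ in terms of $\vz_{k+1}$ and $\vx_k$ alone, and then eliminate $\vz_{k+1}$ using the gradient step.

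First, by \eqref{eqn:xk} we have
\[
\vz_{k+1} - \vx_{k+1} = \vz_{k+1} - (1-c_{k+1})\vx_k - c_{k+1}\vz_{k+1} = (1-c_{k+1})(\vz_{k+1} - \vx_k).
\]
Second, we substitute $\vz_{k+1} = \vz_k - \gamma_k \vg_k$ from \eqref{eqn:zk}, which gives
\[
\vz_{k+1} - \vx_{k+1} = (1-c_{k+1})(\vz_k - \vx_k - \gamma_k \vg_k),
\]
as claimed.

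There is no real obstacle here. The only point to check is that the identity holds for every $k \geq 0$, including the steps with $c_{k+1} = 1$. In that case both sides vanish, which is consistent with $\vx_{k+1} = \vz_{k+1}$. The identity also uses nothing about $\vy_k$ or the particular choice of $\vg_k$, so it applies verbatim to \sfgd{}, \sfsgd{}, and the one-time perturbed variant, where one simply replaces $\vz_{k_0+1}$ by $\vz_{k_0+1}^+$.
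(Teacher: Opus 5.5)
Your proposal is correct and follows the paper's proof exactly: first use \eqref{eqn:xk} to write $\vz_{k+1} - \vx_{k+1} = (1-c_{k+1})(\vz_{k+1} - \vx_k)$, then substitute \eqref{eqn:zk}. Your side remarks, that both sides vanish when $c_{k+1}=1$ and that the identity carries over to the perturbed step with $\vz_{k_0+1}^+$ in place of $\vz_{k_0+1}$, are also accurate.
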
 
\begin{proof}
From the equations in \eqref{eqn:SF}, one can observe that \begin{align*}
        \vz_{k+1} - \vx_{k+1} &= \vz_{k+1} - ((1- c_{k+1})\vx_k + c_{k+1} \vz_{k+1}) \\ 
                              &= (1 - c_{k+1}) (\vz_{k+1} - \vx_k) \\
                              &= (1 - c_{k+1}) (\vz_k - \vx_k - \gamma_k \vg_k)  
    \end{align*} which is exactly the claimed identity. 
\end{proof}

\begin{lem}\label[lem]{lem:fallback}
    Let $\{\vx_k\}_{k \geq 0}$,  $\{\vy_k\}_{k \geq 0}$, and  $\{\vz_k\}_{k \geq 0}$ be the sequences computed by the update rule~\eqref{eqn:SF}. 
    For any $k \geq 0$, if $c_{k+1} = 1$, then $\vx_{k+1}=\vy_{k+1}=\vz_{k+1}$. 
\end{lem}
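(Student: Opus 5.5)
Since $c_{k+1}=1$, we have $1-c_{k+1}=0$, and the plan is to read the three equalities directly off the update rule~\eqref{eqn:SF} without any induction. First use \eqref{eqn:xk} with $c_{k+1}=1$. It reads $\vx_{k+1} = (1-1)\vx_k + 1\cdot \vz_{k+1} = \vz_{k+1}$, which gives the identity $\vx_{k+1}=\vz_{k+1}$. Equivalently, apply \Cref{lem:zx-diff}, which gives
\[
\vz_{k+1}-\vx_{k+1} = (1-c_{k+1})(\vz_k-\vx_k-\gamma_k\vg_k) = \vzero
\]
regardless of the values of $\vz_k$, $\vx_k$, and $\vg_k$.

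Next, evaluate \eqref{eqn:yk} at index $k+1$. This gives $\vy_{k+1} = (1-\beta)\vz_{k+1} + \beta\vx_{k+1}$. Substituting $\vx_{k+1}=\vz_{k+1}$ yields $\vy_{k+1} = (1-\beta+\beta)\vz_{k+1} = \vz_{k+1}$. Combining the two steps gives $\vx_{k+1}=\vy_{k+1}=\vz_{k+1}$, as claimed.

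There is no real obstacle. The one point to keep in mind is that $\vy_{k+1}$ is not defined by a separate recursion; it is fixed by the convex-combination identity \eqref{eqn:yk} at step $k+1$, so it automatically coincides with the common value. Nothing about $\beta$ beyond $\beta\in[0,1]$ is needed, and the argument is indifferent to whether $\vg_k$ is a deterministic or a stochastic gradient.
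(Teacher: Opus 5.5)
Your proof is correct and follows the paper's own argument: set $c_{k+1}=1$ in \eqref{eqn:xk} to get $\vx_{k+1}=\vz_{k+1}$, then substitute into \eqref{eqn:yk} at index $k+1$ to get $\vy_{k+1}$ equal to the same point. The alternative route through \Cref{lem:zx-diff} is also valid, but it is not needed.
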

\begin{proof}
    This is an immediate consequence of the update rules. 
    Indeed, suppose that $c_{k+1} = 1$. 
    Then, from \eqref{eqn:xk} we get that $\vx_{k+1} = \vz_{k+1}$, and this, with \eqref{eqn:yk}, asserts that $\vy_{k+1} = \vx_{k+1}$.  
\end{proof}

\subsection{Log-Concave Sequences}

Let us begin by recalling the definition of log-concavity of a sequence. 
\begin{defn}\label[defn]{defn:log-concave}
    A sequence of positive reals $\{w_k\}_{k \geq 0}$ is said to be \emph{log-concave} if \begin{equation}  \label{eqn:defn-log-concave}
    \frac{w_k}{w_{k-1}} \geq \frac{w_{k+1}}{w_k},
    \end{equation} or equivalently $w_k^2 \geq w_{k-1}w_{k+1}$, for all $k \geq 1$. 
\end{defn}

\begin{lem}\label[lem]{lem:log-concave-preserving}
    Let $\{v_k\}_{k \geq 0}$ and $\{w_k\}_{k \geq 0}$ be log-concave sequences of positive reals. 
    Then, the following sequences are also log-concave. 
    \begin{enumerate}[label=(\roman*)]
        \item For any two constants $a$ and $b$, a linear sequence $\{a k + b\}$ is log-concave, within the range of $k$ such that $ak + b > 0$. \label{item:constant}
        \item The pointwise product $\{v_k w_k\}_{k \geq 0}$ is log-concave.  \label{item:product} 
        \item The pointwise minimum $\{\min\{v_k, w_k\}\}_{k \geq 0}$ is log-concave. \label{item:minimum}
    \end{enumerate}
\end{lem}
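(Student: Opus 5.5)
The three parts of \Cref{lem:log-concave-preserving} are elementary, and I will check each one directly against \Cref{defn:log-concave}. I will use the multiplicative form $w_k^2 \geq w_{k-1}w_{k+1}$ throughout, since every sequence involved is positive.

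For \ref{item:constant}, fix $k$ such that $a(k-1)+b$, $ak+b$, and $a(k+1)+b$ are all positive. Then
\[
(ak+b)^2 - \bigl(a(k-1)+b\bigr)\bigl(a(k+1)+b\bigr) = (ak+b)^2 - \bigl((ak+b)^2 - a^2\bigr) = a^2 \geq 0,
\]
which gives the inequality at every interior index of the admissible range. The constant sequence is the case $a=0$, $b>0$.

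For \ref{item:product}, I multiply the two inequalities $v_k^2 \geq v_{k-1}v_{k+1}$ and $w_k^2 \geq w_{k-1}w_{k+1}$. Both sides of each are nonnegative, so the product inequality holds, and it reads $(v_kw_k)^2 \geq (v_{k-1}w_{k-1})(v_{k+1}w_{k+1})$.

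For \ref{item:minimum}, let $m_k = \min\{v_k,w_k\}$. Without loss of generality $m_k = v_k$, so $v_k \leq w_k$. Then
\[
m_k^2 = v_k^2 \geq v_{k-1}v_{k+1} \geq m_{k-1}m_{k+1},
\]
where the last step uses $v_{k\pm1} \geq m_{k\pm1}$ and the positivity of all terms. The case $m_k = w_k$ is symmetric.

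None of these steps is a real obstacle. The only point needing care is in \ref{item:minimum}: the reduction must apply log-concavity of whichever sequence attains the minimum at index $k$, not at the neighbouring indices. This is exactly why I compare through $v_{k\pm1} \geq m_{k\pm1}$ rather than trying to identify which sequence attains the minimum at $k\pm1$.
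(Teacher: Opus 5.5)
Your proof is correct and follows the paper's argument essentially verbatim for parts (ii) and (iii), including the reduction in (iii) to whichever sequence attains the minimum at index $k$. In part (i) you check the multiplicative form directly via $(ak+b)^2 - (a(k-1)+b)(a(k+1)+b) = a^2$, whereas the paper shows that the ratio $\frac{a(k+1)+b}{ak+b} = 1 + \frac{1}{k+b/a}$ is decreasing in $k$. This is the same check in a slightly cleaner form, and it also covers $a = 0$ without dividing by $a$.
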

\begin{proof}
    For any $k$ such that both $ak+b$ and $a(k+1)+b$ are positive, observe that \[
    \frac{a(k+1) + b}{ak + b} = 1 + \frac{1}{k+\frac{b}{a}}
    \] is decreasing with respect to $k$. 
    Hence, considering the condition \eqref{eqn:defn-log-concave} for log-concavity, statement~\ref{item:constant} holds.

    It is also straightforward to verify \ref{item:product}, as the log-concavity of each sequence immediately implies the inequality \[ 
    v_{k}^2 w_k^2 \geq (v_{k-1}w_{k-1})(v_{k+1}w_{k+1}).
    \]

    To show that \ref{item:minimum} holds, let us temporarily fix any $k \geq 1$. 
    Without loss of generality, we may assume that $v_k \geq w_k$.
    Then, by the log-concavity of $\{w_k\}_{k \geq 0}$, it holds that \[
        \min\{v_k, w_k\}^2 = w_k^2 \geq w_{k-1} w_{k+1} \geq \min\{v_{k-1}, w_{k-1}\} \cdot \min\{v_{k+1}, w_{k+1}\}.
    \] As $k \geq 1$ was arbitrary, it follows that $\{\min\{v_k, w_k\}\}_{k \geq 0}$ is indeed log-concave.
\end{proof}

\begin{lem}\label[lem]{lem:log-concave-decr}
    Let $\{w_k\}_{k\geq 0}$ be log-concave. 
    Then $\{\tau_k\}_{k\geq 0}$ defined as \[ 
        \tau_k = \frac{w_k}{\sum_{j=0}^k w_j}
    \] 
is monotone decreasing. 
\end{lem}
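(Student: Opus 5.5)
The plan is to reduce monotonicity of $\tau_k$ to a comparison of consecutive ratios. Write $S_k \coloneqq \sum_{j=0}^k w_j$, so that $\tau_k = w_k/S_k$. Then $\tau_{k+1} \le \tau_k$ is equivalent to $w_{k+1} S_k \le w_k S_{k+1} = w_k S_k + w_k w_{k+1}$. This in turn is equivalent to
\[
\frac{w_{k+1}}{w_k} \le \frac{S_k}{S_k + w_{k+1}} \cdot \frac{S_{k+1}}{S_k}
\]
after dividing by $w_k S_k$; it is cleaner, though, to write it as $\frac{w_{k+1}}{w_k}\bigl(S_k - w_k\bigr) \le S_k - \ldots$. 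So I will instead aim for the form $w_{k+1} S_{k-1} \le w_k S_k - w_k w_k + w_k w_{k+1}$, that is,
\[
w_{k+1} S_k \le w_k S_k + w_k w_{k+1}.
\]
Since $S_k \ge 0$, the cleanest sufficient condition is $w_{k+1} S_{k-1} \le w_k S_k$, or even termwise domination, and we will obtain this from log-concavity.

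The key step is to show, for every $j \le k$, that $w_{k+1} w_j \le w_k w_{j+1}$. Log-concavity \eqref{eqn:defn-log-concave} says the ratios $r_i \coloneqq w_i/w_{i-1}$ are nonincreasing in $i$ (all $w_i > 0$). Hence for $j \le k$ we have $r_{k+1} \le r_{j+1}$, which is exactly $w_{k+1}/w_k \le w_{j+1}/w_j$, i.e.\ $w_{k+1} w_j \le w_k w_{j+1}$. Summing over $j = 0, \dots, k$ gives
\[
w_{k+1} S_k \le w_k \sum_{j=0}^{k} w_{j+1} = w_k (S_{k+1} - w_0) \le w_k S_{k+1},
\]
using $w_0 > 0$. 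Dividing by $S_k S_{k+1} > 0$ yields $\tau_{k+1} \le \tau_k$ for all $k \ge 0$; the case $k = 0$ is covered, since it only needs $r_1 \le r_1$.

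There is no real obstacle here. The only care needed is in the telescoping index shift, $\sum_{j=0}^k w_{j+1} = S_{k+1} - w_0$, and in the strict positivity of the $w_i$, which the definition of log-concave sequences provides.
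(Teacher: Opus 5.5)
Your proof is correct, and it takes a different and cleaner route than the paper.

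\textbf{The paper's route.} The paper writes $\tau_k-\tau_{k+1}$ as a positive factor times $w_kw_{k+1}+(w_k-w_{k+1})S_k$. The case $w_k\ge w_{k+1}$ is then immediate. In the case $w_k<w_{k+1}$, the monotone ratios give $w_j\le w_k(w_k/w_{k+1})^{k-j}$, and bounding $S_k$ by the full geometric series yields $S_k\le w_kw_{k+1}/(w_{k+1}-w_k)$. That bound is exactly what is needed to make the numerator nonnegative.

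\textbf{Your route.} You use the same ingredient, the nonincreasing ratios $r_i$, but compare each $w_j$ only with its successor: $w_{k+1}w_j\le w_kw_{j+1}$ for $j\le k$. Summing and shifting the index gives $w_{k+1}S_k\le w_k(S_{k+1}-w_0)$.

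\textbf{What each buys.} Your argument needs no case split and no infinite series. It also exhibits the slack $w_kw_0>0$, so you get strict decrease for free. Rearranged in the regime $w_{k+1}>w_k$, it gives $S_k\le w_k(w_{k+1}-w_0)/(w_{k+1}-w_k)$, which is slightly sharper than the paper's bound. The paper's proof does make visible that the only nontrivial situation is while the sequence is still increasing (the warmup phase in its application), but it is not stronger than yours.

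\textbf{One presentational point.} Delete the exploratory manipulations in your first paragraph, because they are not correct as stated:
\begin{itemize}
\item The first displayed condition has right-hand side $\frac{S_k}{S_k+w_{k+1}}\cdot\frac{S_{k+1}}{S_k}=1$. It therefore reads $w_{k+1}\le w_k$, which is not equivalent to $\tau_{k+1}\le\tau_k$.
\item The form $w_{k+1}S_{k-1}\le w_kS_k-w_k^2+w_kw_{k+1}$ differs from $w_{k+1}S_k\le w_kS_k+w_kw_{k+1}$ by the term $w_k(w_{k+1}-w_k)$, so the two are not equivalent.
\end{itemize}
Your second paragraph proves the correct reformulation $w_{k+1}S_k\le w_kS_{k+1}$ directly, so nothing in the actual proof depends on these remarks.
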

\begin{proof}
    Observe that \begin{align}
        \tau_k - \tau_{k+1} &= \frac{w_k}{\sum_{j=0}^k w_j} - \frac{w_{k+1}}{w_{k+1} + \sum_{j=0}^k w_j} \notag \\ 
        &=\frac{1}{\left(w_{k+1} + \sum_{j=0}^k w_j\right) \sum_{j=0}^k w_j } \left(w_k w_{k+1} + (w_k - w_{k+1}) \sum_{j=0}^k w_j\right). \label{eqn:tmp-14} 
    \end{align}
    Hence, if $w_k \geq w_{k+1}$ then we immediately have $\tau_k \geq \tau_{k+1}$, so let us assume that $w_k < w_{k+1}$. 
    By the log-concavity of $\{w_k\}_{k\geq 0}$, the sequence of the ratios between adjacent terms $\{\frac{w_{k+1}}{w_k}\}_{k \geq 0}$ is decreasing. So for any $0 \leq j \leq k$, it holds that \[
    w_k =  w_j \prod_{i=j+1}^{k} \frac{w_i}{w_{i-1}} \geq w_j \left( \frac{w_{k+1}}{w_k} \right)^{k-j}
    \] and thus we also have \begin{align*}
    \sum_{j=0}^k w_j \leq \sum_{j=0}^k w_k \left(\frac{w_k}{w_{k+1}}\right)^{k-j} &\leq w_k \sum_{m=0}^\infty \left(\frac{w_k}{w_{k+1}}\right)^m \\[1ex]
    &= \frac{w_k}{1-\frac{w_k}{w_{k+1}}} \\[1ex]
    &= \frac{w_kw_{k+1}}{w_{k+1}-{w_k}}. 
    \end{align*} Plugging this into \eqref{eqn:tmp-14} while noting that we are considering the case where $w_k - w_{k+1} < 0$, we get \[
    \tau_k - \tau_{k+1} = \frac{1}{\left(w_{k+1} + \sum_{j=0}^k w_j\right) \sum_{j=0}^k w_j } \left(w_k w_{k+1} + (w_k - w_{k+1})\frac{w_kw_{k+1}}{w_{k+1}-{w_k}}\right) \geq 0.
    \] Therefore, we have $\tau_k \geq \tau_{k+1}$ in either of the cases, so $\{\tau_k\}_{k \geq 0}$ is monotone decreasing. 
\end{proof}

%% file: pep.tex
\section{PEP Experiments} \label{sec:pep-experiments}

All PEP results in this work are obtained using the Python package \texttt{PEPit} \citep{PEPit}.
In formulating a PEP instance with an objective function $f$ that is possibly nonconvex yet bounded below, we invoke the following theorem. 

\begin{thm}[{\citep{Dror20}, Theorem~7}]
    For a fixed $L > 0$, let $\{(\vx_i, \vg_i, f_i)\}_{i=0, \dots, T} \subset \rr^d \times \rr^d \times \rr$ be a set of triplets that satisfy \[
      \frac{1}{2L} \norm{\vg_i - \vg_j}^2 - \frac{L}{4} \norm{\vx_i - \vx_j - \frac{1}{L} (\vg_i - \vg_j)} \leq f_i - f_j - \inprod{\vg_j}{\vx_i - \vx_j}
    \] for all $0 \leq i, j \leq T$. Then, there exists an $L$-smooth function $f$ such that $f(\vx_i) = f_i$ and $\nabla f(\vx_i) = \vg_i$ for all $i =0, \dots, T$, which in addition satisfies \[
\min_{\vx} \; f(\vx) = \min_{i \in \{0, 1, \dots, T\}} \; f_i - \frac{1}{2L} \norm{\vg_i}^2.
\]
\end{thm}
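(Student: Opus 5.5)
The plan is to reduce to the known interpolation theorem for smooth \emph{convex} functions via the standard convexification trick. Set $M \coloneqq 2L$ and define the lifted data
\[
h_i \coloneqq f_i + \tfrac{L}{2}\norm{\vx_i}^2, \qquad \vs_i \coloneqq \vg_i + L\vx_i .
\]
The basic fact is that $f$ is $L$-smooth if and only if $h \coloneqq f + \tfrac{L}{2}\norm{\cdot}^2$ is convex and $2L$-smooth. Indeed, $\nabla^2 f \in [-L\id, L\id]$ corresponds to $\nabla^2 h \in [0, 2L\id]$; for nonsmooth $C^{1,1}$ functions one argues via monotonicity and co-coercivity of $\nabla h$ instead.

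For the first step, I would expand the convex $M$-smooth interpolation condition of Taylor--Hendrickx--Glineur,
\[
h_i \geq h_j + \inprod{\vs_j}{\vx_i - \vx_j} + \tfrac{1}{2M}\norm{\vs_i - \vs_j}^2 ,
\]
in terms of $(f_i, \vg_i, \vx_i)$. A direct algebraic check, using the expansion $\norm{\vs_i - \vs_j}^2 = \norm{\vg_i - \vg_j}^2 + 2L\inprod{\vg_i - \vg_j}{\vx_i - \vx_j} + L^2\norm{\vx_i - \vx_j}^2$, should show that it is exactly the hypothesis of the theorem, read with the second norm squared (which is evidently intended). Hence there is a convex $2L$-smooth $h$ with $h(\vx_i) = h_i$ and $\nabla h(\vx_i) = \vs_i$. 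Then $f \coloneqq h - \tfrac{L}{2}\norm{\cdot}^2$ is $L$-smooth and interpolates the data.

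Next, the inequality $\inf f \leq f_* \coloneqq \min_i \bigl(f_i - \tfrac{1}{2L}\norm{\vg_i}^2\bigr)$ holds for \emph{every} $L$-smooth interpolant. To see this, apply \Cref{lem:smooth} at $\vx = \vx_i - \tfrac1L \vg_i$, $\vy = \vx_i$, which gives $f(\vx) \leq f_i - \tfrac{1}{2L}\norm{\vg_i}^2$.

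The main obstacle is the reverse inequality $f \geq f_*$ everywhere, which forces a careful choice of interpolant rather than an arbitrary one. My plan is to augment the data set with one extra triplet $(\vx_\star, \vzero, f_*)$, choosing $\vx_\star = \vx_{i^*} - \tfrac1L \vg_{i^*}$ for the minimizing index $i^*$. I would then verify that the enlarged family still satisfies the pairwise interpolation inequalities. The pairs involving $\vx_\star$ reduce to inequalities of the form $f_j - \tfrac{1}{2L}\norm{\vg_j}^2 \geq f_*$, which holds by definition of $f_*$, plus terms that vanish thanks to the choice of $\vx_\star$; this algebra is the step to check most carefully. If it goes through, the interpolant gets a stationary point with value $f_*$, but that alone does not give a global lower bound in the nonconvex case. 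So I would instead build the interpolant explicitly: take the lower convex interpolant
\[
h(\vx) = \min\Bigl\{ \textstyle\sum_i \lambda_i\bigl(h_i + \inprod{\vs_i}{\vx - \vx_i}\bigr) + \tfrac{M}{2}\bigl\| \vx - \sum_i\lambda_i \vx_i\bigr\|^2 \ \text{(suitably corrected)} : \lambda \in \Updelta_T \Bigr\},
\]
that is, the Moreau-envelope-type conjugate construction. I would then show pointwise that $h(\vx) - \tfrac L2\norm{\vx}^2 \geq \min_i\bigl(f_i - \tfrac1{2L}\norm{\vg_i}^2\bigr)$ by minimizing each piece over $\vx$, each piece minus $\tfrac L2\norm{\vx}^2$ being a quadratic whose minimum can be computed in closed form. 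If the convex-combination coupling makes this bound fail, the fallback is Drori's original route: define $f$ directly as a minimum over $i$ of upper quadratic models $f_i + \inprod{\vg_i}{\vx - \vx_i} - \tfrac L2\norm{\vx - \vx_i}^2 + \dots$, smoothed appropriately, and read off the minimum value from the individual pieces.
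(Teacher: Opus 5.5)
The paper does not prove this statement; it imports it from \citet{Dror20}. Judged on its own, your first half is sound. The lifting $h = f + \frac{L}{2}\norm{\cdot}^2$ is correct, and so is the algebra showing that the hypothesis (with the missing square restored) is exactly the convex $2L$-smooth interpolation condition for the lifted data. The resulting $L$-smooth interpolant and the bound $\inf f \le f_* \coloneqq \min_i\bigl(f_i - \frac{1}{2L}\norm{\vg_i}^2\bigr)$, obtained from \Cref{lem:smooth} at $\vx_i - \vg_i/L$, are also correct.

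The gap is the reverse inequality $f \ge f_*$ everywhere. This is the only thing the theorem adds to the Taylor--Hendrickx--Glineur interpolation result, and your proposal leaves it as a hedged sketch.

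\begin{itemize}
\item \textbf{Which interpolant you pick is the whole point.} You aim for a ``lower'' convex interpolant, but the smallest convex $2L$-smooth interpolant already fails for a single triplet. It is the affine tangent of the lifted data, which gives $f(\vx) = f_0 + \inprod{\vg_0}{\vx - \vx_0} - \frac{L}{2}\norm{\vx - \vx_0}^2$. This is unbounded below even though $f_*$ is finite.
\item \textbf{The simplex formula does not interpolate.} Before the unspecified ``correction'', it is not an interpolant. At $\vx = \vx_j$, moving $\lambda$ off the vertex $e_j$ strictly lowers it below the lifted value whenever the lifted gradients differ. You also leave open whether the $\lambda$-coupling destroys the bound.
\item \textbf{The fallback is not smooth.} A pointwise minimum of quadratic models has kinks wherever the active piece switches, so it is not $L$-smooth. ``Smoothed appropriately'' is exactly the construction that is missing.
\item \textbf{The augmentation is unnecessary.} Adding $(\vx_\star,\vzero,f_*)$ is not needed. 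Also, for the pairs in which $\vx_\star$ is the first argument, nothing vanishes: that inequality coincides with the original $(i^*,j)$ inequality.
\end{itemize}

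The missing idea is to take the \emph{largest} convex $2L$-smooth interpolant of the lifted data. Define
\[
q_i(\vx) \coloneqq \frac{L}{2}\norm{\vx}^2 + f_i + \inprod{\vg_i}{\vx - \vx_i} + \frac{L}{2}\norm{\vx - \vx_i}^2 ,
\]
which is the $2L$-upper quadratic model of $h$ at $\vx_i$. Set $h \coloneqq \operatorname{conv}\bigl(\min_i q_i\bigr)$, the convex envelope (the Azagra--Mudarra $C^{1,1}$ convex extension).

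\begin{itemize}
\item \textbf{It is a valid interpolant.} Under precisely the condition you verified, $h$ is convex, $2L$-smooth, and interpolates the lifted data.
\item \textbf{It is the maximal one.} Every other such interpolant is convex and lies below every $q_i$, hence below $h$.
\item \textbf{It satisfies the lower bound.} Completing the square gives $q_i(\vx) \ge \frac{L}{2}\norm{\vx}^2 + f_i - \frac{1}{2L}\norm{\vg_i}^2 \ge \frac{L}{2}\norm{\vx}^2 + f_*$ for every $i$. The right-hand side is convex, so it is a convex minorant of $\min_i q_i$ and therefore lies below the envelope $h$. Hence $f = h - \frac{L}{2}\norm{\cdot}^2 \ge f_*$.
\end{itemize}

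Combined with your upper bound, the minimum equals $f_*$ and is attained at $\vx_{i^*} - \vg_{i^*}/L$ for a minimizing index $i^*$.

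Equivalently, in simplex form,
\[
h(\vx) = \min_{\lambda}\Bigl\{\sum_i \lambda_i c_i + L\norm{\vx - \sum_i\lambda_i\vy_i}^2\Bigr\},
\]
with $\lambda$ in the unit simplex, $\vy_i = \frac{1}{2}(\vx_i - \vg_i/L)$ and $c_i = f_i + \frac{L}{2}\norm{\vx_i}^2 - \frac{1}{4L}\norm{\vg_i + L\vx_i}^2$. Minimizing $h(\vx) - \frac{L}{2}\norm{\vx}^2$ over $\vx$ first gives $\sum_i\lambda_i c_i - L\norm{\sum_i\lambda_i\vy_i}^2$. This is concave in $\lambda$, so it is minimized at a vertex, where it equals $f_i - \frac{1}{2L}\norm{\vg_i}^2$. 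The coupling you worried about is exactly what makes the bound hold.
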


Based on this fact, we enforce the constraints of the form $f(\vx) - \frac{1}{2L} \norm{\nabla f(\vx)}^2 \geq f^*$ instead of the na\"{i}ve $f(\vx) \geq f^*$ as a formulation of \Cref{asmp:lower-bound}. 
We note that a similar strategy was adopted by \citet{Abba22}.
In particular, the results plotted in \Cref{fig:x05} are obtained by solving
\begin{equation} \label{eqn:pep-xk} \begin{aligned}
    \max \ &{} \min_{0\leq k\leq T} \; \norm{\nabla f(\vx_k)}^2 \\
    \text{s.t. } &{} \ \text{$f$ is $L$-smooth}, \\
                 &{} \ \vx_0 = \vy_0 = \vz_0, \\
                 &{} \ f(\vx_0) - f^* \leq 1, \\
                 &{} \ \text{$\vx_k$, $\vy_k$, and $\vz_k$ computed by \sfgd}, \quad \forall k = 1, \dots, T, \\  
                 &{} \ f(\vx_k) - \frac{1}{2L} \norm{\nabla f(\vx_k)}^2 \geq f^*, \quad \forall k = 0, 1, \dots, T, \\
                 &{} \ f(\vy_k) - \frac{1}{2L} \norm{\nabla f(\vy_k)}^2 \geq f^*, \quad \forall k = 0, 1, \dots, T-1
\end{aligned} \end{equation} with $L = 1$ and $\gamma = \frac{1}{L}$.  
Notice that there is no loss of generality in setting $L = 1$, because for an $L$-smooth function $f$ we can always alternatively consider $\frac{f}{L}$ in place of $f$.
Also, to avoid unnecessary complexities, we consider the case where there is no warmup phase, \textit{i.e.}, $\warmstep = 1$, so that $\warmratio = 0$ and the averaging rates become the standard $c_{k+1} = \frac{1}{k+1}$.
As both the update rule of \sfgd{} and the optimality measure do not involve $f(\vz_k)$ nor $\nabla f(\vz_k)$, the lower bound constraints regarding $\vz_k$ are not included in the PEP instance \eqref{eqn:pep-xk}. 

As a complementary verification of the convergence rates we established in \Cref{sec:conv}, we also numerically computed the tight worst-case upper bounds on $\min_{0 \leq k < T} \|\nabla f(\vy_k)\|^2$ for the iterates generated by \sfgd, under the assumptions that $f$ is $1$-smooth and bounded below. 
Such results are obtained by solving \begin{align*} 
    \max \ &{} \min_{0\leq k\leq T} \; \norm{\nabla f(\vy_k)}^2 \\
    \text{s.t. } &{} \ \text{$f$ is $L$-smooth}, \\
                 &{} \ \vx_0 = \vy_0 = \vz_0, \\
                 &{} \ f(\vx_0) - f^* \leq 1, \\
                 &{} \ \text{$\vx_k$, $\vy_k$, and $\vz_k$ computed by \sfgd}, \quad \forall k = 1, \dots, T, \\  
                 &{} \ f(\vy_k) - \frac{1}{2L} \norm{\nabla f(\vy_k)}^2 \geq f^*, \quad \forall k = 0, 1, \dots, T-1. 
\end{align*}
As now the only points where the gradient of $f$ should be computed are $\vy_k$, we may remove the lower bound constraints regarding $\vx_k$. 

\begin{figure}
  \centering
    \includegraphics[width=0.5\textwidth]{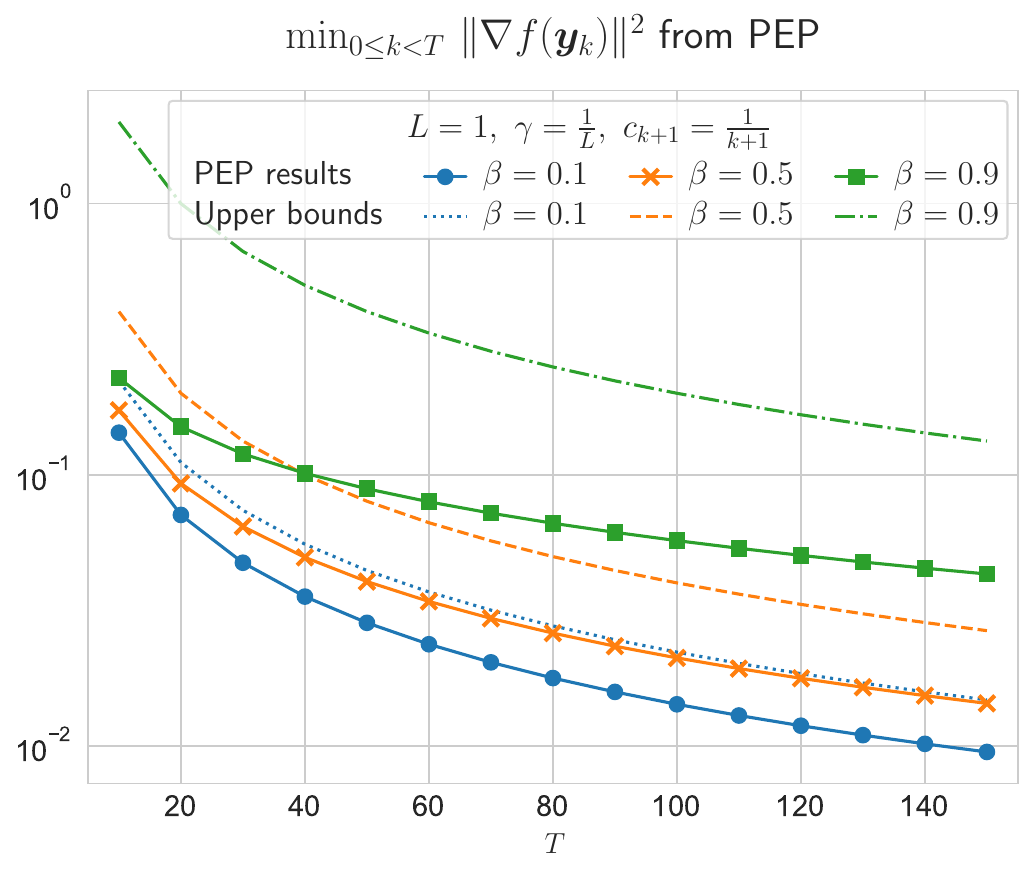}
    \caption{
      The convergence plots regarding the gradient location iterates $\vy_k$, with varying only $\beta$ among the parameters.
      The solid lines with markers are the plots for the PEP results. 
      The other dashed and dotted lines are the plots of the theoretical upper bounds obtained in \Cref{cor:sfgd-rate}.
    }
    \label{fig:y05}
\end{figure}

The results are plotted in \Cref{fig:y05}. 
Because we impose the initial condition $f(\vx_0) - f^* \leq 1$, the theoretical upper bound from \Cref{cor:sfgd-rate} reads $\frac{2}{(1-\beta) T}$.
The plots show that the tight worst-case performances are indeed bounded by this quantity, validating our theoretical upper bound.

In our computing environment, equipped with an Intel\textregistered{} Xeon\textregistered{} Gold 6226R 32-core CPU and 384~GB of RAM, the computations for \Cref{fig:x05} took approximately 6 hours for each choice of $\beta$, while those for \Cref{fig:y05} took approximately 4 hours for each choice of $\beta$.

%% file: ode-study.tex
\section{Regularity of \sfode{} near \texorpdfstring{$t = 0$}{t=0}} \label{appx:ode-study}

Following the standard notation, for an interval $I \subset \rr$, we let $C(I, \rr^d)$ be the set of all continuous functions $I \to \rr^d$. 
In cases where $I$ is a closed interval, we equip $C(I, \rr^d)$ with the uniform norm $\norm{\vw}_\infty = \sup_{t \in I} \norm{\vw(t)}$ so that it becomes a Banach space. 

The equation \eqref{eqn:ode-x} has a factor $\frac{1}{t}$, which could potentially make the solutions, or even the ODE itself, ill-defined at $t = 0$. 
However, it turns out that this singularity at $t = 0$ is removable, in the sense that we can always find a solution whose value at $t = 0$ can be assigned by a continuous extension to be exactly the desired initial point.
The goal of this section is to establish this fact. 

As a first step, let $T$ be any positive real number, and consider the ODE \eqref{eqn:SF-ODE} and its solution $(\vx, \vy, \vz)$ on the half-open interval $(0, T]$. 
We show that whether the solutions can be continuously extended to the closed interval $[0, T]$ can be reduced into determining whether a certain integral equation has a well-posed solution.  

For convenience, let us define an operator $\gI : C([0, T], \rr^d) \to C([0, T], \rr^d)$ as \begin{equation} \label{eqn:opI}
\gI(\vw) (t) = \begin{cases} \displaystyle{\frac{1}{t} \int_0^t s \vw(s) \, \mathrm{d} s}  & t > 0, \\[1ex]
\vzero & t = 0. \end{cases}
\end{equation} Notice that, by the continuity of $\vw$, we indeed have $\lim_{t \to 0+} \gI(\vw)(t) = \vzero$. 

\begin{prop}\label[prop]{prop:sf-ie}
    Let $\vx$, $\vy$, and $\vz$ be functions in $C([0, T], \rr^d)$.
    Then, for any given $\vx_0 \in \rr^d$, the following are equivalent. 
    \begin{enumerate}[label=(\roman*)]
        \item \label{item:g1a} The triple $(\vx, \vy, \vz)$ is a solution to the \sfode{} on the half-open interval $(0, T]$, and $\vx(0) = \vy(0) = \vz(0) = \vx_0$. 
        \item \label{item:g1b} The function $\vy$ is a solution of the integral equation \begin{equation} \label{eqn:y-ie}       
            \vy(t) = \vx_0 - \int_0^t \nabla f(\vy(s)) \, \mathrm{d}s + \beta \gI(\nabla f \circ \vy) (t), \qquad \forall t \in [0, T].
        \end{equation} Moreover, for such $\vy$ we have \begin{equation} \label{eqn:x-ie}  
            \vx(t) = \vx_0 - \int_0^t \nabla f(\vy(s)) \, \mathrm{d}s + \gI(\nabla f \circ \vy) (t), \qquad \forall t \in [0, T]
        \end{equation} and $\vz = \frac{1}{1-\beta}(\vy - \beta \vx)$. 
    \end{enumerate}
\end{prop}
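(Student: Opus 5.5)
We will prove both directions by converting the differential equations into integral form, using only \eqref{eqn:SF-ODE} and the fundamental theorem of calculus.

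\emph{\ref{item:g1a} $\Rightarrow$ \ref{item:g1b}.} Integrate \eqref{eqn:ode-z} from $h$ to $t$ and let $h\to0+$. Continuity of $\vz$ and $\vy$ on $[0,T]$ then gives
\[
\vz(t)=\vx_0-\int_0^t\nabla f(\vy(s))\,\mathrm ds .
\]
Next rewrite \eqref{eqn:ode-x} as $\frac{\mathrm d}{\mathrm dt}\bigl(t\vx(t)\bigr)=\vz(t)$ on $(0,T]$. Integrating from $h$ to $t$ and letting $h\to0+$, where $h\vx(h)\to0$ by continuity, yields
\[
t\vx(t)=\int_0^t\vz(s)\,\mathrm ds .
\]
Substitute the formula for $\vz$ and apply Fubini, i.e.\ $\int_0^t\int_0^s\vw(u)\,\mathrm du\,\mathrm ds=\int_0^t(t-u)\vw(u)\,\mathrm du$ with $\vw=\nabla f\circ\vy$. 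This gives
\[
t\vx(t)=t\vx_0-t\int_0^t\vw+\int_0^t u\,\vw(u)\,\mathrm du .
\]
Dividing by $t$ gives \eqref{eqn:x-ie} for $t>0$. At $t=0$, both sides equal $\vx_0$ because $\gI(\vw)(0)=\vzero$. Finally, \eqref{eqn:ode-y} gives $\vy=(1-\beta)\vz+\beta\vx$. Substituting the two integral formulas, the integral terms combine with coefficient $(1-\beta)+\beta=1$ and the $\gI$ term carries coefficient $\beta$, which is \eqref{eqn:y-ie}. The relation $\vz=\frac{1}{1-\beta}(\vy-\beta\vx)$ follows from \eqref{eqn:ode-y} since $\beta<1$.

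\emph{\ref{item:g1b} $\Rightarrow$ \ref{item:g1a}.} Let $\vy$ solve \eqref{eqn:y-ie}, define $\vx$ by \eqref{eqn:x-ie}, and set $\vz=\frac{1}{1-\beta}(\vy-\beta\vx)$. Combining \eqref{eqn:y-ie} and \eqref{eqn:x-ie}, the $\gI$ terms cancel and
\[
\vz(t)=\vx_0-\int_0^t\nabla f(\vy(s))\,\mathrm ds ,
\]
so \eqref{eqn:ode-y} holds by construction. The integrand $\nabla f\circ\vy$ is continuous, so $\vz$ is $C^1$ with $\dot\vz=-\nabla f(\vy)$, which is \eqref{eqn:ode-z}.

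For \eqref{eqn:ode-x}, write $\vw=\nabla f\circ\vy$. On $(0,T]$ the quotient rule gives
\[
\frac{\mathrm d}{\mathrm dt}\gI(\vw)(t)=\vw(t)-\frac1t\gI(\vw)(t).
\]
Hence
\[
\dot\vx=-\vw+\vw-\tfrac1t\gI(\vw)=-\tfrac1t\gI(\vw).
\]
On the other hand, $\vz-\vx=-\gI(\vw)$ by the two integral formulas, so $\frac1t(\vz-\vx)=-\frac1t\gI(\vw)=\dot\vx$, which is \eqref{eqn:ode-x}. For the initial values, evaluating at $t=0$ with $\gI(\vw)(0)=\vzero$ gives $\vx(0)=\vy(0)=\vz(0)=\vx_0$.

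\textbf{Main obstacle.} The only delicate point is the boundary behavior at $t=0$ in the first direction. Passing to the limit requires that $h\vx(h)\to0$ and that the lower limits of the integrals can be pushed down to $0$. Both follow from the assumed continuity of $\vx$, $\vy$, $\vz$ on the closed interval $[0,T]$. The converse direction needs the continuity of $\gI(\vw)$ at $0$, which is already noted after \eqref{eqn:opI}.
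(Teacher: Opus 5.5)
Your proof is correct. The converse direction is essentially the paper's argument. You apply the quotient rule to $\gI(\vw)$ and use $\vz-\vx=-\gI(\vw)$, while the paper differentiates the equivalent identity $t\bigl(\vz(t)-\vx(t)\bigr)=-\int_0^t s\nabla f(\vy(s))\,\mathrm ds$.

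The forward direction is organized differently. You use the fact that \eqref{eqn:ode-z} involves only $\vy$ and integrate it at once to get $\vz$ in closed form. You then use the first-order identity $\frac{\mathrm d}{\mathrm dt}\bigl(t\vx(t)\bigr)=\vz(t)$, and finally read off $\vy$ from the linear relation \eqref{eqn:ode-y}. The paper instead eliminates $\vz$ through the second-order identity $\frac{\mathrm d}{\mathrm ds}\bigl(s^2\dot\vx(s)\bigr)=-s\nabla f(\vy(s))$ and integrates twice. It must then exchange the order of integration for the singular kernel $s/u^2$, which requires a separate absolute-integrability estimate before Fubini--Tonelli applies. It also needs a second differential identity, $\frac{\mathrm d}{\mathrm ds}\bigl(s(\vy(s)-\vx(s))\bigr)=-(1-\beta)s\nabla f(\vy(s))$, to obtain $\vy$.

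Your exchange of integrals is over a triangle with a bounded continuous integrand, so no integrability check is needed. The only boundary facts you use are $\vz(h)\to\vx_0$ and $h\vx(h)\to\vzero$, both immediate from continuity on $[0,T]$. Your route is therefore shorter and more elementary. The paper's route has the modest conceptual benefit of treating the $\vx$-dynamics as a single second-order equation with $\vz$ eliminated, but the equivalence does not need this.
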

\begin{proof}
    \ref{item:g1a}~$\Longrightarrow$~\ref{item:g1b} \hspace{1.5ex plus 0.5ex minus .2ex} Whenever $s > 0$, from \eqref{eqn:ode-z} and \eqref{eqn:ode-x} we have \begin{equation} \label{eqn:second-moment-derivative}\begin{aligned}
    \frac{\mathrm{d}}{\mathrm{d}s}\, s^2 \dot{\vx}(s) &= \frac{\mathrm{d}}{\mathrm{d}s}\, s \bigl(\vz(s) - \vx(s)\bigr) \\ 
     &= \vz(s) - \vx(s) + s\dot{\vz}(s) - s\dot{\vx}(s) \\
    &= s\dot{\vz}(s) \\
    &= - s \nabla f(\vy(s)).
    \end{aligned}\end{equation} Thus, by integrating on the interval $(0, u]$ we get \[
    u^2 \dot{\vx}(u) = - \int_{0}^{u} s \nabla f(\vy(s)) \, \mathrm{d}s.
    \] Dividing both sides by $u^2$ and then integrating leads to \begin{equation} \label{eqn:to-be-fubinied}
        \vx(t) - \vx(0) = -\int_0^t \int_{0}^{u} \frac{s}{u^2} \nabla f(\vy(s)) \, \mathrm{d}s \mathrm{d}u.
    \end{equation} 
    As $s \mapsto \nabla f(\vy(s))$ is continuous, it is bounded on the compact set $[0, T]$. 
    Let $M$ be an upper bound of $\norm{\nabla f(\vy)}$ on the interval $[0, T]$. 
    Then one can observe that the function $(s, u) \mapsto \frac{s}{u^2} \nabla f(\vy(s))$ is absolutely integrable on the triangle $\{(s, u) : 0 < s \leq u \leq t\}$ for any $t > 0$, from that \[
    \int_0^t \int_s^t \norm{\frac{s}{u^2} \nabla f(\vy(s))} \, \mathrm{d}u \mathrm{d}s \leq \int_0^t M s \left(\int_s^t \frac{1}{u^2} \, \mathrm{d}u \right) \mathrm{d}s = \int_0^t M  \left(1 - \frac{s}{t}\right) \mathrm{d}s = \frac{Mt}{2} < \infty. 
    \] Thus, we can apply the Fubini--Tonelli theorem on the right hand side of \eqref{eqn:to-be-fubinied} to get  \begin{equation} \label{eqn:fubinied}\begin{aligned}
        \vx(t) - \vx(0) &= - \int_{0}^{t} \int_s^t \frac{s}{u^2} \nabla f(\vy(s)) \, \mathrm{d}u \mathrm{d}s \\ 
        &= - \int_{0}^{t} \left( 1-\frac{s}t \right) \nabla f(\vy(s)) \,\mathrm{d}s \\ 
        &= - \int_{0}^{t} \nabla f(\vy(s)) \,\mathrm{d}s + \frac{1}{t} \int_{0}^{t} s \nabla f(\vy(s)) \,\mathrm{d}s .
    \end{aligned}\end{equation}
    Now, notice that \eqref{eqn:ode-y} implies $\vy - \vx = (1-\beta) (\vz - \vx)$. 
    Hence, from \eqref{eqn:second-moment-derivative} we also get \[
    \frac{\mathrm{d}}{\mathrm{d}s}\, s \bigl(\vy(s) - \vx(s)\bigr) = -s (1-\beta) \nabla f(\vy(s)). 
    \] Integrating both sides, we obtain \[
    t \vy(t) - t \vx(t) = -(1-\beta) \int_{0}^{t} s \nabla f(\vy(s))\, \mathrm{d}s.
    \] Dividing both sides by $t$ and then substituting the resulting formula of $\vx$ into \eqref{eqn:fubinied}, we conclude that \[
    \vy(t) = \vx_0 - \int_{0}^{t} \nabla f(\vy(s)) \,\mathrm{d}s + \frac{\beta}{t} \int_{0}^{t} s \nabla f(\vy(s)) \,\mathrm{d}s 
    \]  whenever $t > 0$, which is exactly \eqref{eqn:y-ie}. The fact that \eqref{eqn:y-ie} holds when $t = 0$ is immediate from the definition \eqref{eqn:opI} of $\gI$. 
    Once we have $\vy$ that satisfies \eqref{eqn:y-ie}, it follows from \eqref{eqn:fubinied} that \eqref{eqn:x-ie} also holds, and the formula for $\vz$ holds by \eqref{eqn:ode-y}. 

    \ref{item:g1b}~$\Longrightarrow$~\ref{item:g1a} \hspace{1.5ex plus 0.5ex minus .2ex} 
    Suppose that $\vy \in C([0, T], \rr^d)$ satisfies \eqref{eqn:y-ie}, and for such $\vy$ the function $\vx \in C([0, T], \rr^d)$ is defined as in~\eqref{eqn:x-ie} with $\vz = \frac{1}{1-\beta}(\vy - \beta\vx)$. 
    Since $\vx(0) = \vy(0) = \vz(0)$ is then immediate, it only remains to show that these functions solve the \sfode{} on $(0, T]$. 
    
    By the assumption that $\vz = \frac{1}{1-\beta}(\vy - \beta\vx)$, we have \eqref{eqn:ode-y}. Meanwhile, subtracting $\beta$ times \eqref{eqn:x-ie} from \eqref{eqn:y-ie}, we obtain \[
    \vy(t) - \beta \vx(t) = (1-\beta) \vx_0-(1-\beta) \int_{0}^{t} \nabla f ( \vy(s) ) \, \mathrm{d} s.
    \] Differentiating with respect to $t$ and using the already obtained \eqref{eqn:ode-y}, we end up with \[
    \dot{\vz}(t) = \frac{1}{1-\beta} \frac{\mathrm{d}}{\mathrm{d} t}\bigl(\vy(t) - \beta \vx(t)\bigr) = -\nabla f(\vy(t))
    \] which is exactly \eqref{eqn:ode-z}. 
    Finally, subtract \eqref{eqn:x-ie} directly from \eqref{eqn:y-ie}, to get \begin{align*}
    \vy(t) - \vx(t) &= -(1-\beta)\gI(\nabla f \circ \vy)(t) \\
    &= -\frac{1-\beta}{t} \int_0^t s \nabla f ( \vy(s) ) \, \mathrm{d} s.
    \end{align*}
    As \eqref{eqn:ode-y} also implies the identity $\vy - \vx = (1-\beta) (\vz - \vx)$, from the above we also have that \begin{equation} \label{eqn:x-dot-ie}
    t\bigl(\vz(t) - \vx(t)\bigr) =  - \int_0^t s \nabla f ( \vy(s) ) \, \mathrm{d} s. 
    \end{equation} Differentiating with respect to $t$ and using the already obtained \eqref{eqn:ode-z} leads to \begin{align*}
    \vz(t) - \vx(t) + t\dot{\vz}(t) - t\dot{\vx}(t) &= -t \nabla f ( \vy(t) ) \\
    &= t \dot{\vz}(t).
    \end{align*} Simply rearranging the terms gives us \eqref{eqn:ode-x}, recovering all three equations in the \sfode. 
\end{proof}

\Cref{prop:sf-ie} gives us a way to ``solve'' the \sfode{} on the \emph{closed} interval $[0, T]$. 
Indeed, for any given initial point $\vx_0\in\rr^d$, by solving the integral equation \eqref{eqn:y-ie}, one can construct $\vx(t), \vy(t), \vz(t) \in C([0, T], \rr^d)$ which together become a solution of the \sfode{} on the interval $(0, T]$ and satisfies $\lim_{t\to 0+}\bigl(\vx(t), \vy(t), \vz(t) \bigr) = (\vx_0, \vx_0, \vx_0)$. 

Thus, the main question is now whether the integral equation \eqref{eqn:y-ie} has a well-defined solution. 
We present an affirmative answer to this question. 
By mimicking the standard proof of the celebrated \mbox{Picard--Lindel\"{o}f} theorem for ODEs, we show that the solution of the integral equation \eqref{eqn:y-ie} uniquely exists on the interval $[0, T]$ whenever $T > 0$ is sufficiently small. 

To this end, given any fixed $\vx_0 \in \rr^d$, let us set \begin{equation} \label{eqn:small-large-t}
T =  \frac{1}{2L+2\norm{\nabla f(\vx_0)}}
\end{equation} and define an affine subset $\mathsf{B}$ of the Banach space $C([0, T], \rr^d)$ as \[
\mathsf{B} \coloneqq \left\{ \vy \in C([0, T], \rr^d) : \vy(0) = \vx_0 \right\}.
\] 
Let us also define an operator $\gT : \mathsf{B} \to \mathsf{B}$ as \begin{equation} \label{eqn:contraction}
\gT(\vy)(t) = \vx_0 - \int_0^t \nabla f(\vy(s)) \, \mathrm{d}s + \beta \gI(\nabla f \circ \vy) (t)
\end{equation} so that $\vy$ is a solution to the integral equation \eqref{eqn:y-ie} if and only if it is a fixed point of $\gT$. 
Observe that the definition \eqref{eqn:opI} of $\gI$ indeed implies that $\gT(\vy)(0) = \vx_0$ for any $\vy \in \mathsf{B}$, from which we get that $\gT(\mathsf{B}) \subset \mathsf{B}$ and there is no problem considering $\mathsf{B}$ as the codomain of $\gT$. 

\begin{prop} \label[prop]{prop:t-is-contraction} $\gT$ is a contraction on $\mathsf{B}$.
\end{prop}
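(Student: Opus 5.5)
We show directly that $\gT$ is Lipschitz on $\mathsf{B}$ with respect to the uniform norm, with a constant strictly less than $1$ determined by the choice of $T$ in \eqref{eqn:small-large-t}. First we note that $\gT$ is well defined as a self-map of $\mathsf{B}$. For $\vy \in \mathsf{B}$, the map $s \mapsto \nabla f(\vy(s))$ is continuous by \Cref{asmp:smooth}, so the integral term is continuous on $[0,T]$. The term $\gI(\nabla f\circ\vy)$ is continuous on $(0,T]$ and tends to $\vzero$ as $t\to 0+$, as already noted after \eqref{eqn:opI}. Together with $\gT(\vy)(0)=\vx_0$, this gives $\gT(\mathsf{B})\subset\mathsf{B}$.

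Next, fix $\vy_1,\vy_2\in\mathsf{B}$ and write $\vw(s) = \nabla f(\vy_1(s)) - \nabla f(\vy_2(s))$. By $L$-smoothness, $\norm{\vw(s)} \le L\norm{\vy_1-\vy_2}_\infty$ for all $s\in[0,T]$. For $t\in(0,T]$ we then estimate the two pieces of $\gT(\vy_1)(t)-\gT(\vy_2)(t)$ separately:
\[
\norm{\int_0^t \vw(s)\,\mathrm{d}s} \le tL\norm{\vy_1-\vy_2}_\infty,
\qquad
\beta\norm{\frac1t\int_0^t s\vw(s)\,\mathrm{d}s} \le \frac{\beta}{t}\cdot\frac{t^2}{2}\,L\norm{\vy_1-\vy_2}_\infty = \frac{\beta t L}{2}\norm{\vy_1-\vy_2}_\infty.
\]
At $t=0$ the difference vanishes. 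Taking the supremum over $t\in[0,T]$ gives
\[
\norm{\gT(\vy_1)-\gT(\vy_2)}_\infty \le \left(1+\tfrac{\beta}{2}\right) L T\,\norm{\vy_1-\vy_2}_\infty.
\]

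Finally, we check that this constant is below $1$. Since $\beta<1$ and $T = \frac{1}{2L+2\norm{\nabla f(\vx_0)}}\le \frac{1}{2L}$, we get $\left(1+\frac{\beta}{2}\right)LT \le \frac{3}{4}<1$, so $\gT$ is a contraction.

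The only delicate point is the singular factor $\frac1t$ in $\gI$. It is neutralized by the weight $s$ inside the integral, which gives $\int_0^t s\,\mathrm{d}s = t^2/2$. The $\norm{\nabla f(\vx_0)}$ term in $T$ plays no role in the contraction estimate itself; it is presumably there for later bounds, for instance keeping solutions in a ball. The closedness of $\mathsf{B}$ in $C([0,T],\rr^d)$, which is needed to apply Banach's fixed point theorem afterwards, is immediate because evaluation at $0$ is continuous.
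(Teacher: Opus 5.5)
Your proposal is correct and follows essentially the same argument as the paper. Like the paper, you bound the two pieces by $Lt\|\vy_1-\vy_2\|_\infty$ and $\frac{\beta L t}{2}\|\vy_1-\vy_2\|_\infty$, then use $T\le\frac{1}{2L}$ and $\beta\le 1$ to obtain the Lipschitz constant $\frac{3}{4}$.
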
 
\begin{proof}
    For any two $\vy, \vw \in \mathsf{B}$ and $t > 0$, it holds that \[
        \gT(\vy)(t) - \gT(\vw)(t) = \int_0^t \nabla f(\vw(s)) - \nabla f(\vy(s)) \, \mathrm{d}s + \frac{\beta}{t} \int_0^t s(\nabla f(\vy(s)) - \nabla f(\vw(s))) \, \mathrm{d} s.
    \] Thus, by the $L$-smoothness of $f$, we obtain \begin{align*}
        \norm{\gT(\vy)(t) - \gT(\vw)(t)} &\leq \norm{\int_0^t \nabla f(\vw(s)) - \nabla f(\vy(s)) \, \mathrm{d}s} + \norm{\frac{\beta}{t} \int_0^t s(\nabla f(\vy(s)) - \nabla f(\vw(s))) \, \mathrm{d} s} \\ 
        &\leq \int_0^t \norm{\nabla f(\vw(s)) - \nabla f(\vy(s))} \, \mathrm{d}s + \frac{\beta}{t} \int_0^t s \norm{\nabla f(\vw(s)) - \nabla f(\vy(s))} \, \mathrm{d} s \\ 
        &\leq L \int_0^t \norm{\vy(s) -  \vw(s)} \, \mathrm{d}s + \frac{\beta L}{t} \int_0^t s\norm{\vy(s) - \vw(s)} \, \mathrm{d} s \\ 
        &\leq \norm{\vy - \vw}_\infty \left(L t + \frac{\beta L}{t} \int_0^t s  \, \mathrm{d} s \right) \\ 
        &=  \norm{\vy -  \vw}_\infty \left(1 + \frac{\beta}{2}\right)L t \\
        &\leq  \frac{3}{2} LT \norm{\vy -  \vw}_\infty 
    \end{align*} where in the last line we used the fact that $\beta \leq 1$. Because $T$ is chosen as in \eqref{eqn:small-large-t}, we have $T \leq \frac{1}{2L}$. 
    Furthermore, as we already know from $\gT(\mathsf{B}) \subset \mathsf{B}$ that $\gT(\vy)(0) = \gT(\vw)(0) = \vx_0$, we have $\gT(\vy)(0) - \gT(\vw)(0) = \vzero$, and hence it holds that \[
    \norm{\gT(\vy) - \gT(\vw)}_\infty = \sup_{t \in [0, T]}\; \norm{\gT(\vy)(t) - \gT(\vw)(t)} \leq \frac{3}{4} \norm{\vy -  \vw}_\infty .
    \]  Therefore, we conclude that $\gT$ is a contraction on $\mathsf{B}$ whose Lipschitz constant is at most $\frac{3}4$. 
\end{proof}

\begin{lem}
   On the interval $[0, T]$, the integral equation \eqref{eqn:y-ie} has a unique well-defined continuous solution.
\end{lem}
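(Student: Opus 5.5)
The plan is to apply the Banach fixed-point theorem to the operator $\gT$ defined in \eqref{eqn:contraction} on the set $\mathsf{B}$, with $T$ chosen as in \eqref{eqn:small-large-t}.

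First, I will check that $\mathsf{B}$ is a complete metric space under $\norm{\cdot}_\infty$. It is nonempty, since it contains the constant function $\vx_0$. It is also a closed subset of the Banach space $C([0,T],\rr^d)$: uniform convergence implies pointwise convergence at $t=0$, so the constraint $\vy(0)=\vx_0$ is preserved under limits. A closed subset of a complete space is complete.

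Next, I will confirm that $\gT$ really maps $\mathsf{B}$ into $\mathsf{B}$, since the text asserts this only briefly. Fix $\vy\in\mathsf{B}$. The map $s\mapsto\nabla f(\vy(s))$ is continuous, because $\nabla f$ is Lipschitz by \Cref{asmp:smooth} and $\vy$ is continuous. Hence $t\mapsto\int_0^t\nabla f(\vy(s))\,\mathrm{d}s$ is continuous on $[0,T]$. The remaining term $\gI(\nabla f\circ\vy)$ is continuous for $t>0$ as a product of continuous functions. At $t=0$, continuity follows from
\[
\norm{\gI(\vw)(t)}\le \frac1t\int_0^t s\,\norm{\vw(s)}\,\mathrm{d}s\le \frac{t}{2}\norm{\vw}_\infty \to 0 .
\]
Therefore $\gT(\vy)\in C([0,T],\rr^d)$ and $\gT(\vy)(0)=\vx_0$.

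By \Cref{prop:t-is-contraction}, $\gT$ is a contraction on $\mathsf{B}$ with constant at most $\frac34$. The Banach fixed-point theorem then yields a unique $\vy\in\mathsf{B}$ with $\gT(\vy)=\vy$, which is exactly a continuous solution of \eqref{eqn:y-ie} on $[0,T]$.

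One subtlety remains for uniqueness. Any continuous solution of \eqref{eqn:y-ie} on $[0,T]$ automatically satisfies $\vy(0)=\vx_0$: evaluating \eqref{eqn:y-ie} at $t=0$ kills both integral terms, using $\gI(\cdot)(0)=\vzero$. So every solution lies in $\mathsf{B}$, and uniqueness within $\mathsf{B}$ gives uniqueness among all continuous solutions.

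None of these steps is a real obstacle, since the contraction estimate is already proved in \Cref{prop:t-is-contraction}. The only point needing care is the well-definedness of $\gT$ at $t=0$ and its continuity there, which is handled by the bound above.
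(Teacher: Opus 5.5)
Your proposal is correct and matches the paper's proof: the paper also shows $\mathsf{B}$ is complete using uniform limits (hence pointwise at $t=0$), takes the contraction from \Cref{prop:t-is-contraction}, and applies the Banach fixed-point theorem. Your extra checks are also correct and spell out points the paper only states in passing, namely continuity of $\gT(\vy)$ at $t=0$ via $\norm{\gI(\vw)(t)}\le \frac{t}{2}\norm{\vw}_\infty$, and the fact that every continuous solution of \eqref{eqn:y-ie} lies in $\mathsf{B}$, so uniqueness holds among all continuous solutions.
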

\begin{proof}
    Consider the operator $\gT : \mathsf{B} \to \mathsf{B}$ defined in \eqref{eqn:contraction}, which is a contraction, as we saw in \Cref{prop:t-is-contraction}. 
    Based on this observation, we wish to use the Banach fixed point theorem. 
    To this end, we claim that $\mathsf{B}$ itself is a nonempty complete Banach space. 
    As the constant function $t \mapsto \vx_0$ is in $\mathsf{B}$, it is clear that $\mathsf{B}$ is nonempty. 
    To see that $\mathsf{B}$ is complete, suppose that $\{\vw_k\}_{k \geq 1}$ is a Cauchy sequence in $\mathsf{B}$.
    Being a Cauchy sequence in a complete Banach space $C([0, T], \rr^d)$, we know that $\{\vw_k\}_{k \geq 1}$ converges to some $\vw_* \in  C([0, T], \rr^d)$. 
    As this convergence is uniform, we also have pointwise convergence, implying that $\vw_*(0) = \lim_{k \to \infty} \vw_k(0) = \vx_0$ and thus $\vw_* \in \mathsf{B}$.  
    It follows that $\mathsf{B}$ is indeed a complete Banach space. 

    We may now apply the Banach fixed point theorem on $\mathsf{B}$, to obtain the existence of a unique fixed point $\vy$ of $\gT$ in $\mathsf{B}$. 
    By definition, $\vy$ is a fixed point of $\gT$ if and only if $\vy$ is a solution to the integral equation \eqref{eqn:y-ie}, so we are done.
\end{proof}

Therefore, by considering the functions defined as in the statement of \Cref{prop:sf-ie}\ref{item:g1b}, it becomes possible to rigorously study the solution $(\vx, \vy, \vz)$ of the \sfode{} with the initial conditions given at $t = 0$. 
One property of such a solution that is useful when analyzing the Lyapunov function \eqref{eqn:conti-lyap} is as follows.  
\begin{prop} \label[prop]{prop:c1-extension}
    For $\vx$, $\vy$, and $\vz$ defined as in the statement of \Cref{prop:sf-ie}\ref{item:g1b}, it holds that \[
    \lim_{t\to 0+} \frac{\vz(t) - \vx(t)}{t} = -\frac{\nabla f ( \vx_0 )}{2}.
    \]
\end{prop}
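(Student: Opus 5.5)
We need to prove that $\lim_{t\to0+}(\vz(t)-\vx(t))/t = -\nabla f(\vx_0)/2$. The plan is to work directly from the integral representation established in the proof of \Cref{prop:sf-ie}, namely identity \eqref{eqn:x-dot-ie}:
\[
t\bigl(\vz(t)-\vx(t)\bigr) = -\int_0^t s\,\nabla f(\vy(s))\,\mathrm{d}s, \qquad t\in(0,T].
\]
This identity was derived in the direction \ref{item:g1b}~$\Longrightarrow$~\ref{item:g1a}, purely from \eqref{eqn:y-ie}, \eqref{eqn:x-ie} and $\vz=\frac{1}{1-\beta}(\vy-\beta\vx)$. It therefore holds for the functions in the statement.

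Dividing by $t^2$ gives
\[
\frac{\vz(t)-\vx(t)}{t} = -\frac{1}{t^2}\int_0^t s\,\nabla f(\vy(s))\,\mathrm{d}s.
\]
Since $\int_0^t s\,\mathrm{d}s = t^2/2$, we can subtract the target value and write
\[
\frac{\vz(t)-\vx(t)}{t} + \frac{\nabla f(\vx_0)}{2} = -\frac{1}{t^2}\int_0^t s\bigl(\nabla f(\vy(s)) - \nabla f(\vx_0)\bigr)\,\mathrm{d}s.
\]
Taking norms and using $L$-smoothness, the right-hand side is bounded by
\[
\frac{L}{t^2}\int_0^t s\,\norm{\vy(s)-\vx_0}\,\mathrm{d}s \;\le\; \frac{L}{2}\sup_{0\le s\le t}\norm{\vy(s)-\vx_0}.
\]

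It remains to show that this supremum tends to $0$ as $t\to0+$. This follows because $\vy\in C([0,T],\rr^d)$ with $\vy(0)=\vx_0$, which is built into the space $\mathsf{B}$ where the fixed point of $\gT$ lives. Note that mere continuity of $\nabla f\circ\vy$ at $0$ would already suffice, so smoothness is used only for convenience.

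The argument has no real obstacle. The only point needing care is confirming that \eqref{eqn:x-dot-ie} holds for every $t>0$, not merely in a differentiated form. This is immediate: it comes from subtracting \eqref{eqn:x-ie} from \eqref{eqn:y-ie} and using $\vy-\vx=(1-\beta)(\vz-\vx)$. When $\beta<1$, dividing by $1-\beta$ is legitimate.
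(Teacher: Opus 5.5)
Your proof is correct and takes the same approach as the paper: both start from \eqref{eqn:x-dot-ie} and divide by $t^2$. The only difference is in evaluating the limit of $-\frac{1}{t^2}\int_0^t s\,\nabla f(\vy(s))\,\mathrm{d}s$. The paper uses L'H\^{o}pital's rule, whereas you subtract $\frac{1}{2}\nabla f(\vx_0)$ and bound the remainder by $\frac{L}{2}\sup_{0\le s\le t}\norm{\vy(s)-\vx_0}$, which avoids a vector-valued L'H\^{o}pital argument and also gives a quantitative rate.
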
 \begin{proof}
From \eqref{eqn:x-dot-ie}, it follows that \[
\frac{\vz(t) - \vx(t)}{t} =  -\frac{1}{t^2} \int_0^t s \nabla f ( \vy(s) ) \, \mathrm{d} s.
\]   Taking the limit as $t \to 0+$, by L'H\^{o}pital's rule we get \begin{align*}
   \lim_{t \to 0+} \frac{\vz(t) - \vx(t)}{t} &=  -\lim_{t \to 0+}\frac{\int_0^t s \nabla f ( \vy(s) ) \, \mathrm{d} s}{t^2} \\[1ex]
   &=  -\lim_{t \to 0+}\frac{  t \nabla f ( \vy(t) ) }{2 t}  \\[1ex]
   &=  -\frac{\nabla f ( \vx_0 )}{2}
\end{align*} because $\nabla f$ is $L$-Lipschitz by \Cref{asmp:smooth} hence continuous, while \Cref{prop:sf-ie} asserts that $\vy(t) \to \vx_0$ as $t\to 0+$. 
\end{proof}

By \eqref{eqn:ode-x} and \Cref{prop:c1-extension}, we have that \[
\lim_{t\to 0+} \dot{\vx}(t) = \lim_{t\to 0+} \frac{\vz(t) - \vx(t)}{t} = -\frac{\nabla f ( \vx_0 )}{2}. 
\] Meanwhile, as \eqref{eqn:ode-z} implies that \[
\lim_{t\to 0+} \dot{\vz}(t) = \lim_{t\to 0+} -\nabla f(\vy(t)) = -\nabla f(\vx_0) 
\] and \eqref{eqn:ode-y} additionally asserts that $\dot{\vy} = (1-\beta) \dot{\vz} + \beta \dot{\vx}$, we conclude that the solution can be extended to $t=0$ so that it is not only continuous, but also has a derivative that is continuous up to $t = 0$.

%% file: checklist.tex
\section*{NeurIPS Paper Checklist}

\begin{enumerate}

\item {\bf Claims}
    \item[] Question: Do the main claims made in the abstract and introduction accurately reflect the paper's contributions and scope?
    \item[] Answer: \answerYes{} %
    \item[] Justification: We wrote the abstract and introduction so that the main takeaways from \Cref{sec:sfode,sec:saddle,sec:why-xk,sec:conv} are included and summarized.  %
    \item[] Guidelines:
    \begin{itemize}
        \item The answer \answerNA{} means that the abstract and introduction do not include the claims made in the paper.
        \item The abstract and/or introduction should clearly state the claims made, including the contributions made in the paper and important assumptions and limitations. A \answerNo{} or \answerNA{} answer to this question will not be perceived well by the reviewers. 
        \item The claims made should match theoretical and experimental results, and reflect how much the results can be expected to generalize to other settings. 
        \item It is fine to include aspirational goals as motivation as long as it is clear that these goals are not attained by the paper. 
    \end{itemize}

\item {\bf Limitations}
    \item[] Question: Does the paper discuss the limitations of the work performed by the authors?
    \item[] Answer: \answerYes{} %
    \item[] Justification: Although we were unable to include a separate ``Limitations'' section due to the page limit, the limitations are discussed in \Cref{sec:conclusion}. %
    \item[] Guidelines:
    \begin{itemize}
        \item The answer \answerNA{} means that the paper has no limitation while the answer \answerNo{} means that the paper has limitations, but those are not discussed in the paper. 
        \item The authors are encouraged to create a separate ``Limitations'' section in their paper.
        \item The paper should point out any strong assumptions and how robust the results are to violations of these assumptions (e.g., independence assumptions, noiseless settings, model well-specification, asymptotic approximations only holding locally). The authors should reflect on how these assumptions might be violated in practice and what the implications would be.
        \item The authors should reflect on the scope of the claims made, e.g., if the approach was only tested on a few datasets or with a few runs. In general, empirical results often depend on implicit assumptions, which should be articulated.
        \item The authors should reflect on the factors that influence the performance of the approach. For example, a facial recognition algorithm may perform poorly when image resolution is low or images are taken in low lighting. Or a speech-to-text system might not be used reliably to provide closed captions for online lectures because it fails to handle technical jargon.
        \item The authors should discuss the computational efficiency of the proposed algorithms and how they scale with dataset size.
        \item If applicable, the authors should discuss possible limitations of their approach to address problems of privacy and fairness.
        \item While the authors might fear that complete honesty about limitations might be used by reviewers as grounds for rejection, a worse outcome might be that reviewers discover limitations that aren't acknowledged in the paper. The authors should use their best judgment and recognize that individual actions in favor of transparency play an important role in developing norms that preserve the integrity of the community. Reviewers will be specifically instructed to not penalize honesty concerning limitations.
    \end{itemize}

\item {\bf Theory assumptions and proofs}
    \item[] Question: For each theoretical result, does the paper provide the full set of assumptions and a complete (and correct) proof?
    \item[] Answer: \answerYes{} %
    \item[] Justification: All theoretical results explicitly state their required assumptions. Complete proofs are provided in the appendices, with the exception of direct corollaries that follow immediately from other theoretical results. %
    \item[] Guidelines:
    \begin{itemize}
        \item The answer \answerNA{} means that the paper does not include theoretical results. 
        \item All the theorems, formulas, and proofs in the paper should be numbered and cross-referenced.
        \item All assumptions should be clearly stated or referenced in the statement of any theorems.
        \item The proofs can either appear in the main paper or the supplemental material, but if they appear in the supplemental material, the authors are encouraged to provide a short proof sketch to provide intuition. 
        \item Inversely, any informal proof provided in the core of the paper should be complemented by formal proofs provided in appendix or supplemental material.
        \item Theorems and Lemmas that the proof relies upon should be properly referenced. 
    \end{itemize}

    \item {\bf Experimental result reproducibility}
    \item[] Question: Does the paper fully disclose all the information needed to reproduce the main experimental results of the paper to the extent that it affects the main claims and/or conclusions of the paper (regardless of whether the code and data are provided or not)?
    \item[] Answer: \answerYes{} %
    \item[] Justification: While we use the terminology PEP \emph{experiments}, these amount to numerically solving convex optimization problems rather than conducting empirical experiments as commonly understood in machine learning. Nevertheless, all details needed to reproduce the results are described in \Cref{sec:pep-experiments}. %
    \item[] Guidelines:
    \begin{itemize}
        \item The answer \answerNA{} means that the paper does not include experiments.
        \item If the paper includes experiments, a \answerNo{} answer to this question will not be perceived well by the reviewers: Making the paper reproducible is important, regardless of whether the code and data are provided or not.
        \item If the contribution is a dataset and\slash or model, the authors should describe the steps taken to make their results reproducible or verifiable. 
        \item Depending on the contribution, reproducibility can be accomplished in various ways. For example, if the contribution is a novel architecture, describing the architecture fully might suffice, or if the contribution is a specific model and empirical evaluation, it may be necessary to either make it possible for others to replicate the model with the same dataset, or provide access to the model. In general. releasing code and data is often one good way to accomplish this, but reproducibility can also be provided via detailed instructions for how to replicate the results, access to a hosted model (e.g., in the case of a large language model), releasing of a model checkpoint, or other means that are appropriate to the research performed.
        \item While NeurIPS does not require releasing code, the conference does require all submissions to provide some reasonable avenue for reproducibility, which may depend on the nature of the contribution. For example
        \begin{enumerate}
            \item If the contribution is primarily a new algorithm, the paper should make it clear how to reproduce that algorithm.
            \item If the contribution is primarily a new model architecture, the paper should describe the architecture clearly and fully.
            \item If the contribution is a new model (e.g., a large language model), then there should either be a way to access this model for reproducing the results or a way to reproduce the model (e.g., with an open-source dataset or instructions for how to construct the dataset).
            \item We recognize that reproducibility may be tricky in some cases, in which case authors are welcome to describe the particular way they provide for reproducibility. In the case of closed-source models, it may be that access to the model is limited in some way (e.g., to registered users), but it should be possible for other researchers to have some path to reproducing or verifying the results.
        \end{enumerate}
    \end{itemize}

\item {\bf Open access to data and code}
    \item[] Question: Does the paper provide open access to the data and code, with sufficient instructions to faithfully reproduce the main experimental results, as described in supplemental material?
    \item[] Answer: \answerYes{} %
    \item[] Justification: The codes we used for the experiments are submitted along with the paper. No external datasets were used. %
    \item[] Guidelines:
    \begin{itemize}
        \item The answer \answerNA{} means that paper does not include experiments requiring code.
        \item Please see the NeurIPS code and data submission guidelines (\url{https://neurips.cc/public/guides/CodeSubmissionPolicy}) for more details.
        \item While we encourage the release of code and data, we understand that this might not be possible, so \answerNo{} is an acceptable answer. Papers cannot be rejected simply for not including code, unless this is central to the contribution (e.g., for a new open-source benchmark).
        \item The instructions should contain the exact command and environment needed to run to reproduce the results. See the NeurIPS code and data submission guidelines (\url{https://neurips.cc/public/guides/CodeSubmissionPolicy}) for more details.
        \item The authors should provide instructions on data access and preparation, including how to access the raw data, preprocessed data, intermediate data, and generated data, etc.
        \item The authors should provide scripts to reproduce all experimental results for the new proposed method and baselines. If only a subset of experiments are reproducible, they should state which ones are omitted from the script and why.
        \item At submission time, to preserve anonymity, the authors should release anonymized versions (if applicable).
        \item Providing as much information as possible in supplemental material (appended to the paper) is recommended, but including URLs to data and code is permitted.
    \end{itemize}

\item {\bf Experimental setting/details}
    \item[] Question: Does the paper specify all the training and test details (e.g., data splits, hyperparameters, how they were chosen, type of optimizer) necessary to understand the results?
    \item[] Answer: \answerYes{} %
    \item[] Justification: How PEP instances are constructed is fully described in \Cref{sec:pep-experiments}. %
    \item[] Guidelines:
    \begin{itemize}
        \item The answer \answerNA{} means that the paper does not include experiments.
        \item The experimental setting should be presented in the core of the paper to a level of detail that is necessary to appreciate the results and make sense of them.
        \item The full details can be provided either with the code, in appendix, or as supplemental material.
    \end{itemize}

\item {\bf Experiment statistical significance}
    \item[] Question: Does the paper report error bars suitably and correctly defined or other appropriate information about the statistical significance of the experiments?
    \item[] Answer: \answerNo{} %
    \item[] Justification: PEP results provide worst-case bounds that are exact up to the precision of the convex optimization solver used. Therefore, statistical significance testing is not applicable. %
    \item[] Guidelines:
    \begin{itemize}
        \item The answer \answerNA{} means that the paper does not include experiments.
        \item The authors should answer \answerYes{} if the results are accompanied by error bars, confidence intervals, or statistical significance tests, at least for the experiments that support the main claims of the paper.
        \item The factors of variability that the error bars are capturing should be clearly stated (for example, train/test split, initialization, random drawing of some parameter, or overall run with given experimental conditions).
        \item The method for calculating the error bars should be explained (closed form formula, call to a library function, bootstrap, etc.)
        \item The assumptions made should be given (e.g., Normally distributed errors).
        \item It should be clear whether the error bar is the standard deviation or the standard error of the mean.
        \item It is OK to report 1-sigma error bars, but one should state it. The authors should preferably report a 2-sigma error bar than state that they have a 96\% CI, if the hypothesis of Normality of errors is not verified.
        \item For asymmetric distributions, the authors should be careful not to show in tables or figures symmetric error bars that would yield results that are out of range (e.g., negative error rates).
        \item If error bars are reported in tables or plots, the authors should explain in the text how they were calculated and reference the corresponding figures or tables in the text.
    \end{itemize}

\item {\bf Experiments compute resources}
    \item[] Question: For each experiment, does the paper provide sufficient information on the computer resources (type of compute workers, memory, time of execution) needed to reproduce the experiments?
    \item[] Answer: \answerYes{} %
    \item[] Justification: While there are no strict requirements on the computational resources needed to reproduce our PEP results, for completeness, we report the computing environment used for our computations in \Cref{sec:pep-experiments}. %
    \item[] Guidelines:
    \begin{itemize}
        \item The answer \answerNA{} means that the paper does not include experiments.
        \item The paper should indicate the type of compute workers CPU or GPU, internal cluster, or cloud provider, including relevant memory and storage.
        \item The paper should provide the amount of compute required for each of the individual experimental runs as well as estimate the total compute. 
        \item The paper should disclose whether the full research project required more compute than the experiments reported in the paper (e.g., preliminary or failed experiments that didn't make it into the paper). 
    \end{itemize}
    
\item {\bf Code of ethics}
    \item[] Question: Does the research conducted in the paper conform, in every respect, with the NeurIPS Code of Ethics \url{https://neurips.cc/public/EthicsGuidelines}?
    \item[] Answer: \answerYes{} %
    \item[] Justification: We have read the Code of Ethics, and found no conflicts with it. %
    \item[] Guidelines:
    \begin{itemize}
        \item The answer \answerNA{} means that the authors have not reviewed the NeurIPS Code of Ethics.
        \item If the authors answer \answerNo, they should explain the special circumstances that require a deviation from the Code of Ethics.
        \item The authors should make sure to preserve anonymity (e.g., if there is a special consideration due to laws or regulations in their jurisdiction).
    \end{itemize}

\item {\bf Broader impacts}
    \item[] Question: Does the paper discuss both potential positive societal impacts and negative societal impacts of the work performed?
    \item[] Answer: \answerNA{} %
    \item[] Justification: Since our work focuses on the theoretical aspects of an optimization scheme, we do not expect it to have any direct societal impact.  %
    \item[] Guidelines:
    \begin{itemize}
        \item The answer \answerNA{} means that there is no societal impact of the work performed.
        \item If the authors answer \answerNA{} or \answerNo, they should explain why their work has no societal impact or why the paper does not address societal impact.
        \item Examples of negative societal impacts include potential malicious or unintended uses (e.g., disinformation, generating fake profiles, surveillance), fairness considerations (e.g., deployment of technologies that could make decisions that unfairly impact specific groups), privacy considerations, and security considerations.
        \item The conference expects that many papers will be foundational research and not tied to particular applications, let alone deployments. However, if there is a direct path to any negative applications, the authors should point it out. For example, it is legitimate to point out that an improvement in the quality of generative models could be used to generate Deepfakes for disinformation. On the other hand, it is not needed to point out that a generic algorithm for optimizing neural networks could enable people to train models that generate Deepfakes faster.
        \item The authors should consider possible harms that could arise when the technology is being used as intended and functioning correctly, harms that could arise when the technology is being used as intended but gives incorrect results, and harms following from (intentional or unintentional) misuse of the technology.
        \item If there are negative societal impacts, the authors could also discuss possible mitigation strategies (e.g., gated release of models, providing defenses in addition to attacks, mechanisms for monitoring misuse, mechanisms to monitor how a system learns from feedback over time, improving the efficiency and accessibility of ML).
    \end{itemize}
    
\item {\bf Safeguards}
    \item[] Question: Does the paper describe safeguards that have been put in place for responsible release of data or models that have a high risk for misuse (e.g., pre-trained language models, image generators, or scraped datasets)?
    \item[] Answer: \answerNA{} %
    \item[] Justification: This paper poses no such risks. %
    \item[] Guidelines:
    \begin{itemize}
        \item The answer \answerNA{} means that the paper poses no such risks.
        \item Released models that have a high risk for misuse or dual-use should be released with necessary safeguards to allow for controlled use of the model, for example by requiring that users adhere to usage guidelines or restrictions to access the model or implementing safety filters. 
        \item Datasets that have been scraped from the Internet could pose safety risks. The authors should describe how they avoided releasing unsafe images.
        \item We recognize that providing effective safeguards is challenging, and many papers do not require this, but we encourage authors to take this into account and make a best faith effort.
    \end{itemize}

\item {\bf Licenses for existing assets}
    \item[] Question: Are the creators or original owners of assets (e.g., code, data, models), used in the paper, properly credited and are the license and terms of use explicitly mentioned and properly respected?
    \item[] Answer: \answerYes{} %
    \item[] Justification: The \texttt{Python} package used in the PEP experiments, namely \texttt{PEPit}, is cited, but since it is publicly available, we did not deem it necessary to explicitly mention its license and terms of use in the paper.
    No specific existing data or models have been used. %
    \item[] Guidelines:
    \begin{itemize}
        \item The answer \answerNA{} means that the paper does not use existing assets.
        \item The authors should cite the original paper that produced the code package or dataset.
        \item The authors should state which version of the asset is used and, if possible, include a URL.
        \item The name of the license (e.g., CC-BY 4.0) should be included for each asset.
        \item For scraped data from a particular source (e.g., website), the copyright and terms of service of that source should be provided.
        \item If assets are released, the license, copyright information, and terms of use in the package should be provided. For popular datasets, \url{paperswithcode.com/datasets} has curated licenses for some datasets. Their licensing guide can help determine the license of a dataset.
        \item For existing datasets that are re-packaged, both the original license and the license of the derived asset (if it has changed) should be provided.
        \item If this information is not available online, the authors are encouraged to reach out to the asset's creators.
    \end{itemize}

\item {\bf New assets}
    \item[] Question: Are new assets introduced in the paper well documented and is the documentation provided alongside the assets?
    \item[] Answer: \answerNA{} %
    \item[] Justification: This paper does not release new assets. %
    \item[] Guidelines:
    \begin{itemize}
        \item The answer \answerNA{} means that the paper does not release new assets.
        \item Researchers should communicate the details of the dataset\slash code\slash model as part of their submissions via structured templates. This includes details about training, license, limitations, etc. 
        \item The paper should discuss whether and how consent was obtained from people whose asset is used.
        \item At submission time, remember to anonymize your assets (if applicable). You can either create an anonymized URL or include an anonymized zip file.
    \end{itemize}

\item {\bf Crowdsourcing and research with human subjects}
    \item[] Question: For crowdsourcing experiments and research with human subjects, does the paper include the full text of instructions given to participants and screenshots, if applicable, as well as details about compensation (if any)? 
    \item[] Answer: \answerNA{} %
    \item[] Justification: This paper does not involve crowdsourcing nor research with human subjects. %
    \item[] Guidelines:
    \begin{itemize}
        \item The answer \answerNA{} means that the paper does not involve crowdsourcing nor research with human subjects.
        \item Including this information in the supplemental material is fine, but if the main contribution of the paper involves human subjects, then as much detail as possible should be included in the main paper. 
        \item According to the NeurIPS Code of Ethics, workers involved in data collection, curation, or other labor should be paid at least the minimum wage in the country of the data collector. 
    \end{itemize}

\item {\bf Institutional review board (IRB) approvals or equivalent for research with human subjects}
    \item[] Question: Does the paper describe potential risks incurred by study participants, whether such risks were disclosed to the subjects, and whether Institutional Review Board (IRB) approvals (or an equivalent approval/review based on the requirements of your country or institution) were obtained?
    \item[] Answer: \answerNA{} %
    \item[] Justification: This paper does not involve crowdsourcing nor research with human subjects. %
    \item[] Guidelines:
    \begin{itemize}
        \item The answer \answerNA{} means that the paper does not involve crowdsourcing nor research with human subjects.
        \item Depending on the country in which research is conducted, IRB approval (or equivalent) may be required for any human subjects research. If you obtained IRB approval, you should clearly state this in the paper. 
        \item We recognize that the procedures for this may vary significantly between institutions and locations, and we expect authors to adhere to the NeurIPS Code of Ethics and the guidelines for their institution. 
        \item For initial submissions, do not include any information that would break anonymity (if applicable), such as the institution conducting the review.
    \end{itemize}

\item {\bf Declaration of LLM usage}
    \item[] Question: Does the paper describe the usage of LLMs if it is an important, original, or non-standard component of the core methods in this research? Note that if the LLM is used only for writing, editing, or formatting purposes and does \emph{not} impact the core methodology, scientific rigor, or originality of the research, declaration is not required.
    \item[] Answer: \answerNA{} %
    \item[] Justification: The core method development in this research does not involve LLMs as any important, original, or non-standard components. %
    \item[] Guidelines:
    \begin{itemize}
        \item The answer \answerNA{} means that the core method development in this research does not involve LLMs as any important, original, or non-standard components.
        \item Please refer to our LLM policy in the NeurIPS handbook for what should or should not be described.
    \end{itemize}

\end{enumerate}